\newif\ifneurips
\neuripsfalse
\newif\ifmeta
\metatrue

\documentclass[single-column]{fairmeta}

\title{The Reward Was in Your Data All Along: Correcting Flow Matching with Discriminator-Guided RL}

\author[1,2]{Nicolas Beltran-Velez}
\author[1]{Felix Friedrich}
\author[1,3,5]{Zhang Xiaofeng}
\author[1]{Reyhane Askari-Hemmat}
\author[1]{Xiaochuang Han}
\author[1,3,4,6,*]{Adriana Romero-Soriano}
\author[1,*]{Michal Drozdzal}

\affiliation[1]{FAIR at Meta}
\affiliation[2]{Columbia University}
\affiliation[3]{Mila -- Qu\'{e}bec AI Institute}
\affiliation[4]{McGill University}
\affiliation[5]{Universit\'{e} de Montr\'{e}al}
\affiliation[6]{Canada CIFAR AI Chair}
\contribution[*]{Joint senior authors.}

\usepackage{amsmath}
\usepackage{amssymb}
\usepackage{mathtools}
\usepackage{amsthm}
\usepackage{aliascnt}
\usepackage{comment}
\usepackage{tcolorbox}
\usepackage{algorithm}
\usepackage{algorithmic}
\usepackage{float}
\usepackage{placeins}
\usepackage{wrapfig}
\usepackage{makecell}
\usepackage{xspace}
\usepackage{enumitem}

\newif\ifshowcomments

\newcommand{\resolved}[1]{}
\newcommand{\lprl}{\lambda_{\textsc{PRL}}}
\newcommand{\drl}{\texttt{DRL}\xspace}

\usepackage{url}
\usepackage{titletoc}
\usepackage{colortbl}

\RenewDocumentCommand{\paragraph}{s o m}{\par\noindent\textbf{#3}\xspace}
\makeatletter
\NewDocumentCommand{\labeledparagraph}{O{} m m}{%
  \par\phantomsection%
  \begingroup%
    \def\parhead@short{#1}%
    \ifx\parhead@short\@empty%
      \def\@currentlabelname{#2}%
    \else%
      \def\@currentlabelname{#1}%
    \fi%
    \label{#3}%
  \endgroup%
  \paragraph{#2}%
}
\NewDocumentCommand{\parheadref}{m}{\nameref{#1}}
\makeatother

\definecolor{paperGreen}{HTML}{26A17B}
\definecolor{paperOrange}{HTML}{DD6B20}

\definecolor{tableTeal}{HTML}{BEE3D7}
\definecolor{tableRedDark}{HTML}{E7B5A2}

\DeclareMathOperator{\KLop}{KL}
\DeclareMathOperator{\TVop}{TV}
\newcommand{\TV}[2]{\TVop\!\left(#1,\,#2\right)}

\DeclareMathOperator{\Var}{Var}
\DeclareMathOperator{\Cov}{Cov}

\newcommand{\R}{\mathbb{R}}

\newcommand{\X}{\mathcal{X}}

\newcommand{\E}{\mathbb{E}}

\newcommand{\KL}[2]{\KLop\!\left(#1\,\|\,#2\right)}
\newcommand{\norm}[1]{\left\lVert#1\right\rVert}

\theoremstyle{plain}
\newtheorem{theorem}{Theorem}[section]
\newaliascnt{proposition}{theorem}
\newtheorem{proposition}[proposition]{Proposition}
\aliascntresetthe{proposition}
\newaliascnt{lemma}{theorem}
\newtheorem{lemma}[lemma]{Lemma}
\aliascntresetthe{lemma}
\newaliascnt{corollary}{theorem}

\aliascntresetthe{corollary}
\theoremstyle{definition}
\newaliascnt{definition}{theorem}

\aliascntresetthe{definition}
\newaliascnt{assumption}{theorem}

\aliascntresetthe{assumption}
\theoremstyle{remark}
\newaliascnt{remark}{theorem}
\newtheorem{remark}[remark]{Remark}
\aliascntresetthe{remark}

\crefname{theorem}{theorem}{theorems}
\Crefname{theorem}{Theorem}{Theorems}
\crefname{proposition}{proposition}{propositions}
\Crefname{proposition}{Proposition}{Propositions}
\crefname{lemma}{lemma}{lemmas}
\Crefname{lemma}{Lemma}{Lemmas}
\crefname{corollary}{corollary}{corollaries}
\Crefname{corollary}{Corollary}{Corollaries}
\crefname{definition}{definition}{definitions}
\Crefname{definition}{Definition}{Definitions}
\crefname{assumption}{assumption}{assumptions}
\Crefname{assumption}{Assumption}{Assumptions}
\crefname{remark}{remark}{remarks}
\Crefname{remark}{Remark}{Remarks}

\abstract{Score- and flow-matching models often rely on preference-based reinforcement
learning for two purposes: aligning with subjective preferences and,
surprisingly, recovering properties---such as visual realism and coherent
object structure---that matching-based training is intended to learn from the
data itself. We argue that this reflects a structural mismatch. Matching losses measure $\ell_2$ regression error on the
velocity or score field under training-time marginals, a proxy poorly aligned
with the visual and semantic properties that determine sample quality at
inference. Given a reward aligned with these properties, RL sidesteps the
mismatch by evaluating the model on its own samples and following the reward
landscape directly. The challenge is to obtain such a reward without relying on
human preferences, which are expensive and conflate data realism with annotator
inclinations.
We propose Discriminator-Guided RL (\drl). \drl{} trains a discriminator to
separate data from base-model samples in a pretrained representation space and
uses its logit as the reward in KL-regularized RL. The pretrained space restricts the discriminator to perceptually meaningful directions, and the logit estimates the log-likelihood ratio between data and model, which is the optimal reward for targeting the data distribution.
Across SiT, JiT, REPA, and RAE, \drl{} reduces guidance-free FID (e.g., $9.38 \to 2.62$ on SiT) and semantic-space FD (e.g., $88.2 \to 19.3$ on DINOv3 for SiT), with consistent gains across all backbones, and improves human-preference rewards without training on them.
It also yields
a better Pareto frontier between preference reward and image fidelity under
subsequent preference-based post-training, increasing alignment while
reducing low-level artifacts such as oversaturation and excessive brightness.

}
\date{\today}
\correspondence{Nicolas Beltran-Velez, \email{nb2838@columbia.edu}}

\begin{document}

\maketitle

\begin{figure}[!bp]
    \vspace{-0.25cm}
    \centering
    \begin{subfigure}[t]{0.48\linewidth}
        \centering
        \includegraphics[trim={0 222pt 0 0}, clip, width=\linewidth]{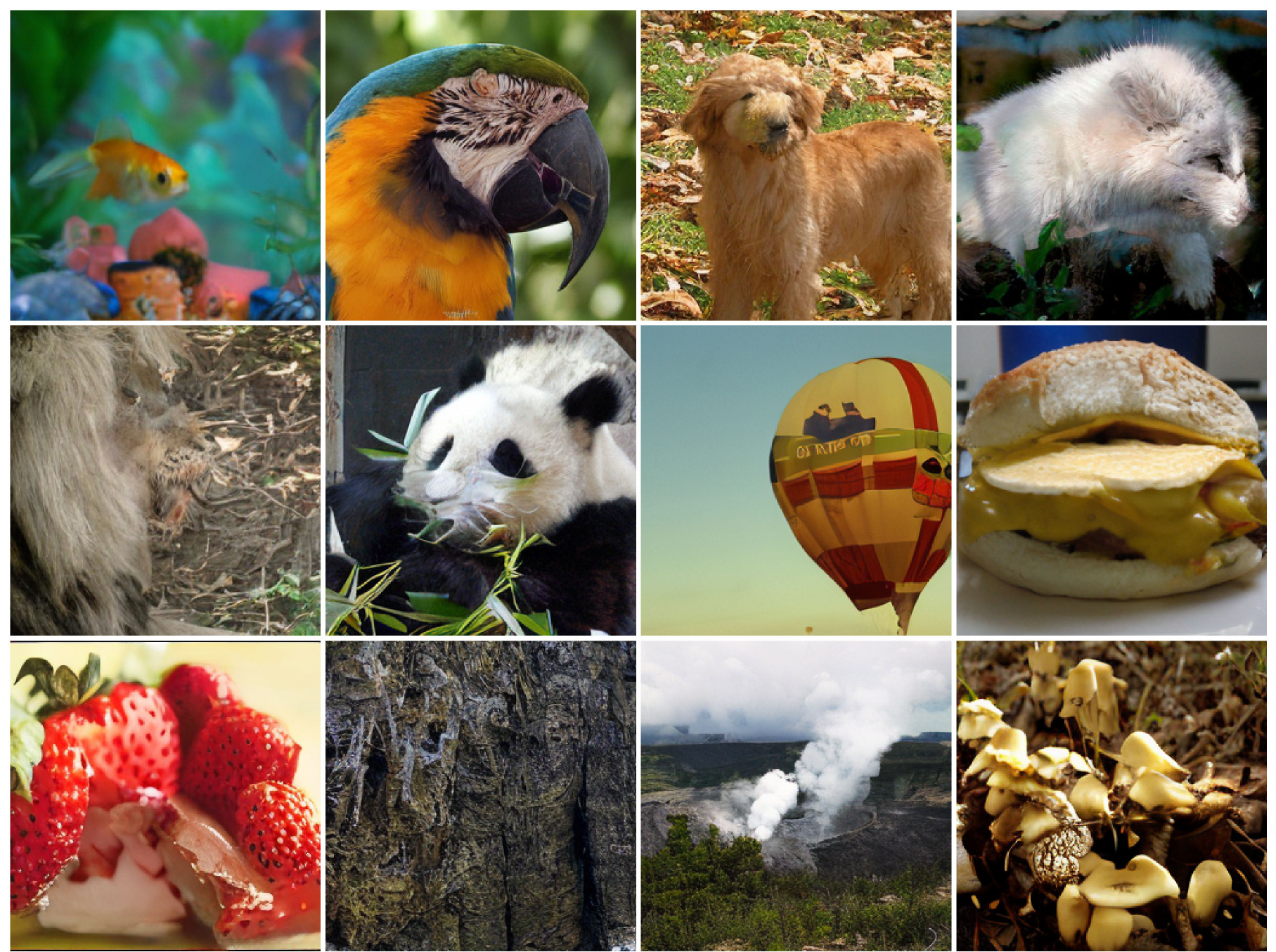}
        \caption{Base model (REPA SiT-XL/2)}
        \label{fig:teaser_base}
    \end{subfigure}
    \hfill
    \begin{subfigure}[t]{0.48\linewidth}
        \centering
        \includegraphics[trim={0 222pt 0 0}, clip, width=\linewidth]{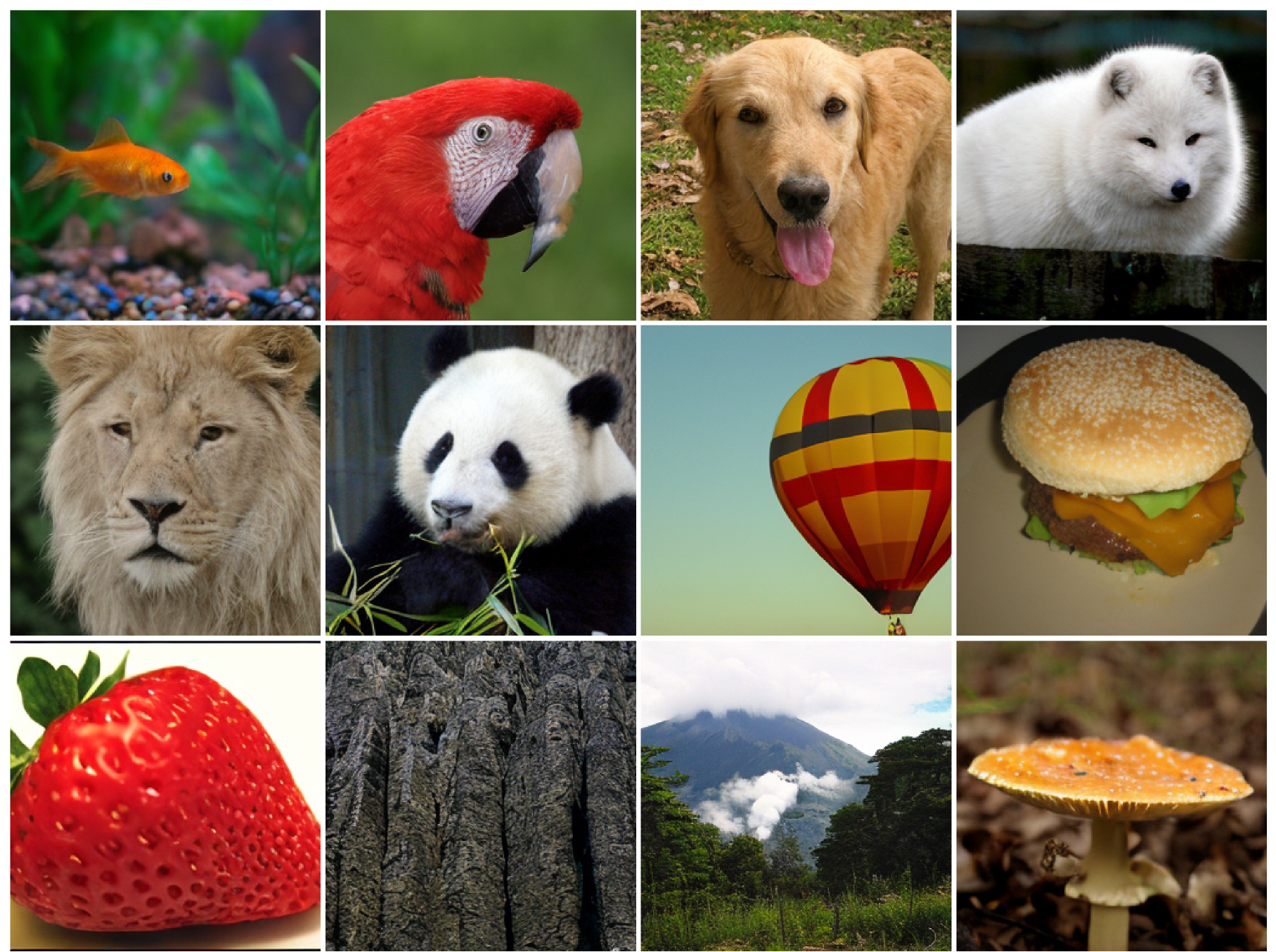}
        \caption{After \drl (ours)}
        \label{fig:teaser_finetuned}
    \end{subfigure}
        \vspace{-0.1cm}
    \caption{Seed-matched samples from REPA before and after \drl post-training with CFG${}=1$.
    Our \drl model produces sharper, more coherent images even without using any CFG.
    }
    \label{fig:teaser}
    \vspace{-0.5cm}
\end{figure}

\section{Introduction}

Flow-based models have become the dominant paradigm for generative modeling in continuous domains due to their simulation-free, regression-based training objectives~\citep{lipman2022flow,albergo2023stochastic,ho2020denoising,song2020score}.
In practice, however, these models are rarely trained \emph{only} with these losses. Instead,
they follow a multi-stage pipeline: (1) a flow or score matching (FSM) objective is used to fit a base model
$p_{\mathrm{base}}$ to a data distribution $q$; and (2) reinforcement learning (RL) is used to tilt $p_{\mathrm{base}}$ towards desirable regions under a reward $r(x)$.

The standard motivation for RL post-training is that it enables fitting a model to the implicit
distribution $p^*(x)\!\propto\!p_{\mathrm{base}}(x)\exp(r(x))$ for which no data exist but which is easy to specify via a reward~\citep{xu2023imagereward,black2023training,fan2023dpok,schuhmann2022laion}.
Nevertheless, in practice, RL post-training is also used to  enforce properties that are already present in the data, like visual realism, coherent object structure~\citep{wallace2024diffusion,domingoenrich2025adjoint}, or physically correct motion~\citep{ye2025dataregularizedreinforcementlearningdiffusion,liu2025improving}.
This is both puzzling---why can the model not learn these real-world properties from the
data directly but recover them with RL?---and undesirable, as preference
data, which is the norm for training the rewards in these applications, is  expensive to collect and conflates the real-world properties we want to preserve with subjective concerns.\looseness-1

This raises two questions that we focus on in this paper:
\textbf{(1)} Why do matching-based objectives fail to capture properties that are present in the data but that RL helps recover?
\textbf{(2)} Can we use RL to correct these failures without relying on preference-based rewards or data?

First, we argue that the fact that RL recovers properties already present in
$q$ points to the matching objective, rather than data or capacity, as a
plausible bottleneck. To build intuition, we study a simplified setting with
probability-flow ODEs and ask when FSM losses control the property gap
$\bigl|\mathbb{E}_{p_{\mathrm{base}}}[r] - \mathbb{E}_q[r]\bigr|$ on a property
$r$ of interest. The guarantees we find are weak. In the worst case, none
exist: the FSM loss is measured under the interpolation marginals $q_t$, but
sampling is governed by the model's rollout distribution $p_t$
(\Cref{prop:sft_no_certificate}). Under regularity assumptions a guarantee does
exist. However, it degrades with the reward's $\ell_2$ Lipschitz constant
(\Cref{prop:lipschitz_reward_certificate}), which we expect to be large for
many properties of interest. Although these bounds are worst-case, they suggest how RL
can help: by construction it avoids both obstructions, evaluating the model on
its own trajectories rather than $q_t$ and following the reward landscape
directly instead of the geometry of velocity space.

However, RL is only useful if we optimize a reward
that is a good proxy for the properties of $q$ we
want to preserve, and a priori we do not have one.
To this end, motivated by the analysis above, we introduce \textbf{Discriminator-Guided
RL (\drl)}. \drl trains a discriminator to estimate
the density ratio between the data and
the model's distributions in a pretrained
self-supervised learning (SSL) representation space. The logit of this
discriminator then serves as the reward for KL-regularized RL.
Working in an SSL space is the central design choice:
it restricts the reward to use discrepancies visible
through the representation, both making
density-ratio estimation tractable and confining
corrections to semantically meaningful axes---all
without ever relying on preference data.

We apply \drl to state-of-the-art image generation architectures, SiT~\citep{ma2024sit}, JiT~\citep{li2025jit}, REPA~\citep{yu2025repa}, and RAE~\citep{zheng2026rae}, and show that even with a simple linear discriminator, \drl delivers large improvements over the base model across the board (\Cref{fig:teaser}), measured by Fr\'echet distances in multiple feature spaces. These gains directly translate into better scores under held-out human-preference rewards, despite \drl never being trained on such rewards.
Beyond improving the base model, \drl also strengthens the post-training pipeline that follows. Applying preference-based RL (PRL) on top of \drl rather than the base model yields a better reward--distortion Pareto frontier, since \drl absorbs the corrections recoverable from $q$ and leaves PRL to handle genuine subjective preferences. Full experimental details are in \Cref{app:experimental_setup}.

Overall, our results suggest viewing RL post-training of flow-based models not only as a way to optimize external preferences, but also as a complementary mechanism for recovering structure in the training data that is imperfectly captured by standard matching-based objectives.

\section{Preliminaries}
\label{sec:prelim}
\paragraph{Flow and Score Matching.}
\label{sec:diffusion}
Flow and diffusion models generate data by
transporting a simple base distribution, typically
Gaussian noise, to the data distribution $q$
\citep{ho2020denoising,song2020score,lipman2022flow,albergo2023stochastic,liu2022flow}.
A standard training construction introduces
independent random variables $X_1 \sim q$ and
$X_0 \sim \mathcal{N}(0, I)$, and defines the
auxiliary interpolation
\citep{albergo2023stochastic,lipman2022flow,liu2022flow}
\begin{equation}
X_t = \alpha(t) X_1 + \beta(t) X_0, \quad \text{where } t \in [0,1],\ \alpha(0)\!=\!0,\ \beta(0)\!=\!1,\ \alpha(1)\!=\!1,\ \beta(1)\!=\!0.
\label{eq:interpolation}
\end{equation}
This
induces a family of marginals $q_t$ interpolating
between $q_0\!=\!\mathcal{N}(0,I)$ and $q_1\!=\!q$.
Sampling from $q$ then reduces to learning the
velocity field
$v_t(x)\!:=\!\mathbb{E}[\dot{\alpha}(t)X_1+\dot{\beta}(t)X_0\mid X_t\!=\!x]$
or the score field
$s_t(x)\!=\!\nabla_x \log q_t(x)$, since the SDE
\begin{equation}
dX_t =
\Bigl[
v_t(X_t)
+ \tfrac{1}{2}\sigma(t)^2 s_t(X_t)
\Bigr]dt
+ \sigma(t)\, dW_t,
\qquad
X_0 \sim \mathcal{N}(0,I),
\label{eq:ode_sde}
\end{equation}
has marginals $q_t$ for any noise schedule
$\sigma(t)$
\citep{song2020score,lipman2022flow,albergo2023stochastic}; see \Cref{app:proofs:sde_marginals}.
Both fields are conditional expectations, and so
admit simulation-free $\ell_2$ regression targets
\citep{hyvarinen2005estimation,vincent2011connection,ho2020denoising,song2020score,lipman2022flow,albergo2023stochastic}:
conditional flow matching (CFM) and denoising score
matching (DSM) use, respectively,
\begin{equation}
\label{eq:dsm}
\E\!\left[\left\|v_\theta(X_t,t)-\bigl(\dot{\alpha}(t)X_1+\dot{\beta}(t)X_0\bigr)\right\|^2\right]
\quad\text{and}\quad
\E\!\left[\left\|s_\theta(X_t,t)+\bigl(X_t-\alpha(t)X_1\bigr)/\beta(t)^2\right\|^2\right],
\end{equation}
with expectation over $(t,X_0,X_1)$. Under
Eq. \eqref{eq:interpolation}, $v_t$ and $s_t$ are
recoverable from each other
\citep{karras2022elucidating,domingoenrich2025adjoint} (see \Cref{app:proofs:score_velocity}),
so learning one suffices; we write
$\mathcal{L}_{\mathrm{FSM}}$ for either objective
when the distinction is unimportant.

\paragraph{RL Post-training for Flow Models.}
\label{sec:posttraining}
\label{sec:posttraining:rl}
Given a pretrained model $p_{\mathrm{base}}$ and a reward $r\!:\!\mathcal{X}\!\to\!\mathbb{R}$, KL-regularized RL \citep{ziebart2010modeling} aims to move probability mass toward high-reward regions without drifting too far from the base model by solving
\[
\max\nolimits_p \;
\mathbb{E}_{x \sim p}[r(x)]
- (1/\lambda)\;\KL{p}{p_{\mathrm{base}}},
\]
or equivalently minimizing the reverse KL, $\KL{p}{p^*}$, where $p^*(x) \propto \exp(\lambda r(x))\, p_{\mathrm{base}}(x)$ and
$\lambda > 0$ is a hyper-parameter controlling the reward-KL trade-off.

Unfortunately, directly optimizing this objective is infeasible for flow models as the endpoint densities required by the KL term are unreliable. 
Instead, the standard approach is to use a nonzero noise schedule $\sigma(t)$ in the sampling SDE (Eq. \eqref{eq:ode_sde}) and replace the endpoint regularization by a KL over path distributions \citep{fan2023dpok,domingoenrich2025adjoint}, which can be analytically evaluated using Girsanov's theorem \citep{girsanov1960transforming,domingoenrich2025adjoint}. If $\mathbb{P}_\theta$ and $\mathbb{P}_{\mathrm{base}}$ denote the trajectory distributions induced by the fine-tuned and base models under the same training SDE, the new objective is then
\begin{equation}
\mathcal{L}_{\mathrm{RL}}(\mathbb{P}_\theta)
:=
- \mathbb{E}_{x \sim p_\theta}[r(x)] +
(1/\lambda)\;\KL{\mathbb{P}_\theta}{\mathbb{P}_{\mathrm{base}}}.
\label{eq:rl_path_obj}
\end{equation}
If training uses a specific, so-called memoryless $\sigma(t)$, the minimizer of Eq. \eqref{eq:rl_path_obj} has the same endpoint as the original KL-regularized objective, and admits sampling after training with any noise schedule \citep{domingoenrich2025adjoint}. This objective can then be optimized with standard policy gradient methods such as REINFORCE, PPO, or GRPO \citep{williams1992simple,black2023training,fan2023dpok,schulman2017proximal,shao2024deepseekmath,liu2025flowgrpotrainingflowmatching}, or with adjoint-based methods if the reward is differentiable \citep{domingoenrich2025adjoint}; details are deferred to \Cref{app:adjoint_matching}.

\section{Motivation: Understanding the Limitations of Flow and Score Matching}
\label{sec:limitations}

To motivate our method, we first ask why FSM losses might fail to capture properties of the data distribution that RL is able to recover.
With infinite capacity and perfect optimization, the minimizer of either matching loss should reproduce the data distribution $q$ exactly.
Therefore, a natural starting point is to consider how these losses behave in the setting where optimization is only approximate.
We analyze the ODE-based formulation for simplicity.

Let $r\!:\!\X\!\to\![0,1]$ quantify a property of interest in the data---say whether an object in an image is well formed.
A model that preserves this property adequately satisfies $\E_{p_v}[r]\!\approx\!\E_q[r]$.
Writing $\mathcal{L}_{\mathrm{FSM}}$ for either matching objective and $v^*$ for its
minimizer (with overloaded notation for the score), we ask: when does
$\bigl|\mathcal{L}_{\mathrm{FSM}}(v)-\mathcal{L}_{\mathrm{FSM}}(v^*)\bigr|\le\varepsilon$
imply $\bigl|\E_{p_v}[r]-\E_q[r]\bigr| \to 0$ as $\varepsilon\to 0$, and at what rate?\looseness-1

Our first result shows that, in the worst case, no such rate exists: low FSM loss can be arbitrarily uninformative about the reward gap.

\ifneurips
\begin{wrapfigure}[20]{r}{0.35\textwidth}
    \vspace{-1.1em}
    \centering
    \includegraphics[width=\linewidth]{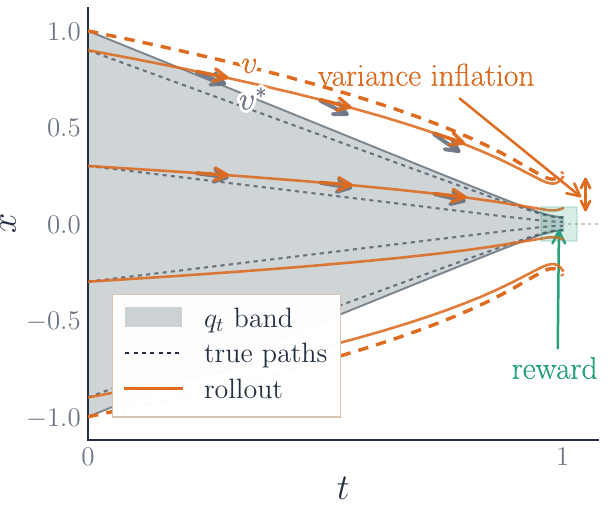}
    \vspace{-1.1em}
    \caption{Distribution shift. Rollout paths (orange) drift off the noising band $q_t$ (gray) and miss the high-reward region at $t{=}1$.
    The further from $q_t$, the larger the error of true field (gray) \textit{vs.}\ learned field (orange).
    Construction in \Cref{app:prop:ode_general_no_certificate}.}
    \label{fig:distribution_shift_main}
\end{wrapfigure}
\fi
\ifmeta
\begin{wrapfigure}[20]{r}{0.40\textwidth}
    \vspace{-1.1em}
    \centering
    \includegraphics[width=\linewidth]{figures/theory-figures/distribution-shift-appendix/distribution_shift_paper.pdf}
    \vspace{-1.1em}
    \caption{Distribution shift. Rollout paths (orange) drift off the noising band $q_t$ (gray) and miss the high-reward region at $t{=}1$.
    The further from $q_t$, the larger the error of true field (gray) \textit{vs.}\ learned field (orange).
    Construction in \Cref{app:prop:ode_general_no_certificate}.}
    \label{fig:distribution_shift_main}
\end{wrapfigure}
\fi
\begin{proposition}[No reward certificate from FSM]
\label{prop:sft_no_certificate}
Fix a (sufficiently regular) $q$ and a bounded reward $r\!\in\![0,1]$ with $r(x)\le\eta$ on some region $B\subset\X$.
For every $\varepsilon>0$ and every $\delta>0$ there is a velocity field $v$ and a score field $s$, whose training error under $q_t$ is at most $\varepsilon$ uniformly in $t$, but whose probability-flow ODE endpoint law $p$ satisfies $\E_p[r]\le\eta+\delta$, regardless of $\E_q[r]$.
A precise statement and proof is given in \Cref{app:prop:ode_general_no_certificate}.
\end{proposition}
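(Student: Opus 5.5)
The plan is to build the bad field by perturbing the true velocity $v^*$ only where $q_t$ has very little mass, but in a way that steers the ODE flow into $B$. The figure already suggests this picture: rollouts leave the noising band and are pushed toward the low-reward region. Since the training error is $\E_{q_t}\|v-v^*\|^2$, a perturbation supported on a set $U_t$ with $q_t(U_t)$ tiny costs little loss even when it is large in magnitude. The only requirement is that the bounded perturbation makes $q_t(U_t)\,\sup\|v-v^*\|^2\le\varepsilon$. The score field is handled the same way, or simply defined from $v$ through the score--velocity correspondence of \Cref{app:proofs:score_velocity}; that map is affine in the field with $t$-dependent coefficients, so the error bounds transfer up to constants. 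To make the bound uniform in $t$, I would restrict the construction to $t\in[t_0,1-t_0]$ and set $v=v^*$ near the endpoints, where the coefficients may blow up.

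Step 1 (regularity setup). I would assume $q$ has a smooth positive density with Gaussian-like tails. Then $q_t$ is smooth and positive for $t<1$, and $v^*$ is locally Lipschitz, so the ODE flows are well defined.

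Step 2 (early transport stage). On a short time window $[t_0,t_1]$, I would add a perturbation $w_t$ that acts as a ``funnel''. I would take $w_t=v_t^*+c\,\chi_t(x)$, where $\chi_t$ is a smooth cutoff supported on a thin tube. The tube starts where almost all mass sits and ends in a region that the true flow $\Phi^*_{t_1\to1}$ maps into $B$. A better choice is to compose: define the target $B_{t_1}:=(\Phi^*_{t_1\to 1})^{-1}(B)$. This set has positive but possibly tiny $q_{t_1}$-mass.

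The obstacle is that if the perturbation moves almost all of the mass, it must act where $q_t$ has almost all of its mass, and that is exactly what the loss penalizes. The loss must be small, yet the pushforward must change drastically. The way out is to use very short time windows and very large speeds. Moving the bulk mass a distance $D$ within time $\tau$ requires speed about $D/\tau$, which costs roughly $\int (D/\tau)^2 q_t\,dt \sim D^2/\tau$. That cost is large, so naive transport fails. I therefore expect to exploit that the statement asks for error bounded uniformly in $t$ of the \emph{training} loss under $q_t$, not along rollouts. Once rollouts leave the band $q_t$, which they can be made to do gradually, the field can be altered freely wherever $q_t$ is negligible. A first small deviation is amplified by unstable dynamics placed in low-$q_t$ regions. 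For the small deviation I would use a divergence-like rotation field with tiny magnitude $\sqrt{\varepsilon}$, so it is cheap everywhere. It displaces the rollout law $p_t$ slightly off $q_t$, into a region where $q_t$ is smaller by Gaussian tail factors. There a larger field, still cheap, pushes it further, and I iterate. Many stages of geometric growth then bring $p_{t_2}$ mostly into a region $U$ with $q_{t_2}(U)$ negligible. In that region, arbitrary large steering into $B_{t_1}$ and then into $B$ at $t=1$ costs $\le\varepsilon$.

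Step 3 (conclusion). Since $p(B)\ge 1-\delta$ and $r\le\eta$ on $B$ with $r\le1$, it follows that $\E_p[r]\le\eta(1-\delta)+\delta\le\eta+\delta$, independently of $\E_q[r]$. The main difficulty is Step 2, the cascade that leaves the band. I would first try it in one dimension with Gaussian $q$, where $q_t$ and $v^*$ are explicit. The key point is that the learned field is nowhere constrained along the model's own trajectories.
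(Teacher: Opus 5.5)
Your scaffolding matches the paper's: you pull the low-reward set back through the true flow, perturb only on an interior window $[a,b]$ where $\gamma(t)$ is bounded away from zero, and obtain the score version from $\Delta s=\Delta v/\gamma(t)$. There is, however, a genuine gap. Step~2 is the only nontrivial step, and it has no working mechanism.

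\emph{Why the cascade does not start.} A smooth field of uniform size $\sqrt{\varepsilon}$ (a shift or rotation, as you describe) moves trajectories by only $O(\sqrt{\varepsilon})$. Afterwards $p_t$ still sits where $q_t$ does: on the bulk $p_t/q_t=1+O(\sqrt{\varepsilon})$, not a Gaussian tail factor. A rotation that preserves $q_t$ is even weaker, since it leaves the marginals exactly equal to $q_t$.

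\emph{Why it does not grow.} Any ``larger but cheap'' field must be supported on a set $A$ with $q_t(A)$ small. Because $p_t\approx q_t$, such a field acts on only about $q_t(A)+O(\sqrt{\varepsilon})$ of the rollout mass. No stage of your iteration increases the fraction of rollout mass lying in low-$q_t$ regions, so the asserted geometric growth has no engine. Even granting such a region $U$, the final steering into $B_{t_1}$, which may lie in the bulk of $q_{t_1}$, is cheap only if the mass stays concentrated along the way.

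\emph{The Gaussian test case does not help.} For fixed $q$, the linear perturbation $\sqrt{\varepsilon}\,x/\rho_\tau(t)$ inflates the variance by $\exp\bigl(2\sqrt{\varepsilon}\int_0^1 dt/\rho_\tau(t)\bigr)$, which tends to $1$ as $\varepsilon\to 0$. The paper's Gaussian example gets a large factor only by shrinking $\tau$, i.e.\ by choosing $q$.

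What is actually needed is \emph{accumulation}: rollout mass must be compressed onto a set of negligible $q_t$-mass, so that $dp_t/dq_t$ becomes huge there. A cheap small field could help only if it were strongly compressive. For example, a field pointing toward the nearest point of a fine lattice collapses the mass into tiny packets. That is again an accumulation mechanism, not the displacement you describe.

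Your estimate $D^2/\tau$ is what steered you away from direct transport. It charges the transporting field on the bulk of $q_t$ at every instant. What must be large is the perturbation's energy under $p_t$, and the two decouple as soon as $p_t$ concentrates. The paper exploits this with a thin \emph{moving front}:
\begin{itemize}
\item In the coordinate $Y_t=\Phi_{a,t}^{-1}(X_t)$ the true dynamics are frozen. The training law $q_t$ becomes the fixed law $q_a$, which has bounded density because $\beta(a)>0$.
\item A radial shell of width $O(w)$ shrinks at speed $c$. It starts at radius $R+1$, outside a ball of $q_a$-mass at least $1-\delta$, and ends at $\rho/2$, inside the pullback of the low-reward ball. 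It pushes inward with speed at most $c$.
\item On the shell's inner edge the relative position obeys $\dot S=-(c/w)S$. Every trajectory the shell reaches is caught and carried along, so the rollout mass piles up on the shell.
\item Each static $q_a$-point lies in the shell for only a time $3w/c$, and at each instant the shell carries $q_a$-mass $O(w)$. With bounded Jacobians, $\sup_t\E_{q_t}\|\Delta v\|^2=O(c^2w)$, which is at most $\varepsilon$ once $w$ is small.
\end{itemize}
The front must sweep. A funnel that is stationary relative to the true flow only ever affects the trajectories that start inside it. This sweeping shell is the concrete form of your correct remark that the field is unconstrained along the model's own trajectories, and it replaces the cascade entirely.
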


The construction exploits a train--test mismatch.  An early error pushes the
trajectory into regions of low $q_t$-mass, so subsequent errors occur at states
unseen during training and compound along the rollout
(\Cref{fig:distribution_shift_main}). The same pathology motivates DAgger in
behavioral cloning~\citep{ross2011reduction}, with $q_t$ and $p_t$ playing the
roles of expert and learner.

This counterexample is, however, adversarial. Under regularity assumptions on
$q$ and $v^*$ a quantitative reward guarantee does exist --- though, as we
will see, a loose one.

\begin{proposition}[Reward certificate under uniform velocity control]
\label{prop:lipschitz_reward_certificate}
Let $r\!:\!\X\!\to\![0,1]$ be $L_r$-Lipschitz in that $|r(x)-r(y)|\le L_r\|x-y\|$ for all $x,y$, and let $v^*(\cdot,t)$ be $L_v$-Lipschitz in $x$ uniformly in $t\!\in\![0,1]$.
If $\sup_{t,x}\|v(x,t)-v^*(x,t)\|\le\varepsilon$, then the endpoint laws $p$ and $q$ of the probability-flow ODE satisfy
\begin{equation}
\label{eq:lipschitz_certificate}
\bigl|\E_p[r]-\E_{q}[r]\bigr|
\;\le\;
\varepsilon L_r\,((e^{L_v}-1)/L_v)\,.
\end{equation}
This dependence on $L_r$, $L_v$, and $\varepsilon$ is essentially tight, and an on-policy variant under the rollout marginals $p_t$ holds as well; see \Cref{app:lipschitz_certificate} for a formal statement and proof.
\end{proposition}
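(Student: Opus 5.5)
The plan is a synchronous coupling of the two probability-flow ODEs followed by Grönwall's inequality. Let $\phi_t$ and $\phi^*_t$ be the flow maps of $\dot x = v(x,t)$ and $\dot x = v^*(x,t)$. Both start from the same initial point $x_0\sim\mathcal{N}(0,I)$. By the marginal-preservation fact recalled in \Cref{sec:prelim} (with $\sigma\equiv 0$, see \Cref{app:proofs:sde_marginals}), the pushforward of $\mathcal{N}(0,I)$ under $\phi^*_1$ is $q$, and by definition its pushforward under $\phi_1$ is $p$. This gives a coupling $(\phi_1(x_0),\phi^*_1(x_0))$ of $(p,q)$. Using the Lipschitz property of $r$,
\[
\bigl|\E_p[r]-\E_q[r]\bigr| \le \E_{x_0}\bigl|r(\phi_1(x_0))-r(\phi^*_1(x_0))\bigr| \le L_r\,\E_{x_0}\|\phi_1(x_0)-\phi^*_1(x_0)\|.
\]
So it suffices to bound the pathwise deviation uniformly in $x_0$.

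For the deviation, set $e(t)=\|\phi_t(x_0)-\phi^*_t(x_0)\|$, so that $e(0)=0$. Split the velocity difference by adding and subtracting $v^*(\phi_t,t)$:
\[
\bigl\|v(\phi_t,t)-v^*(\phi^*_t,t)\bigr\| \le \bigl\|v(\phi_t,t)-v^*(\phi_t,t)\bigr\| + \bigl\|v^*(\phi_t,t)-v^*(\phi^*_t,t)\bigr\| \le \varepsilon + L_v e(t).
\]
Writing the difference of trajectories in integral form then gives $e(t)\le \int_0^t(\varepsilon+L_v e(s))\,ds$. Grönwall's inequality yields $e(t)\le \varepsilon(e^{L_v t}-1)/L_v$, and evaluating at $t=1$ gives exactly \eqref{eq:lipschitz_certificate}. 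Working in integral form avoids differentiating the norm $e(t)$, which may fail to be differentiable where it vanishes. If $L_v=0$, the factor is read as its limit $1$.

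Two technical points need attention. First, the ODE for $v$ must be well posed: I will assume $v$ is regular enough (for example, locally Lipschitz) to have a unique flow, or else work with any measurable selection of solutions, since the bound is pathwise. Second, the on-policy variant replaces the supremum over $x$ with a sup-in-time $L^\infty(p_t)$ error; the same argument goes through because only the error along $\phi_t(x_0)$, which is distributed as $p_t$, is ever used.

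For essential tightness I would give a one-dimensional example. Take $v^*(x,t)=L_v x$ (affine, with $q$ a suitably scaled Gaussian), $v=v^*+\varepsilon$, and $r(x)=\min(1,\max(0,L_r x+c))$ clipped to $[0,1]$. This gives a constant shift of $\varepsilon(e^{L_v}-1)/L_v$ in the endpoint, and choosing $c$ so the mass sits in the linear region of $r$ makes the bound attained up to constants.

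I expect this tightness construction to be the main obstacle. Reconciling the $[0,1]$ range of $r$ with exact linearity, and checking that the resulting $v^*$ really arises as an FSM minimizer for some $q$ and interpolation schedule, requires care. The upper bound itself is routine.
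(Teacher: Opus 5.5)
Your proof of the upper bound is correct and is the paper's argument (part~1 of \Cref{prop:lipschitz_certificate}). Both couple the two ODEs from a common $X_0$, split $v(X_t,t)-v^*(X_t^*,t)$ into the mismatch at $X_t$ (at most $\varepsilon$) plus the $L_v$-amplification of the current gap, apply Gr\"onwall, and finish with the Lipschitz property of $r$. The paper applies the differential Gr\"onwall inequality to the absolutely continuous $\|X_t-X_t^*\|$. Your integral form is equivalent (apply the differential form to $M(t)=\int_0^t(\varepsilon+L_v e(s))\,ds$), and it is how the paper handles its on-policy part.

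On the on-policy variant, the paper proves more than you do. It needs only the averaged rollout error $\E_{X_t\sim p_t}\|v(X_t,t)-v^*(X_t,t)\|$ to be integrable in $t$, not a $p_t$-essential-sup bound. The reason is that the integral inequality is linear in the mismatch, so you can take expectations (Fubini) before applying Gr\"onwall. This gives $\E\|X_1-X_1^*\|\le\int_0^1 e^{L_v(1-s)}\,\E_{p_s}\|v(\cdot,s)-v^*(\cdot,s)\|\,ds$.

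The genuine gap is in the tightness claim, exactly where you suspected. Taking $v^*=L_vx$ and $v=v^*+\varepsilon$ is the right idea, but this $v^*$ cannot arise from a Gaussian $q$ under a standard schedule. The flow of $L_vx$ from $\mathcal N(0,1)$ is $X_t=e^{L_vt}X_0$. Marginal preservation then forces $q=\mathcal N(0,e^{2L_v})$ and $\alpha(t)^2e^{2L_v}+\beta(t)^2=e^{2L_vt}$ for all $t$, which fails, for example, for $\alpha=t$, $\beta=1-t$. Under such a schedule a Gaussian bridge has amplification $\rho_1/\rho_s$ (with $\rho_t^2=\Var(X_t)$) instead of $e^{L_v(1-s)}$, and ``up to constants'' cannot recover a missing exponential.

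The missing idea is to engineer the interpolation schedule. The paper takes $\alpha(t)=e^{L_v(t-1)}\sin\theta(t)$ and $\beta(t)=e^{L_vt}\cos\theta(t)$ with $\theta(0)=0$, $\theta(1)=\pi/2$. This satisfies the boundary conditions and gives $\Var(X_t)=e^{2L_vt}$. Joint Gaussianity then yields the population CFM minimizer $v^*(x,t)=\tfrac12\tfrac{d}{dt}\log\Var(X_t)\,x=L_vx$. The constant perturbation shifts every endpoint by exactly $A=\varepsilon(e^{L_v}-1)/L_v$, attaining equality for $r(x)=L_rx$.

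For bounded $r$, your plan to put the mass in the linear region also needs a step. With $q=\mathcal N(0,e^{2L_v})$ the mass is spread over a width of order $e^{L_v}$, so it generally cannot sit in a linear region of width $1/L_r$. The paper fixes this by scaling both endpoint Gaussians by a small $\sigma$, which leaves $v^*=L_vx$ unchanged. Then $X_1^*$ concentrates, and a clipped affine reward gives a gap of at least $\min\{1,L_rA\}-\zeta$ for any $\zeta>0$. Separately, the paper shows the $L_r\varepsilon$ factor is tight with bounded rewards already on the standard linear bridge. It uses a triangular-wave reward with a suitably chosen phase and an uphill perturbation, which needs no concentration at all.
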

\ifneurips
\begin{wrapfigure}[14]{l}{0.34\textwidth}
    \vspace{-0.9em}
    \centering
    \includegraphics[width=\linewidth]{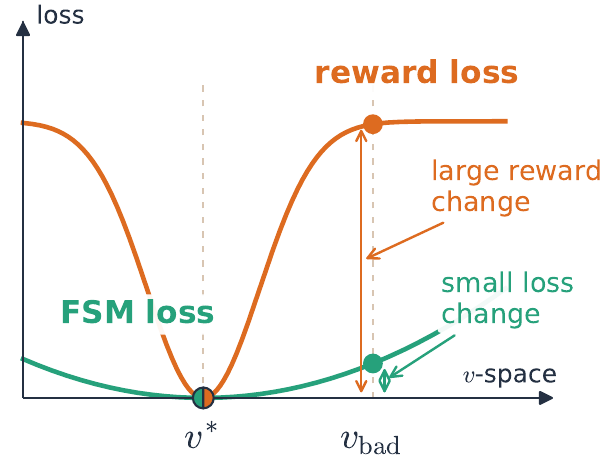}
    \vspace{-0.8em}
    \caption{Geometry mismatch. A small step in velocity space can produce large changes in reward.}
    \label{fig:geometry_mismatch_main}
    \vspace{-0.9em}
\end{wrapfigure}
\fi
\ifmeta
\begin{wrapfigure}[15]{l}{0.4\textwidth}
    \vspace{-0.9em}
    \centering
    \includegraphics[width=\linewidth]{figures/theory-figures/geometry-mismatch-1d/geometry_mismatch_1d_paper.pdf}
    \vspace{-0.8em}
    \caption{Geometry mismatch. A small step in velocity space can produce large reward changes.}
    \label{fig:geometry_mismatch_main}
    \vspace{-0.9em}
\end{wrapfigure}
\fi
The bound factorizes into three terms: the velocity error $\varepsilon$, which FSM directly minimizes; the factor $(e^{L_v}\!-\!1)/L_v$, which controls how errors compound as the ODE is integrated from $t{=}0$ to $1$; and the reward Lipschitz constant $L_r$, which measures how sharply $r$ responds to small $\ell_2$ changes in the sample.

The last term exposes a second, more practically relevant obstruction: FSM
controls errors in velocity space, while the reward is a function of samples,
and when the two geometries are misaligned small velocity errors can map to
large reward errors (\Cref{fig:geometry_mismatch_main}).
The constant $L_r$
quantifies this: rearranged, the bound says that to achieve a reward gap of size
$\delta$ we need $\varepsilon \approx \delta/L_r$, so the sharper the
reward, the smaller the velocity error must be. For rewards of practical
interest, this is unfavorable. For example, in pixel space, whether a hand or animal face
looks well-formed can flip under a few edge pixels, making $L_r$ effectively
very large and the required $\varepsilon$ correspondingly small. Similar
phenomena will occur in any representation space not directly aligned with
the reward.

For matching losses, this is doubly unfavorable as the gradient
signal-to-noise ratio degrades precisely in the small-$\varepsilon$
regime the bound demands. With $\xi := Y - v^*(X_t,t)$ and
$\E[\xi\mid X_t,t]=0$, the per-sample gradient of the CFM loss splits as
\[
g_\theta
= 2 J_\theta^{\top}(v_\theta - v^*)
- 2 J_\theta^{\top}\xi,
\qquad J_\theta := \nabla_\theta v_\theta(X_t,t).
\]
The first term is the optimization signal and vanishes as
$v_\theta \to v^*$; the second is irreducible noise from the
conditional variance of the regression target and does not. The
achievable $\varepsilon$ can therefore plateau well above what
Eq. \eqref{eq:lipschitz_certificate} requires.
A similar decomposition can be derived for score matching.
\looseness-1

\paragraph{RL's edge.}
These obstructions suggest how RL helps.
First, it evaluates the model on the samples it
generates, so there is no train–test mismatch.
Second, and more importantly, with a reward in
hand, we can use the reward landscape directly,
focusing optimization on the directions of
reward improvement and bypassing the geometry
of velocity or score space.
Moreover, while generally loose in practice, when the reward is such that the KL-regularized optimum is the desired target $q$ and the noise schedule is fixed,
the RL objective controls the property gap directly: a Pinsker-style bound~(\Cref{app:rl_reward_bound}) gives
\[
\left|\mathbb{E}_p[r]-\mathbb{E}_q[r]\right|
\le
\sqrt{\frac{\lambda}{2}
\left(\mathcal{L}_{\rm RL}(p)-\mathcal{L}_{\rm RL}(q)\right)}.
\]
This verifies that we optimize the right quantity in the idealized setting.
The implication is that even with perfect data, RL post-training can be
beneficial when a good reward is available: the reward provides an optimization signal directly aligned with the property of interest, whereas FSM must recover that property indirectly from the regression target alone.

\ifneurips
\begin{wrapfigure}[14]{r}{0.35\textwidth}
    \vspace{-1.1em}
    \centering
    \includegraphics[width=\linewidth]{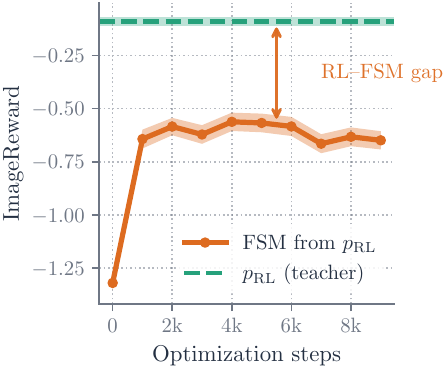}
    \vspace{-0.8em}
   \caption{Distillation gap on Stable Diffusion 1.5: a score-matching student trained on samples from an RL teacher plateaus well below the teacher's reward. \looseness-1}
    \label{fig:distillation_gap}
    \vspace{-0.5em}
\end{wrapfigure}
\fi
\ifmeta
\begin{wrapfigure}[16]{r}{0.40\textwidth}
    \vspace{-1.0em}
    \centering
    \includegraphics[width=\linewidth]{figures/distillation-experiment/distillation_gap_sd15.pdf}
    \vspace{-0.8em}
   \caption{Distillation gap on Stable Diffusion 1.5: a score-matching student trained on samples from an RL teacher plateaus well below the teacher's reward. }
   \vspace{5.1em}
    \label{fig:distillation_gap}
\end{wrapfigure}
\fi

To show that these distinctions matter in practice, we compare RL training against supervised distillation from an RL teacher.
Starting from Stable Diffusion 1.5~\citep{rombach2022high}, we first train an RL teacher on ImageReward using adjoint matching~\citep{domingoenrich2025adjoint}, and then use its samples to fine-tune another Stable Diffusion student using score matching.
If score matching could recover the reward-relevant properties learned by RL, the student should match the teacher.
In practice it does not: the student stalls well below the teacher's reward and fails to close the gap (\Cref{fig:distillation_gap}).
See \Cref{app:distillation_details} for the full setup.
\looseness-1

\section{Method: Discriminator-Guided RL}
\label{sec:implications_fixing_sft}
Unfortunately, RL is only beneficial if we have a reward that captures the aspects of $q$ that FSM may have failed to learn. Finding such a reward is non-trivial.
The standard approach, learning a reward from human preferences, is only incidentally aligned with this goal. While preference rewards can correlate with $q$-properties (\textit{e.g.}, annotators prefer well-formed faces), they can also drag optimization along orthogonal axes like aesthetic appeal, so optimizing them is not necessarily desirable.
Moreover, preference data is expensive to collect, and hard to define for many domains beyond images and video. We would instead like a reward derived directly from $q$.
To this end, we introduce Discriminator-Guided RL (\drl).

\paragraph{From samples to rewards.}
The starting point of \drl is the observation that, in KL-regularized RL, there is a reward whose optimum is exactly $q$. Recall from \Cref{sec:posttraining} that, at $\lambda\!=\!1$, the unique maximizer of the KL-regularized objective is $p^*(x)\!\propto\!\exp(r(x))\,p_{\mathrm{base}}(x)$. Forcing $p^*\!=\!q$ identifies the ideal reward, up to additive constants, as the log density ratio between target and reference,
\begin{equation}
  r^*(x)=\log\frac{q(x)}{p_{\mathrm{base}}(x)}.
  \label{eq:ideal_reward}
\end{equation}
This ratio is intractable in closed form, but it can be estimated from samples.
Training a discriminator $D\!:\!\X\!\to\!(0,1)$ to separate $q$ from $p_{\mathrm{base}}$ under the logistic loss has a well-known optimum, $D^*(x)=q(x)/(q(x)+p_{\mathrm{base}}(x))$.

\ifmeta
\begin{wrapfigure}[20]{r}{0.50\textwidth}
\vspace{-1.4em}
\begin{minipage}{\linewidth}
\begin{algorithm}[H]
\caption{Discriminator-Guided RL (\drl)}
\label{alg:drl_pipeline}
\footnotesize
\vspace{0.1em}
\centerline{\rule{0.04\linewidth}{0.3pt}\hspace{0.4em}%
  \textbf{Stage 1: Reward Estimation}%
  \hspace{0.4em}\rule{0.04\linewidth}{0.3pt}}
\begin{algorithmic}[1]
\REQUIRE Samples $x \sim q$, encoder $\phi$, base model $p_{\mathrm{base}}$
\REPEAT
    \STATE Sample $x_{\mathrm{real}} \sim q$,\; $x_{\mathrm{fake}} \sim p_{\mathrm{base}}$
    \STATE $\ell_{\mathrm{disc}}(\psi) \leftarrow -\log D_\psi(\phi(x_{\mathrm{real}})) - \log\bigl(1{-}D_\psi(\phi(x_{\mathrm{fake}}))\bigr)$
    \STATE Update $\psi$ by descending $\ell_{\mathrm{disc}}(\psi)$
\UNTIL{convergence}
\STATE Define $\hat r(x) \leftarrow \mathrm{logit}\,D_\psi(\phi(x))$
\end{algorithmic}
\vspace{0.2em}
\centerline{\rule{0.04\linewidth}{0.3pt}\hspace{0.4em}%
  \textbf{Stage 2: KL-Reg.\ RL via Adjoint Matching}%
  \hspace{0.4em}\rule{0.04\linewidth}{0.3pt}}
\begin{algorithmic}[1]
\REQUIRE Reward $\hat r$, base velocity $v_{\mathrm{base}}$, KL weight $\lambda$
\REQUIRE Schedule $\sigma(t)$, grid $0{=}t_0{<}\cdots{<}t_K{=}1$
\STATE Initialize $v_\theta \leftarrow v_{\mathrm{base}}$
\REPEAT
    \STATE Sample $\{X_k\}$ from $dX_t = [2v_\theta - \tfrac{1}{t}X_t]\,dt + \sigma(t)\,dB_t$
    \STATE $\tilde a_1 \leftarrow -\lambda\,\nabla_x \hat r(X_K)$;\;
    \STATE Solve backward
    \STATE\quad $\dot{\tilde a}_t = -\tilde a_t^\top \nabla_x[2v_{\mathrm{base}}(X_t,t) - \tfrac{1}{t}X_t]$
    \STATE Set $L_{\mathrm{AM}}(\theta) \leftarrow$
    \STATE\quad $\tfrac12 \sum_{k=0}^{K} \bigl\| \tfrac{2}{\sigma(t_k)}\bigl(v_\theta {-} v_{\mathrm{base}}\bigr) + \sigma(t_k)\tilde a_k \bigr\|^2$
    \STATE Update $\theta$ by descending $L_{\mathrm{AM}}(\theta)$
\UNTIL{convergence}
\STATE \textbf{return} $v_\theta$
\end{algorithmic}
\end{algorithm}
\end{minipage}
\vspace{-1.0em}
\end{wrapfigure}
\fi

Hence, its logit recovers $r^*$. This motivates the reward estimator
\begin{equation}
  \hat r(x) := \log\frac{D(x)}{1-D(x)}
  \label{eq:reward_estimator}
\end{equation}

In practice, we parametrize  the logit $\hat r$ directly and use a sigmoid to define the discriminator.

\paragraph{Using representations.}
Estimating $r^*$ directly in the flow output space is, however, both statistically hard and semantically unreliable. For example, a discriminator might separate $q$ from $p_{\mathrm{base}}$ using artifacts that have no bearing on the relevant properties, achieving low classification error without producing a useful estimate of the density ratio (see \parheadref{sec:exp_ablations}). We therefore constrain the reward to a pretrained representation space. Given a frozen encoder $\phi\!:\!\X\!\to\!\mathcal{Z}$, we set $\hat r(x)\!=\!h(\phi(x))$ for a learned head $h$. This both reduces the dimensionality of the estimation problem and restricts the discriminator to $\phi$-visible structure.

Mathematically, this restriction means that \drl will generally never be able to target $q$ exactly. Nevertheless, its new target has a simple and intuitive characterization. It is the solution to the following constrained optimization problem:
\begin{equation}
  \min\nolimits_p\;\KL{p}{p_{\mathrm{base}}}
  \qquad\text{subject to}\qquad
  p^\phi = q^\phi,
  \label{eq:feature_correction}
\end{equation}
where $p^\phi,q^\phi$ denote the pushforwards of $p,q$ under $\phi$ (see \Cref{prop:feature_correction} for a formal statement and proof). In words, \drl makes the smallest KL change to $p_{\mathrm{base}}$ that aligns its representation-space distribution with that of the target, while leaving any variation invisible to $\phi$ unchanged; choosing $\phi$ thus chooses which aspects of $q$ \drl is allowed to correct.

Finally, when the learned reward is imperfect, we recover a feature-space test-function bound that augments the standard KL-regularized RL suboptimality bound with an additional term capturing the expected discrepancy between the ideal target in \Cref{eq:feature_correction} and the target implied by the learned reward. We defer the precise statement and proof to \Cref{app:feature_bound_proof}.

\ifneurips
\paragraph{The \drl pipeline.}
These two ideas combine into the \drl algorithm specialized to flow models
(\Cref{alg:drl_implementation} in the appendix). \emph{Stage~1} trains a discriminator in a frozen representation space to distinguish $q$ from $p_{\mathrm{base}}$ and defines the reward as its logit. \emph{Stage~2} fine-tunes the base model with KL-regularized RL under that reward. To exploit reward gradients, we instantiate the RL stage with adjoint matching~\citep{domingoenrich2025adjoint}, a state-of-the-art RL algorithm for flow models.
Further implementation details for both stages are deferred to \Cref{app:additional_implementation_details}.
A short description of adjoint matching is given in \Cref{app:adjoint_matching}.
\fi

\ifmeta
\paragraph{The \drl pipeline.}
These two ideas combine into the \drl algorithm specialized to flow models summarized in \Cref{alg:drl_pipeline}. \emph{Stage~1} trains a discriminator in a frozen representation space to distinguish $q$ from $p_{\mathrm{base}}$ and defines the reward as its logit. \emph{Stage~2} fine-tunes the base model with KL-regularized RL under that reward. To exploit reward gradients, we instantiate the RL stage with adjoint matching~\citep{domingoenrich2025adjoint}, a state-of-the-art RL algorithm for flow models.
Further implementation details for both stages are deferred to \Cref{app:additional_implementation_details}.
A short description of adjoint matching is given in \Cref{app:adjoint_matching}.
\fi

\section{Experiments}
\label{sec:experiments_main}

\ifneurips
\begin{figure}[t]
    \vspace{-0.6em}
    \centering
    \includegraphics[width=1.0\textwidth,trim={1cm 0.4cm 0.2cm 0}]{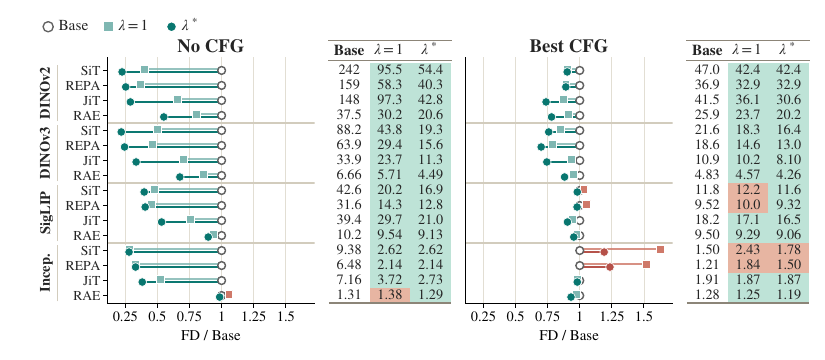}
    \caption{\textbf{Distribution alignment.} Each row shows \drl's FD normalized by the corresponding Base FD (vertical line); lower is better. Squares show $\lambda{=}1$ and filled circles show the value optimized over $\lambda$. Raw FD values in tables: \textcolor{tableTeal}{\rule{0.7em}{0.7em}}\,indicates improvement, \textcolor{tableRedDark}{\rule{0.7em}{0.7em}}\,indicates degradation over Base. Most markers fall left of the Base line, showing broad alignment gains, with the most significant gains in the no-CFG setting.}
    \label{fig:alignment}
    \vspace{-0.4cm}
\end{figure}
\fi

\ifmeta
\begin{figure}[t]
    \vspace{-0.6em}
    \centering
    \includegraphics[width=1.0\textwidth,trim={1cm 0.4cm 0.2cm 0}]{figures/alignment/distribution_alignment_fd.pdf}
    \caption{\textbf{Distribution alignment.} Each row shows \drl's FD normalized by the corresponding Base FD (vertical line); lower is better. Squares show $\lambda{=}1$ and filled circles show the value optimized over $\lambda$. Raw FD values in tables: \textcolor{tableTeal}{\rule{0.7em}{0.7em}}\,indicates improvement, \textcolor{tableRedDark}{\rule{0.7em}{0.7em}}\,indicates degradation over Base. Most markers fall left of the Base line, showing broad alignment gains, with the most significant gains in the no-CFG setting.}
    \label{fig:alignment}
    \vspace{-0.4cm}
\end{figure}
\fi

\ifneurips
\begin{wrapfigure}[14]{r}{0.49\textwidth}
    \vspace{-0.8\baselineskip}
    \centering
    \includegraphics[width=\linewidth]{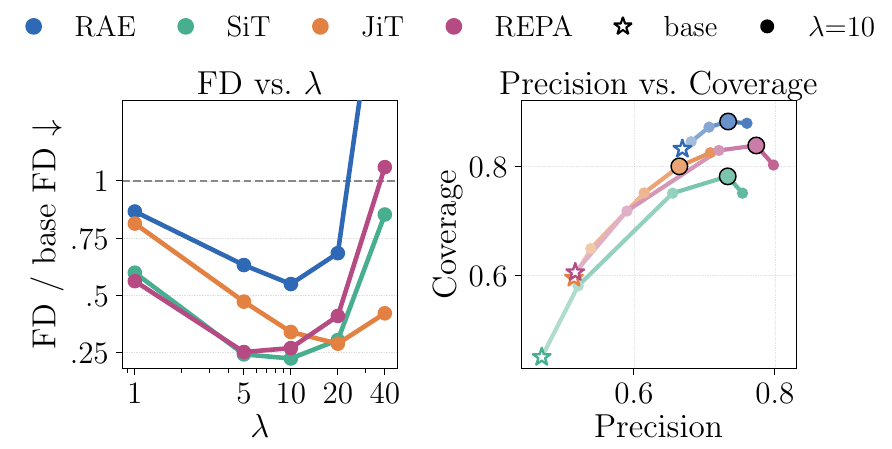}
    \caption{\textbf{FD vs.\ $\lambda$ and Precision--Coverage in DINOv2-L.} FD decreases through $\lambda{=}5{-}10$ and degrades at high $\lambda$ as coverage drops.}
    \label{fig:lambda_baseline_and_sweep}
    \vspace{-0.8\baselineskip}
\end{wrapfigure}
\fi

We validate \drl by using it to fine-tune four ImageNet-pretrained flow models: SiT~\citep{ma2024sit}, JiT~\citep{li2025jit}, REPA~\citep{yu2025repa}, and RAE~\citep{zheng2026rae}, covering latent- and pixel-space architectures. Notably, RAE and REPA already use SSL representations during pretraining as the latent space and regularization respectively.
Unless noted otherwise, the discriminator is a class-conditional linear projection head~\citep{miyato2018cgansprojectiondiscriminator} on frozen DINOv2-Large~\citep{oquab2024dinov2} features, trained for $10$k steps. The RL stage is $3$k steps of adjoint matching~\citep{domingoenrich2025adjoint}. Importantly, this is only a small fraction of the $1\text{M}{+}$ steps typically used to pre-train these models. Full description of every experimental setup, evaluation protocol, and hyperparameter is in \Cref{app:experimental_setup}.
Finally, we run adjoint matching with a new local-linear integrator we introduce for the memoryless SDE required for RL; we found it essential for stable training and view it as an important contribution of independent interest but leave its full description in App.\ \Cref{app:local_linear_integrator}.\looseness-1

\labeledparagraph[distributional alignment]{\drl Reduces the Distributional Gap Along Semantically Meaningful Directions.}{sec:exp_alignment}
We first investigate whether \drl reduces the distributional gap to the data along semantically meaningful directions by computing Fr\'echet Distance (FD)~\citep{heusel2017gans} in four feature spaces: DINOv2~\citep{oquab2024dinov2}, DINOv3~\citep{simeoni2025dinov3}, SigLIP~\citep{zhai2023sigmoid}, and InceptionV3~\citep{szegedy2016rethinking}.
We consider two settings: the theoretically motivated unit $\lambda$ and a tuned $\lambda^\star$ value selected over $1, 5,10,20,40$. 
We consider values larger than $1$ because they often improve RL optimization in practice.
To stabilize training at larger $\lambda$, we found it helpful to add an $R_1$ penalty~\citep{mescheder2018ganconvergence}, which penalizes the discriminator's input gradient on real samples and smooths the reward landscape (see \parheadref{sec:exp_ablations} for more details).

\Cref{fig:alignment} plots FD ratios (\drl/Base) for each (model, feature) pair, alongside their raw FD values with and without tuned CFG.
Without CFG, \drl substantially improves distribution alignment: the tuned $\lambda^\star$ improves all $16$ model--feature pairs, and $\lambda=1$ improves all but one.
The gains are especially large in DINOv2 and DINOv3, where FD often drops by more than half.
With the best CFG setting, the Base models are already much closer to the data, but \drl still further improves alignment in most cases.
In particular, $\lambda^\star$ improves FD in $14/16$ pairs, with consistent gains in DINOv2, DINOv3, and SigLIP.
The only clear exceptions are InceptionV3 for SiT and REPA under CFG, where FD increases slightly relative to Base.
Across both settings, tuning $\lambda$ consistently improves over the unit-weight setting, indicating that pushing past the theoretical value pays off in practice.\looseness-1

\ifmeta
\begin{wrapfigure}[14]{r}{0.50\textwidth}
    \vspace{-0.8\baselineskip}
    \centering
    \includegraphics[width=\linewidth]{figures/lambda-sweep-quantitative/lambda_dinov2_panels.pdf}
    \caption{\textbf{FD vs.\ $\lambda$ and Precision--Coverage in DINOv2-L.} FD decreases through $\lambda{=}5{-}10$ and degrades at high $\lambda$ as coverage drops.}
    \label{fig:lambda_baseline_and_sweep}
    \vspace{-0.8\baselineskip}
\end{wrapfigure}
\fi
To further investigate the role of $\lambda$, \Cref{fig:lambda_baseline_and_sweep} plots FD against $\lambda$ in DINOv2 space alongside the corresponding precision--coverage trajectory. FD bottoms out around $\lambda \in [5, 10]$ and then degrades; coverage drops at $\lambda{=}20$, and by $\lambda{=}40$ (not pictured; see \Cref{fig:lambda_sweep_1,fig:lambda_sweep_2}) the trajectory has collapsed off the frontier---the model overfits to the discriminator's reward signal. We adopt $\lambda{=}10$ as the default for all downstream experiments, since it sits at the knee of this trade-off.\looseness-1

Additional metrics and $\lambda$ ablations for each model--feature pair are in \Cref{app:full_results}. Importantly, KD and FD$_{\mathrm{val}}$ follow the same trends despite not being used to tune $\lambda$.
\ifneurips
\begin{figure}[t]
    \centering
    \includegraphics[width=\textwidth]{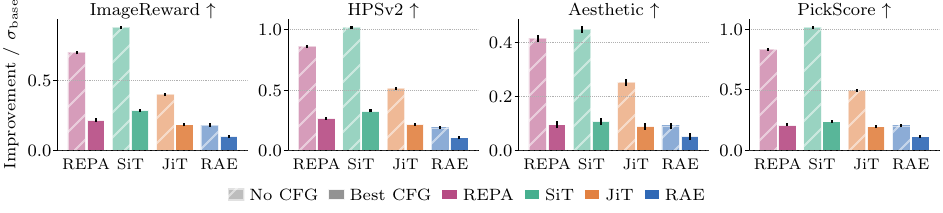}
    \caption{\textbf{Held-out preference reward gains.} Normalized improvement $(r_{\drl}-r_{\mathrm{base}})/\sigma_{\mathrm{base}}$ at $\lambda{=}10$ on four rewards, with and without CFG, where $\sigma_{\mathrm{base}}$ is the base-model reward standard deviation. \drl improves every reward without ever seeing preference data.}
    \vspace{-0.4cm}
    \label{fig:reward_improvement}
\end{figure}
\fi
\ifmeta
\begin{figure}[t]
    \centering
    \includegraphics[width=\textwidth]{figures/reward-improvement/reward_improvement_tight}
    \caption{\textbf{Held-out preference reward gains.} Normalized improvement $(r_{\drl}-r_{\mathrm{base}})/\sigma_{\mathrm{base}}$ at $\lambda{=}10$ on four rewards, with and without CFG, where $\sigma_{\mathrm{base}}$ is the base-model reward standard deviation. \drl improves every reward without ever seeing preference data.}
    \vspace{-0.4cm}
    \label{fig:reward_improvement}
\end{figure}
\fi

\labeledparagraph[image-quality transfer]{\drl Improves Image Quality Without Preference Data. }{sec:exp_reward_transfer}
We next examine whether these distributional gains translate into perceptually better images. As proxies for image quality, we use four held-out preference reward models trained on human comparisons: ImageReward~\citep{xu2023imagereward}, PickScore~\citep{kirstain2023pickapic}, Aesthetics v2.5~\citep{discus0434_aesthetic_predictor_v2_5_2024}, and HPSv2~\citep{wu2023human}. \Cref{fig:reward_improvement} plots the per-model improvement on each reward, $(r_{\drl}-r_{\mathrm{base}})/\sigma_{\mathrm{base}}$, both without CFG and with best CFG. Without preference data or access to these rewards during training, \drl improves every reward on every architecture, with the largest gains for SiT and REPA and smaller but consistent gains for JiT and RAE.

\ifneurips
\begin{wrapfigure}[24]{r}{0.55\textwidth}
    \vspace{-1.0\baselineskip}
    \centering
    \includegraphics[width=\linewidth]{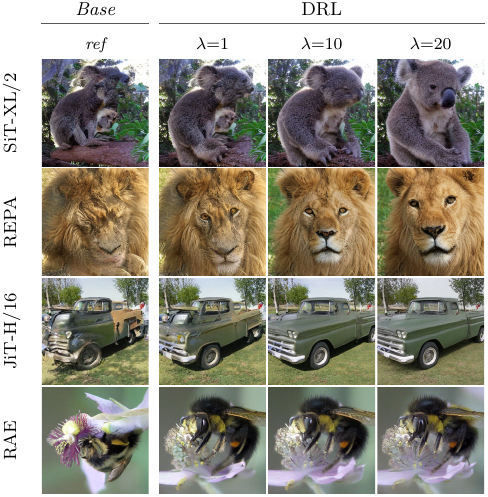}
    \caption{\textbf{Effect of $\lambda$.} Same noise and class label at $\lambda \in \{1, 10, 20\}$. Larger $\lambda$ produces sharper, more coherent samples while preserving content and composition.}
    \label{fig:lambda_sweep}
    \vspace{-0.8\baselineskip}
\end{wrapfigure}
\fi

These improvements are also evident visually.
The accompanying samples show fixed-seed outputs at $\lambda$ values of $1$, $10$, and $20$.
As $\lambda$ grows, shapes sharpen and global structure becomes more coherent, e.g., the spider's shape, the koala's face, and the car's front end all become noticeably more defined. 
We provide more samples and $\lambda$ values in App.\ \Cref{fig:am_samples_jit,fig:am_samples_sit,fig:am_samples_repa,fig:am_samples_rae}.\looseness-1

\ifmeta
\begin{wrapfigure}[24]{r}{0.50\textwidth}
    \vspace{-1.0\baselineskip}
    \centering
    \includegraphics[width=\linewidth]{figures/effect-of-lambda/effect_of_lambda_wrap.pdf}
    \caption{\textbf{Effect of $\lambda$.} Same noise and class label at $\lambda \in \{1, 10, 20\}$. Larger $\lambda$ produces sharper, more coherent samples while preserving content and composition.}
    \label{fig:lambda_sweep}
    \vspace{-0.8\baselineskip}
\end{wrapfigure}
\fi

\labeledparagraph[preference alignment]{\drl Provides a Better Foundation for Preference Alignment.}{sec:exp_reward}
\drl improves distribution matching, but aligning to genuine subjective
preferences still requires fine-tuning with preference-based RL (PRL). We argue that \drl also makes PRL itself more effective. PRL from the base model
is asked to do two things with one imperfect scalar reward: repair
distributional errors left by the generative model \emph{and} optimize
subjective preference. The result is a reward--drift trade-off: small
$\lprl$ leaves structural failures uncorrected; large $\lprl$ exploits
the proxy along nuisance directions like oversaturation or excessive
brightness~\citep{domingoenrich2025adjoint}. By handling part of the data
repair before preference optimization begins, \drl should ease this
trade-off and let PRL focus on genuinely subjective improvements.\looseness-1

\ifneurips
\begin{figure}[t!]
    \centering
    \makebox[\textwidth][c]{%
        \includegraphics[width=1.08\textwidth]{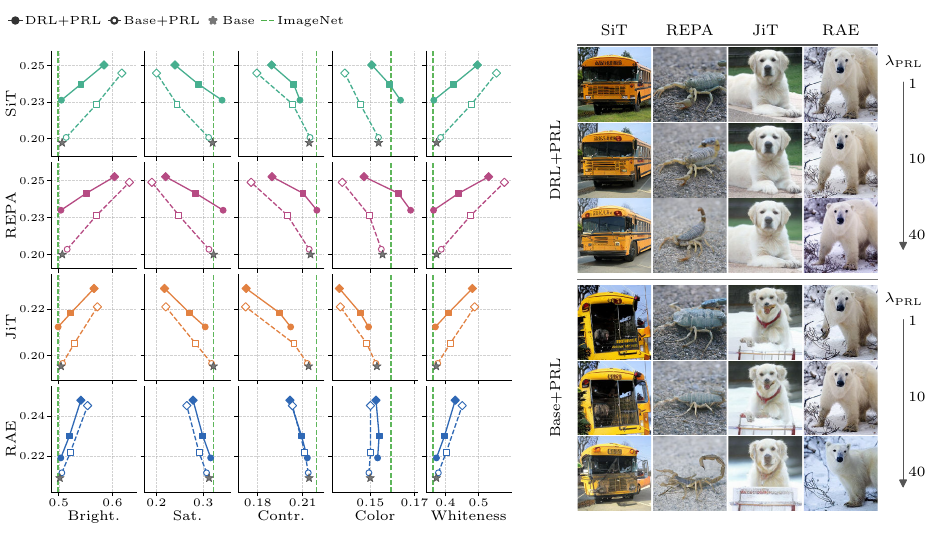}%
    }
    \caption{\textbf{\drl provides a better starting point for PRL.} Left: HPSv2 reward against five low-level statistics under PRL from the base model (dashed, hollow) or the \drl checkpoint (solid, filled); green dashed lines mark ImageNet reference statistics. Right: matched fixed-seed samples at $\lprl\!\in\!\{1,10,40\}$. Across statistics, DRL+PRL gives better reward--drift Pareto fronts than Base+PRL and less brightness/whiteness drift as PRL strength increases. See \Cref{app:setup_preference_rl} for details on how the statistics are computed.}
    \label{fig:pareto_hpsv2_cfg1}
    \label{fig:rl_samples_main}
    \vspace{-0.8em}
\end{figure}
\fi
\ifmeta
\begin{figure}[t!]
    \centering
    \makebox[\textwidth][c]{%
        \includegraphics[width=1.08\textwidth]{figures/joint-prl/joint_prl_pareto_samples.pdf}%
    }
    \caption{\textbf{\drl provides a better starting point for PRL.} Left: HPSv2 reward against five low-level statistics under PRL from the base model (dashed, hollow) or the \drl checkpoint (solid, filled); green dashed lines mark ImageNet reference statistics. Right: matched fixed-seed samples at $\lprl\!\in\!\{1,10,40\}$. Across statistics, DRL+PRL gives better reward--drift Pareto fronts than Base+PRL and less brightness/whiteness drift as PRL strength increases. See \Cref{app:setup_preference_rl} for details on how the statistics are computed.}
    \label{fig:pareto_hpsv2_cfg1}
    \label{fig:rl_samples_main}
    \vspace{-0.8em}
\end{figure}
\fi

We test this by running KL-regularized PRL from either the base model or
the \drl checkpoint. We train with ImageReward~\citep{xu2023imagereward}
at $\lprl\!\in\!\{1,10,40\}$ and evaluate on the held-out HPSv2
reward~\citep{wu2023human}, tracking five low-level statistics that
commonly drift under aggressive PRL: brightness, saturation, contrast,
colorfulness, and whiteness; see \Cref{app:setup_preference_rl} for details
on how these statistics are computed. \Cref{fig:pareto_hpsv2_cfg1} plots HPSv2
against each statistic. DRL+PRL shifts the reward--drift frontier
upward across all five axes, achieving higher HPSv2 than Base+PRL at
comparable drift. The pattern holds with other forms of guidance, and on other rewards, including ImageReward itself (\Cref{app:pareto}).\looseness-1

Samples from the model further support this conclusion.
In \Cref{fig:rl_samples_main}, Base+PRL at
low $\lprl$ leaves structural errors unresolved---the JiT dog has a
malformed face, the REPA scorpion a shell-like body, the SiT bus a
distorted chassis. Larger $\lprl$ corrects some of these, especially
for JiT and REPA, but at the cost of noticeably brighter and whiter
images. DRL+PRL avoids the trade-off: samples remain structurally
coherent and naturally colored across all $\lprl$. Per-model grids and
guided variants appear in
\Cref{fig:rl_samples_jit,fig:rl_samples_sit,fig:rl_samples_repa,fig:rl_samples_rae}.\looseness-1

\ifneurips
\begin{figure}[t]
    \centering
    \includegraphics[width=\textwidth]{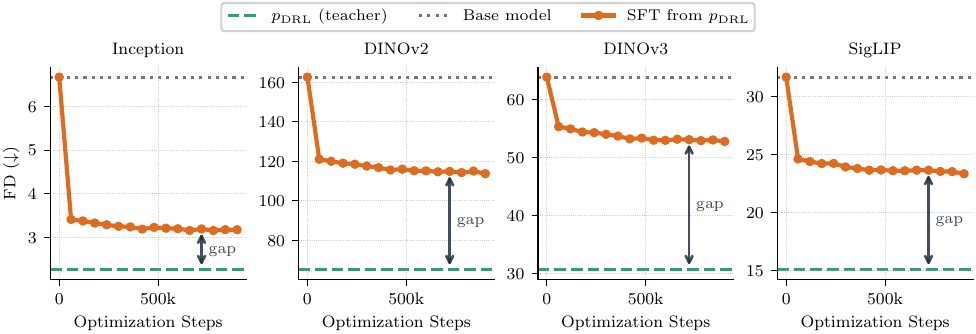}
    \caption{\textbf{\drl performance cannot be reached with flow matching alone.} FD ($\downarrow$) over optimization steps for a student trained with flow matching on samples from a \drl teacher ($\lambda{=}1$, no $R_1$), evaluated in four feature spaces. The student improves rapidly over the base model (grey, dotted) but plateaus well above the teacher (amber, dashed) across all four feature spaces.}
    \label{fig:distillation_from_drl}
\end{figure}
\fi
\ifmeta
\begin{figure}[t]
    \centering
    \includegraphics[width=\textwidth]{figures/distillation-from-drl/distillation_from_drl.pdf}
    \caption{\textbf{\drl performance cannot be reached with flow matching alone.} FD ($\downarrow$) over optimization steps for a student trained with flow matching on samples from a \drl teacher ($\lambda{=}1$, no $R_1$), evaluated in four feature spaces. The student improves rapidly over the base model (grey, dotted) but plateaus well above the teacher (amber, dashed) across all four feature spaces.}
    \label{fig:distillation_from_drl}
\end{figure}
\fi

\labeledparagraph[DRL distillation]{\drl Performance Cannot Be Reached with Flow Matching Alone.}{sec:exp_distillation_from_drl}
In \Cref{sec:limitations} we argued that RL was needed because flow matching cannot faithfully
recover many properties of the data distribution.
To demonstrate that \drl genuinely learns aspects of the distribution that are hard to learn
with flow matching alone, we use a distillation experiment similar to the one in \Cref{sec:limitations}.
Concretely, we take the REPA model post-trained with \drl ($\lambda=1$, $R_1=0$) and use it as
a teacher by fine-tuning REPA on samples from the teacher.
If the features learned by \drl were learnable by the flow matching objective, we should see
no gap between the teacher model and the student model.
As \Cref{fig:distillation_from_drl} demonstrates, this is not the case. Despite training for over $900$k gradient
steps, and seeing more than $50$ million samples (over $150\times$ what RL sees and over $40\times$ the size of ImageNet),
the student model is not able to faithfully replicate the teacher.
This supports the argument from \Cref{sec:limitations} that the RL objective matters, rather than only the samples produced by the teacher.
We provide further details of the setup of this experiment in \Cref{app:setup_distillation_from_drl}.

\labeledparagraph[design choices]{Design Choices.}{sec:exp_ablations}
We close by ablating \drl's two main design choices---the discriminator
feature space and the discriminator architecture---on REPA. Full setup
is in \Cref{app:ablation_setups}.\looseness-1

\textit{Discriminator design}. We compare four discriminators---a linear head and an MLP-2 head on
frozen DINOv2 features, a fully fine-tuned DINOv2, and a DINOv2
architecture trained from scratch---at $\lambda{=}1$ with $R_1{=}0$, and
at $\lambda{=}10$ with $R_1\!\in\!\{0, 10^{-5}, 10^{-3}, 10^{-1}\}$.
\Cref{fig:disc_ablation} plots DINOv2-L FD across this sweep, with
representative samples along the top row. The
full $R_1$ sweep at $\lambda{=}1$ is in \Cref{app:disc_ablation_full}.\looseness-1

Pretrained features are essential: the from-scratch discriminator
underperforms despite reaching $95\%$ validation accuracy, and even
fine-tuning DINOv2 trails frozen features in the theoretically motivated
setting ($\lambda{=}1$, $R_1{=}0$). At $\lambda{=}10$ without $R_1$, the
model collapses to high-reward regions; a small $R_1$ not only stabilizes
training but improves over $(\lambda{=}1, R_1{=}0)$ in both FD and visual
quality.
Curiously, while $R_1$ helps at $\lambda{=}10$, it degrades performance at $\lambda{=}1$ (\Cref{fig:disc_ablation_full}), suggesting its role here differs from the vanishing-gradient role it plays in standard GANs~\citep{mescheder2018ganconvergence}.

Finally, since the MLP and linear heads perform similarly, we prefer the linear head for simplicity.\looseness-1

\ifneurips
\begin{wrapfigure}[20]{r}{0.45\textwidth}
    \centering
    \vspace{-1.1em}
    \includegraphics[width=\linewidth]{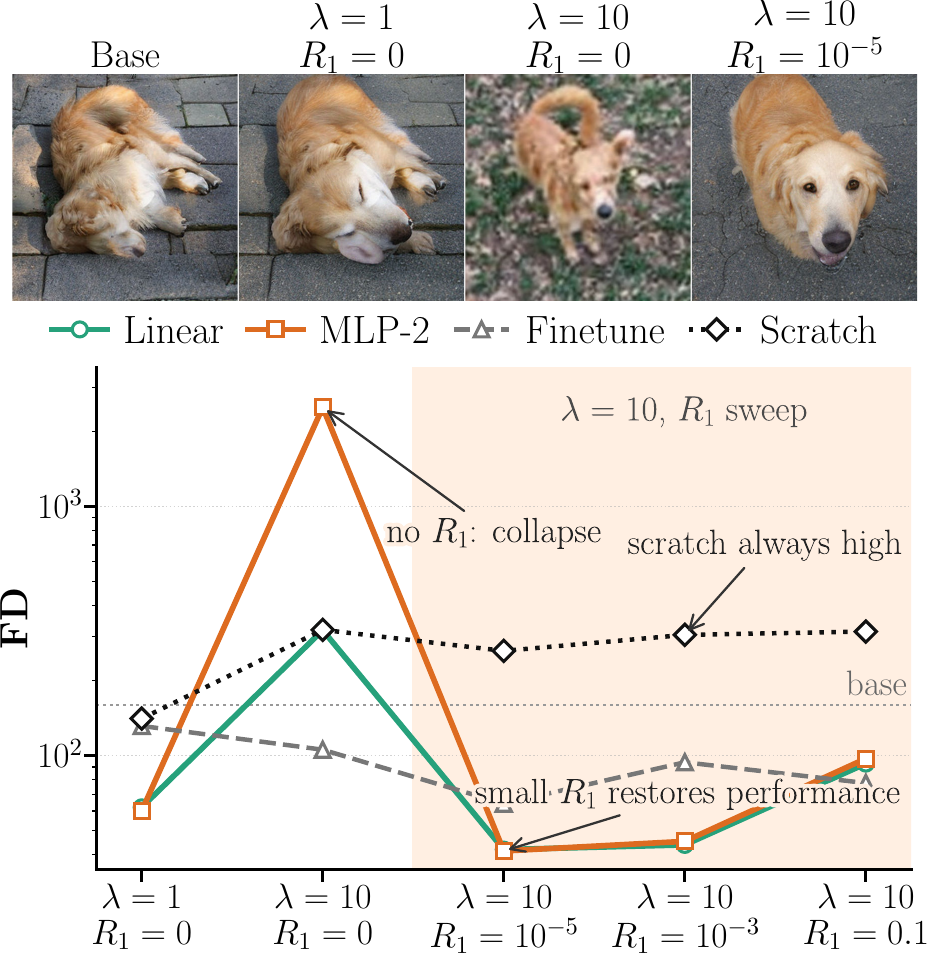}
    \caption{\textbf{Discriminator ablation.} Frozen feature heads work well at $\lambda{=}1$ without $R_1$. At $\lambda{=}10$, removing $R_1$ leads to collapse while a small $R_1$ restores performance. The "from-scratch" discriminator is the worst.}
    \label{fig:disc_ablation}
    \vspace{-1.2em}
\end{wrapfigure}
\fi
\ifmeta
\begin{wrapfigure}[21]{r}{0.45\textwidth}
    \centering
    \vspace{-1.1em}
    \includegraphics[width=\linewidth]{figures/ablations/disc_ablation/disc_ablation_regime_path.pdf}
    \caption{\textbf{Discriminator ablation.} Frozen feature heads work well at $\lambda{=}1$ without $R_1$. At $\lambda{=}10$, removing $R_1$ leads to collapse while a small $R_1$ restores performance. The "from-scratch" discriminator is the worst.}
    \label{fig:disc_ablation}
    \vspace{-1.2em}
\end{wrapfigure}
\fi

\textit{Feature space.}
We re-train the discriminator on six other frozen embedders---DINOv2-B,
DINOv3 (B/L), SigLIP (B/L), and InceptionV3---at $\lambda{=}1$,
$R_1{=}0$, and evaluate FD in every embedder's space.
\Cref{tab:feature_ablation_short} reports a representative subset; the
full $7{\times}7$ table is in \Cref{tab:feature_ablation} with KD as an additional metric. All six SSL
embedders improve both FD and KD over the base on every evaluation
space, with small spread between them; DINOv2-L is consistently
strongest, confirming our default. The one exception is InceptionV3,
which barely improves over the base on any SSL evaluation space and is
outperformed on its own evaluation space by every SSL embedder---we
suspect its classification-only training fails to encode perceptual
structure as linear directions. Larger $\lambda$ and nonzero $R_1$
narrow the gap (FID 3.4, FD 107 on DINOv2-B) but do not close it.\looseness-1

\section{Related Work}
\label{sec:related_work}

We highlight three primary threads of related work below and delay a more detailed discussion to App.\ \Cref{app:related_work}.

\paragraph{RL vs.\ Imitation Learning.}
A long line of research has studied the limitations of supervised imitation learning and the benefits of on-policy methods~\citep{ross2010efficient,ross2011reduction}. We extend this line to flow and score matching, showing that they may suffer from similar pathologies despite not being demonstration-based, with the continuous-time setting yielding even weaker guarantees than standard DAgger-style bounds.

\paragraph{Inverse RL.}
Our remedy mirrors the structure of inverse RL~\citep{abbeel2004apprenticeship,ziebart2008maximum}, which recovers a reward from expert demonstrations and trains a policy against it. Generative adversarial imitation learning (GAIL)~\citep{ho2016generative} and adversarial inverse reinforcement learning (AIRL)~\citep{fu2017learning} apply this idea to standard RL control; we adapt it to improve generative models. Moreover, while these prior works are largely motivated by closing the train--test gap, our observation that the method only helps when paired with an embedder suggests that its principal benefit may be addressing geometric obstructions, not the mismatch itself. We provide further discussion of this point in \Cref{app:limitations}.

\paragraph{Calibration.}
Our theory also asks a calibration question: for a property $r$, does small flow- or score-matching loss imply $\E_p[r]\approx\E_q[r]$? Related work enforces distribution-level constraints directly in controlled generation and fairness-oriented fine-tuning~\citep{khalifa2020distributional,shen2023finetuning}. Closest to us is \citet{smith2510calibrating}, who calibrate generators by finding the KL-closest model satisfying user-specified moment constraints. \drl solves a related KL-constrained problem: it seeks a distribution close to the original model whose pushforward through a feature map matches the data pushforward. Unlike \citet{smith2510calibrating}, we focus on aligning the full distribution in high-dimensional SSL feature spaces, and solve the implicit KL-constrained problem by learning a discriminator between real and generated samples, whose logit is then used for RL.

\section{Conclusions}
\label{sec:conclusions}

\ifneurips
\begin{wraptable}[14]{r}{0.42\textwidth}
\centering
\vspace{-1.0\baselineskip}
\caption{\textbf{Feature space ablation (REPA).} \\ $\lambda{=}1$, $R_1{=}0$, FD ($\downarrow$). Each row is an evaluation feature space; each column is the feature space used to train the discriminator. Best \drl per row in \textbf{bold}. SSL embedders are L variants. Full FD/KD table in \Cref{tab:feature_ablation}.}
\label{tab:feature_ablation_short}
\footnotesize
\setlength{\tabcolsep}{2pt}
\begin{tabular}{l l c cccc}
\toprule
 & & & \multicolumn{4}{c}{Training} \\
\cmidrule(lr){4-7}
 &  & Base & DINOv2 & DINOv3 & SigLIP & Incep. \\
\midrule
\multirow{4}{*}{\rotatebox[origin=c]{90}{Eval}} & Incep. & \cellcolor[HTML]{F5D3BC}6.43 & \cellcolor[HTML]{BEE3D7}\textbf{2.14} & \cellcolor[HTML]{D5EDE5}3.02 & \cellcolor[HTML]{C3E5DB}2.34 & \cellcolor[HTML]{F8E0D0}5.68 \\
 & DINOv2 & \cellcolor[HTML]{F5D3BC}159 & \cellcolor[HTML]{BEE3D7}\textbf{58.3} & \cellcolor[HTML]{C3E5DB}63.2 & \cellcolor[HTML]{DEF1EB}87.1 & \cellcolor[HTML]{F6D8C5}151 \\
 & DINOv3 & \cellcolor[HTML]{F5D3BC}63.7 & \cellcolor[HTML]{D9EFE8}29.4 & \cellcolor[HTML]{BEE3D7}\textbf{18.5} & \cellcolor[HTML]{F1F9F6}38.8 & \cellcolor[HTML]{F5D5C0}62.0 \\
 & SigLIP & \cellcolor[HTML]{F5D3BC}31.2 & \cellcolor[HTML]{BEE3D7}\textbf{14.3} & \cellcolor[HTML]{CDE9E0}16.5 & \cellcolor[HTML]{C9E8DE}16.1 & \cellcolor[HTML]{F6D9C5}29.9 \\
\bottomrule
\end{tabular}

\vspace{-1.0\baselineskip}
\end{wraptable}
\fi

\ifmeta
\begin{wraptable}[13]{r}{0.42\textwidth}
\centering
\vspace{-1.0\baselineskip}
\caption{\textbf{Feature space ablation (REPA).} \\ $\lambda{=}1$, $R_1{=}0$, FD ($\downarrow$). Each row is an evaluation feature space; each column is the feature space used to train the discriminator. Best \drl per row in \textbf{bold}. SSL embedders are L variants. Full FD/KD table in \Cref{tab:feature_ablation}.}
\label{tab:feature_ablation_short}
\footnotesize
\setlength{\tabcolsep}{2pt}

\vspace{-1.0\baselineskip}
\end{wraptable}
\fi

In this paper, we have argued that for flow-based models, RL is valuable not only as a method for steering toward externally specified rewards, but also as a means of evaluating the model on its own samples and exploiting reward geometry rather than $\ell_2$ distance.
\drl turns this view into a method, and with its discriminator-based reward, consistently improves distributional alignment, preference metrics, and performance--drift tradeoffs under preference-based RL.

Several directions follow: 1) The reliance on frozen SSL features invites the question of whether comparable representations can be learned jointly with the model. 2) Our bounds are worst-case, and quantifying a priori which properties FSM struggles with, as well as how these obstructions translate to practical SDE/ODE samplers, would clarify when an RL stage is helpful; we discuss these limitations further in \Cref{app:limitations}. 3) More broadly, RL is one way to exploit on-policy evaluation and non-$\ell_2$ geometry; other sample-based losses like MMD may offer different trade-offs.

Together our results point to a view of generative training in which matching objectives and RL are not two methods with different goals, but rather two complementary tools for getting the most out of our data.

\section*{Acknowledgments}

We thank Brian Trippe, Brian Karrer, Ricky T. Q. Chen, and Sebastian Salazar for helpful comments on the manuscript and for insightful discussions about the ideas presented here.

\clearpage
\bibliographystyle{assets/plainnat}
\bibliography{paper}

\clearpage
\beginappendix

\startcontents[appendix]
\ifdefined\nolinenumbers\nolinenumbers\fi

\titlecontents{appsection}
  [0em]
  {\small}
  {\contentslabel{2.5em}}
  {}
  {\titlerule*[0.5pc]{.}\contentspage}

\titlecontents{appsubsection}
  [1.5em]
  {\small}
  {\contentslabel{2.5em}}
  {}
  {\titlerule*[0.5pc]{.}\contentspage}

\section*{Appendix Contents}
\printcontents[appendix]{app}{1}[2]{}

\clearpage
\ifdefined\linenumbers\linenumbers\fi

\FloatBarrier
\clearpage
\section*{Appendix}
\section{Detailed Related Work}
\label{app:related_work}

\paragraph{Covariate shift, imitation learning, and exposure bias.}
The distribution mismatch pathology we identify in flow and score matching
(\Cref{prop:sft_no_certificate}) is analogous to the classical covariate-shift
problem in imitation learning with behavioral cloning. Behavioral cloning controls prediction error
under the expert's state distribution, but the learned policy is deployed under
its own induced state distribution. As a result, errors that are large under the
model's rollout distribution can appear small under the expert's distribution
\citep{ross2010efficient,ross2011reduction}. This phenomenon has been
extensively studied in reinforcement learning and imitation learning~\citep{laskey2017dart,rajaraman2020toward,swamy2021moments}, and has
recently received renewed attention in language modeling through on-policy
distillation and related methods
\citep{agarwal2024onpolicydistillation,hübotter2026sdpo,lu2025onpolicydistillation,shenfeld2026selfdistillationenablescontinual}.

Independently, the diffusion literature has studied analogous train--test
mismatch under the names exposure bias, sampling drift, and error propagation
\citep{ning2023input,li2023error,ning2023elucidating}. Because this thread has
evolved separately from the RL literature, existing approaches often
address the mismatch through ad hoc, model-specific off-distribution corrections, such
as perturbing training inputs to mimic sampling-time errors
\citep{ning2023input}, modifying the sampling trajectory via shifted time steps
\citep{li2023alleviating}, or modifying the norm of transitions in the sampling process \citep{ning2023elucidating}.
The closest work to an on-policy correction is \citet{li2023error}, who use an MMD regularizer to
match forward noising marginals to short model-induced denoising marginals.
However, their rollouts are warm-started from the data noising distribution and
are very short in their main experiments: only \(L=5\) reverse steps out of
\(T=1000\). Thus the method addresses local train--test mismatch rather than the
full sampling-trajectory occupancy mismatch.

We connect these two lines of work in two ways. First, we show that flow and
score matching admit worst-case pathologies analogous to those in
DAgger-style imitation learning, with even weaker guarantees due to the
continuous-time nature of the dynamics. Second, motivated by this connection,
we focus on an explicitly on-policy solution, closer in spirit to
reinforcement learning and imitation learning than to off-policy
perturbation-based corrections. Given our positive results, we believe an interesting direction for future work is to revisit sampling-based approaches such as \citet{li2023error} with the stronger tooling used in our paper, such as adjoint-based training and the use of semantic feature spaces in the endpoint distributions for the MMD computations.

\paragraph{Inverse RL and adversarial imitation.}
A classical alternative to behavioral cloning is inverse reinforcement learning (IRL), which infers a reward under which expert behavior is optimal and then optimizes that reward with RL~\citep{ng2000algorithms,abbeel2004apprenticeship}. Maximum-entropy and maximum-causal-entropy IRL are especially close in form to our setting: they model expert trajectories as an exponential tilt of a reference trajectory measure by cumulative reward, \(p_r(\tau)\propto \mu_{\mathrm{env}}(\tau)\exp(R_r(\tau))\), where \(\mu_{\mathrm{env}}\) is induced by the initial-state distribution and environment dynamics~\citep{ziebart2008maximum,ziebart2010modeling}. KL-regularized post-training has the analogous endpoint form \(p_r(x)\propto p_{\mathrm{base}}(x)\exp(\lambda r(x))\), so \(r^*(x)=\lambda^{-1}(\log q(x)-\log p_{\mathrm{base}}(x))\) makes \(q\) the optimal tilted distribution, up to additive constants. 

Closely related IRL algorithms include adversarial imitation methods, which use discriminators in related but different ways than \drl. GAIL trains a discriminator between expert and current learner state-action samples, yielding an adversarial objective for matching occupancy measures rather than a fixed reusable reward~\citep{ho2016generative}. AIRL keeps the adversarial imitation loop but constrains the discriminator logit to decompose into a reward term and a potential-based shaping term, with the goal of recovering a reward that can be re-optimized under changes in dynamics rather than just a policy that matches the expert in the training MDP~\citep{fu2017learning}. \drl adapts the discriminator-as-reward idea from GANs and adversarial imitation~\citep{goodfellow2014generative} to generative post-training, but with a fixed endpoint reference: the discriminator is trained only once on \((q, p_{\mathrm{base}})\) through a representation \(\phi\), and its logit defines a fixed reward used by KL-regularized RL. This avoids the notoriously unstable alternating min-max optimization~\citep{mescheder2018ganconvergence}, and exposes a single hyperparameter \(\lambda\) controlling the strength of the correction. We also differ from prior work in our use of self-supervised representations for \(\phi\), without which, as we show, the procedure is not practical.

\paragraph{Adversarial training in diffusion models.} Adversarial training has also been combined with diffusion models directly, in two main ways. Early work on CIFAR-10 and LSUN Churches~\citep{jolicoeur2020adversarial} augmented the matching loss with an adversarial term on the model's posterior-mean estimate \(\E[X_1 \mid X_t]\) sharpening the one-step denoiser used at the final sampling step. More recent approaches~\citep{xiao2021tackling,sauer2024adversarial,xu2024ufogen,yin2024improved} use GANs to essentially amortize the transition kernel $p(x_s \mid x_t)$ for $s$ and $t$, exploiting the fact that GANs are good implicit models in order to enable few-step sampling and distillation.

In addition to the differences described in the previous section, \drl differs from these lines of work in two ways. First, our goal is to correct an existing flow, not to learn an implicit model or a few-step sampler. Second, while \citet{jolicoeur2020adversarial} share our motivation, training the discriminator on the posterior mean \(\E[X_1 \mid X_t]\) is not principled: \(\E[X_1 \mid X_t]\) is the MSE denoiser, not a sample from the model, so matching its distribution to data does not in general yield sharp, high-quality samples. \drl instead trains the discriminator on actual model samples and corrects the velocity field directly via adjoint matching.

\paragraph{Reward-based post-training for diffusion and flow models.}
RL post-training of diffusion and flow models requires two ingredients: a reward and an algorithm to optimize it. As discussed throughout the paper, \drl is a contribution to the first; here we discuss its relation to the second. A growing literature studies optimization algorithms for this setting \citep{black2023training,fan2023dpok,domingoenrich2025adjoint,liu2025flowgrpotrainingflowmatching}. While we expect \drl to benefit directly from continued progress in this literature, our analysis in \Cref{sec:limitations} predicts that not every optimizer is suited to the role: an effective Stage~2 optimizer needs to be on-policy and to propagate reward information through the model in a way that exploits its landscape---either through $\nabla_x r$, as in adjoint matching, or through score-function estimators that estimate $\nabla_\theta \E_{p_\theta}[r]$ directly.

Methods that recast reward optimization as a matching problem against a reward-tilted target, such as Tilt Matching \citep{potaptchik2025tilt} and reward-weighted regression \citep{peters2007reinforcement,fan2502online,black2023training}, take a different route: they use the reward to estimate a matching objective under the target distribution, rather than leveraging the reward landscape directly in the update. From the perspective of our analysis, this leaves them subject to the same matching geometry that we identify as a potential limitation. Some instantiations are additionally off-policy, which may further reintroduce a train--test mismatch. These considerations need not be decisive in every problem instance, but they suggest a plausible partial explanation for the underperformance of RWR reported in \citet{black2023training} and of Tilt Matching reported in Table~1 of \citet{potaptchik2025tilt}, relative to methods that either propagate $\nabla_x r$  or rely on policy-gradient-style estimators. Further empirical validation of this hypothesis is an interesting direction for future work.

\paragraph{Calibration.}
Our analysis of surrogate objectives is closely related to
calibration: for a statistic \(r\), we ask whether low flow- or
score-matching loss is enough to control the calibration error
\(|\E_{p_v}[r]-\E_q[r]|\). Distribution-level constraints of this
form have been studied in controlled language
generation~\citep{khalifa2020distributional} and in fairness-oriented
text-to-image fine-tuning~\citep{shen2023finetuning}. The closest
connection is to \citet{smith2510calibrating}. Their applications
differ significantly from ours --- e.g. they rebalance animal-class proportions in a conditional image model so that lions, tigers, and other wildlife
categories appear evenly, and balance male and female character
frequencies in language-model stories about different professions
--- but one of their methods, CGM-reward, is closely related to
\drl. CGM-reward finds the KL-closest model satisfying moment
constraints \(\E_{p_\theta}[h(x)] = h^\star\), with \(h\) typically
a class indicator (e.g., the gender of the character in a generated
story); the max-entropy dual yields the exponential tilt
\(p_{\alpha^\star}(x) \propto p_{\theta_{\mathrm{base}}}(x)
\exp(\alpha^{\star\top}h(x))\). \drl targets a more general object,
the full pushforward \(p^\phi = q^\phi\) under KL regularization
(\Cref{prop:feature_correction}), but in the linear case the two
coincide in form: our class-conditional discriminator
\(r_y(x) = w_y^\top\phi(x) + b_y\) gives a tilt linear in \(\phi\).
Nevertheless, even in this special case the two generally estimate
different targets, since CGM-reward solves an empirical max-entropy
dual against user-specified moments while \drl recovers the tilt
from a discriminator's logit. Moreover, as shown in
\parheadref{sec:exp_ablations}, the linear head is not essential: MLP
discriminators yield similar gains.

\FloatBarrier
\clearpage
\section{Limitations}
\label{app:limitations}

While our results provide evidence that \drl is an effective post-training method, several caveats remain. These include practical limitations inherited from RL-based fine-tuning, limitations of the representation space used to define the reward, and limitations of the simplified theoretical setting used to motivate the method.

\paragraph{Classifier-free guidance.}
One limitation of \drl, and of flow-based RL methods more generally, is that they do not provide a clean way to incorporate classifier-free guidance (CFG) during training. CFG typically disrupts the structure of the learned flow used to pass between ODE and SDE samplers.
We experimented with CFG during training, but it often led to instability; similar difficulties have been reported by \citet{domingoenrich2025adjoint}.\footnote{\url{https://github.com/microsoft/soc-fine-tuning-sd}}
We also observed that the improvements of \drl over the base model are smaller when both are sampled with their best CFG scale, as shown in \cref{fig:alignment}.
We attribute these issues to the fact that current RL algorithms do not yet handle CFG cleanly, and consider developing principled CFG-compatible RL training an important direction for future work.

\paragraph{Choice of $\lambda$.}
As shown in \cref{sec:experiments_main}, the KL regularization weight $\lambda$ plays a significant role in performance, and a small number of values typically need to be explored.
While we found $\lambda{=}10$ to be a reliable default across our experiments, this may not hold in all settings, and the need to tune $\lambda$ adds to the overall computational cost---a burden shared by RL methods more broadly.
An interesting direction for future work is to make the method more robust to the choice of $\lambda$, and more generally to amortize the cost of tuning this hyperparameter, for example with a ControlNet-like mechanism~\citep{zhang2023adding}.

\paragraph{Reverse KL behavior.}
As discussed in the main text, \drl optimizes a reverse KL objective and therefore inherits both its advantages and its drawbacks.
In particular, while \drl is robust at low $\lambda$ values, at large $\lambda$ the model can become mode-seeking, as visible in the PRDC curves (\cref{app:prdc}).
We believe this partly explains why R1 gradient regularization proved beneficial at higher $\lambda$ (\parheadref{sec:exp_ablations}).
Although we did not find this to be a practical issue at the $\lambda$ values we recommend, it is worth keeping in mind.

\paragraph{Additional compute.}
As a post-training method, \drl requires additional compute beyond base-model training (see \cref{app:experimental_setup} for details).
While this cost is only a small fraction of pretraining, it is nonetheless an extra expense that may not be justified in all scenarios.

\paragraph{Role of the representation.}
Our experiments suggest that \drl is most effective when the discriminator is trained in a pretrained representation space. This is a strength, because it lets the reward focus on semantic discrepancies, but it is also a limitation: the method can only correct distributional differences that are visible to the chosen embedder, so it requires a suitable representation in the first place. Moreover, although our analysis emphasizes both train--test mismatch and reward-geometry mismatch, the empirical importance of the embedder suggests that a large part of \drl's benefit may come from changing the geometry in which the reward is estimated, rather than from on-policy optimization alone.

\paragraph{Scope of the theory.}
Our theoretical analysis is intentionally carried out in a simplified probability-flow ODE setting. We believe this captures the core geometric obstructions behind \cref{prop:sft_no_certificate,prop:lipschitz_reward_certificate}, but it is not a complete model of every practical training setup. In particular, while the reward certificate in \cref{prop:lipschitz_reward_certificate} should have close SDE analogues under suitable drift control, the no-certificate construction in \cref{prop:sft_no_certificate} uses the ODE structure directly. It is therefore not clear how far that worst-case construction extends to stochastic samplers, especially because path-KL objectives can yield Pinsker-type control at the trajectory level. This is nevertheless consistent with our intent: the theory is meant to identify failure modes and motivate the method, not to claim that the worst case occurs generically. Furthermore, while a construction like the ODE one is possibly too extreme, as discussed in \Cref{app:related_work} many works have documented significant train--test gaps in practice for SDE samplers as well.

A related gap is the distinction between training and inference dynamics. In our experiments, RL training uses the memoryless SDE, while evaluation uses the ODE sampler. The memoryless-schedule equivalence guarantees agreement at the optimum, but during training the SDE and ODE rollout distributions need not coincide exactly. While this means that SDE-level Pinsker bounds do not by themselves certify the ODE samples used at inference, we did not observe large discrepancies between the two in practice. Moreover, as argued in the paper, we hypothesize that the main driver of \drl's improvements is correcting geometric obstructions, which should affect both SDE and ODE samplers similarly through the reward gradient. This does not affect our explanation of the limitations of flow and score matching.

\FloatBarrier
\clearpage
\section{Proofs for Section~\ref{sec:prelim}: Preliminaries}
\label{app:background}

This appendix collects proofs for the statements in \cref{sec:diffusion,sec:posttraining}.

\subsection{Score-velocity relationship}
\label{app:proofs:score_velocity}

\begin{proposition}[Score-velocity equivalence]
\label{prop:score_velocity}
Let $X_1 \sim q$ and $X_0 \sim \mathcal{N}(0, I)$ be independent, and define the interpolation $X_t = \alpha(t) X_1 + \beta(t) X_0$ with marginal density $q_t$. The velocity field $v_t(x) := \mathbb{E}[\dot{\alpha}(t) X_1 + \dot{\beta}(t) X_0 \mid X_t = x]$ and the score $s_t(x) := \nabla_x \log q_t(x)$ are related by:
\[
v_t(x) = \frac{\dot{\alpha}(t)}{\alpha(t)} x + \beta(t)^2 \left(\frac{\dot{\alpha}(t)}{\alpha(t)} - \frac{\dot{\beta}(t)}{\beta(t)}\right) s_t(x).
\label{eq:score_velocity}
\]
\end{proposition}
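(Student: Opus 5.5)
The plan is to express both fields through the two conditional expectations $m_1(x) := \E[X_1 \mid X_t = x]$ and $m_0(x) := \E[X_0 \mid X_t = x]$, and then eliminate them. By linearity of conditional expectation, $v_t(x) = \dot{\alpha}(t)\, m_1(x) + \dot{\beta}(t)\, m_0(x)$. Conditioning the identity $X_t = \alpha(t)X_1 + \beta(t)X_0$ on $X_t = x$ gives the constraint $x = \alpha(t)\, m_1(x) + \beta(t)\, m_0(x)$. We therefore need one more relation linking $m_0$ to the score.

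That relation is Tweedie's formula, which we establish first and which is the only genuinely analytic step. Conditional on $X_1 = x_1$, $X_t$ is Gaussian, $\mathcal{N}(\alpha(t)x_1, \beta(t)^2 I)$. Hence
\[
q_t(x) = \int \varphi_{\beta(t)}\bigl(x - \alpha(t)x_1\bigr)\, q(dx_1),
\]
where $\varphi_{\beta}$ is the isotropic Gaussian density with variance $\beta^2$. Differentiating under the integral sign uses
\[
\nabla_x \varphi_{\beta}\bigl(x - \alpha x_1\bigr) = -\frac{x - \alpha x_1}{\beta^2}\, \varphi_{\beta}\bigl(x - \alpha x_1\bigr).
\]
Dividing by $q_t(x)$ and applying Bayes' rule gives
\[
s_t(x) = -\frac{x - \alpha(t)\, m_1(x)}{\beta(t)^2} = -\frac{m_0(x)}{\beta(t)}.
\]
The main obstacle is justifying the differentiation under the integral. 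For $t \in (0,1)$, $\beta(t) > 0$, and the Gaussian kernel and its gradient are dominated uniformly on compact neighbourhoods of $x$: the kernel is bounded by $(2\pi\beta^2)^{-d/2}$, and the gradient factor times the Gaussian is bounded. So dominated convergence applies to any probability measure $q$, and $q_t$ is smooth and strictly positive. We would state the result for $t \in (0,1)$, where $\alpha(t), \beta(t) > 0$; the endpoints are excluded because the formula divides by $\alpha$ and $\beta$.

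Finally we eliminate $m_0$ and $m_1$ algebraically. From Tweedie, $m_0 = -\beta s_t$. The constraint then gives
\[
m_1 = \frac{x - \beta m_0}{\alpha} = \frac{x + \beta^2 s_t}{\alpha}.
\]
Substituting into $v_t = \dot{\alpha}\, m_1 + \dot{\beta}\, m_0$ yields
\[
v_t(x) = \frac{\dot{\alpha}}{\alpha}\, x + \frac{\dot{\alpha}}{\alpha}\, \beta^2 s_t - \dot{\beta}\beta\, s_t .
\]
Writing $\dot{\beta}\beta = \beta^2 \dot{\beta}/\beta$ collects the score terms into $\beta^2\bigl(\dot{\alpha}/\alpha - \dot{\beta}/\beta\bigr) s_t(x)$, which is the claimed identity. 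This step is routine bookkeeping once Tweedie's formula is in hand.
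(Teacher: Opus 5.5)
Your proposal is correct and follows essentially the same route as the paper: both write $v_t$ in terms of the posterior means $\E[X_1 \mid X_t = x]$ and $\E[X_0 \mid X_t = x]$, use Tweedie's identity together with the interpolation constraint to express these as $(x+\beta(t)^2 s_t(x))/\alpha(t)$ and $-\beta(t)s_t(x)$, and substitute. The only differences are that you derive Tweedie's formula yourself, with the dominated-convergence justification, instead of citing it, and that you make the restriction to $t\in(0,1)$ explicit. Both make your version slightly more self-contained.
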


\begin{proof}
To compute the velocity, we need $\mathbb{E}[X_1 \mid X_t = x]$ and $\mathbb{E}[X_0 \mid X_t = x]$.

From the interpolation, we can express the noise as $X_0 = (X_t - \alpha(t) X_1)/\beta(t)$. Therefore:
\[
\mathbb{E}[X_0 \mid X_t = x] = \frac{x - \alpha(t) \mathbb{E}[X_1 \mid X_t = x]}{\beta(t)}.
\]

By Tweedie's identity~\citep{efron2011tweedie}, the posterior mean of $X_1$ given $X_t = x$ is:
\[
\mathbb{E}[X_1 \mid X_t = x] = \frac{x + \beta(t)^2 s_t(x)}{\alpha(t)} = \frac{x}{\alpha(t)} + \frac{\beta(t)^2}{\alpha(t)} s_t(x).
\]
Substituting:
\[
\mathbb{E}[X_0 \mid X_t = x] = \frac{x - \alpha(t) \left(\frac{x}{\alpha(t)} + \frac{\beta(t)^2}{\alpha(t)} s_t(x)\right)}{\beta(t)} = \frac{-\beta(t)^2 s_t(x)}{\beta(t)} = -\beta(t) s_t(x).
\]

Now we compute the velocity:
\begin{align*}
v_t(x) &= \dot{\alpha}(t) \mathbb{E}[X_1 \mid X_t = x] + \dot{\beta}(t) \mathbb{E}[X_0 \mid X_t = x] \\
&= \dot{\alpha}(t) \left(\frac{x}{\alpha(t)} + \frac{\beta(t)^2}{\alpha(t)} s_t(x)\right) + \dot{\beta}(t) \left(-\beta(t) s_t(x)\right) \\
&= \frac{\dot{\alpha}(t)}{\alpha(t)} x + \left(\frac{\dot{\alpha}(t) \beta(t)^2}{\alpha(t)} - \dot{\beta}(t) \beta(t)\right) s_t(x) \\
&= \frac{\dot{\alpha}(t)}{\alpha(t)} x + \beta(t)^2 \left(\frac{\dot{\alpha}(t)}{\alpha(t)} - \frac{\dot{\beta}(t)}{\beta(t)}\right) s_t(x).
\end{align*}
This is exactly \eqref{eq:score_velocity}. \qedhere
\end{proof}

\subsection{SDE and ODE marginal equivalence}
\label{app:proofs:sde_marginals}

\begin{proposition}[Marginal equivalence]
\label{prop:sde_ode_marginals}
Let $v_t$ and $s_t = \nabla \log q_t$ be the true velocity and score fields for marginals $q_t$. Then the SDE
\[
dX_t = \left[v_t(X_t) + \tfrac{1}{2}\sigma(t)^2 s_t(X_t)\right] dt + \sigma(t)\, dW_t
\]
and the probability flow ODE $dX_t/dt = v_t(X_t)$ produce identical marginal distributions $q_t$ at all times $t$, for any noise schedule $\sigma(t) \geq 0$.
\end{proposition}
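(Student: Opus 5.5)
The plan is to work at the level of Fokker--Planck equations. We show that $q_t$ itself solves the Fokker--Planck equation of the SDE with the initial condition $q_0=\mathcal{N}(0,I)$. We show the same for the continuity equation of the ODE. A uniqueness argument then identifies the marginals of each process with $q_t$.

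\emph{Step 1: continuity equation for the interpolation.} We first establish that $q_t$ satisfies
\[
\partial_t q_t + \nabla\cdot(q_t v_t) = 0 .
\]
We verify this weakly. For a smooth compactly supported test function $f$, differentiate $\E[f(X_t)]$ with $X_t=\alpha(t)X_1+\beta(t)X_0$:
\[
\tfrac{d}{dt}\E[f(X_t)] = \E\bigl[\nabla f(X_t)^\top(\dot\alpha X_1+\dot\beta X_0)\bigr] .
\]
Condition on $X_t$ using the tower property; this turns the right-hand side into $\E[\nabla f(X_t)^\top v_t(X_t)] = \int \nabla f^\top v_t\, q_t$. Integrating by parts gives the weak form of the continuity equation. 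This step already handles the ODE: the law of the solution of $\dot X_t = v_t(X_t)$ also solves this continuity equation started at $q_0$.

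\emph{Step 2: absorb the diffusion.} The SDE's Fokker--Planck equation is
\[
\partial_t p_t = -\nabla\cdot\bigl(p_t(v_t+\tfrac12\sigma^2 s_t)\bigr) + \tfrac12\sigma^2\Delta p_t .
\]
Plug in $p_t=q_t$ and use $q_t s_t = q_t\nabla\log q_t = \nabla q_t$. Then
\[
-\nabla\cdot\bigl(\tfrac12\sigma^2 q_t s_t\bigr) + \tfrac12\sigma^2\Delta q_t = -\tfrac12\sigma^2\Delta q_t + \tfrac12\sigma^2\Delta q_t = 0 ,
\]
so the equation collapses to the continuity equation of Step 1. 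Hence $q_t$ solves the Fokker--Planck equation for every $\sigma(t)\ge 0$. (If one prefers to avoid working with densities, the same computation can be done weakly via Itô's formula on $f(X_t)$.)

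\emph{Step 3: uniqueness.} This is the main obstacle, and where ``sufficient regularity'' must be invoked. We assume $v_t$ and $s_t$ are locally Lipschitz with linear growth on $[\epsilon,1-\epsilon]$; this holds because $q_t$ is a Gaussian convolution of $q$ for $t<1$. Under that assumption the ODE flow and the SDE have unique (strong) solutions. Uniqueness for the corresponding linear Fokker--Planck and continuity equations in the class of probability-measure solutions then follows from standard results (Ambrosio--Crippa / Figalli for the superposition principle, or Bogachev--Röckner--Shaposhnikov). We then extend to the endpoints by continuity in $t$. The singularities of the coefficients at $t=0$ (where $\alpha=0$) and $t=1$ (where $\beta=0$) are where care is needed; we would simply state the required regularity as an assumption, consistent with the paper's simplified setting. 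With uniqueness in hand, the process marginals equal $q_t$ for all $t$.
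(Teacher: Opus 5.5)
Your proposal is correct and takes the same route as the paper: the core of both arguments is the Fokker--Planck computation in your Step~2, where $q_t s_t = \nabla q_t$ makes the score correction cancel the diffusion term, so the SDE's equation reduces to the continuity equation of the probability-flow ODE. Your Steps~1 and~3 make explicit two things the paper's proof takes for granted: that $q_t$ actually solves $\partial_t q_t + \nabla\cdot(q_t v_t)=0$ (your tower-property computation), and that solutions starting from the common initial law $q_0$ are unique under suitable regularity. They add rigor but do not change the approach.
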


\begin{proof}
The Fokker--Planck equation describes how the density $\rho_t$ of a diffusion process evolves. For an SDE of the form $dX_t = b(X_t, t) dt + \sigma(t) dW_t$, the density satisfies
\[
\partial_t \rho_t = -\nabla \cdot (b \, \rho_t) + \frac{\sigma(t)^2}{2} \Delta \rho_t.
\]

For our SDE with drift $b = v_t + \frac{1}{2}\sigma(t)^2 s_t$, this becomes
\[
\partial_t \rho_t = -\nabla \cdot \left[\left(v_t + \tfrac{1}{2}\sigma(t)^2 s_t\right) \rho_t\right] + \frac{\sigma(t)^2}{2} \Delta \rho_t.
\]

We verify that $\rho_t = q_t$ satisfies this equation. Using $s_t = \nabla \log q_t$, we have $s_t \, q_t = \nabla q_t$. Therefore:
\begin{align*}
-\nabla \cdot \left(\tfrac{1}{2}\sigma(t)^2 s_t \, q_t\right) + \frac{\sigma(t)^2}{2} \Delta q_t
&= -\frac{\sigma(t)^2}{2} \nabla \cdot (\nabla q_t) + \frac{\sigma(t)^2}{2} \Delta q_t \\
&= -\frac{\sigma(t)^2}{2} \Delta q_t + \frac{\sigma(t)^2}{2} \Delta q_t = 0.
\end{align*}

The diffusion and score terms cancel exactly, leaving
\[
\partial_t q_t = -\nabla \cdot (v_t \, q_t),
\]
which is the continuity equation for the probability flow ODE. Since both the ODE and SDE satisfy the same continuity equation with the same initial condition $q_0$, they have identical marginals $q_t$ at all times. \qedhere
\end{proof}

\FloatBarrier
\clearpage
\section{Proofs for Section~\ref{sec:limitations}: Understanding the Limitations of Flow and Score Matching}
\label{app:properties}

This appendix collects proofs for the statements in \cref{sec:limitations}. The material follows the order of the main discussion: we first prove the no-certificate result for the standard probability-flow ODE, then collect the reward-geometry results used in \cref{prop:lipschitz_reward_certificate} into a single section, and finally prove the RL reward-regret bound (\cref{app:prop:rl_good_reward}).

\subsection{Distribution Shift and Error Accumulation}

\subsubsection[No reward certificate for the standard probability-flow sampler]{Proof of \cref{prop:sft_no_certificate}: no reward certificate for the standard probability-flow sampler}
\label{app:prop:sft_no_certificate}

We now prove the no-certificate claim by giving two complementary counterexamples for the probability-flow ODE. The first is a one-dimensional Gaussian velocity example with a closed-form calculation. It is useful because every quantity can be written down explicitly, so the $q_t$ versus $p_t$ mismatch is visible in a single formula and provides a simple illustration of what goes wrong. The second proof gives a general construction in the coordinate induced by the true target flow and applies equally to the velocity and score parametrizations. 

Throughout we use the setup in \Cref{sec:diffusion} and the following standard assumptions and notation. We let $(\alpha,\beta)$ be a $C^1$ interpolation schedule \eqref{eq:interpolation} with
\[
\alpha(0)=0,\qquad \beta(0)=1,\qquad \alpha(1)=1,\qquad \beta(1)=0,
\]
and $\alpha(t),\beta(t)>0$ for $t\in(0,1)$. For a target endpoint law $q$, write $q_t$ for the interpolation marginal, $s_q$ for its score, and $v_q$ for its probability-flow velocity. For score-parametrized fields we also write
\begin{equation}
\gamma(t)
:=
\beta(t)^2
\left(
    \frac{\dot\alpha(t)}{\alpha(t)}
    -
    \frac{\dot\beta(t)}{\beta(t)}
\right),
\label{eq:gamma_def}
\end{equation}
so that
\[
v_q(x,t)=\frac{\dot\alpha(t)}{\alpha(t)}x+\gamma(t)s_q(x,t)
\]
by \eqref{eq:score_velocity}.

\ifneurips
\begin{figure}[t]
    \centering
    \begin{minipage}[t]{0.49\textwidth}
        \centering
        \includegraphics[width=\linewidth]{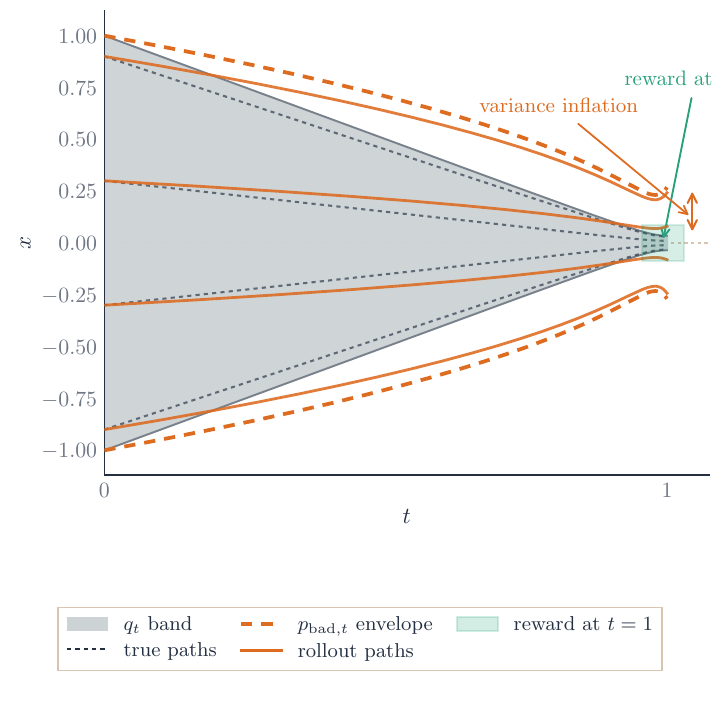}
    \end{minipage}
    \hfill
    \begin{minipage}[t]{0.49\textwidth}
        \centering
        \includegraphics[width=\linewidth]{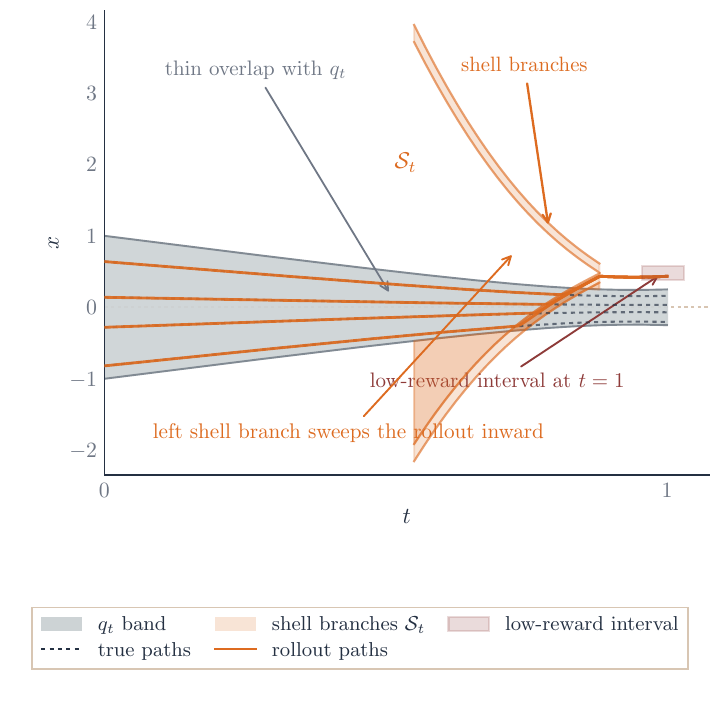}
    \end{minipage}
    \caption{Appendix sketches of the two constructions. Left: velocity case, corresponding to \Cref{app:prop:ode_velocity_no_certificate} and its proof below. The true trajectories contract with $q_t$, while the learned rollout trajectories and the one-sigma envelope of $p_{\mathrm{bad},t}$ show variance inflation under the same linear perturbation. Right: a one-dimensional Gaussian instance of \Cref{app:prop:ode_general_no_certificate}, drawn directly in $x$-space. In the general proof the perturbation is a shrinking radial shell in the target-flow coordinate; in one dimension that shell becomes two moving intervals, shown in orange here, and the left branch sweeps representative rollout trajectories into the low-reward interval at $t=1$. In both panels, the shaded regions indicate typical mass under the relevant marginals, the dashed dark paths denote representative true trajectories, and the solid orange paths denote representative learned rollout trajectories. The obstruction is the same: training measures error under the data marginals $q_t$, while sampling evaluates the learned field along the rollout marginals $p_t$.}
    \label{fig:distribution_shift_appendix}
\end{figure}
\fi
\ifmeta
\begin{figure}[t]
    \centering
    \begin{minipage}[t]{0.49\textwidth}
        \centering
        \includegraphics[width=\linewidth]{figures/theory-figures/distribution-shift-appendix/velocity_variance_appendix.pdf}
    \end{minipage}
    \hfill
    \begin{minipage}[t]{0.49\textwidth}
        \centering
        \includegraphics[width=\linewidth]{figures/theory-figures/distribution-shift-appendix/radial_shell_appendix.pdf}
    \end{minipage}
    \caption{Appendix sketches of the two constructions. Left: velocity case, corresponding to \Cref{app:prop:ode_velocity_no_certificate} and its proof below. The true trajectories contract with $q_t$, while the learned rollout trajectories and the one-sigma envelope of $p_{\mathrm{bad},t}$ show variance inflation under the same linear perturbation. Right: a one-dimensional Gaussian instance of \Cref{app:prop:ode_general_no_certificate}, drawn directly in $x$-space. In the general proof the perturbation is a shrinking radial shell in the target-flow coordinate; in one dimension that shell becomes two moving intervals, shown in orange here, and the left branch sweeps representative rollout trajectories into the low-reward interval at $t=1$. In both panels, the shaded regions indicate typical mass under the relevant marginals, the dashed dark paths denote representative true trajectories, and the solid orange paths denote representative learned rollout trajectories. The obstruction is the same: training measures error under the data marginals $q_t$, while sampling evaluates the learned field along the rollout marginals $p_t$.}
    \label{fig:distribution_shift_appendix}
\end{figure}
\fi

\begin{proposition}[Velocity case: Gaussian variance inflation]
\label{app:prop:ode_velocity_no_certificate}
For every $\varepsilon>0$ and every $\delta\in(0,1/4)$, there exist a target distribution $q$, a bounded reward $r:\mathbb{R}\to[0,1]$, and a velocity field $v_{\mathrm{bad}}$ such that
\[
\sup_{t\in[0,1]}
\E_{X\sim q_t}
\left[
    |v_{\mathrm{bad}}(X,t)-v_q(X,t)|^2
\right]
\le \varepsilon,
\]
but if $p_{\mathrm{bad}}$ denotes the endpoint law of the standard probability-flow ODE sampler
\[
\dot X_t = v_{\mathrm{bad}}(X_t,t),
\qquad
X_0\sim\mathcal{N}(0,1),
\]
then
\[
\E_q[r]-\E_{p_{\mathrm{bad}}}[r]\ge 1-2\delta.
\]
\end{proposition}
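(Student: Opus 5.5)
The plan is an explicit one-dimensional Gaussian construction. The target is $q=\mathcal{N}(0,\sigma^2)$ with $\sigma>0$ small, to be fixed last. Then $q_t=\mathcal{N}(0,s(t)^2)$ with $s(t)^2=\alpha(t)^2\sigma^2+\beta(t)^2$. By \Cref{prop:score_velocity}, the true velocity is linear:
\[
v_q(x,t)=\frac{\dot s(t)}{s(t)}\,x .
\]
I perturb it linearly:
\[
v_{\mathrm{bad}}(x,t)=v_q(x,t)+c(t)\,x,\qquad c(t):=\frac{\sqrt{\varepsilon}}{s(t)}.
\]
The training error is then exact and time-independent:
\[
\E_{q_t}\bigl[|v_{\mathrm{bad}}-v_q|^2\bigr]=c(t)^2 s(t)^2=\varepsilon .
\]
This gives the required uniform bound with equality. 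The point of scaling $c(t)$ inversely with $s(t)$ is that near $t=1$ the band $q_t$ is very thin, so a large multiplicative perturbation there is invisible to the training loss.

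Next I solve the rollout. The ODE $\dot X_t=(\dot s/s+c)X_t$ started at $X_0\sim\mathcal{N}(0,1)$ is linear, so $p_{\mathrm{bad},t}$ is Gaussian with standard deviation
\[
s(t)\exp\Bigl(\int_0^t c(u)\,du\Bigr),
\]
using $s(0)=1$. The endpoint law is therefore
\[
p_{\mathrm{bad}}=\mathcal{N}\bigl(0,\sigma^2R^2\bigr),\qquad R:=\exp\Bigl(\sqrt{\varepsilon}\int_0^1\frac{dt}{\sqrt{\alpha(t)^2\sigma^2+\beta(t)^2}}\Bigr).
\]

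The key step, and the one I expect to need care, is showing $R\to\infty$ as $\sigma\to0$ for fixed $\varepsilon$. I will use only the schedule assumptions stated above the proposition, namely that $\alpha,\beta$ are $C^1$ with $\beta(1)=0$ and $\alpha(1)=1$. From $C^1$ and $\beta(1)=0$ I get $\beta(t)\le L(1-t)$ for some constant $L$. Near $t=1$ I also have $\alpha\le 2$. So on an interval $[1-\tau,1]$ the integrand is at least
\[
\bigl(4\sigma^2+L^2(1-t)^2\bigr)^{-1/2}.
\]
Integrating gives a lower bound of the form $L^{-1}\operatorname{arsinh}(L\tau/(2\sigma))\gtrsim L^{-1}\log(1/\sigma)$. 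Hence $R\gtrsim\sigma^{-\sqrt{\varepsilon}/L}$, which is unbounded as $\sigma\to0$. For the standard linear schedule $\alpha=t,\beta=1-t$ this is an explicit computation.

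Finally I choose the reward. Let $r$ be a continuous bump with $r=1$ on $[-k\sigma,k\sigma]$ and $r=0$ outside $[-2k\sigma,2k\sigma]$. Fix $k$ so that $\Pr(|Z|\le k)\ge1-\delta$ for $Z\sim\mathcal{N}(0,1)$; this gives $\E_q[r]\ge1-\delta$. Meanwhile
\[
\E_{p_{\mathrm{bad}}}[r]\le\Pr\bigl(|Z|\le 2k/R\bigr)\le\frac{4k}{R\sqrt{2\pi}},
\]
and this is at most $\delta$ once $R$ is large. The previous step lets me pick $\sigma$ small enough for that. Combining the two bounds yields
\[
\E_q[r]-\E_{p_{\mathrm{bad}}}[r]\ge1-2\delta.
\]
The only subtleties to check are minor. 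First, the integrability of $c$ near $t=1$: $c$ is finite because $\sigma>0$, so the ODE is well posed. Second, $v_q$ is well defined at $t=0$ since $s(0)=1$.
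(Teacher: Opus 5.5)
Your proposal is correct and follows essentially the same route as the paper. Both use a Gaussian target $\mathcal{N}(0,\sigma^2)$ and the linear perturbation $\sqrt{\varepsilon}\,x/s(t)$, whose $q_t$-error is exactly $\varepsilon$; both solve the rollout in closed form and show the variance-inflation factor diverges as $\sigma\to 0$ via $\beta(t)\le L(1-t)$ and the $\sinh^{-1}$ integral, then use a reward supported on $|x|$ of order $k\sigma$. The only differences are cosmetic: you use a continuous bump instead of the paper's indicator $\mathbf{1}\{|x|\le K\tau\}$, and you obtain $v_q$ from the score--velocity identity rather than by computing the Gaussian conditional expectation directly.
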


\noindent\emph{Intuition.}
The marginals $q_t$ have analytical form given by:
\[
q_t=\mathcal{N}(0,\rho_\tau(t)^2),
\qquad
\rho_\tau(t)^2:=\alpha(t)^2\tau^2+\beta(t)^2.
\]
The idea is to define a perturbation given by, 
\[
\Delta v(x,t)=\sqrt{\varepsilon}\,\frac{x}{\rho_\tau(t)}.
\]
Under  $q_t$, the normalized coordinate $X/\rho_\tau(t)$ is standard Gaussian, so the flow-matching error is exactly $\varepsilon$ at every time. 
However, as time progresses this small error accumulates and becomes larger as  $| x|$ grows.
This ends up increasing the variance of the rollout.
Therefore, by choosing the target endpoint distribution to be sufficiently concentrated, we can make a reward on the high-likelihood regions of $q_1$ to be missed by most of the distribution.

\begin{remark}
This is not the same statement in the main text but it immediately disproves the existence of a guarantee of the form ``with small flow-matching error, we obtain a small reward gap''.
\end{remark}

\begin{proof}
Fix $\varepsilon>0$ and $\delta\in(0,1/4)$. Let $\tau\in(0,1)$ be chosen later and set
\[
q:=\mathcal{N}(0,\tau^2).
\]
Under the interpolation \eqref{eq:interpolation},
\[
X_t=\alpha(t)X_1+\beta(t)X_0,
\qquad
X_1\sim q,
\quad
X_0\sim\mathcal{N}(0,1),
\]
the marginal is
\[
q_t=\mathcal{N}(0,\rho_\tau(t)^2),
\qquad
\rho_\tau(t)^2:=\alpha(t)^2\tau^2+\beta(t)^2.
\]

The corresponding probability-flow velocity is
\[
v_q(x,t)=\frac{\dot\rho_\tau(t)}{\rho_\tau(t)}x.
\]
Indeed,
\[
\dot X_t=\dot\alpha(t)X_1+\dot\beta(t)X_0.
\]
Since $(X_t,\dot X_t)$ are jointly centered Gaussian, the conditional expectation is linear:
\[
v_q(x,t)=\E[\dot X_t\mid X_t=x]
=
\frac{\Cov(\dot X_t,X_t)}{\Var(X_t)}x.
\]
Using the independence of $X_1$ and $X_0$,
\[
\Cov(\dot X_t,X_t)
=
\alpha(t)\dot\alpha(t)\tau^2+\beta(t)\dot\beta(t)
=
\frac{1}{2}\frac{d}{dt}\rho_\tau(t)^2
=
\rho_\tau(t)\dot\rho_\tau(t),
\]
while $\Var(X_t)=\rho_\tau(t)^2$. Therefore
\[
v_q(x,t)=\frac{\dot\rho_\tau(t)}{\rho_\tau(t)}x.
\]

Now define
\[
v_{\mathrm{bad}}(x,t):=v_q(x,t)+\Delta v(x,t),
\qquad
\Delta v(x,t):=\sqrt{\varepsilon}\,\frac{x}{\rho_\tau(t)}.
\]
If $X\sim q_t$, then $\E[X^2]=\rho_\tau(t)^2$, so
\[
\E_{q_t}\bigl[|v_{\mathrm{bad}}(X,t)-v_q(X,t)|^2\bigr]
=
\varepsilon\,\E_{q_t}\left[\frac{X^2}{\rho_\tau(t)^2}\right]
=
\varepsilon.
\]
Thus the training error is exactly $\varepsilon$ uniformly in time.

The learned ODE is
\[
\dot X_t
=
\left(
    \frac{\dot\rho_\tau(t)}{\rho_\tau(t)}
    +
    \frac{\sqrt{\varepsilon}}{\rho_\tau(t)}
\right)X_t.
\]
Solving this scalar linear ODE gives
\[
X_t
=
\rho_\tau(t)\exp\left(
    \sqrt{\varepsilon}\int_0^t\frac{du}{\rho_\tau(u)}
\right)X_0.
\]
Hence the rollout marginal is
\[
p_{{\rm bad},t}
=
\mathcal{N}(0,\rho_\tau(t)^2k(t)),
\qquad
k(t):=
\exp\left(
    2\sqrt{\varepsilon}\int_0^t\frac{du}{\rho_\tau(u)}
\right).
\]
In particular,
\[
\E_{p_{{\rm bad},t}}\bigl[|\Delta v(X,t)|^2\bigr]
=
\varepsilon\,\E_{p_{{\rm bad},t}}\left[\frac{X^2}{\rho_\tau(t)^2}\right]
=
\varepsilon k(t),
\]
so the same perturbation is measured as $\varepsilon$ under $q_t$ but as $\varepsilon k(t)$ under the rollout law $p_{{\rm bad},t}$.

We now show that $k(1)$ can be made arbitrarily large by taking $\tau$ small. Because $\beta$ is $C^1$ with $\beta(1)=0$, there exist $M>0$ and $t_0<1$ such that
\[
\beta(t)\le M(1-t)
\qquad
\text{for all }t\in[t_0,1].
\]
Let $A:=\sup_{t\in[0,1]}|\alpha(t)|<\infty$. Then
\[
\rho_\tau(t)
=
\sqrt{\alpha(t)^2\tau^2+\beta(t)^2}
\le
\sqrt{A^2\tau^2+M^2(1-t)^2}
\qquad
\text{for }t\in[t_0,1].
\]
Therefore
\[
\int_0^1\frac{dt}{\rho_\tau(t)}
\ge
\int_{t_0}^1\frac{dt}{\sqrt{A^2\tau^2+M^2(1-t)^2}}
=
\frac{1}{M}
\sinh^{-1}\left(
    \frac{M(1-t_0)}{A\tau}
\right),
\]
which diverges as $\tau\downarrow 0$. Hence $k(1)\to\infty$ as $\tau\downarrow 0$.

Choose $K>0$ such that, for $Z\sim\mathcal{N}(0,1)$,
\[
\Pr(|Z|\le K)\ge 1-\delta.
\]
Define
\[
r(x):=\mathbf{1}\{|x|\le K\tau\}.
\]
Under the target endpoint law $q=\mathcal{N}(0,\tau^2)$,
\[
\E_q[r]=\Pr(|Z|\le K)\ge 1-\delta.
\]
Under the bad endpoint law $p_{\mathrm{bad}}=\mathcal{N}(0,\tau^2k(1))$,
\[
\E_{p_{\mathrm{bad}}}[r]
=
\Pr\left(|Z|\le \frac{K}{\sqrt{k(1)}}\right)
\le
\sqrt{\frac{2}{\pi}}\,\frac{K}{\sqrt{k(1)}}.
\]
Here we used that, for $Z\sim\mathcal{N}(0,1)$ and any $a\ge 0$,
\[
\Pr(|Z|\le a)=\int_{-a}^{a}\frac{1}{\sqrt{2\pi}}e^{-z^2/2}\,dz\le \sqrt{\frac{2}{\pi}}\,a.
\]
Since $k(1)\to\infty$, choose $\tau$ small enough that $\sqrt{2/\pi}\,K/\sqrt{k(1)}\le\delta$. Then
\[
\E_q[r]-\E_{p_{\mathrm{bad}}}[r]\ge 1-2\delta. \qedhere
\]

\end{proof}

\noindent For the general proof, we use the standard notion of a flow map; see, for example, \citep{arnold1978ode,boffi2024flow}.
Intuitively, the flow map is the function that, given an initial time $a$, a final time $t$, and a point $y$, returns the point reached by evolving $y$ from time $a$ to time $t$.
Formally, given $0<a<1$ and a point $y\in\mathbb{R}^d$, let $X^{a,y}$ denote the solution of
\[
\dot X_t=v_q(X_t,t),
\qquad
X_a=y.
\]
The associated flow map is defined by
\[
\Phi_{a,t}(y):=X^{a,y}_t,
\]
that is, $\Phi_{a,t}(y)$ is the point reached at time $t$ by the exact probability-flow ODE initialized at $y$ at time $a$.

\begin{proposition}[Radial-shell no-certificate construction]
\label{app:prop:ode_general_no_certificate}
Let $q$ be the endpoint target law, let $(q_t)_{t\in[0,1]}$ be the interpolation marginals, let $v_q$ and $s_q$ denote the corresponding population velocity and score, and let $r:\mathbb{R}^d\to[0,1]$ be a bounded reward. Fix $0<a<b<1$. Assume that the exact probability-flow ODE
\[
\dot X_t=v_q(X_t,t)
\]
generates flow maps $\Phi_{a,t}:\mathbb{R}^d\to\mathbb{R}^d$ that are $C^1$ diffeomorphisms for every $t\in[a,1]$. Assume further that there exist $x_\star\in\mathbb{R}^d$, $\rho_\star>0$, and $\eta\in[0,1]$ such that
\[
r(x)\le \eta
\qquad
\text{whenever } \|x-x_\star\|\le \rho_\star.
\]

Then for every $\varepsilon>0$ and every $\delta\in(0,1)$, the following two statements hold.

\textbf{Velocity parametrization.}
There exists a field $v_{\rm bad}$ such that
\[
v_{\rm bad}(x,t)=v_q(x,t)
\qquad
\text{for all }t\notin[a,b],
\]
and
\[
\sup_{t\in[0,1]}
\E_{X\sim q_t}
\left[
\|v_{\rm bad}(X,t)-v_q(X,t)\|^2
\right]
\le \varepsilon.
\]
Let $p_{\rm bad}$ denote the endpoint law of the probability-flow ODE sampler \eqref{eq:ode_sde} run with $v_{\rm bad}$ in place of $v_q$. Then
\[
\E_{p_{\rm bad}}[r]\le \eta+\delta.
\]

\textbf{Score parametrization.}
If $\inf_{t\in[a,b]}|\gamma(t)|>0$, then there exists a field $s_{\rm bad}$ such that
\[
s_{\rm bad}(x,t)=s_q(x,t)
\qquad
\text{for all }t\notin[a,b],
\]
and
\[
\sup_{t\in[0,1]}
\E_{X\sim q_t}
\left[
\|s_{\rm bad}(X,t)-s_q(X,t)\|^2
\right]
\le \varepsilon.
\]
Define the induced velocity field
\[
v_{\rm bad}(x,t)
=
\frac{\dot\alpha(t)}{\alpha(t)}x+\gamma(t)s_{\rm bad}(x,t),
\]
and let $p_{\rm bad}$ denote the endpoint law of the probability-flow ODE sampler \eqref{eq:ode_sde} driven by this $v_{\rm bad}$. Then
\[
\E_{p_{\rm bad}}[r]\le \eta+\delta.
\]
\end{proposition}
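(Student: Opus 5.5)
We work in the coordinate induced by the exact flow. Pull everything back to time $a$: set $y=\Phi_{a,t}^{-1}(x)$, so that $y\sim q_a$ whenever $x\sim q_t$. The target is a perturbed flow map $\Psi_{a,1}$ that agrees with $\Phi_{a,1}$ except on a set of small $q_a$-mass. That set should sweep most of the $q_a$-mass into the preimage $U:=\Phi_{a,1}^{-1}(B(x_\star,\rho_\star))$. Since $\Phi_{a,1}$ is a diffeomorphism, $U$ is open and nonempty, so it contains a small ball $B(y_\star,\rho)$.

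\textbf{Step 1: the transport in the $y$-coordinate.} We define a time-dependent vector field $w(y,t)$ supported on $t\in[a,b]$. Its flow $\Gamma_{a,t}$ on $y$-space moves all points of $\mathbb{R}^d$ outside a huge ball $B(0,R)$ into $B(y_\star,\rho)$ by time $b$. Choose $R$ so that $q_a(\mathbb{R}^d\setminus B(0,R))\le\delta$; then $1-\delta$ of the mass ends in $U$.

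A flow cannot collapse volume, so mass cannot be "pushed" into a small ball by a single diffeomorphism of the whole space. What we can do is compress it: a diffeomorphism may map $B(0,R)$ into $B(y_\star,\rho)$ provided the complement is squeezed into a thin, low-$q_a$ shell. This is the "shrinking radial shell". Concretely, use a radial contraction $y\mapsto y_\star+\lambda_t(|y-y_\star|)\,\frac{y-y_\star}{|y-y_\star|}$, where $\lambda_b$ is a monotone diffeomorphism of $[0,\infty)$ mapping $[0,R']$ into $[0,\rho]$ (with $R'$ large enough that $B(y_\star,R')\supset B(0,R)$) and equal to the identity beyond $R'+1$. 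The velocity is $w=\partial_t\Gamma\circ\Gamma^{-1}$.

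\textbf{Step 2: the bad field and its training error.} Set
\[
v_{\rm bad}(x,t)=v_q(x,t)+D\Phi_{a,t}(y)\,w(y,t)\big|_{y=\Phi_{a,t}^{-1}(x)}.
\]
Then trajectories of $v_{\rm bad}$ are $X_t=\Phi_{a,t}(\Gamma_{a,t}(y))$, since $\frac{d}{dt}\Phi_{a,t}(Y_t)=v_q+D\Phi\,\dot Y_t$. By construction $v_{\rm bad}=v_q$ outside $[a,b]$, and the endpoint satisfies $X_1=\Phi_{b,1}(\Phi_{a,b}(\Gamma_{a,b}(y)))$, so $p_{\rm bad}(B(x_\star,\rho_\star))\ge1-\delta$ and $\E_{p_{\rm bad}}[r]\le\eta+\delta$.

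It remains to make the $q_t$-error at most $\varepsilon$. That error equals $\E_{y\sim q_a}\|D\Phi_{a,t}(y)\,w(y,t)\|^2$, evaluated at the unperturbed law $q_a$. This is the key train–test mismatch: the perturbation only needs to be large where $q_a$ itself has mass. Here we have freedom in the time profile. We can make the sweep nonzero only on a tiny subinterval, but that increases the size of $w$. So instead we localize $w$ in space: $w$ acts only on points currently at radius in a moving shell, and under $q_a$ (not under the rollout) that shell carries tiny mass.

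The cleanest version reverses the roles. The rollout points start in $B(0,R)$, but the shell region where $w\neq0$ at time $t$ is $\{y:\ |y-y_\star|\in[\lambda_t(R'),R'']\}$, which is thin. Alternatively, use the trick of sweeping a thin shell that carries its content inward: at each time $w$ is supported on a shell of width $h$, whose $q_a$-mass is $O(h)$ times a density bound. The speed is order $R'/(b-a)$ and $D\Phi_{a,t}$ is bounded on compacts for $t\in[a,b]$, so the error is $O(h)$ and can be made at most $\varepsilon$ by shrinking $h$.

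The score case follows from the velocity construction by setting $s_{\rm bad}=s_q+(v_{\rm bad}-v_q)/\gamma(t)$. This multiplies the error by $\sup_{[a,b]}\gamma^{-2}<\infty$, so we rescale $\varepsilon$ accordingly.

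\textbf{Main obstacle.} The hard part is Step 2's simultaneous requirement. The perturbation must have small $L^2(q_t)$ norm at every time, yet it must move almost all the rollout mass a macroscopic distance. Volume preservation prevents a naive global contraction from having small $q_a$-weighted energy. I would try first the thin moving shell that "scoops" points inward: the rollout mass is concentrated where the shell currently is, while the fixed $q_a$-mass of the shell stays $O(h)$.

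There is a subtlety: after the shell passes, the compressed mass sits in $B(y_\star,\rho)$, where $q_a$ may have density. This part is harmless because $w=0$ there once compressed. We also need uniform bounds on $D\Phi_{a,t}$ and on the density of $q_a$ on the compact region, which follow from the $C^1$-diffeomorphism assumption and continuity of $q_a$. If the shell argument stalls, I would fall back to the one-dimensional mechanism of \Cref{app:prop:ode_velocity_no_certificate} applied radially around $y_\star$.
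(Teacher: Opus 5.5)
Your final construction is the paper's. You pull back by $\Phi_{a,t}$ so the training law is the fixed $q_a$, sweep a thin radial shell inward in the $y$-frame, and bound the $q_t$-error by $(\text{speed})^2\cdot(\text{Jacobian bound})^2\cdot q_a(\text{width-}h\text{ annulus})=O(h)$. You then divide by $\gamma(t)$ for the score case.

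\textbf{The gap.} The step that does the real work is only asserted: that a field supported on a thin moving shell picks up every point of $B(0,R)$ and delivers it into $B(y_\star,\rho)$. You write ``carries its content inward'' without specifying the field, and this is not automatic. Write the field as $-\psi(\varrho-\ell(t))\,n(y)$, with $\varrho=|y-y_\star|$ and shell radius $\ell$ decreasing at speed $c$. Then $S_t=\varrho_t-\ell(t)$ satisfies $\dot S_t=c-\psi(S_t)$. Points ahead of the shell sit still until it arrives. A point is captured only if $\psi$ reaches the value $c$, which creates a stable equilibrium at which it rides along. If $\max\psi<c$, then $\dot S_t\ge c-\max\psi>0$: every point slips out behind the shell after an inward displacement that vanishes with $h$, and nothing reaches $B(y_\star,\rho)$.

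The paper builds capture in through the profile $\psi_w$, a linear ramp on $[-w,0]$ followed by a plateau at height $c$ on $[0,w]$. This gives $\dot S_t=-(c/w)S_t$ on $[-w,0]$, so a caught point never leaves the shell. Writing $z$ for your $y_\star$, it then checks:
\begin{itemize}
\item the shell starts beyond $B(z,R)$;
\item every point of $B(z,R)\setminus B(z,\rho)$ is caught before time $b$;
\item the shell ends at radius $\rho/2<\rho$;
\item points already in $B(z,\rho)$ stay there, because $\dot\varrho_t\le 0$;
\item the shell stays at radius $\ge\rho/4$, so the radial field is Lipschitz and the rollout is uniquely defined.
\end{itemize}
These checks are what turn your sketch into a proof.

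\textbf{Smaller points.}
\begin{itemize}
\item You are right that contracting the whole ball at once cannot have small $q_a$-weighted energy. The reason is that its velocity is macroscopic on the bulk of the ball at every $t\in(a,b)$, not volume preservation; general flows compress volume freely, as your own $\lambda_b$ does.
\item The time-$b$ map of Step~1 can be kept: the shell flow's time-$b$ map is itself such a radial contraction. The moving shell is simply the choice of path $\lambda_t$ that keeps $w(\cdot,t)$ supported on a width-$O(h)$ annulus.
\item The mismatch runs opposite to what you wrote. The perturbation must live where $q_a$ has little mass, while the rollout law piles up there.
\item The annulus $\{|y-y_\star|\in[\lambda_t(R'),R'']\}$ in your ``cleanest version'' is not thin.
\item Step~1 should move the points inside, not outside, $B(0,R)$.
\item The density bound needs no continuity assumption: $\beta(a)>0$ gives $q_a\le(2\pi\beta(a)^2)^{-d/2}$ for every $q$.
\item The Gaussian fallback is no substitute. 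Its amplification comes from sending the target variance $\tau\to0$, which is unavailable once $q$ is fixed and the perturbation is confined to $[a,b]$.
\end{itemize}
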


\noindent\emph{Intuition.}
The goal is to hide a harmful perturbation inside a small score-matching or flow-matching error. We do this with a thin shell that moves inward over time. Away from the shell, the learned field agrees with the true score/flow, so if the shell is thin enough then the training error stays small. At the same time, the shell is arranged to catch trajectories and carry them into a low-reward region by time $1$. \Cref{fig:distribution_shift_appendix} (right) gives a sketch in one dimension, where the shell becomes two moving intervals. We apply the construction only on a subinterval $[a,b]\subset(0,1)$ to avoid the endpoints.

However, designing the shell directly in $x$-space is awkward because the true dynamics already move trajectories around on their own, so the shell's effect gets tangled with the motion of the original dynamics. We avoid this by working in a coordinate in which the true dynamics are trivial: for a path $X_t$, define the inverse-flow coordinate
\[
Y_t := \Phi_{a,t}^{-1}(X_t),
\]
which records the starting point at time $a$ of the trajectory passing through $X_t$. By construction, $Y_t$ is constant along the true ODE, so any motion of a trajectory in $Y$-space comes entirely from the perturbation. We then 
translate the motion in this space back to $x$-space and check that the training error is small while still carrying trajectories into the low-reward region.

\begin{proof}
The proof is relatively straightforward but requires some bookkeeping. We organize it in steps for clarity.

Under the assumptions of the proposition, the laws $q_t$ for $t\in[a,1]$ admit densities. With a slight abuse of notation, we use the same symbol for the law and its density when no confusion arises.

\medskip\noindent\textbf{Step 1: Work in the target-flow coordinate and rewrite the reward there.}
For any path $X_t$ on $[a,1]$, define
\[
Y_t:=\Phi_{a,t}^{-1}(X_t).
\]
This coordinate records where $X_t$ came from at time $a$ under the true target flow. If $X_t$ follows the true dynamics
then $X_t=\Phi_{a,t}(Y_a)$, so $Y_t=Y_a$ is constant.

Because the shell construction will be carried out in $Y$-space, it is convenient to rewrite the endpoint reward in the same coordinate. Set
\[
\mu:=q_a,
\qquad
B(z,\rho):=\{y\in\mathbb{R}^d:\ \|y-z\|\le \rho\},
\qquad
\widetilde r(y):=r(\Phi_{a,1}(y)).
\]
Let
\[
z:=\Phi_{a,1}^{-1}(x_\star),
\]
and consider the open ball $\{x\in\mathbb{R}^d:\ \|x-x_\star\|<\rho_\star\}$ in endpoint space. Because $\Phi_{a,1}$ is a diffeomorphism, its preimage is an open neighborhood of $z$, so there exists $\rho>0$ such that
\[
B(z,\rho)\subset \Phi_{a,1}^{-1}\left(\left\{x\in\mathbb{R}^d:\ \|x-x_\star\|<\rho_\star\right\}\right).
\]
Therefore
\[
\widetilde r(y)\le \eta
\qquad
\text{for all } y\in B(z,\rho).
\]
Thus it is enough to construct a bad rollout whose terminal target-flow coordinate $Y_1$ lies in the ball $B(z,\rho)$. In the remaining steps we will first define the bad dynamics directly in this coordinate, where the true dynamics are frozen, and then recover the corresponding path in $x$-space by applying the flow map.

\medskip\noindent\textbf{Step 2: Build the shrinking shell in the target-flow coordinate.}
We first choose a large ball that contains almost all of the mass at time $a$. Because
\[
\mu(B(z,R))\to 1
\qquad
\text{as } R\to\infty,
\]
we can choose $R>\rho$ such that
\[
\mu(B(z,R))\ge 1-\delta.
\]
Our goal is to build a thin shell that starts just outside $B(z,R)$ and then moves inward until it lies inside the low-reward ball $B(z,\rho)$.

Let
\[
T:=b-a,
\qquad
c:=\frac{R+1-\rho/2}{T}
\qquad
\ell(t):=R+1-c(t-a)
\quad
\text{for } t\in[a,b].
\]
Here $T$ is the length of the active interval, $c$ is the inward speed, and $\ell(t)$ is the radius of the shell at time $t$.
Then
\[
\ell(a)=R+1,
\qquad
\ell(b)=\rho/2.
\]
So the shell starts strictly outside $B(z,R)$ and ends strictly inside $B(z,\rho)$.

Fix $w>0$, to be chosen later, with
\[
w<\min\left\{\frac12,\frac{\rho}{4}\right\}.
\]
This will be the thickness scale of the shell. The bound $w\le 1/2$ is only used later, when we keep the shell inside a fixed radial region for the training-error estimate.

Define a piecewise-linear function $\psi_w:\mathbb{R}\to[0,c]$ that specifies how strongly the perturbation acts as a function of signed distance from the shell:
\[
\psi_w(s)
:=
\begin{cases}
0, & s\le -w,\\[4pt]
c\left(1+\dfrac{s}{w}\right), & -w\le s\le 0,\\[8pt]
c, & 0\le s\le w,\\[4pt]
c\left(2-\dfrac{s}{w}\right), & w\le s\le 2w,\\[8pt]
0, & s\ge 2w.
\end{cases}
\]
This function is zero outside $[-w,2w]$, ramps up linearly on $[-w,0]$, is flat at height $c$ on $[0,w]$, and ramps down linearly on $[w,2w]$.

For $y\neq z$, define the outward radial direction
\[
n(y):=\frac{y-z}{\|y-z\|},
\]
and set $n(z):=0$.

Now define the perturbation in the target-flow coordinate by
\[
u_w(y,t)
:=
-\mathbf{1}\{t\in[a,b]\}\,
\psi_w(\|y-z\|-\ell(t))\,n(y).
\]
The minus sign means that the perturbation pushes points inward. It is supported on the moving shell
\[
\mathcal{S}_t
:=
\{y\in\mathbb{R}^d:\ \|y-z\|-\ell(t)\in[-w,2w]\}.
\]
So at time $t$, the perturbation is active only on radii in the interval $[\ell(t)-w,\ell(t)+2w]$.
Because $\ell(t)\in[\rho/2,R+1]$ and $w<\min\{1/2,\rho/4\}$, every point $y\in\mathcal{S}_t$ satisfies
\[
\|y-z\|\in[\ell(t)-w,\ell(t)+2w]\subset[\rho/4,R+2].
\]
Thus the shell $\mathcal{S}_t$ always lies inside the fixed radial region $\{\rho/4\le \|y-z\|\le R+2\}$.

\medskip\noindent\textbf{Step 3: Define and analyze the bad dynamics in $Y$-space.}
Under the true dynamics, the target-flow coordinate does not move: $Y_t$ is constant. We now define the bad dynamics in this same coordinate by letting $Y_t$ solve
\[
\dot Y_t=u_w(Y_t,t)
\qquad
\text{on } [a,1]
\]
with initial law $Y_a\sim\mu$. Because the shell support stays in the radial region $\{\rho/4\le \|y-z\|\le R+2\}$, it stays a positive distance from $z$. Therefore the field $u_w(\cdot,t)$ is globally Lipschitz in $y$ for each $t$ and piecewise continuous in $t$. Hence this ODE has a unique absolutely continuous solution on $[a,1]$.

The mechanism has two phases: first the shell catches a trajectory while the trajectory stays fixed in $Y$-space, and then the shell carries the trajectory inward until it reaches $B(z,\rho)$.

Let
\[
\varrho_t:=\|Y_t-z\|,
\qquad
S_t:=\varrho_t-\ell(t).
\]
Here $\varrho_t$ is the distance to the shell center and $S_t$ records the trajectory's position relative to the moving shell. The sign of $S_t$ tells us where the trajectory sits: $S_t<-w$ means it is strictly inside the shell, while $S_t\in[-w,0]$ means it has been caught by the shell's inner side.

Consider $t\in[a,b]$. Because $u_w(y,t)$ always points in the radial direction $n(y)$, it changes only the distance to $z$. For $\varrho_t>0$,
\[
\dot\varrho_t
=
\left(\frac{Y_t-z}{\|Y_t-z\|}\right)^\top \dot Y_t
=
n(Y_t)^\top u_w(Y_t,t)
=
-\psi_w(S_t).
\]
Since $\dot\ell(t)=-c$, we also have
\[
\dot S_t
=
\dot\varrho_t-\dot\ell(t)
=
-\psi_w(S_t)+c.
\]

If $Y_a\in B(z,\rho)$, then whenever $\varrho_t>0$,
\[
\dot\varrho_t=-\psi_w(S_t)\le 0,
\]
so the trajectory stays in $B(z,\rho)$. It therefore remains to consider the case
\[
\rho\le \varrho_a\le R.
\]
In this case
\[
S_a
=
\varrho_a-\ell(a)
=
\varrho_a-(R+1)
\in[\rho-R-1,-1].
\]
Since $w<1$ the trajectory starts strictly inside the shell, with $S_a<-w$.

While $S_t\le -w$, the function $\psi_w$ vanishes and
\[
\dot S_t
=
\dot\varrho_t-\dot\ell(t)
=
0-(-c)
=
c.
\]
So before capture, the shell simply moves inward until it catches the trajectory. The latest catch occurs for the smallest radius that still needs to be captured, namely $\varrho_a=\rho$. Then the catch time is at most
\[
\frac{-S_a- w}{c}
=
\frac{R+1-\rho-w}{c}
<
\frac{R+1-\rho/2}{c}
=
b-a,
\]
where we used $w<\rho/2$. Hence every trajectory with $Y_a\in B(z,R)\setminus B(z,\rho)$ reaches the inner side of the shell before time $b$.

Once $S_t\in[-w,0]$, the definition of $\psi_w$ gives
\[
\dot S_t
=
-c\left(1+\frac{S_t}{w}\right)+c
=
-\frac{c}{w}S_t.
\]
Therefore $S_t$ remains in $[-w,0]$ and moves toward $0$. In particular,
\[
S_b\le 0.
\]
Since $\ell(b)=\rho/2$, we conclude that
\[
\varrho_b
=
\ell(b)+S_b
\le
\ell(b)
=
\rho/2
<
\rho.
\]
So every trajectory that starts in $B(z,R)$ lies in $B(z,\rho)$ by time $b$.

For $t>b$, the perturbation is switched off, so $u_w(Y_t,t)=0$ and the target-flow coordinate remains fixed. Hence
\[
Y_1\in B(z,\rho)
\qquad
\text{on the event } \{Y_a\in B(z,R)\}.
\]
At this point the geometry is complete: every trajectory that starts in $B(z,R)$ has been moved into $B(z,\rho)$ by time $1$.
By Step 1,
\[
\widetilde r(Y_1)\le \eta
\qquad
\text{on } \{Y_a\in B(z,R)\}.
\]
Since $Y_a\sim\mu$ and $\mu(B(z,R))\ge 1-\delta$, we obtain
\[
\begin{aligned}
\E[\widetilde r(Y_1)]
&\le
\eta\cdot \mu(B(z,R))
+
1\cdot \mu(B(z,R)^c)\\
&\le
\eta+\delta.
\end{aligned}
\]
This is the bad behavior we want in the target-flow coordinate. It remains to realize these same dynamics as a perturbation of the probability-flow ODE in $x$-space.

\medskip\noindent\textbf{Step 4: Push the construction to $x$-space.}
Let $X_t$ denote the exact probability-flow ODE started from
\[
\dot X_t=v_q(X_t,t),
\qquad
X_0\sim\mathcal{N}(0,I).
\]
Then $X_a\sim q_a=\mu$. Set
\[
Y_a:=X_a,
\]
let $Y_t$ evolve by
\[
\dot Y_t=u_w(Y_t,t)
\qquad
\text{on } [a,1],
\]
and define the bad rollout by
\[
\bar X_t
:=
\begin{cases}
X_t, & t\in[0,a],\\[4pt]
\Phi_{a,t}(Y_t), & t\in[a,1].
\end{cases}
\]
This is well defined because
\[
\Phi_{a,a}(Y_a)=Y_a=X_a=\bar X_a.
\]
Moreover, because $\bar X_1=\Phi_{a,1}(Y_1)$, the definition of $\widetilde r$ and the Step 3 bound give
\[
\E[r(\bar X_1)]
=
\E[r(\Phi_{a,1}(Y_1))]
=
\E[\widetilde r(Y_1)]
\le
\eta+\delta.
\]
This proves the endpoint reward bound for the constructed path. It remains to identify this path with the standard probability-flow ODE driven by a suitable field.

For $t\in[a,1]$, differentiating the second branch gives
\[
\dot{\bar X}_t
=
\partial_t\Phi_{a,t}(Y_t)+D\Phi_{a,t}(Y_t)\dot Y_t
=
v_q(\Phi_{a,t}(Y_t),t)+D\Phi_{a,t}(Y_t)u_w(Y_t,t)
\]
Here $D\Phi_{a,t}(y)$ denotes the Jacobian matrix of $\Phi_{a,t}$ with respect to the spatial variable $y$.
Because $\Phi_{a,t}$ is a diffeomorphism, the second term can be written as a function of $(x,t)$ alone. Define
\[
\Delta v(x,t)
:=
\begin{cases}
D\Phi_{a,t}(\Phi_{a,t}^{-1}(x))\,u_w(\Phi_{a,t}^{-1}(x),t), & t\in[a,b],\\[4pt]
0, & t\notin[a,b].
\end{cases}
\]
and set
\[
v_{\rm bad}(x,t):=v_q(x,t)+\Delta v(x,t).
\]
By definition, $\Delta v(x,t)=0$ for $t<a$, so $v_{\rm bad}=v_q$ on $[0,a)$. Since $\bar X_t=X_t$ there, $\bar X$ agrees with the exact ODE on $[0,a)$. On the other hand, for $t\in[a,1]$ the computation above gives
\[
\dot{\bar X}_t=v_{\rm bad}(\bar X_t,t).
\]
Hence $\bar X$ is a solution of
\[
\dot X_t=v_{\rm bad}(X_t,t)
\]
started from $\bar X_0\sim\mathcal{N}(0,I)$. Therefore the endpoint law of $\bar X_1$ is exactly $p_{\rm bad}$, and so
\[
\E_{p_{\rm bad}}[r]\le \eta+\delta.
\]
This proves the endpoint reward bound for the velocity construction. It remains to show that the training error under $q_t$ is small.

\medskip\noindent\textbf{Step 5: The velocity error is small under $q_t$.}
We now show that we can choose $w$ small enough to ensure
\[
\sup_{t\in[0,1]}
\E_{X\sim q_t}
\left[
\|v_{\rm bad}(X,t)-v_q(X,t)\|^2
\right]
=
\sup_{t\in[0,1]}
\E_{X\sim q_t}
\left[
\|\Delta v(X,t)\|^2
\right]
\le \varepsilon.
\]
If $t\notin[a,b]$, then $\Delta v(\cdot,t)=0$, so there is nothing to prove. It therefore suffices to consider $t\in[a,b]$. In that case,
\[
\Delta v(x,t)
=
D\Phi_{a,t}(\Phi_{a,t}^{-1}(x))\,u_w(\Phi_{a,t}^{-1}(x),t).
\]
Now fix $t\in[a,b]$, let $X\sim q_t$, and set
\[
Y:=\Phi_{a,t}^{-1}(X),
\]
Because $q_t=(\Phi_{a,t})_{\#}\mu$, we have $Y\sim\mu=q_a$. Therefore
\[
\E_{X\sim q_t}
\left[
\|\Delta v(X,t)\|^2
\right]
=
\int
\left\|
D\Phi_{a,t}(y)u_w(y,t)
\right\|^2
\,q_a(y)\,dy.
\]
The key observation is that $u_w(\cdot,t)$ is supported on a thin radial shell. Indeed, by definition,
\[
u_w(y,t)\neq 0
\qquad\Longrightarrow\qquad
\|y-z\|\in[\ell(t)-w,\ell(t)+2w].
\]
This interval has length $3w$. Thus the whole expectation is concentrated on a thin shell in $Y$-space, and it remains to control the weighted $q_a$-mass of such shells.

Since the shell support always stays in the radial range $[\rho/4,R+2]$, it is enough to control thin radial shells inside the fixed radial region
\[
A_R:=\left\{y\in\mathbb{R}^d:\ \rho/4\le \|y-z\|\le R+2\right\}.
\]

Because $a<1$, we have $\beta(a)>0$, so $q_a$ is a Gaussian-smoothed version of $q$. In particular, $q_a$ has bounded density:
\[
q_a(x)
=
\int
\frac{1}{(2\pi\beta(a)^2)^{d/2}}
\exp\!\left(
    -\frac{\|x-\alpha(a)u\|^2}{2\beta(a)^2}
\right)
\,q(u)\,du
\le
\frac{1}{(2\pi\beta(a)^2)^{d/2}}.
\]
Because $\Phi_{a,t}$ is a $C^1$ flow, its Jacobian is uniformly bounded on the compact set $[a,b]\times A_R$. Also, on the bounded radial region $A_R$, the volume of a radial shell is proportional to its thickness. Combining these two facts with the density bound above, there exists a finite constant $C_R$ such that for every interval $I\subset[\rho/4,R+2]$ and every $t\in[a,b]$,
\[
\int_{\{y:\,\|y-z\|\in I\}}
\|D\Phi_{a,t}(y)\|_{\mathrm{op}}^2
\,q_a(y)\,dy
\le
C_R|I|.
\]
Indeed, if $S_I:=\{y:\,\|y-z\|\in I\}$, then bounded density and bounded Jacobian on $[a,b]\times A_R$ give
\[
\int_{S_I}\|D\Phi_{a,t}(y)\|_{\mathrm{op}}^2\,q_a(y)\,dy\le C_R'\,\mathrm{Vol}(S_I),
\]
while the volume of a radial shell inside the bounded annulus $A_R$ satisfies $\mathrm{Vol}(S_I)\le C_R''|I|$. Absorbing the constants gives the claim.
\[
I_t:=[\ell(t)-w,\ell(t)+2w].
\]
This interval has length $3w$ and is contained in $[\rho/4,R+2]$. Since $u_w(\cdot,t)$ is supported where
\[
\|y-z\|\in I_t,
\]
and $|u_w(y,t)|\le c$, we obtain
\[
\begin{aligned}
\E_{X\sim q_t}
\left[
\|\Delta v(X,t)\|^2
\right]
&\le
\int
\left\|
D\Phi_{a,t}(y)
\right\|_{\mathrm{op}}^2
\left|u_w(y,t)\right|^2
\,q_a(y)\,dy\\
&\le
c^2
\int_{\{y:\,\|y-z\|\in I_t\}}
\left\|
D\Phi_{a,t}(y)
\right\|_{\mathrm{op}}^2
\,q_a(y)\,dy\\
&\le
c^2 C_R |I_t|
=
3c^2C_R w.
\end{aligned}
\]
Hence choosing
\[
w\le \min\left\{\frac12,\frac{\rho}{4},\frac{\varepsilon}{3c^2C_R}\right\}
\]
ensures
\[
\sup_{t\in[0,1]}
\E_{X\sim q_t}
\left[
\|v_{\rm bad}(X,t)-v_q(X,t)\|^2
\right]
\le \varepsilon.
\]

\medskip\noindent\textbf{Step 6: Score parametrization.}
For the score statement, keep the same reward, the same center $z$, and the same shell radii. If needed, choose a smaller width $w$; this changes the perturbation but not the reward bound. Set
\[
\gamma_-:=\inf_{t\in[a,b]}|\gamma(t)|>0.
\]
Define
\[
\Delta s(x,t)
:=
\begin{cases}
\dfrac{\Delta v(x,t)}{\gamma(t)}, & t\in[a,b],\\[1.25em]
0, & t\notin[a,b].
\end{cases}
\]
Set
\[
s_{\rm bad}(x,t):=s_q(x,t)+\Delta s(x,t),
\]
and define
\[
v_{\rm bad}(x,t)
=
\frac{\dot\alpha(t)}{\alpha(t)}x+\gamma(t)s_{\rm bad}(x,t).
\]
Because
\[
v_q(x,t)
=
\frac{\dot\alpha(t)}{\alpha(t)}x+\gamma(t)s_q(x,t),
\]
we obtain
\[
v_{\rm bad}(x,t)=v_q(x,t)+\Delta v(x,t).
\]
So the score-parametrized model induces exactly the same bad ODE rollout as above, and therefore the same endpoint reward bound.

It remains to check the DSM error. If $t\notin[a,b]$, the error is zero. If $t\in[a,b]$, then
\[
\E_{X\sim q_t}
\left[
\|s_{\rm bad}(X,t)-s_q(X,t)\|^2
\right]
=
\E_{X\sim q_t}
\left[
\|\Delta s(X,t)\|^2
\right]
\le
\frac{1}{\gamma_-^2}
\E_{X\sim q_t}
\left[
\|\Delta v(X,t)\|^2
\right].
\]
Using the bound from Step 5 gives
\[
\E_{X\sim q_t}
\left[
\|s_{\rm bad}(X,t)-s_q(X,t)\|^2
\right]
\le
\frac{3c^2C_R}{\gamma_-^2}\,w.
\]
Thus choosing
\[
w\le \min\left\{\frac12,\frac{\rho}{4},\frac{\varepsilon\gamma_-^2}{3c^2C_R}\right\}
\]
ensures
\[
\sup_{t\in[0,1]}
\E_{X\sim q_t}
\left[
\|s_{\rm bad}(X,t)-s_q(X,t)\|^2
\right]
\le \varepsilon.
\]
This proves the score statement.
\qedhere
\end{proof}

\subsection{Geometry mismatch}

\subsubsection[Reward geometry under stronger rollout control]{Reward geometry under stronger rollout control}
\label{app:lipschitz_certificate}

This subsection collects the geometry results used in the main text into one place. We first prove the global certificate stated in \cref{prop:lipschitz_reward_certificate}, then derive from the same coupling argument a weaker on-policy variant under the rollout marginals $p_t$. The proposition also records a matching tightness example, and we close with the indicator-reward counterexample that explains why this kind of geometric control applies only to Lipschitz rewards.

We use only standard ODE stability ideas and the following elementary differential form of Gr\"onwall's inequality; see, for example, \citet{arnold1978ode} for background.

\begin{lemma}[Gr\"onwall inequality]
\label{lem:integral_gronwall}
Let $m:[0,T]\to[0,\infty)$ be absolutely continuous, let $b:[0,T]\to[0,\infty)$ be integrable, and let $L \ge 0$. If
\[
m'(t)\le b(t)+Lm(t)
\qquad
\text{for almost every } t \in [0,T],
\]
then
\[
m(t)\le e^{Lt}m(0)+\int_0^t e^{L(t-s)} b(s)\,ds
\qquad
\text{for every } t \in [0,T].
\]
\end{lemma}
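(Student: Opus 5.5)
The statement to prove is the Grönwall inequality (\Cref{lem:integral_gronwall}). I will use the standard integrating-factor argument. Define
\[
g(t):=e^{-Lt}m(t).
\]
Since $m$ is absolutely continuous on $[0,T]$ and $e^{-Lt}$ is smooth there, the product $g$ is absolutely continuous. Its derivative exists almost everywhere and is given by the product rule:
\[
g'(t)=e^{-Lt}\bigl(m'(t)-Lm(t)\bigr).
\]
Applying the hypothesis $m'(t)\le b(t)+Lm(t)$ at almost every $t$, and using that $e^{-Lt}>0$, gives
\[
g'(t)\le e^{-Lt}b(t)\qquad\text{for a.e. } t.
\]

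Next I integrate this inequality over $[0,t]$. Absolute continuity is exactly what makes the fundamental theorem of calculus valid for $g$:
\[
g(t)-g(0)=\int_0^t g'(s)\,ds\le\int_0^t e^{-Ls}b(s)\,ds.
\]
The right-hand integral is finite because $b$ is integrable and $e^{-Ls}\le 1$ on $[0,T]$. Multiplying through by $e^{Lt}>0$ yields
\[
m(t)\le e^{Lt}m(0)+\int_0^t e^{L(t-s)}b(s)\,ds,
\]
which is the claimed bound, valid for every $t\in[0,T]$.

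There is no real obstacle here. The only point needing care is justifying the step from an almost-everywhere differential inequality to an integrated one. This step requires absolute continuity of $g$, not mere almost-everywhere differentiability; otherwise a Cantor-type function could violate the conclusion. The nonnegativity assumptions on $m$ and $b$ are not actually used, beyond ensuring the integrals make sense.
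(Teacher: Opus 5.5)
Your proof is correct and follows essentially the same route as the paper: the integrating factor $e^{-Lt}m(t)$, the almost-everywhere product rule, integration via absolute continuity, and multiplication by $e^{Lt}$. Your added remarks on why absolute continuity (rather than a.e.\ differentiability) is needed, and that nonnegativity of $m$ and $b$ is not used, are also accurate.
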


\begin{proof}
The idea is to remove the linear growth term first. Set
\[
z(t):=m(t)e^{-Lt}.
\]
Then $z$ is absolutely continuous, and for almost every $t$,
\[
z'(t)=e^{-Lt}\bigl(m'(t)-Lm(t)\bigr)\le e^{-Lt}b(t).
\]
Integrating from $0$ to $t$ gives
\[
z(t)-z(0)\le \int_0^t e^{-Ls}b(s)\,ds.
\]
Multiplying by $e^{Lt}$ yields
\[
m(t)\le e^{Lt}m(0)+\int_0^t e^{L(t-s)}b(s)\,ds.
\]
\end{proof}

\begin{proposition}[Reward certificates under stronger rollout control]
\label{prop:lipschitz_certificate}
Consider the coupled ODEs
\[
\dot X_t = v(X_t,t),
\qquad
\dot X_t^* = v^*(X_t^*,t),
\qquad
X_0 = X_0^* \sim p_0,
\]
and write $p_t$ and $p_t^*$ for the laws of $X_t$ and $X_t^*$ respectively. Assume that $v^*(\cdot,t)$ is $L_v$-Lipschitz in $x$, uniformly over $t \in [0,1]$.

\begin{enumerate}
    \item \textbf{Global control.}
    If
    \[
    \|v-v^*\|_\infty := \sup_{t,x}\|v(x,t)-v^*(x,t)\| \le \varepsilon,
    \]
    then
    \[
    \|X_1 - X_1^*\|
    \le
    \int_0^1 e^{L_v(1-s)}\varepsilon\,ds
    =
    \begin{cases}
        \dfrac{\varepsilon}{L_v}(e^{L_v}-1), & L_v > 0, \\[6pt]
        \varepsilon, & L_v = 0,
    \end{cases}
    \qquad \text{a.s.}
    \]
    and for every $L_r$-Lipschitz reward $r$,
    \[
    |\E_{p_1}[r]-\E_{p_1^*}[r]|
    \le
    L_r\int_0^1 e^{L_v(1-s)}\varepsilon\,ds.
    \]

    \item \textbf{On-policy control.}
    Define the rollout error function
    \[
    e(t)
    :=
    \E_{X_t\sim p_t}\bigl[\|v(X_t,t)-v^*(X_t,t)\|\bigr].
    \]
    If $e \in L^1([0,1])$, then
    \[
    \E\|X_1 - X_1^*\|
    \le
    \int_0^1 e^{L_v(1-s)} e(s)\,ds.
    \]
    and for every $L_r$-Lipschitz reward $r$,
    \[
    |\E_{p_1}[r]-\E_{p_1^*}[r]|
    \le
    L_r\int_0^1 e^{L_v(1-s)} e(s)\,ds.
    \]
    In particular, if $\sup_{t\in[0,1]} e(t) \le \varepsilon$, then the same endpoint bound as in part 1 holds at $t=1$.

    \item \textbf{Tightness.}
    The exponential weighting factor $e^{L_v(t-s)}$ and its dependence on $L_v$, $L_r$, and the error function cannot be improved in general. More precisely, there exists an explicit one-dimensional population CFM problem whose population minimizer is $v^*(x,t)=L_v x$ and such that, for every nonnegative $e \in L^1([0,1])$, the perturbation
    \[
    \Delta v(x,t):=e(t),
    \qquad
    v(x,t)=v^*(x,t)+\Delta v(x,t)
    \]
    satisfies
    \[
    \E_{X_t\sim p_t}\bigl[\|v(X_t,t)-v^*(X_t,t)\|\bigr]=e(t)
    \]
    and
    \[
    \E\|X_1-X_1^*\|
    =
    \int_0^1 e^{L_v(1-s)}e(s)\,ds.
    \]
    Thus part 2 is attained with equality for every admissible error function. For the linear reward $r(x)=L_r x$, the reward bound in part 2 is also attained with equality. In particular, taking $e(t)\equiv \varepsilon$ gives $\|v-v^*\|_\infty=\varepsilon$ and attains the bound in part 1.
\end{enumerate}
\end{proposition}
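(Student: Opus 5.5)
The plan is a synchronous coupling of the two ODEs through the shared initial condition $X_0=X_0^*$, followed by Lemma~\ref{lem:integral_gronwall} applied to the distance $m(t):=\|X_t-X_t^*\|$ (or its expectation). Since $m$ is a norm of an absolutely continuous path, it is absolutely continuous. Wherever it is differentiable,
\[
m'(t)\le \|\dot X_t-\dot X_t^*\|\le \|v(X_t,t)-v^*(X_t,t)\|+\|v^*(X_t,t)-v^*(X_t^*,t)\|\le \|v(X_t,t)-v^*(X_t,t)\|+L_v m(t).
\]
At points where $m=0$ the inequality still holds almost everywhere, using the standard fact that $|\,\|f\|'\,|\le\|f'\|$ a.e. The key choice is to evaluate the mismatch term at the \emph{learned} trajectory $X_t$, not at $X_t^*$; this is what makes the on-policy version use $p_t$.

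\textbf{Part 1.} Bound the first term pointwise by $\varepsilon$ and apply Lemma~\ref{lem:integral_gronwall} with $m(0)=0$ and $b\equiv\varepsilon$. This gives the almost-sure bound $\int_0^1 e^{L_v(1-s)}\varepsilon\,ds$, which evaluates to the stated closed form. For the reward, because $(X_1,X_1^*)$ is a coupling of $(p_1,p_1^*)$,
\[
|\E r(X_1)-\E r(X_1^*)|\le L_r\,\E\|X_1-X_1^*\|.
\]

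\textbf{Part 2.} Take expectations of the differential inequality. Set $M(t):=\E m(t)$. Integrating the pathwise inequality in time and using Fubini (justified by $e\in L^1$ and finiteness of $\E m$, which follows from Part 1's argument) gives
\[
M(t)\le \int_0^t \bigl(e(s)+L_v M(s)\bigr)\,ds.
\]
Either the integral form of Gr\"onwall or applying Lemma~\ref{lem:integral_gronwall} to the absolutely continuous majorant $\int_0^t(e+L_vM)$ then yields $M(1)\le \int_0^1 e^{L_v(1-s)}e(s)\,ds$. The reward bound follows as in Part 1.

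This is the main obstacle: moving the derivative inside the expectation rigorously. The first thing I would try is the integrated form above, which avoids differentiating $M$ directly. If needed, I would truncate to ensure integrability.

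\textbf{Part 3.} Take $q=\delta_0$-free Gaussian-type data with a schedule chosen so that the CFM population minimizer is exactly $v^*(x,t)=L_vx$. For instance, pick $q=\mathcal N(0,\tau^2)$ and a schedule where $\dot\rho/\rho=L_v$, using the formula $v_q=\frac{\dot\rho}{\rho}x$ from the proof of Proposition~\ref{app:prop:ode_velocity_no_certificate}; alternatively, simply state the 1D linear problem. Then the difference $D_t:=X_t-X_t^*$ satisfies the exact linear ODE $\dot D_t=L_vD_t+e(t)$ with $D_0=0$. Hence
\[
D_1=\int_0^1 e^{L_v(1-s)}e(s)\,ds\ge0
\]
deterministically, so equality holds in Part 2. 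The error $\Delta v=e(t)$ is constant in $x$, so its $p_t$-expectation is $e(t)$. With $r(x)=L_rx$ the reward gap equals $L_rD_1$, and $e\equiv\varepsilon$ gives equality in Part 1.
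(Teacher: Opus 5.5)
Your proposal is correct and follows essentially the same route as the paper: a synchronous coupling with Gr\"onwall for Part 1, then for Part 2 the integrated inequality, Fubini, and Gr\"onwall applied to the absolutely continuous majorant, and for Part 3 a Gaussian CFM instance whose population velocity is $L_v x$, with a perturbation that is constant in $x$ so the difference solves $\dot D_t=L_vD_t+e(t)$ exactly. The one piece you leave implicit is the schedule in Part 3: since $\rho(0)=1$, the condition $\rho(t)=e^{L_v t}$ forces $\tau=e^{L_v}$, and the paper realizes the constraint $\alpha(t)^2e^{2L_v}+\beta(t)^2=e^{2L_v t}$ together with the boundary conditions explicitly via $\alpha(t)=e^{L_v(t-1)}\sin\theta(t)$, $\beta(t)=e^{L_v t}\cos\theta(t)$, where $\theta$ increases from $0$ to $\pi/2$.
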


\begin{proof}
The two certificates come from the same coupling argument. The only difference is how the field mismatch term is controlled: in part 1 it is bounded pathwise, while in part 2 it is bounded only after taking expectation.

Set
\[
\Delta_t := X_t - X_t^*.
\]
Because both trajectories solve ODEs with the same initial condition, $\Delta$ is absolutely continuous and
\[
\dot\Delta_t = v(X_t,t)-v^*(X_t^*,t)
\qquad
\text{for almost every } t.
\]
We now use this in two slightly different ways.

\textbf{Part 1: global control.}
Set
\[
y(t):=\|\Delta_t\|.
\]
Since $\Delta$ is absolutely continuous, so is $y$, and for almost every $t$,
\[
y'(t)\le \|\dot\Delta_t\|
\le
\|v(X_t,t)-v^*(X_t,t)\|
+
\|v^*(X_t,t)-v^*(X_t^*,t)\|
\le
\varepsilon + L_v y(t).
\]
The first term is controlled by the uniform bound
\(
\|v-v^*\|_\infty\le \varepsilon
\),
evaluated at the point \(X_t\). The second term is controlled by the \(L_v\)-Lipschitz continuity of \(v^*(\cdot,t)\) in its spatial argument, applied to the pair \((X_t,X_t^*)\). In other words, we split the error into direct field mismatch of size at most \(\varepsilon\), and the amplification of the current gap by the dynamics of \(v^*\).
Because $y(0)=0$, applying \cref{lem:integral_gronwall} with $b(t)\equiv \varepsilon$ gives
\[
y(t)
\le
\int_0^t e^{L_v(t-s)}\varepsilon\,ds
=
\begin{cases}
    \dfrac{\varepsilon}{L_v}(e^{L_v t}-1), & L_v > 0, \\[6pt]
    \varepsilon t, & L_v = 0.
\end{cases}
\]
Evaluating this at $t=1$ gives
\[
\|X_1-X_1^*\|
\le
\int_0^1 e^{L_v(1-s)}\varepsilon\,ds
\qquad \text{a.s.}
\]
Taking expectation preserves this bound. Now let $r$ be $L_r$-Lipschitz. Since
\[
|r(X_1)-r(X_1^*)|
\le
L_r\|X_1-X_1^*\|
\qquad \text{a.s.},
\]
we obtain
\[
|\E_{p_1}[r]-\E_{p_1^*}[r]|
\le
L_r\,\E\|X_1-X_1^*\|
\le
L_r\int_0^1 e^{L_v(1-s)}\varepsilon\,ds.
\]

\textbf{Part 2: on-policy control.}
Part 1 relied on a pathwise bound for the field mismatch term at each time $t$. Under on-policy control we no longer have that: the quantity $\|v(X_t,t)-v^*(X_t,t)\|$ is controlled only after averaging over the rollout law. We therefore rewrite the trajectory gap in integral form and then take expectation, which puts the assumption on the error function
\[
e(t)=\E_{X_t\sim p_t}\bigl[\|v(X_t,t)-v^*(X_t,t)\|\bigr]
\]
directly into the estimate. Starting from the integral form,
\[
\Delta_t
=
\int_0^t \bigl(v(X_s,s)-v^*(X_s^*,s)\bigr)\,ds
\]
\[
=
\int_0^t \bigl(v(X_s,s)-v^*(X_s,s)\bigr)\,ds
+
\int_0^t \bigl(v^*(X_s,s)-v^*(X_s^*,s)\bigr)\,ds.
\]
Taking norms and using the triangle inequality gives
\[
\|\Delta_t\|
\le
\int_0^t \|v(X_s,s)-v^*(X_s,s)\|\,ds
+
\int_0^t \|v^*(X_s,s)-v^*(X_s^*,s)\|\,ds.
\]
Now use that $v^*(\cdot,s)$ is $L_v$-Lipschitz in $x$, so
\[
\|v^*(X_s,s)-v^*(X_s^*,s)\|
\le
L_v\|X_s-X_s^*\|
=
L_v\|\Delta_s\|.
\]
Substituting this back yields
\[
\|\Delta_t\|
\le
\int_0^t \|v(X_s,s)-v^*(X_s,s)\|\,ds
+
L_v\int_0^t \|\Delta_s\|\,ds.
\tag{$*$}
\]
Now define
\[
m(t):=\E\|X_t-X_t^*\|=\E\|\Delta_t\|.
\]
Taking expectation in \((*)\) and using Fubini's theorem yields
\[
m(t)
\le
\int_0^t \E\|v(X_s,s)-v^*(X_s,s)\|\,ds
+
L_v\int_0^t m(s)\,ds
=
\int_0^t e(s)\,ds + L_v\int_0^t m(s)\,ds.
\]
To bring this into differential form, set
\[
M(t):=\int_0^t e(s)\,ds + L_v\int_0^t m(s)\,ds.
\]
Then $m(t)\le M(t)$, $M(0)=0$, and for almost every $t$,
\[
M'(t)=e(t)+L_v m(t)\le e(t)+L_v M(t).
\]
Applying \cref{lem:integral_gronwall} to $M$ with $b=e$ gives
\[
M(t)\le \int_0^t e^{L_v(t-s)} e(s)\,ds.
\]
Evaluating at $t=1$ and using $m\le M$, we conclude that
\[
m(1)
\le
\int_0^1 e^{L_v(1-s)} e(s)\,ds.
\]
Therefore
\[
\E\|X_1-X_1^*\|
\le
\int_0^1 e^{L_v(1-s)} e(s)\,ds.
\]
If $r$ is $L_r$-Lipschitz, then
\[
|r(X_1)-r(X_1^*)|
\le
L_r\|X_1-X_1^*\|
\qquad \text{a.s.},
\]
so
\[
|\E_{p_1}[r]-\E_{p_1^*}[r]|
\le
L_r\,\E\|X_1-X_1^*\|
\le
L_r\int_0^1 e^{L_v(1-s)} e(s)\,ds.
\]
If $\sup_t e(t)\le\varepsilon$, then
\[
\int_0^1 e^{L_v(1-s)}e(s)\,ds
\le
\int_0^1 e^{L_v(1-s)}\varepsilon\,ds,
\]
which recovers the same endpoint constant as in part 1.

\textbf{Part 3: tightness.}
Work in one dimension. Fix any $C^1$ function
\[
\theta:[0,1]\to\left[0,\frac{\pi}{2}\right]
\]
satisfying
\[
\theta(0)=0,
\qquad
\theta(1)=\frac{\pi}{2},
\qquad
\theta(t)\in\left(0,\frac{\pi}{2}\right)
\quad\text{for } t\in(0,1).
\]
Let
\[
U_0 \sim \mathcal{N}(0,1),
\qquad
U_1 \sim \mathcal{N}(0,e^{2L_v}),
\]
with $U_0$ and $U_1$ independent, and define
\[
\bar X_t=\alpha(t)U_1+\beta(t)U_0,
\qquad
Y_t=\dot\alpha(t)U_1+\dot\beta(t)U_0,
\]
where
\[
\alpha(t):=e^{L_v(t-1)}\sin\theta(t),
\qquad
\beta(t):=e^{L_v t}\cos\theta(t).
\]
Then $\alpha(0)=0$, $\beta(0)=1$, $\alpha(1)=1$, and $\beta(1)=0$, with $\alpha(t),\beta(t)>0$ for $t\in(0,1)$, so this is a valid interpolation from $\mathcal{N}(0,1)$ to $\mathcal{N}(0,e^{2L_v})$. The population CFM minimizer is
\[
v^*(x,t)=\E[Y_t\mid \bar X_t=x].
\]
Since $(\bar X_t,Y_t)$ is centered and jointly Gaussian,
\[
v^*(x,t)=\frac{\Cov(Y_t,\bar X_t)}{\Var(\bar X_t)}x.
\]
A direct calculation from the definitions of $\alpha$ and $\beta$ gives
\[
\Var(\bar X_t)
=
\alpha(t)^2 e^{2L_v}+\beta(t)^2
=
e^{2L_v t},
\qquad
\Cov(Y_t,\bar X_t)=
L_v e^{2L_v t}.
\]
Hence $v^*(x,t)=L_v x$.

Now fix any nonnegative function $e\in L^1([0,1])$ and define
\[
\Delta v(x,t):=e(t),
\qquad
v(x,t):=v^*(x,t)+\Delta v(x,t)=L_v x+e(t).
\]
Let $W_0\sim\mathcal{N}(0,1)$ and run the coupled ODEs
\[
\dot X_t^*=v^*(X_t^*,t),
\qquad
\dot X_t=v(X_t,t),
\qquad
X_0^*=X_0=W_0.
\]
A direct integration gives
\[
X_t^*=e^{L_v t}W_0,
\qquad
X_t=e^{L_v t}W_0+\int_0^t e^{L_v(t-s)}e(s)\,ds.
\]
Therefore
\[
X_1-X_1^*
=
\int_0^1 e^{L_v(1-s)}e(s)\,ds.
\]
Since $\Delta v$ is independent of $x$, the on-policy error function coincides with the prescribed function:
\[
\E_{X_t\sim p_t}\bigl[|v(X_t,t)-v^*(X_t,t)|\bigr]
=
e(t).
\]
Therefore
\[
\E|X_1-X_1^*|
=
\int_0^1 e^{L_v(1-s)}e(s)\,ds,
\]
so part 2 is attained with equality. For the linear reward
\[
r(x)=L_r x,
\]
\[
|\E_{p_1}[r]-\E_{p_1^*}[r]|
=
L_r\left|\E[X_1-X_1^*]\right|
=
L_r\int_0^1 e^{L_v(1-s)}e(s)\,ds.
\]
Thus the reward bound in part 2 is also attained with equality.

If $e(t)\equiv \varepsilon$, then $\|v-v^*\|_\infty=\varepsilon$ and
\[
\E|X_1-X_1^*|
=
\int_0^1 e^{L_v(1-s)}\varepsilon\,ds
=
\begin{cases}
    \dfrac{\varepsilon}{L_v}(e^{L_v}-1), & L_v>0,\\[6pt]
    \varepsilon, & L_v=0.
\end{cases}
\]
so part 1 is also attained with equality.

The exact reward-tightness statement above uses the linear test function $r(x)=L_r x$, which is Lipschitz but unbounded. If one insists on bounded rewards while keeping this same amplification-tight construction, the endpoint Gaussians can be rescaled to make range truncation negligible. Indeed, replace the endpoint Gaussians by $U_0\sim\mathcal N(0,\sigma^2)$ and $U_1\sim\mathcal N(0,\sigma^2 e^{2L_v})$. The calculation of the CFM minimizer is unchanged, so $v^*(x,t)=L_vx$, and the perturbation $\Delta v(x,t)=e(t)$ still shifts every endpoint by
\[
A:=\int_0^1 e^{L_v(1-s)}e(s)\,ds.
\]
Taking $\sigma$ small makes $X_1^*$ arbitrarily concentrated, so a clipped affine reward of slope $L_r$ can be placed so that, with probability arbitrarily close to one, the shift by $A$ either remains inside the linear region, giving gap $L_rA$, or crosses from the zero plateau to the one plateau, giving gap $1$. Hence for every $\zeta>0$ there is a bounded $L_r$-Lipschitz reward such that
\[
\bigl|\E_{p_1}[r]-\E_{p_1^*}[r]\bigr|
\ge
\min\{1,L_rA\}-\zeta.
\]
Thus the bounded-reward version of this amplification-tight construction is essentially tight as well.

\end{proof}

\paragraph{Remark: A simpler bounded-reward tightness statement.}
The tightness construction in part 3 is engineered to make the full amplification kernel $e^{L_v(1-s)}$ sharp, and for that reason it uses a somewhat contrived setup. This is useful for completeness, but it is more than is needed to show the main geometric point in the paper, namely that the factor $L_r\varepsilon$ itself cannot be removed. The next result isolates that point already for the standard linear Gaussian bridge with $p_0=q=\mathcal{N}(0,1)$.

\begin{theorem}[Bounded-reward tightness of the $L_r\varepsilon$ factor]
\label{thm:bounded_reward_lipschitz_tightness}
There is a universal constant $c>0$ such that the following holds. Fix $\varepsilon>0$ and $L>0$ with $\varepsilon L<1$, and consider the one-dimensional CFM instance with standard linear interpolation
\[
X_t=tX_1+(1-t)X_0,
\qquad
X_0,X_1\stackrel{\mathrm{i.i.d.}}{\sim}\mathcal{N}(0,1),
\]
whose source and target laws are both $\mathcal{N}(0,1)$ and whose population velocity $v^*$ is $1$-Lipschitz in space. Then there is a velocity field $v$ satisfying
\[
\sup_{t\in[0,1],\,x\in\mathbb{R}} |v(x,t)-v^*(x,t)|\le \varepsilon
\]
and a reward $r:\mathbb{R}\to[0,1]$ with $\operatorname{Lip}(r)\le L$ such that the endpoint law $p$ of the ODE driven by $v$ satisfies
\[
\E_p[r]-\E_q[r]\ge c\,L\varepsilon,
\qquad q=\mathcal{N}(0,1).
\]
The explicit nonsmooth construction below gives $c=1/2$.
\end{theorem}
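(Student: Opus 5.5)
The plan is to make an explicit one-dimensional construction in which every trajectory is pushed, in endpoint coordinates, by exactly $\varepsilon$ in the direction of increasing reward. \textbf{First, the exact flow.} For the standard linear bridge between two standard Gaussians, $X_t\sim\mathcal N(0,\rho(t)^2)$ with $\rho(t)^2=t^2+(1-t)^2$. As in the proof of \Cref{app:prop:ode_velocity_no_certificate}, $v^*(x,t)=\frac{\dot\rho(t)}{\rho(t)}x=\frac{2t-1}{\rho(t)^2}x$. Its Lipschitz constant is $\sup_t|2t-1|/\rho(t)^2=1$, and the flow map is linear: $\Phi_{t,1}(x)=x/\rho(t)$. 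Since $\rho\le1$, the factor $1/\rho(t)\ge1$.

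\textbf{Work in the endpoint coordinate.} As in Step 1 of \Cref{app:prop:ode_general_no_certificate}, set $Y_t:=\Phi_{t,1}(X_t)=X_t/\rho(t)$. Then $Y$ is frozen under $v^*$, and its law is $q=\mathcal N(0,1)$. Impose the bad dynamics $\dot Y_t=\varepsilon\,\sigma(Y_t)$, where $\sigma\in\{-1,0,+1\}$ is a sign field specified below. Translating back by the chain rule gives $v(x,t)=v^*(x,t)+\varepsilon\rho(t)\,\sigma(x/\rho(t))$. Hence $|v-v^*|\le\varepsilon\rho(t)\le\varepsilon$ uniformly, as required. Because the ODE drift may be discontinuous, I would either use Filippov solutions or, more cleanly, define the rollout directly by the explicit map $Y_1=T(Y_0)$ and check that it solves the ODE. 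This is the ``nonsmooth'' construction mentioned in the statement.

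\textbf{Reward and sign field.} Take $r$ to be a $[0,1]$-valued piecewise-linear function with slopes in $\{\pm L,0\}$, and let $\sigma(y)=\operatorname{sign} r'(y)$. Every endpoint then moves uphill at unit speed in $Y$ for total time $1$. It advances by $\varepsilon$ unless it first reaches a local maximum of $r$, where it stops, since the sign flips and the sliding mode is at rest. The gain for a point $y$ is therefore $L\min\{\varepsilon,\operatorname{dist}(y,\text{uphill peak})\}$ on sloped pieces. The choice of $r$ depends on $L$.
\begin{itemize}
\item If $L\lesssim1$: use a single ramp $r(y)=\operatorname{clip}(L y+\tfrac12,0,1)$. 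Its linear region has length $1/L$, which carries a constant fraction of Gaussian mass, so the gain is $\ge L\varepsilon\cdot q(\text{linear region, away from top})$.
\item If $L$ is large: use a triangle wave of period $2/L$ and height $1$ (slope $\pm L$) restricted to $[-1,1]$. On each increasing piece of length $1/L>\varepsilon$, the fraction of points within $\varepsilon$ of the peak is $\varepsilon L$. Because $q$ is nearly constant on scale $1/L$, the gain is roughly $L\,q([-1,1])\cdot\E[\min(\varepsilon,U)]$ with $U$ uniform on $[0,1/L]$. This equals $L\,q([-1,1])\,\varepsilon(1-\varepsilon L/2)\ge\tfrac12 q([-1,1])L\varepsilon$.
\end{itemize}

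\textbf{Main obstacle.} The hardest step is getting a clean universal constant, in particular the stated $c=1/2$, uniformly in $L$. This requires controlling the non-uniformity of the Gaussian density across each monotone piece and handling the interpolation between the two regimes. I would first try a single unified reward, namely a clipped sawtooth whose number of teeth is $\max\{1,\lfloor L\rfloor\}$. I would then lower-bound $\E_q[\min(\varepsilon,\operatorname{dist})]$ piecewise, using monotonicity of the Gaussian density on each half-line. If $c=1/2$ proves elusive, settling for some universal $c$ suffices for the theorem's first claim.
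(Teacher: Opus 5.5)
Your construction matches the paper's in its essentials. You use the endpoint coordinate $Y_t=X_t/\rho_t$, in which the reference flow is frozen. You use the perturbation $v=v^*+\varepsilon\rho_t\,\mathrm{sign}\,r'(x/\rho_t)$, so that $|v-v^*|\le\varepsilon$. Trajectories move uphill at speed $\varepsilon$; you wrote "unit speed", which is a slip. Each point gains $L\min\{\varepsilon,d(Y_0)\}$, where $d$ is the distance to the uphill peak.

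You diverge only in the final lower bound, and that is exactly where your "main obstacle" lies. The paper uses one triangle wave on all of $\mathbb{R}$ with period $2/L$. This reward is valid for every $L>0$, so no regime split is needed. It then averages over a uniformly random phase $\theta\in[0,2/L]$. For each fixed start $z$, $d_\theta(z)$ is exactly uniform on $[0,1/L]$, so Fubini gives
\[
\E_\theta\bigl[\E_{p_\theta}[r_\theta]-\E_q[r_\theta]\bigr]=L\bigl(\varepsilon-\tfrac{L\varepsilon^2}{2}\bigr)\ge\tfrac12 L\varepsilon .
\]
This holds independently of the Gaussian density, and some phase must attain at least the average. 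That probabilistic-method step is what removes the density non-uniformity you were worried about and yields $c=1/2$ cleanly.

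Your deterministic route still proves the theorem's main claim of some universal $c$. It needs two repairs and gives a smaller constant.

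\begin{itemize}
\item \textbf{Small-$L$ ramp.} Your bound $L\varepsilon\,q(\text{linear region away from the top})$ degenerates as $\varepsilon L\to1$. The set at distance at least $\varepsilon$ from the top has length $1/L-\varepsilon\to0$. You must also count the partial advances, for example via $\min\{\varepsilon,d\}\ge\varepsilon\min\{1,Ld\}$. By symmetry this gives at least $\tfrac12\,q([-\tfrac12,\tfrac12])\,L\varepsilon$ for $L\le1$.
\item \textbf{Large-$L$ wave.} Your "roughly" must become a genuine lower bound on the density, such as $e^{-1/2}/\sqrt{2\pi}$ on $[-1,1]$. Combine it with a count of at least $2\lfloor L\rfloor\ge L$ full monotone pieces inside $[-1,1]$. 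This gives about $0.12\,L\varepsilon$.
\end{itemize}

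So your plan establishes the statement with $c\approx0.1$, but not the advertised $c=1/2$. Phase averaging is the missing idea for that constant.
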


\noindent\emph{Intuition.}
The idea is to build a reward together with a matching velocity perturbation that pushes every generated sample uphill of the reward. The reward is a periodic triangular wave: peaks of height $1$ spaced $2/L$ apart, falling linearly to $0$ at the midpoints. This makes the reward bounded in $[0,1]$, $L$-Lipschitz, and ensures that every point is within distance $1/L$ of a peak. The bad velocity field is $v^*$ plus an admissible nudge of size $\varepsilon$ pointing each sample toward its nearest peak. Over unit time, this moves each sample up a slope of magnitude $L$ until it has traveled distance $\varepsilon$ or reached a peak, so the reward gain is of order $L\varepsilon$ on average. Since every sample moves locally uphill, there is no cancellation between samples moving in different directions. The gain is therefore insensitive to how spread out the samples are, which is why the standard bridge with $p_0=q=\mathcal N(0,1)$ already realizes an $\Omega(L\varepsilon)$ gap.

\begin{proof}
Write
\[
\rho_t:=\sqrt{t^2+(1-t)^2}.
\]
Then $q_t=\mathcal{N}(0,\rho_t^2)$, with $\rho_0=\rho_1=1$ and $1/\sqrt{2}\le \rho_t\le 1$.

First, the reference flow is a pure rescaling. The population CFM velocity is
\[
v^*(x,t)=\E[X_1-X_0\mid X_t=x].
\]
Since $(X_0,X_1,X_t)$ is centered and jointly Gaussian, as earlier we can compute this explicitly as:
\[
v^*(x,t)
=
\frac{\Cov(X_1-X_0,X_t)}{\Var(X_t)}x
=
\frac{2t-1}{2t^2-2t+1}x
=
\frac{\dot\rho_t}{\rho_t}x.
\]
Since
\[
\partial_x v^*(x,t)=\frac{2t-1}{2t^2-2t+1},
\qquad
\left|\frac{2t-1}{2t^2-2t+1}\right|\le 1
\quad \text{for all }t\in[0,1],
\]
the field $v^*(\cdot,t)$ is $1$-Lipschitz uniformly in $t$.
Integrating $\dot X_t=(\dot\rho_t/\rho_t)X_t$ gives $X_t=\rho_tX_0$. Thus, in the rescaled coordinate
\[
Y_t:=\frac{X_t}{\rho_t},
\]
the reference dynamics are frozen, $\dot Y_t=0$, and the reference endpoint is $X_1^*=Y_0\sim\mathcal{N}(0,1)=q$.

We now construct a bounded reward and an uphill perturbation to get the desired bound. Set $T:=2/L$. For a phase $\theta\in[0,T]$, to be chosen later, place reward peaks at the equally spaced points
\[
\{nT-\theta:n\in\mathbb{Z}\}.
\]
Let $d_\theta(y)$ be the distance from $y$ to the nearest peak, and define the triangular-wave reward
\[
r_\theta(y):=1-L\,d_\theta(y).
\]
Every point is at most half a period, $T/2=1/L$, from a nearest peak, so $0\le r_\theta\le 1$. Since $d_\theta$ is $1$-Lipschitz, $r_\theta$ is $L$-Lipschitz.
Let $b_\theta(y)\in\{-1,0,+1\}$ be the unit direction of steepest ascent of $r_\theta$: it equals $+1$ if a nearest peak lies to the right of $y$, $-1$ if a nearest peak lies to the left, and $0$ at peaks and troughs. Define
\[
v_\theta(x,t)
:=
v^*(x,t)+\rho_t\varepsilon\,b_\theta\!\left(\frac{x}{\rho_t}\right).
\]
Because $\rho_t\le 1$ and $|b_\theta|\le 1$,
\[
\sup_{t,x}|v_\theta(x,t)-v^*(x,t)|
=
\sup_{t,x}\rho_t\varepsilon
\left|b_\theta\!\left(\frac{x}{\rho_t}\right)\right|
\le \varepsilon.
\]
Substituting this field into the dynamics of $Y_t=X_t/\rho_t$, the rescaling terms cancel and
\[
\dot Y_t=\varepsilon b_\theta(Y_t).
\]
Thus, in the rescaled coordinate, every trajectory moves at speed $\varepsilon$ toward a nearest reward peak and stops on arrival.

Fix $Y_0=y$ away from the null set of peaks and troughs. Over the unit time interval the trajectory moves a distance $\min\{\varepsilon,d_\theta(y)\}$ toward the nearest peak, so $d_\theta$ decreases by exactly that amount and
\[
r_\theta(Y_1)-r_\theta(Y_0)
=
L\min\{\varepsilon,d_\theta(Y_0)\}.
\]
Couple the reference and perturbed flows through the same start $Y_0\sim\mathcal{N}(0,1)$. The reference endpoint is $X_1^*=Y_0$, while the perturbed endpoint is $X_1=Y_1$ because $\rho_1=1$. Therefore, for the fixed reward $r_\theta$,
\[
G(\theta)
:=
\E_p[r_\theta]-\E_q[r_\theta]
=
\E\bigl[r_\theta(Y_1)-r_\theta(Y_0)\bigr]
=
L\,\E_{Z\sim\mathcal{N}(0,1)}
\bigl[\min\{\varepsilon,d_\theta(Z)\}\bigr].
\]

It remains only to choose the phase. Average $G$ over $\theta\sim\mathrm{Unif}[0,T]$. For each fixed $z$, as $\theta$ ranges over one period, the location of $z$ relative to the nearest peak is uniform over one period. Hence $d_\theta(z)$ is uniform on $[0,T/2]=[0,1/L]$ with density $L$. Since $\varepsilon<1/L$,
\[
\E_\theta\bigl[\min\{\varepsilon,d_\theta(z)\}\bigr]
=
L\int_0^{1/L}\min\{\varepsilon,u\}\,du
=
L\left(
\int_0^\varepsilon u\,du
+
\int_\varepsilon^{1/L}\varepsilon\,du
\right)
=
\varepsilon-\frac{L\varepsilon^2}{2}.
\]
This quantity is independent of $z$, so
\[
\E_\theta G(\theta)
=
L\left(\varepsilon-\frac{L\varepsilon^2}{2}\right)
=
L\varepsilon\left(1-\frac{L\varepsilon}{2}\right)
\ge
\frac{1}{2}L\varepsilon,
\]
where the last inequality uses $L\varepsilon<1$. Therefore some phase $\theta^*$ satisfies $G(\theta^*)\ge L\varepsilon/2$ by simple properties of the expectation. Taking $r=r_{\theta^*}$ and $v=v_{\theta^*}$ proves the theorem with $c=1/2$.
\end{proof}

\subsection{How RL Helps}

\subsubsection{RL reward regret bound}
\label{app:rl_reward_bound}

\begin{proposition}[RL reward regret bound]
\label{app:prop:rl_good_reward}
Let $r:\X \to [0,1]$, let $\mathbb{P}_{\mathrm{base}}$ be the base trajectory distribution, define
\[
\frac{d\mathbb{P}^*}{d\mathbb{P}_{\mathrm{base}}}(X_{0:1})
=
\frac{\exp(\lambda r(X_1))}{Z},
\qquad
Z := \E_{\mathbb{P}_{\mathrm{base}}}[\exp(\lambda r(X_1))],
\]
and let $\mathbb{P}$ be any trajectory distribution with final-sample marginal $p$. Then the final-sample marginal of $\mathbb{P}^*$ is the tilted target $p^*$ defined in \Cref{sec:posttraining}, and
\begin{equation}
\E_{p^*}[r] - \E_p[r]
\leq
\sqrt{\tfrac{1}{2}\KL{\mathbb{P}}{\mathbb{P}^*}}
=
\sqrt{\tfrac{\lambda}{2}\bigl(\mathcal{L}_{\mathrm{RL}}(\mathbb{P}) - \mathcal{L}_{\mathrm{RL}}(\mathbb{P}^*)\bigr)},
\label{eq:app_rl_reward_bound}
\end{equation}
where $\mathcal{L}_{\mathrm{RL}}(\mathbb{P}) = \frac{1}{\lambda}\KL{\mathbb{P}}{\mathbb{P}_{\mathrm{base}}} - \E_p[r]$.
\end{proposition}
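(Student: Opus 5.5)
The plan is to prove the three claims in order: identify the endpoint marginal of $\mathbb{P}^*$, establish the variational identity relating $\mathcal{L}_{\mathrm{RL}}$ to $\KL{\mathbb{P}}{\mathbb{P}^*}$, and then bound the reward gap with Pinsker's inequality plus the data-processing inequality.

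\emph{Step 1: endpoint marginal.} Since the density $d\mathbb{P}^*/d\mathbb{P}_{\mathrm{base}}$ depends on the path only through $X_1$, disintegrate $\mathbb{P}_{\mathrm{base}}$ along $X_1$. For any bounded test function $f$,
\[
\E_{\mathbb{P}^*}[f(X_1)]=\E_{\mathbb{P}_{\mathrm{base}}}\!\left[f(X_1)e^{\lambda r(X_1)}\right]/Z=\int f\,e^{\lambda r}\,dp_{\mathrm{base}}\Big/Z.
\]
Hence the $X_1$-marginal is $p^*\propto e^{\lambda r}p_{\mathrm{base}}$, and $Z=\E_{p_{\mathrm{base}}}[e^{\lambda r}]$ is finite because $r\in[0,1]$.

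\emph{Step 2: variational identity.} Take $\KL{\mathbb{P}}{\mathbb{P}_{\mathrm{base}}}<\infty$; otherwise the right-hand side is $+\infty$ and there is nothing to prove. In that case $\mathbb{P}\ll\mathbb{P}_{\mathrm{base}}\sim\mathbb{P}^*$, where equivalence holds because the density $e^{\lambda r}/Z$ is bounded above and below. The chain rule for densities gives
\[
\log\frac{d\mathbb{P}}{d\mathbb{P}^*}=\log\frac{d\mathbb{P}}{d\mathbb{P}_{\mathrm{base}}}-\lambda r(X_1)+\log Z.
\]
Integrating against $\mathbb{P}$ yields
\[
\KL{\mathbb{P}}{\mathbb{P}^*}=\KL{\mathbb{P}}{\mathbb{P}_{\mathrm{base}}}-\lambda\E_p[r]+\log Z=\lambda\,\mathcal{L}_{\mathrm{RL}}(\mathbb{P})+\log Z.
\]
Setting $\mathbb{P}=\mathbb{P}^*$ gives $\lambda\mathcal{L}_{\mathrm{RL}}(\mathbb{P}^*)=-\log Z$. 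Subtracting the two identities gives $\KL{\mathbb{P}}{\mathbb{P}^*}=\lambda\bigl(\mathcal{L}_{\mathrm{RL}}(\mathbb{P})-\mathcal{L}_{\mathrm{RL}}(\mathbb{P}^*)\bigr)$, which is the stated equality.

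\emph{Step 3: the inequality.} Because $r$ takes values in $[0,1]$, we have $|\E_{p^*}[r]-\E_p[r]|\le \TV{p}{p^*}$. The data-processing inequality for the map $X_{0:1}\mapsto X_1$ gives $\TV{p}{p^*}\le\TV{\mathbb{P}}{\mathbb{P}^*}$. Pinsker's inequality then gives $\TV{\mathbb{P}}{\mathbb{P}^*}\le\sqrt{\tfrac12\KL{\mathbb{P}}{\mathbb{P}^*}}$. Combining these with Step 2 proves \eqref{eq:app_rl_reward_bound}.

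\emph{Main obstacle.} The only delicate point is measure-theoretic: justifying the path-space density manipulations in Step 2 when $\mathbb{P}$ is not absolutely continuous with respect to $\mathbb{P}_{\mathrm{base}}$, or when some log-terms are not integrable. I would handle this by the convention that the bound is trivial whenever $\KL{\mathbb{P}}{\mathbb{P}_{\mathrm{base}}}=\infty$. When it is finite, $\log(d\mathbb{P}/d\mathbb{P}_{\mathrm{base}})$ is $\mathbb{P}$-integrable and $\lambda r$ is bounded, so the decomposition is justified. Everything else is standard.
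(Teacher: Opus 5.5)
Your proposal is correct and follows the paper's proof essentially step for step. You identify the endpoint marginal the same way, derive the same identity $\KL{\mathbb{P}}{\mathbb{P}^*}=\lambda\,\mathcal{L}_{\mathrm{RL}}(\mathbb{P})+\log Z$ and evaluate it at $\mathbb{P}$ and $\mathbb{P}^*$, and use the same TV, Pinsker and data-processing chain. The only differences are cosmetic: you apply data processing to TV and Pinsker on path space, whereas the paper applies Pinsker at the endpoint and data processing to KL, and you add an explicit treatment of the infinite-KL case that the paper omits.
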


\begin{proof}
Here $\mathbb{P}_{\mathrm{base}}$, $\mathbb{P}$, and $\mathbb{P}^*$ are distributions over full sampled trajectories $X_{0:1}$; formally, they are path measures. Because the Radon--Nikodym derivative of $\mathbb{P}^*$ depends only on the terminal state $X_1$, the final-sample marginal of $\mathbb{P}^*$ is exactly
\[
p^*(x) = \frac{\exp(\lambda r(x))\, p_{\mathrm{base}}(x)}{Z},
\]
which is exactly the tilted target. Since
\[
\TV{p}{p^*}
=
\sup_{f:\, 0 \leq f \leq 1} \left|\E_p[f] - \E_{p^*}[f]\right|,
\]
the assumption $r \in [0,1]$ gives
\[
\E_{p^*}[r] - \E_p[r] \leq \TV{p}{p^*}.
\]
With the convention $\TV{p}{p^*} = \frac{1}{2}\|p-p^*\|_1$, Pinsker's inequality yields
\[
\TV{p}{p^*} \leq \sqrt{\tfrac{1}{2}\KL{p}{p^*}},
\]
and the data processing inequality for the endpoint map $X_{0:1}\mapsto X_1$ gives
\[
\KL{p}{p^*} \leq \KL{\mathbb{P}}{\mathbb{P}^*}.
\]
Therefore
\[
\E_{p^*}[r] - \E_p[r]
\leq
\sqrt{\tfrac{1}{2}\KL{\mathbb{P}}{\mathbb{P}^*}}.
\]
The first inequality in \eqref{eq:app_rl_reward_bound} therefore follows by data processing for the endpoint map $X_{0:1}\mapsto X_1$, which gives $\KL{p}{p^*} \le \KL{\mathbb{P}}{\mathbb{P}^*}$. It remains to identify this trajectory-level KL with the RL objective gap. By the definition of $\mathbb{P}^*$,
\begin{align*}
\KL{\mathbb{P}}{\mathbb{P}^*}
&= \E_{\mathbb{P}}\left[\log \frac{d\mathbb{P}}{d\mathbb{P}^*}\right] \\
&= \E_{\mathbb{P}}\left[\log \frac{d\mathbb{P}}{d\mathbb{P}_{\mathrm{base}}}\right]
-
\E_{\mathbb{P}}\left[\log \frac{d\mathbb{P}^*}{d\mathbb{P}_{\mathrm{base}}}\right] \\
&= \KL{\mathbb{P}}{\mathbb{P}_{\mathrm{base}}}
-
\E_{\mathbb{P}}\left[\lambda r(X_1) - \log Z\right] \\
&= \KL{\mathbb{P}}{\mathbb{P}_{\mathrm{base}}} - \lambda \E_p[r] + \log Z.
\end{align*}
Rearranging gives
\[
\mathcal{L}_{\mathrm{RL}}(\mathbb{P})
=
\frac{1}{\lambda}\KL{\mathbb{P}}{\mathbb{P}^*} - \frac{\log Z}{\lambda}.
\]
Since $\log Z / \lambda$ is constant, evaluating the same identity at $\mathbb{P}^*$ and subtracting gives
\[
\KL{\mathbb{P}}{\mathbb{P}^*} = \lambda\bigl(\mathcal{L}_{\mathrm{RL}}(\mathbb{P}) - \mathcal{L}_{\mathrm{RL}}(\mathbb{P}^*)\bigr),
\]
which is the second equality in \eqref{eq:app_rl_reward_bound}.
\end{proof}

\FloatBarrier
\clearpage
\section{Proofs for Section~\ref{sec:implications_fixing_sft}: Method: Discriminator-Guided RL}
\label{app:implications}

\subsection{Why Representation Spaces Help}

\subsubsection{Feature-space correction}
\label{app:feature_correction}

For readability, we work throughout this subsection in the standard density
setting. Assume every distribution we mention admits a well-defined continuous
density, and that the
corresponding conditionals $p(x\mid z)$ for $z=\phi(x)$ are well defined. We
write $p^\phi(z)$ for the density on $\mathcal Z$ induced by passing
$x\sim p$ through $\phi$, and assume that $q^\phi(z)>0$ only where
$p_{\mathrm{base}}^\phi(z)>0$.

\begin{proposition}[Feature-space correction]
\label{prop:feature_correction}
Let $\phi:\mathcal X\to\mathcal Z$ be an encoder, let
$h:\mathcal Z\to\mathbb R$ be any scalar function, and let $\lambda>0$. Assume
\[
Z_{h,\lambda}
:=
\E_{x\sim p_{\mathrm{base}}}\!\left[\exp\!\bigl(\lambda h(\phi(x))\bigr)\right]
<\infty.
\]
Define the tilted density by
\begin{equation}
p_{h,\lambda}(x)
:=
\frac{\exp(\lambda h(\phi(x)))}{Z_{h,\lambda}}\,p_{\mathrm{base}}(x).
\label{eq:app_feature_tilt}
\end{equation}
Then:
\begin{enumerate}
    \item[(i)] the feature marginal is tilted according to
    \begin{equation}
    p_{h,\lambda}^\phi(z)
    =
    \frac{\exp(\lambda h(z))}{Z_{h,\lambda}}\,p_{\mathrm{base}}^\phi(z),
    \label{eq:app_feature_marginal}
    \end{equation}
    where equivalently
    \[
    Z_{h,\lambda}
    =
    \E_{z\sim p_{\mathrm{base}}^\phi}\!\left[\exp(\lambda h(z))\right];
    \]
    \item[(ii)] the conditional distribution inside each feature cell is unchanged:
    \begin{equation}
    p_{h,\lambda}(x\mid z)
    =
    p_{\mathrm{base}}(x\mid z)
    \qquad
    \text{for } p_{h,\lambda}^\phi\text{-a.e. }z.
    \label{eq:app_conditional_unchanged}
    \end{equation}
    \item[(iii)] if
    \begin{equation}
    h^*(z)=\log\frac{q^\phi(z)}{p_{\mathrm{base}}^\phi(z)},
    \label{eq:app_feature_ratio}
    \end{equation}
    then $p_{h^*,1}^\phi=q^\phi$;
    \item[(iv)] $p_{h^*,1}$ is the unique solution of
    \begin{equation}
    \min_p \KL{p}{p_{\mathrm{base}}}
    \qquad
    \text{subject to}
    \qquad
    p^\phi=q^\phi.
    \label{eq:app_feature_projection}
    \end{equation}
\end{enumerate}
\end{proposition}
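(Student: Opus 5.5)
The whole argument rests on one observation: the tilt factor $\exp(\lambda h(\phi(x)))$ is constant on each fiber $\{x:\phi(x)=z\}$. So I would disintegrate $p_{\mathrm{base}}$ along $\phi$, writing $p_{\mathrm{base}}(dx)=p_{\mathrm{base}}(dx\mid z)\,p^\phi_{\mathrm{base}}(dz)$, and read (i) and (ii) off directly.

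For (i), take any bounded test function $g$ on $\mathcal Z$ and compute
\[
\E_{p_{h,\lambda}}[g(\phi(x))]=\frac{1}{Z_{h,\lambda}}\int g(z)e^{\lambda h(z)}\Bigl(\int p_{\mathrm{base}}(dx\mid z)\Bigr)p^\phi_{\mathrm{base}}(dz).
\]
This identifies $p^\phi_{h,\lambda}$ as the stated tilt. Taking $g\equiv1$ gives the alternative formula for $Z_{h,\lambda}$.

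For (ii), I would use the same computation with a joint test function $f(x)$. On each fiber the factor $e^{\lambda h(z)}/Z_{h,\lambda}$ factors out of the inner integral. The joint law therefore equals $p_{\mathrm{base}}(dx\mid z)\,p^\phi_{h,\lambda}(dz)$, and uniqueness of disintegration gives \eqref{eq:app_conditional_unchanged} for $p^\phi_{h,\lambda}$-a.e. $z$.

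For (iii), plug $h^*$ and $\lambda=1$ into (i). The normalizer becomes $Z=\int q^\phi=1$; the absolute-continuity assumption $q^\phi\ll p^\phi_{\mathrm{base}}$ makes the ratio well defined. The tilted marginal is then exactly $q^\phi$.

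For (iv), I would use the KL chain rule along $\phi$. For any $p$ with $\KL{p}{p_{\mathrm{base}}}<\infty$,
\[
\KL{p}{p_{\mathrm{base}}}=\KL{p^\phi}{p^\phi_{\mathrm{base}}}+\E_{z\sim p^\phi}\bigl[\KL{p(\cdot\mid z)}{p_{\mathrm{base}}(\cdot\mid z)}\bigr].
\]
Under the constraint $p^\phi=q^\phi$, the first term is fixed at $\KL{q^\phi}{p^\phi_{\mathrm{base}}}$. The second term is nonnegative and vanishes iff $p(\cdot\mid z)=p_{\mathrm{base}}(\cdot\mid z)$ for $q^\phi$-a.e. $z$. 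Any minimizer must therefore have marginal $q^\phi$ and base conditionals. That determines the joint law uniquely as $p_{\mathrm{base}}(dx\mid z)\,q^\phi(dz)$, which by (ii)–(iii) is $p_{h^*,1}$.

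Two loose ends remain. Feasibility follows because $p_{h^*,1}$ has $\KL{p_{h^*,1}}{p_{\mathrm{base}}}=\KL{q^\phi}{p^\phi_{\mathrm{base}}}$. If that value is infinite, every feasible $p$ has infinite KL, so I would add the assumption that it is finite.

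The main technical step is justifying the chain rule, i.e.\ that the density ratio satisfies $\frac{dp}{dp_{\mathrm{base}}}(x)=\frac{dp^\phi}{dp^\phi_{\mathrm{base}}}(\phi(x))\cdot\frac{dp(\cdot\mid z)}{dp_{\mathrm{base}}(\cdot\mid z)}(x)$. Under the subsection's density and regular-conditional assumptions this is routine: factor $p(x)=p^\phi(\phi(x))p(x\mid\phi(x))$ and likewise for the base, take logs, and integrate. I would state it as a short lemma rather than rederive disintegration theory.
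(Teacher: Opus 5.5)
Your proposal is correct and follows essentially the same route as the paper: a test-function computation for (i); fiberwise constancy of the tilt for (ii), which the paper makes explicit with test functions $\mathbf 1_A(x)\,g(\phi(x))$ instead of citing uniqueness of disintegration; $Z_{h^*,1}=1$ for (iii); and the KL chain rule for (iv). Your extra caveat that $\KL{q^\phi}{p^\phi_{\mathrm{base}}}$ must be finite for the uniqueness claim in (iv) to carry content is a valid refinement that the paper leaves implicit.
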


\begin{proof}
Let $z=\phi(x)$. For part (i), let $g:\mathcal Z\to\mathbb R$ be any bounded measurable test
function. Then
\[
\begin{aligned}
\E_{z\sim p_{h,\lambda}^\phi}[g(z)]
&=
\E_{x\sim p_{h,\lambda}}[g(\phi(x))] \\
&=
\frac{1}{Z_{h,\lambda}}
\E_{x\sim p_{\mathrm{base}}}\!\left[
g(\phi(x))\,\exp(\lambda h(\phi(x)))
\right] \\
&=
\frac{1}{Z_{h,\lambda}}
\int_{\mathcal Z}
g(z)\,\exp(\lambda h(z))\,p_{\mathrm{base}}^\phi(z)\,dz.
\end{aligned}
\]
Thus the pushed-forward law under $\phi$ has density
\[
p_{h,\lambda}^\phi(z)
=
\frac{\exp(\lambda h(z))}{Z_{h,\lambda}}\,p_{\mathrm{base}}^\phi(z),
\]
since the right-hand side gives the same integral against every bounded
measurable test function $g$. Hence
\eqref{eq:app_feature_marginal}. Taking $g\equiv 1$ in the display above gives
\[
1
=
\frac{1}{Z_{h,\lambda}}
\int_{\mathcal Z} \exp(\lambda h(z))\,p_{\mathrm{base}}^\phi(z)\,dz,
\]
so
\[
Z_{h,\lambda}
=
\E_{z\sim p_{\mathrm{base}}^\phi}\!\left[\exp(\lambda h(z))\right].
\]
This completes part (i).

For part (ii), fix a measurable set $A\subseteq\mathcal X$, and let
$g:\mathcal Z\to\mathbb R$ be any bounded measurable test function. On the one
hand, conditioning under $p_{h,\lambda}$ gives
\[
\begin{aligned}
\E_{x\sim p_{h,\lambda}}\!\left[\mathbf 1_A(x)\,g(\phi(x))\right]
&=
\int_{\mathcal Z} g(z)\,p_{h,\lambda}(A\mid z)\,p_{h,\lambda}^\phi(z)\,dz.
\end{aligned}
\]
On the other hand, using the definition of $p_{h,\lambda}$ and then
conditioning under $p_{\mathrm{base}}$,
\[
\begin{aligned}
\E_{x\sim p_{h,\lambda}}\!\left[\mathbf 1_A(x)\,g(\phi(x))\right]
&=
\frac{1}{Z_{h,\lambda}}
\E_{x\sim p_{\mathrm{base}}}\!\left[
\mathbf 1_A(x)\,g(\phi(x))\,\exp(\lambda h(\phi(x)))
\right] \\
&=
\frac{1}{Z_{h,\lambda}}
\int_{\mathcal Z}
g(z)\,\exp(\lambda h(z))\,p_{\mathrm{base}}(A\mid z)\,p_{\mathrm{base}}^\phi(z)\,dz \\
&=
\int_{\mathcal Z}
g(z)\,p_{\mathrm{base}}(A\mid z)\,p_{h,\lambda}^\phi(z)\,dz,
\end{aligned}
\]
where the last step uses part (i). Since these two expressions agree for every
bounded measurable test function $g$, their integrands must agree
$dz$-a.e.:
\[
p_{h,\lambda}(A\mid z)\,p_{h,\lambda}^\phi(z)
=
p_{\mathrm{base}}(A\mid z)\,p_{h,\lambda}^\phi(z).
\]
Therefore
\[
p_{h,\lambda}(A\mid z)
=
p_{\mathrm{base}}(A\mid z)
\qquad
\text{for } p_{h,\lambda}^\phi\text{-a.e. }z.
\]
Since this holds for every measurable $A$, the conditional distributions agree.
In our density setting, we record this as
\[
p_{h,\lambda}(x\mid z)=p_{\mathrm{base}}(x\mid z),
\]
which proves part (ii).

For part (iii), let
\[
h^*(z)=\log\frac{q^\phi(z)}{p_{\mathrm{base}}^\phi(z)}.
\]
Then
\[
Z_{h^*,1}
=
\int \exp(h^*(z))\,p_{\mathrm{base}}^\phi(z)\,dz
=
\int q^\phi(z)\,dz
=
1.
\]
Therefore, by part (i),
\[
p_{h^*,1}^\phi(z)
=
\exp(h^*(z))\,p_{\mathrm{base}}^\phi(z)
=
q^\phi(z),
\]
proving part (iii).

For part (iv), let $p$ be any distribution satisfying $p^\phi=q^\phi$. The KL
chain rule gives
\begin{equation}
\KL{p}{p_{\mathrm{base}}}
=
\KL{p^\phi}{p_{\mathrm{base}}^\phi}
+
\E_{z\sim p^\phi}\!\left[
\KL{p(\cdot\mid z)}{p_{\mathrm{base}}(\cdot\mid z)}
\right].
\label{eq:app_kl_chain_rule}
\end{equation}
Under the constraint $p^\phi=q^\phi$, this becomes
\[
\KL{p}{p_{\mathrm{base}}}
=
\KL{q^\phi}{p_{\mathrm{base}}^\phi}
+
\E_{z\sim q^\phi}\!\left[
\KL{p(\cdot\mid z)}{p_{\mathrm{base}}(\cdot\mid z)}
\right]
\ge
\KL{q^\phi}{p_{\mathrm{base}}^\phi}.
\]
The first term is fixed by the constraint, and the second term is always
nonnegative. Equality holds if and only if
\[
p(\cdot\mid z)=p_{\mathrm{base}}(\cdot\mid z)
\qquad
\text{for } q^\phi\text{-a.e. }z.
\]
Hence the unique minimizer is the distribution with feature marginal $q^\phi$
and within-cell conditional $p_{\mathrm{base}}(\cdot\mid z)$. Parts (ii) and
(iii) show that $p_{h^*,1}$ has exactly this marginal and these conditionals,
so it attains the lower bound. Uniqueness follows because the marginal
$q^\phi$ together with the conditionals $p_{\mathrm{base}}(\cdot\mid z)$
determines the distribution $p$ uniquely.
\end{proof}

\paragraph{Remark.}
We stated this subsection in the density setting because it is the clearest way
to express the main idea: RL reweights feature cells by the feature-space
density ratio and leaves the within-cell conditional unchanged. A more general
measure-theoretic version replaces ordinary ratios such as
$q^\phi(z)/p_{\mathrm{base}}^\phi(z)$ with the Radon--Nikodym derivative
$dq^\phi/dp_{\mathrm{base}}^\phi$, replaces $p(x\mid z)$ with regular
conditional distributions, and writes feature-space integrals against
$p_{\mathrm{base}}^\phi(dz)$ rather than $dz$. The proof is otherwise the same.

\subsubsection{Feature-Space Test-Function Bound}
\label{app:feature_bound_proof}

\begin{theorem}[Feature-space test-function bound]
Let $\hat r(x)=h(\phi(x))$ be a feature-based reward with implied target
$\hat q_\lambda \propto \exp(\lambda \hat r)\,p_{\mathrm{base}}$, and suppose
$p_\theta$ is $\Delta$-suboptimal for the KL-regularized objective with
trade-off parameter $\lambda$. Then for any
$g:\mathcal{Z}\to\mathbb{R}$ with $\|g\|_\infty\leq C$,
\begin{equation}
  \bigl|\E_{p_\theta}[g(\phi(x))]-\E_q[g(\phi(x))]\bigr|
  \;\leq\;
  \underbrace{C\sqrt{2\lambda\Delta}}_{\text{RL suboptimality}}
  \;+\;
  \underbrace{\bigl|\E_{\hat q_\lambda}[g(\phi(x))]
              -\E_q[g(\phi(x))]\bigr|}_{\text{feature-space ratio error}}.
  \label{eq:feature_bound}
\end{equation}
\end{theorem}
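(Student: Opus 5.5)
The plan is a triangle inequality followed by a Pinsker argument, mirroring \Cref{app:prop:rl_good_reward}. Write $G(x):=g(\phi(x))$, so that $\|G\|_\infty\le C$. Inserting $\hat q_\lambda$ gives
\[
\bigl|\E_{p_\theta}[G]-\E_q[G]\bigr|\le \bigl|\E_{p_\theta}[G]-\E_{\hat q_\lambda}[G]\bigr| + \bigl|\E_{\hat q_\lambda}[G]-\E_q[G]\bigr|.
\]
The second term is exactly the feature-space ratio error, so only the first term needs work.

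\textbf{Step 1: relate suboptimality to a KL.} I will interpret $\Delta$-suboptimality as in \Cref{app:prop:rl_good_reward}: the objective $\mathcal L_{\rm RL}(\mathbb P)=\frac1\lambda\KL{\mathbb P}{\mathbb P_{\rm base}}-\E_p[\hat r]$ satisfies $\mathcal L_{\rm RL}(\mathbb P_\theta)-\mathcal L_{\rm RL}(\mathbb P^*)\le\Delta$. Here $\mathbb P^*$ is the path measure with $d\mathbb P^*/d\mathbb P_{\rm base}=\exp(\lambda\hat r(X_1))/Z$, whose endpoint marginal is $\hat q_\lambda$. The identity derived in that proof does not use $r\in[0,1]$; it only needs $Z<\infty$. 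It gives $\KL{\mathbb P_\theta}{\mathbb P^*}=\lambda(\mathcal L_{\rm RL}(\mathbb P_\theta)-\mathcal L_{\rm RL}(\mathbb P^*))\le\lambda\Delta$. Data processing for the endpoint map then yields $\KL{p_\theta}{\hat q_\lambda}\le\lambda\Delta$. If suboptimality is instead meant at the endpoint level, with $\frac1\lambda\KL{p}{p_{\rm base}}-\E_p[\hat r]$, the same algebra gives $\KL{p_\theta}{\hat q_\lambda}=\lambda\cdot(\text{gap})$ directly.

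\textbf{Step 2: Pinsker.} For $|G|\le C$, write $G=C(2f-1)$ with $f=(G+C)/(2C)\in[0,1]$. Using $\TV{p}{p'}=\sup_{0\le f\le1}|\E_pf-\E_{p'}f|$ gives
\[
|\E_{p_\theta}G-\E_{\hat q_\lambda}G|\le 2C\,\TV{p_\theta}{\hat q_\lambda}.
\]
Pinsker then gives
\[
2C\,\TV{p_\theta}{\hat q_\lambda}\le 2C\sqrt{\tfrac12\lambda\Delta}=C\sqrt{2\lambda\Delta},
\]
which matches the first term exactly. Combining this with the triangle inequality finishes the proof.

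\textbf{Main obstacle.} The only delicate point is the bookkeeping in Step 1. One must make precise what ``$\Delta$-suboptimal'' means (path-level versus endpoint-level objective) and check that the constant $\log Z/\lambda$ cancels, which requires $Z_{h,\lambda}<\infty$, as assumed in \Cref{prop:feature_correction}. Since $\hat r$ may be unbounded, I will note that boundedness of the reward was never used in the KL identity; only the test function $G$ needs to be bounded.
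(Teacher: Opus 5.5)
Your proposal is correct and matches the paper's proof: a triangle inequality through $\hat q_\lambda$, then the identity $\mathcal{L}_{\mathrm{RL}}(p)=\frac{1}{\lambda}\KL{p}{\hat q_\lambda}-\frac{1}{\lambda}\log\hat Z_\lambda$ to get $\KL{p_\theta}{\hat q_\lambda}\le\lambda\Delta$, then Pinsker with the factor $2C$ for a $[-C,C]$-valued test function. The paper works directly with the endpoint-level objective, so your additional path-level reading via data processing is a harmless extra case rather than a different route.
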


\begin{proof}
By the triangle inequality,
\begin{equation}
\bigl|\E_{p_\theta}[g(\phi(x))] - \E_q[g(\phi(x))]\bigr|
\leq \bigl|\E_{p_\theta}[g \circ \phi] - \E_{\hat{q}_\lambda}[g \circ \phi]\bigr| + \bigl|\E_{\hat{q}_\lambda}[g \circ \phi] - \E_q[g \circ \phi]\bigr|.
\end{equation}

\paragraph{First term (RL suboptimality).}
Since $\norm{g}_\infty \leq C$, the function $g \circ \phi$ takes values in $[-C, C]$. Also, because $\hat{q}_\lambda(x) \propto \exp(\lambda \hat r(x))\, p_{\mathrm{base}}(x)$,
\begin{equation}
\mathcal{L}_{\mathrm{RL}}(p) = \frac{1}{\lambda}\KL{p}{p_{\mathrm{base}}} - \E_p[\hat r] = \frac{1}{\lambda}\KL{p}{\hat q_\lambda} - \frac{1}{\lambda}\log \hat Z_\lambda
\end{equation}
for a constant $\hat Z_\lambda$ independent of $p$. Therefore the unique minimizer of the RL loss is $\hat q_\lambda$, and the $\Delta$-suboptimality assumption implies $\KL{p_\theta}{\hat q_\lambda}\le \lambda\Delta$. By Pinsker's inequality,
\begin{equation}
\bigl|\E_{p_\theta}[g \circ \phi] - \E_{\hat{q}_\lambda}[g \circ \phi]\bigr|
\leq 2C\,\TV{p_\theta}{\hat{q}_\lambda}
\leq 2C\sqrt{\frac{\lambda\Delta}{2}}
= C\sqrt{2\lambda\Delta}.
\end{equation}

\paragraph{Second term (feature-space ratio error).}
Since $g \circ \phi$ depends on $x$ only through $\phi(x)$,
\begin{equation}
\E_{\hat{q}_\lambda}[g(\phi(x))] = \E_{\hat{q}_\lambda^\phi}[g]
\quad\text{and}\quad
\E_q[g(\phi(x))] = \E_{q^\phi}[g],
\end{equation}
so $\bigl|\E_{\hat{q}_\lambda}[g \circ \phi] - \E_q[g \circ \phi]\bigr| = \bigl|\E_{\hat{q}_\lambda^\phi}[g] - \E_{q^\phi}[g]\bigr|$. Combining gives \eqref{eq:feature_bound}.
\end{proof}

\FloatBarrier
\clearpage
\section{Adjoint Matching: Background}
\label{app:adjoint_matching}

This appendix contains a self-contained review of adjoint matching~\citep{domingoenrich2025adjoint}, the method we use to optimize the RL objective $\mathcal{L}_{\mathrm{RL}}$ \eqref{eq:rl_path_obj}. We first provide a general description of the method in the SOC setting and then specialize to flow models.

\paragraph{Stochastic optimal control formulation.}
\label{app:am:soc}
Adjoint matching solves $\mathcal{L}_{\mathrm{RL}}$~\eqref{eq:rl_path_obj} under the lens of stochastic optimal control (SOC), a framework for fine-tuning the dynamics of an SDE through a learned drift correction. Concretely, given a base SDE with drift $b$ and noise $\sigma(t)$, SOC adds a learned control $u_\theta(x, t)$ to the drift,
\begin{equation}
dX_t = \bigl[ b(X_t, t) + \sigma(t)\, u_\theta(X_t, t) \bigr]\, dt + \sigma(t)\, dB_t,
\qquad X_0 \sim \mathcal{N}(0, I),
\label{eq:controlled_sde}
\end{equation}
and seeks the $u_\theta$ minimizing a quadratic control cost minus the terminal reward,
\begin{equation}
\min_{u_\theta}\; \E\!\left[ \tfrac{1}{2}\!\int_0^1\!\|u_\theta(X_t, t)\|^2\, dt \;-\; \lambda\, r(X_1) \right].
\label{eq:soc_obj}
\end{equation}
In our setting $b(x, t) = v_t(x) + \tfrac{1}{2}\sigma(t)^2\, s_t(x)$ is the base drift of the sampling SDE~\eqref{eq:ode_sde}---the dynamics that generate $p_{\mathrm{base}}$---and $u_\theta$ is the deviation we are learning to fine-tune the model. By Girsanov's theorem, the path-KL between \eqref{eq:controlled_sde} and the $\sigma(t)$-base SDE is exactly the expected control energy,
\[
\KL{\mathbb{P}_\theta}{\mathbb{P}_{\mathrm{base}}} = \tfrac{1}{2}\, \E\!\left[ \int_0^1 \|u_\theta(X_t, t)\|^2\, dt \right],
\]
so \eqref{eq:soc_obj} is precisely the path-KL-regularized RL objective $\mathcal{L}_{\mathrm{RL}}$~\eqref{eq:rl_path_obj}. Optimizing $u_\theta$ thus tunes the trajectory drift to push $X_1$ toward high reward while staying close to the base process in path-KL. For flow models we parametrize $u_\theta$ in terms of a learned velocity field; we defer the specifics to the specialization paragraph below.

\paragraph{The memoryless schedule.}
\label{app:am:memoryless}
Solving \eqref{eq:soc_obj} does not in general produce the desired endpoint tilt $p^*(x) \propto p_{\mathrm{base}}(x)\exp(\lambda r(x))$. Let
\[
V(x, t) := \min_{u_\theta}\, \E\!\left[\, \tfrac{1}{2}\!\int_t^1 \|u_\theta(X_s, s)\|^2\, ds \,-\, \lambda\, r(X_1) \,\,\big|\,\, X_t = x \right]
\]
denote the optimal cost-to-go from state $x$ at time $t$. \citet{domingoenrich2025adjoint} show that the SOC optimum picks up an $X_0$-dependent factor through the initial-time value $V(\cdot, 0)$,
\[
\mathbb{P}^*(X_0, X_1) \;\propto\; \mathbb{P}_{\mathrm{base}}(X_0, X_1)\, \exp\!\bigl(\lambda r(X_1) + V(X_0, 0)\bigr),
\]
so once we marginalize over $X_0$, the resulting tilt on $X_1$ is no longer just $\exp(\lambda r)$. The bias is largest when $X_0$ and $X_1$ are strongly coupled---exactly the case in flow matching, where the ODE makes $X_1$ a deterministic function of $X_0$.

To remove the bias, \citet{domingoenrich2025adjoint} propose choosing $\sigma$ during training so that the base process makes $X_0$ and $X_1$ independent. Under this \emph{memoryless} schedule, $V(X_0, 0)$ becomes a global constant and the endpoint marginal recovers the desired tilt $p^*(X_1) \propto p_{\mathrm{base}}(X_1)\exp(\lambda r(X_1))$. For the linear interpolation, they show that the unique memoryless schedule is
\begin{equation}
\sigma_{\mathrm{ml}}(t) = \sqrt{\frac{2(1-t)}{t}},
\label{eq:memoryless_schedule}
\end{equation}
large at $t\to 0$ (which destroys dependence on $X_0$) and small at $t\to 1$ (which keeps the endpoint clean). Importantly, \citet{domingoenrich2025adjoint} further show that the memoryless schedule is the \emph{unique} training schedule for which the fine-tuned optimum can be sampled at inference under any other noise schedule---in particular, the standard deterministic ODE ($\sigma(t)=0$). Training with any non-memoryless schedule locks the fine-tuned model to that specific schedule at inference.

\paragraph{Adjoint methods and adjoint matching.}
\label{app:am:loss}
A standard way of optimizing \eqref{eq:soc_obj} is to build an estimator of its gradient in $\theta$ by sampling a trajectory $\{X_t\}$ under $u_\theta$, computing the integrand, and backpropagating through the discretized SDE \eqref{eq:controlled_sde}. \citet{domingoenrich2025adjoint} show that the resulting gradient agrees in expectation with the gradient of the matching objective
\begin{equation}
L_{\mathrm{Basic-AM}}(\theta) = \tfrac{1}{2}\, \E\!\left[ \int_0^1 \bigl\| u_\theta(X_t, t) + \sigma(t)^\top a(t) \bigr\|^2\, dt \right],
\label{eq:basic_am}
\end{equation}
where $\{X_t\}$ and $\{a(t)\}$ are stop-gradient and $a(t)$ is the adjoint state
\[
a(t) \;:=\; \nabla_{X_t}\!\left[\, \tfrac{1}{2}\!\int_t^1\!\|u_\theta(X_s, s)\|^2\, ds \,-\, \lambda\, r(X_1) \right],
\]
computed by a single backward solve of the ODE
\begin{equation}
\dot a(t) = -\bigl[\nabla_x (b + \sigma\, u_\theta)(X_t, t)\bigr]^\top a(t) - \tfrac{1}{2}\nabla_x\!\bigl\|u_\theta(X_t, t)\bigr\|^2,
\qquad a(1) = -\lambda\, \nabla r(X_1).
\label{eq:full_adjoint}
\end{equation}

In practice, \citet{domingoenrich2025adjoint} find this empirically underperforms. They instead propose replacing $a(t)$ with the \emph{lean adjoint} $\tilde a(t)$, obtained by dropping the control-dependent terms of \eqref{eq:full_adjoint},
\begin{equation}
\dot{\tilde a}(t) = -\bigl[\nabla_x b_{\mathrm{base}}(X_t, t)\bigr]^\top \tilde a(t),
\qquad \tilde a(1) = -\lambda\, \nabla r(X_1).
\label{eq:lean_adjoint}
\end{equation}
The \emph{adjoint matching} loss is the same regression as \eqref{eq:basic_am} but with $\tilde a$ in place of $a$,
\begin{equation}
L_{\mathrm{AM}}(\theta) = \tfrac{1}{2}\, \E\!\left[ \int_0^1 \bigl\| u_\theta(X_t, t) + \sigma(t)^\top \tilde a(t) \bigr\|^2\, dt \right].
\label{eq:am_loss_general}
\end{equation}
\citet{domingoenrich2025adjoint} show that $L_{\mathrm{AM}}$ has the same minimizer $u^*$ as $L_{\mathrm{Basic-AM}}$. Crucially, the lean adjoint is still anchored to the reward through its terminal condition $\tilde a(1) = -\lambda\, \nabla r(X_1)$, so the regression target remains informed by the reward landscape; dropping the control-dependent terms only reduces variance and yields significantly better empirical performance.

\paragraph{Specialization to flow matching.}
For the linear interpolation, the base drift of \eqref{eq:ode_sde} under $\sigma_{\mathrm{ml}}$ from \eqref{eq:memoryless_schedule} is $b_{\mathrm{base}}(x, t) = 2\, v_{\mathrm{base}}(x, t) - x/t$. We parametrize the control $u_\theta$ so that the controlled drift $b_{\mathrm{base}} + \sigma_{\mathrm{ml}}\, u_\theta$ has the same shape as $b_{\mathrm{base}}$ but with a learned velocity $v_\theta$ in place of $v_{\mathrm{base}}$. Solving for $u_\theta$ yields
\begin{equation}
u_\theta(x, t) = \frac{2}{\sigma_{\mathrm{ml}}(t)}\bigl( v_\theta(x, t) - v_{\mathrm{base}}(x, t) \bigr).
\label{eq:fm_control}
\end{equation}
The reason for this parametrization is practical: fine-tuning then reduces to learning a new velocity field $v_\theta$, which can be plugged directly into the standard deterministic ODE for sampling at inference time, with no need for the memoryless SDE. Substituting \eqref{eq:fm_control} into \eqref{eq:controlled_sde} gives the corresponding \emph{memoryless training SDE},
\begin{equation}
dX_t = \left[ 2\, v_\theta(X_t, t) - \frac{X_t}{t} \right] dt + \sqrt{\frac{2(1-t)}{t}}\, dB_t.
\label{eq:memoryless_sde}
\end{equation}
Substituting \eqref{eq:fm_control} and $b_{\mathrm{base}}$ into \eqref{eq:lean_adjoint} and \eqref{eq:am_loss_general} gives the lean adjoint and adjoint matching loss for flow models,
\begin{align}
\dot{\tilde a}(t) &= -\Bigl[\nabla_x \!\bigl( 2 v_{\mathrm{base}}(X_t, t) - X_t/t \bigr)\Bigr]^\top \tilde a(t),
\qquad \tilde a(1) = -\lambda\, \nabla r(X_1),
\label{eq:fm_lean_adjoint}\\[0.25em]
L_{\mathrm{AM}}^{\mathrm{FM}}(\theta) &= \tfrac{1}{2}\, \E\!\left[ \int_0^1 \left\| \frac{2}{\sigma_{\mathrm{ml}}(t)}\bigl( v_\theta - v_{\mathrm{base}} \bigr) + \sigma_{\mathrm{ml}}(t)\, \tilde a(t) \right\|^2 dt \right].
\label{eq:am_loss}
\end{align}
The discretization of \eqref{eq:memoryless_sde} and \eqref{eq:fm_lean_adjoint} introduces numerical stiffness at $t \to 0$, which motivates the local linear integrator (\cref{app:local_linear_integrator}). Implementation choices are collected in \cref{app:additional_implementation_details}.

\FloatBarrier
\clearpage
\section{Local Linear Integrator for the Memoryless SDE}
\label{app:local_linear_integrator}

\ifneurips
\begin{figure}[H]
    \centering
    \includegraphics[width=\textwidth]{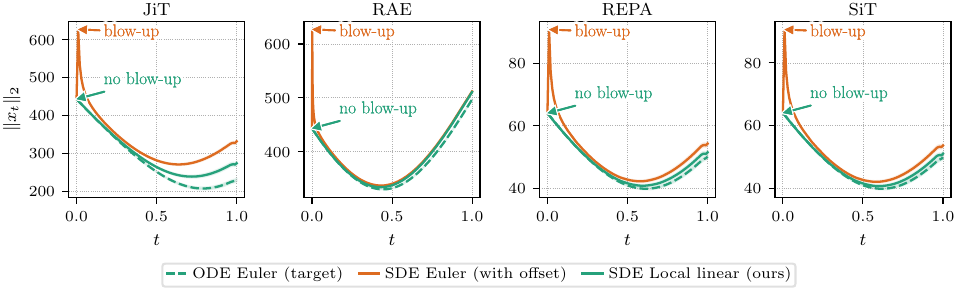}
    \caption{\textbf{Latent norm trajectories under Euler ODE, Euler SDE, and local linear SDE sampling.}
    Estimate of expected latent norm $\E[\|X_t\|]$ against time for $1000$ trajectories sampled under the Euler ODE integrator, Euler SDE integrator with offsets, and the local linear SDE integrator. A perfect SDE integrator should match the ODE norm distribution. The panels show that the Euler SDE integrator contains an early time blow-up in norm, while our local linear SDE integrator stays close to the ODE reference across models.
    Red arrows mark the
    early-time Euler blow-up in each panel, while blue arrows point to the
    corresponding local linear trajectory at $t=0$, where no such blow-up occurs.
    }
    \label{fig:integrator_norms_row}
\end{figure}
\fi
\ifmeta
\begin{figure}[H]
    \centering
    \includegraphics[width=\textwidth]{figures/integrator-comparison/integrator_norms_row.pdf}
    \caption{\textbf{Latent norm trajectories under Euler ODE, Euler SDE, and local linear SDE sampling.}
    Estimate of expected latent norm $\E[\|X_t\|]$ against time for $1000$ trajectories sampled under the Euler ODE integrator, Euler SDE integrator with offsets, and the local linear SDE integrator. A perfect SDE integrator should match the ODE norm distribution. The panels show that the Euler SDE integrator contains an early time blow-up in norm, while our local linear SDE integrator stays close to the ODE reference across models.
    Red arrows mark the
    early-time Euler blow-up in each panel, while blue arrows point to the
    corresponding local linear trajectory at $t=0$, where no such blow-up occurs.
    }
    \label{fig:integrator_norms_row}
\end{figure}
\fi

This appendix describes the training-time integrator used in our experiments,
explains the numerical issues that motivate it, and presents empirical results
that validate its use. Although our experiments focus on adjoint matching, the
same problems arise whenever an RL algorithm samples the memoryless SDE, and
therefore apply to any path-based RL method on flow models, including
non-adjoint-matching approaches such as Flow-GRPO
\citep{liu2025flowgrpotrainingflowmatching}.

Given its superior performance and simplicity, we believe this integrator
should become the standard for all methods that require sampling from the
memoryless SDE for flow models.

\subsection{The Memoryless SDE and the Local Linear Integrator}
\label{app:local_linear_integrator:ll}

Recall from \cref{app:am:memoryless} that solving the KL-regularized RL
problem for flow models requires sampling from the memoryless SDE during
training, 
\begin{equation}
dX_t
=
\left[2\,v_\theta(X_t,t) - \frac{1}{t} X_t\right] dt
+ \sqrt{\frac{2(1-t)}{t}}\, dB_t.
\label{eq:memoryless_sde_eta1_clean}
\end{equation}
Using this memoryless SDE during training is what guarantees that the optimal
terminal marginal is the tilted distribution
$p^*(x) \propto p_{\mathrm{base}}(x)\exp(r(x))$, while still allowing us to
switch back to the standard ODE at test time.

The difficulty is that \cref{eq:memoryless_sde_eta1_clean} is numerically
stiff near $t=0$: both the linear drift coefficient $-1/t$ and the diffusion
coefficient $\sqrt{2(1-t)/t}$ diverge as $t \to 0^+$. A naive explicit
discretization therefore struggles on the first few steps. 

\paragraph{Euler--Maruyama with offsets.}
\citet{domingoenrich2025adjoint} handle this stiffness by replacing the
singular terms $1/t$ and $\sqrt{2(1-t)/t}$ with the regularized versions
$1/(t+\delta)$ and $\sqrt{2(1-t+\varepsilon)/(t+\delta)}$, and then applying
Euler--Maruyama to the modified SDE. With the practical choice
$\delta = \varepsilon = \Delta t$, the update is
\begin{tcolorbox}[colback=gray!10, boxrule=0pt, arc=0pt]
\textbf{Euler--Maruyama with $\delta=\varepsilon=\Delta t$ \citep{domingoenrich2025adjoint}:}
\vspace{-0.5em}
\begin{equation}
x_{k+1}^{\mathrm{Euler}}
=
x_k
+
\left[2\,v_\theta(x_k,t_k) - \frac{x_k}{t_k+\Delta t}\right]\Delta t
+
\sqrt{\frac{2(1-t_k+\Delta t)\Delta t}{t_k+\Delta t}}\,\xi_k,
\qquad
\xi_k \sim \mathcal N(0,I).
\label{eq:euler_offset_forward}
\end{equation}
\end{tcolorbox}

\ifneurips
\begin{wrapfigure}[17]{l}{0.36\textwidth}
    \centering
    \includegraphics[width=\linewidth]{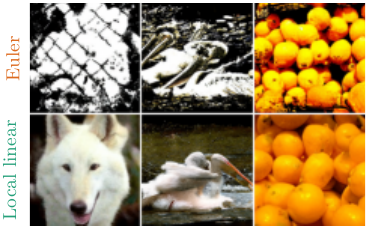}
    \caption{\textbf{Matched-noise JiT samples at $K{=}50$.}
    Top: Euler--Maruyama. Bottom: local linear. Under the same Brownian
    motion, Euler with $\delta{=}\varepsilon{=}\Delta t$ already produces
    visibly degraded samples while local linear with the same offsets
    remains stable.}
    \label{fig:integrator_jit_samples}
\end{wrapfigure}
\fi
\ifmeta
\begin{wrapfigure}[15]{l}{0.36\textwidth}
    \centering
    \includegraphics[width=\linewidth]{figures/integrator-comparison/integrator_jit_samples.pdf}
    \caption{\textbf{Matched-noise JiT samples at $K{=}50$.}
    Top: Euler--Maruyama. Bottom: local linear. Under the same Brownian
    motion, Euler with $\delta{=}\varepsilon{=}\Delta t$ already produces
    visibly degraded samples while local linear with the same offsets
    remains stable.}
    \label{fig:integrator_jit_samples}
\end{wrapfigure}
\fi
\noindent The shift $\delta$ moves the singularity away from $t=0$, and
$\varepsilon$ provides slack near $t=1$ in the adjoint matching loss, which
\citet{domingoenrich2025adjoint} found to improve training speed.\footnote{The precise role of $\varepsilon$ is orthogonal to the
present discussion, but we keep it in the expressions for completeness.}

In practice, we observed that this discretization still produced low-quality
samples for some models. The failure mode matches the intuition above: even with the
offsets, an explicit first-order step cannot keep up with the rapid
contraction of the stiff linear part at early times, so the noise increment
dominates and trajectories blow up in norm, taking the model far from the training distribution.
\Cref{fig:integrator_norms_row} illustrates the effect across the four
ImageNet models used in the paper. It plots the latent norm of $1000$
trajectories under the Euler ODE, under Euler SDE with offsets, and under the
local linear SDE integrator introduced next. A marginal-preserving SDE integrator
should produce the same norm distribution as the ODE, but
the Euler--Maruyama SDE discretization instead exhibits a pronounced early-time blow-up.

\paragraph{The local linear integrator.}
We address the same stiffness with a different splitting. Our key observation is
that if we freeze the velocity $v_\theta(X_t, t)$ to its left-endpoint value
on each interval $[t_k, t_{k+1}]$, the remaining dynamics form a \emph{linear}
SDE with time-dependent coefficients, which can be integrated in closed form
via an integrating factor. Writing $u_k := t_k + \delta$ for the shifted time
and specializing to $\delta = \varepsilon = \Delta t$ as in the
Euler--Maruyama baseline, this yields

\ifneurips
\begin{tcolorbox}[colback=gray!10, boxrule=0pt, arc=0pt]
\textbf{Local linear update with $\delta=\varepsilon=\Delta t$ (ours):}
\vspace{-0.5em}
\begin{equation}
x_{k+1}^{\mathrm{LL}}
=
\Phi_k x_k + \Omega_k\, v_\theta(x_k,t_k) + \sqrt{V_k}\,\xi_k,
\qquad
\xi_k \sim \mathcal N(0,I),
\label{eq:local_linear_forward_summary}
\end{equation}
\vspace{-1.1em}
\[
\Phi_k = \frac{u_k}{u_{k+1}},\qquad
\Omega_k = u_{k+1}(1-\Phi_k^2),\qquad
V_k = (1+2\Delta t)(1-\Phi_k^2) - \frac{2}{3}\bigl(u_{k+1}-\Phi_k^2 u_k\bigr).
\]
\end{tcolorbox}
\fi

\ifmeta
\begin{tcolorbox}[colback=gray!10, boxrule=0pt, arc=0pt]
\textbf{Local linear update with $\delta=\varepsilon=\Delta t$ (ours):}
\vspace{-0.5em}
\begin{equation}
x_{k+1}^{\mathrm{LL}}
=
\Phi_k x_k + \Omega_k\, v_\theta(x_k,t_k) + \sqrt{V_k}\,\xi_k,
\qquad
\xi_k \sim \mathcal N(0,I),
\label{eq:local_linear_forward_summary}
\end{equation}
\vspace{-1.1em}
\[
\Phi_k = \frac{u_k}{u_{k+1}},\qquad
\Omega_k = u_{k+1}(1-\Phi_k^2),\qquad
V_k = (1+2\Delta t)(1-\Phi_k^2) - \frac{2}{3}\bigl(u_{k+1}-\Phi_k^2 u_k\bigr).
\]
\end{tcolorbox}
\fi

\noindent A full derivation, including the general case
$\delta, \varepsilon \geq 0$, is given in
\cref{app:local_linear_integrator:derivation}.

\ifneurips
\begin{wrapfigure}[21]{r}{0.41\textwidth}
    \centering
    \includegraphics[width=\linewidth]{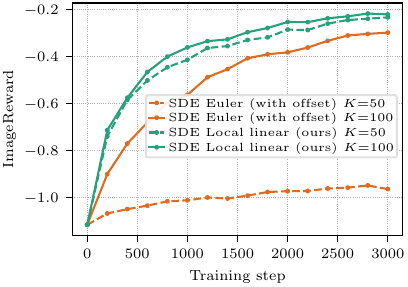}
    \caption{\textbf{JiT training reward during adjoint matching.}
    Local linear with $\delta{=}\varepsilon{=}\Delta t$ at $K{=}50$
    closely tracks the $K{=}100$ run, while Euler with the same offsets and
    step count barely improves the reward. Both local linear runs outperform
    the Euler run at $K{=}100$, showing that the integrator matters for both
    optimization quality and speed.
    }
    \label{fig:integrator_jit_training}
\end{wrapfigure}
\fi
\ifmeta
\begin{wrapfigure}[21]{r}{0.45\textwidth}
    \centering
    \includegraphics[width=\linewidth]{figures/integrator-comparison/integrator_jit_training.pdf}
    \caption{\textbf{JiT training reward during adjoint matching.}
    Local linear with $\delta{=}\varepsilon{=}\Delta t$ at $K{=}50$
    closely tracks the $K{=}100$ run, while Euler with the same offsets and
    step count barely improves the reward. Both local linear runs outperform
    the Euler run at $K{=}100$, showing that the integrator matters for both
    optimization quality and speed.
    }
    \label{fig:integrator_jit_training}
\end{wrapfigure}
\fi

Compared to Euler--Maruyama, the local linear integrator has the same
computational cost, namely one network evaluation per step, but it handles
the stiff linear part of the SDE exactly. The contraction factor
$\Phi_k = u_k / u_{k+1}$ stays in $(0, 1)$ throughout and is smallest
near $t=0$ (where the stiffness is largest), so it damps the early-time
dynamics that would otherwise drive Euler--Maruyama trajectories to blow
up. This is why local linear trajectories stay close to the
ODE reference in \cref{fig:integrator_norms_row}.
Moreover, no term has any problematic singularity.

\paragraph{Empirical improvements.}
Replacing Euler--Maruyama with the local linear integrator improves both
sampling and training. The clearest case is JiT, which was the most challenging
model in our experiments: \cref{fig:integrator_jit_samples} shows
that at $K=50$ steps the local linear integrator already produces coherent
images, while Euler--Maruyama with offsets produces visibly degraded samples
under the same Brownian motion.

This can also be verified quantitatively, and the pattern persists across step
counts and models.
\Cref{fig:integrator_appendix} reports Fr\'echet distances under the
Inception-v3, DINOv2-B, and DINOv3-L feature spaces for all four ImageNet
models. Across model families, local linear with $\delta=\varepsilon=\Delta t$
Pareto-dominates Euler--Maruyama with the same offsets, with
the clearest gains at low step counts. The exception is RAE under
Inception-v3, where both integrators produce high-quality samples and Euler-Maruyama with offsets slightly outperforms local linear.

These improvements carry over to training as well.
\Cref{fig:integrator_jit_training} shows the training reward for JiT on
ImageNet using the setup of the main reward-finetuning experiments. At $K=50$,
local linear matches the reward achieved by its $K=100$ equivalent. 
On the other hand, Euler--Maruyama with offsets at $K=50$ barely improves over the baseline, and even at $K=100$ it still underperforms the local linear runs. This shows that the choice of integrator changes the effective optimization problem that RL sees, and that a better integrator can lead to faster training and better final performance. The 2$\times$ speedup is significant as sampling is the most expensive part of training.

\ifneurips
\begin{figure}[!htb]
    \centering
    \includegraphics[width=0.82\textwidth]{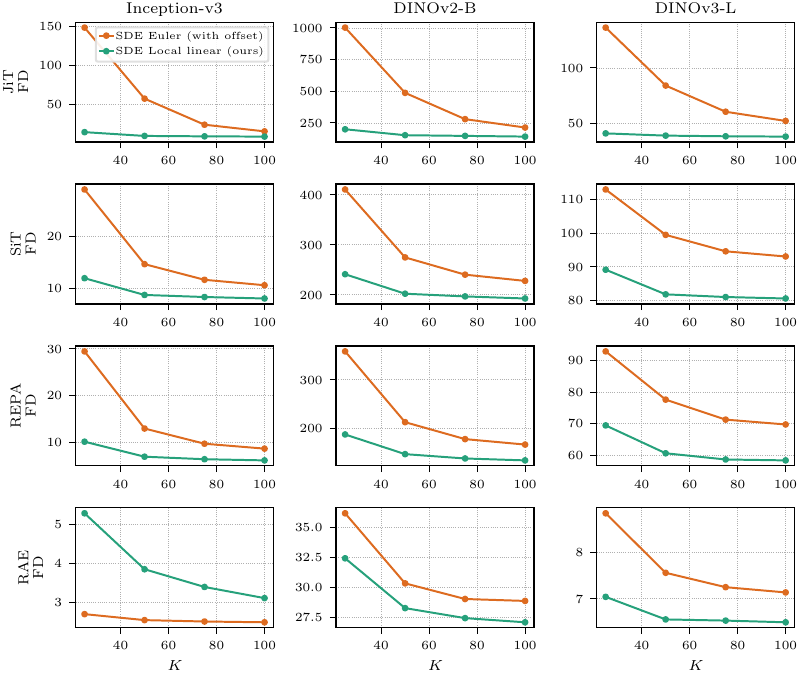}
    \caption{\textbf{Cross-model endpoint FD for Euler vs.\ local linear integrators.}
    Rows correspond to models and columns to the three evaluation feature spaces.
    Local linear with $\delta{=}\varepsilon{=}\Delta t$ consistently matches
    or improves upon Euler with the same offsets, especially at
    low $K$. All metrics were computed with $50,000$ samples.}
    \label{fig:integrator_appendix}
\end{figure}
\fi
\ifmeta
\begin{figure}[!htb]
    \centering
    \includegraphics[width=0.82\textwidth]{figures/integrator-comparison/integrator_appendix.pdf}
    \caption{\textbf{Cross-model endpoint FD for Euler vs.\ local linear integrators.}
    Rows correspond to models and columns to the three evaluation feature spaces.
    Local linear with $\delta{=}\varepsilon{=}\Delta t$ consistently matches
    or improves upon Euler with the same offsets, especially at
    low $K$. All metrics were computed with $50,000$ samples.}
    \label{fig:integrator_appendix}
\end{figure}
\fi

\paragraph{Extension to the backward adjoint.}
The same idea applies to the backward adjoint ODE that appears in adjoint
matching. Defining $u(t) = t + \delta$ (so $u_k = u(t_k)$ matches the
shifted time used above) and writing $\Phi_k = u_k / u_{k+1}$ for brevity,
recall that adjoint matching requires integrating
\begin{equation}
\dot a_t
=
-\bigl[\nabla_x \bigl(2\, v_{\mathrm{base}}(X_t, t) - X_t/u(t)\bigr)\bigr]^\top a_t
=
-2\bigl[\nabla_x v_{\mathrm{base}}(X_t, t)\bigr]^\top a_t
+ \frac{a_t}{u(t)},
\label{eq:adjoint_ode_split}
\end{equation}
where the Jacobian of the stiff linear drift $-x/u(t)$ contributes the scalar
$-1/u(t)$ times the identity. (We write \cref{eq:adjoint_ode_split} in terms
of $u(t)$ rather than $t$, as in \cref{eq:fm_lean_adjoint}, given the
practical choice of shifting the time to avoid the singularity at $t=0$.)
\Citet{domingoenrich2025adjoint} use a standard backward Euler step to
discretize \cref{eq:adjoint_ode_split}:
\begin{tcolorbox}[colback=gray!10, boxrule=0pt, arc=0pt]
\textbf{Standard backward Euler adjoint step \citep{domingoenrich2025adjoint}:}
\vspace{-0.5em}
\begin{equation}
\tilde a_k^{\mathrm{Euler}}
=
\tilde a_{k+1}
+ \Delta t\left[
2\bigl(\nabla_x v_{\mathrm{base}}(X_{k+1}, t_{k+1})\bigr)^\top \tilde a_{k+1}
- \frac{\tilde a_{k+1}}{u_{k+1}}
\right].
\label{eq:euler_adjoint_backward}
\end{equation}
\end{tcolorbox}
\noindent Using a similar splitting idea, which we derive in
\cref{app:local_linear_integrator:derivation} below, one can instead
integrate the stiff scalar term exactly and replace this step by:
\begin{tcolorbox}[colback=gray!10, boxrule=0pt, arc=0pt]
\textbf{Local linear backward adjoint step (ours):}
\vspace{-0.5em}
\begin{equation}
\tilde a_k^{\mathrm{LL}}
=
\Phi_k\Bigl(
\tilde a_{k+1}
+ 2\Delta t\,
\bigl[\nabla_x v_{\mathrm{base}}(X_{k+1}, t_{k+1})\bigr]^\top \tilde a_{k+1}
\Bigr),
\qquad
\Phi_k = \frac{u_k}{u_{k+1}}.
\label{eq:local_linear_adjoint}
\end{equation}
\end{tcolorbox}
\noindent In practice we did not see gains as large as for the forward
integrator, but this step performed at least as well as the Euler
discretization, so we kept it.

\subsection{Derivation of the Local Linear Integrator}
\label{app:local_linear_integrator:derivation}

Both derivations rest on the following simple lemma (for further background in the context of ODEs see~\citet{arnold1978ode}).

\begin{lemma}
\label{lem:linear_ode_sde}
Let $f, g, h : [s, T] \to \R$ be continuous and deterministic. The
unique solution of the SDE
\[
dY_t = f(t)\, Y_t\, dt + g(t)\, dt + h(t)\, dB_t,
\qquad
Y_s = y_s,
\]
is
\[
Y_T
\;=\;
e^{\int_s^T f(\zeta)\, d\zeta}\, y_s
\;+\;
\int_s^T e^{\int_\tau^T f(\zeta)\, d\zeta}\, g(\tau)\, d\tau
\;+\;
\int_s^T e^{\int_\tau^T f(\zeta)\, d\zeta}\, h(\tau)\, dB_\tau.
\]
The stochastic integral is a centered Gaussian with variance
$\int_s^T e^{2\int_\tau^T f(\zeta)\, d\zeta}\, h(\tau)^2\, d\tau$. The ODE
case $h\equiv 0$ is recovered as a special case.
\end{lemma}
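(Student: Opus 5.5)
The plan is the integrating-factor argument, applied pathwise so that the stochastic term causes no trouble. Write
\[
F(t):=\int_s^t f(\zeta)\,d\zeta .
\]
Since $f$ is continuous and deterministic, $F$ is $C^1$ with $F'=f$. Define $M_t:=e^{-F(t)}Y_t$. Because $e^{-F(t)}$ is a deterministic $C^1$ function of finite variation, It\^o's product rule has no quadratic covariation term. It gives
\[
dM_t=e^{-F(t)}\bigl(dY_t-f(t)Y_t\,dt\bigr)=e^{-F(t)}g(t)\,dt+e^{-F(t)}h(t)\,dB_t,
\]
so the linear term cancels exactly.

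Integrating from $s$ to $T$ and using $M_s=y_s$ gives
\[
e^{-F(T)}Y_T=y_s+\int_s^T e^{-F(\tau)}g(\tau)\,d\tau+\int_s^T e^{-F(\tau)}h(\tau)\,dB_\tau .
\]
Multiplying by $e^{F(T)}$ and noting $F(T)-F(\tau)=\int_\tau^T f$ yields the stated formula. Here $e^{F(T)}$ is deterministic, so it may be moved inside the It\^o integral.

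For existence, I verify the converse with the same product rule. The process defined by the right-hand side, with $T$ replaced by $t$, is $e^{F(t)}$ times a continuous semimartingale. Differentiating it reproduces the SDE, so it is a solution.

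For uniqueness, take two solutions with the same initial value and let $D_t$ be their difference. Then $D$ satisfies $dD_t=f(t)D_t\,dt$ pathwise with $D_s=0$. This is a linear ODE, so $e^{-F(t)}D_t$ is constant and hence identically zero. Alternatively, it follows from Gr\"onwall's inequality (\cref{lem:integral_gronwall}) with $L=\sup|f|$.

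For the distributional claim, the integrand $k(\tau):=e^{\int_\tau^T f}h(\tau)$ is deterministic and continuous on $[s,T]$, hence in $L^2$. The Wiener integral of a deterministic $L^2$ function is a centered Gaussian. One way to see this is that it is the $L^2$ limit of Riemann sums of independent centered Gaussian increments, and $L^2$ limits of centered Gaussians are centered Gaussian. Its variance is $\int_s^T k(\tau)^2\,d\tau$ by the It\^o isometry, which is the stated expression. Setting $h\equiv 0$ removes the stochastic integral and leaves the variation-of-constants formula for the ODE.

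I do not expect a real obstacle. The only points that need care are that the product rule has no It\^o correction because $e^{-F}$ is deterministic, and that the Gaussianity of the Wiener integral must be justified by the limit argument rather than assumed.
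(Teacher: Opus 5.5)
Your proposal is correct and follows the paper's proof essentially step for step: the same integrating factor $e^{-\int_s^t f}$, the It\^o product rule with no covariation term because that factor is deterministic, integration from $s$ to $T$, and the It\^o isometry for the variance. Your explicit existence/uniqueness check and your Riemann-sum justification of Gaussianity fill in points the paper leaves implicit; the paper attributes Gaussianity to the isometry alone, which only gives the variance.
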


\begin{proof}
Let
$\mu(t) = \exp\!\bigl(-\!\int_s^t f(\zeta)\, d\zeta\bigr)$, so that
$\mu(s) = 1$ and $\dot\mu(t) = -f(t)\, \mu(t)$. Since $\mu$ is deterministic
and of bounded variation, Itô's product rule applied to $\mu(t)\, Y_t$ gives
\[
d[\mu(t) Y_t]
=
\dot\mu(t)\, Y_t\, dt + \mu(t)\, dY_t
=
-f(t)\, \mu(t)\, Y_t\, dt
+ \mu(t)\, \bigl[f(t) Y_t\, dt + g(t)\, dt + h(t)\, dB_t\bigr].
\]
The two $f(t)\, Y_t$ terms cancel, leaving
\[
d[\mu(t)\, Y_t] \;=\; \mu(t)\, g(t)\, dt \;+\; \mu(t)\, h(t)\, dB_t.
\]
Integrating from $s$ to $T$ and using $\mu(s)\, Y_s = y_s$,
\[
\mu(T)\, Y_T
\;=\;
y_s
\;+\; \int_s^T \mu(\tau)\, g(\tau)\, d\tau
\;+\; \int_s^T \mu(\tau)\, h(\tau)\, dB_\tau.
\]
Dividing through by $\mu(T)$ gives
\[
Y_T
\;=\;
\frac{1}{\mu(T)}\, y_s
\;+\; \int_s^T \frac{\mu(\tau)}{\mu(T)}\, g(\tau)\, d\tau
\;+\; \int_s^T \frac{\mu(\tau)}{\mu(T)}\, h(\tau)\, dB_\tau,
\]
and the identities
$1/\mu(T) = \exp\!\bigl(\int_s^T f(\zeta)\, d\zeta\bigr)$ and
$\mu(\tau)/\mu(T) = \exp\!\bigl(\int_\tau^T f(\zeta)\, d\zeta\bigr)$ yield the
claimed formula. Gaussianity and the variance formula for the stochastic
integral follow from Itô's isometry applied to the deterministic integrand
$\mu(\tau)\, h(\tau) / \mu(T)$.
\end{proof}

Throughout this section we keep the offsets $\delta, \varepsilon \geq 0$ and
continue to use
\[
u(t) = t + \delta,
\qquad
u_k = u(t_k),
\qquad
\Phi_k = \frac{u_k}{u_{k+1}}.
\]

\subsubsection*{Forward SDE}
The forward SDE
\begin{equation}
dX_t
=
\left[2\, v_\theta(X_t, t) - \frac{X_t}{u(t)}\right] dt
+ \sqrt{\frac{2(1 - t + \varepsilon)}{u(t)}}\, dB_t
\label{eq:memoryless_sde_shifted}
\end{equation}
matches \cref{lem:linear_ode_sde} with $f(t) = -1/u(t)$, smooth forcing
$g(t) = 2\, v_\theta(X_t, t)$, and noise coefficient
$h(t) = \sqrt{2(1-t+\varepsilon)/u(t)}$. The exponential factor in the lemma
simplifies because $-1/u$ has a logarithmic antiderivative:
\[
e^{\int_s^t f(\zeta)\, d\zeta}
\;=\;
e^{-[\log u(t) - \log u(s)]}
\;=\;
\frac{u(s)}{u(t)}.
\]
Applying \cref{lem:linear_ode_sde} over a single step $[t_k, t_{k+1}]$ gives
the \emph{exact} representation
\begin{equation}
X_{t_{k+1}}
=
\Phi_k\, X_{t_k}
+ \frac{2}{u_{k+1}}\!\int_{t_k}^{t_{k+1}}\! u(\tau)\, v_\theta(X_\tau, \tau)\, d\tau
+ \frac{1}{u_{k+1}}\!\int_{t_k}^{t_{k+1}}\! u(\tau) \sqrt{\frac{2(1-\tau+\varepsilon)}{u(\tau)}}\, dB_\tau.
\label{eq:memoryless_sde_mild}
\end{equation}
We make a single approximation---freezing the velocity at its left-endpoint
value, $v_\theta(X_\tau, \tau) \approx v_\theta(x_k, t_k) =: v_k$---and
integrate the rest exactly. The stochastic-integral clause of
\cref{lem:linear_ode_sde} turns the noise integral into a centered Gaussian.

The velocity term collapses to $\Omega_k\, v_k$ with
\begin{equation}
\Omega_k
\;=\;
\frac{2}{u_{k+1}}\!\int_{t_k}^{t_{k+1}}\! u(\tau)\, d\tau
\;=\;
\frac{u_{k+1}^2 - u_k^2}{u_{k+1}}
\;=\;
u_{k+1}\bigl(1 - \Phi_k^2\bigr).
\label{eq:local_linear_omega}
\end{equation}
The noise term is $\sqrt{V_k}\, \xi_k$ with $\xi_k \sim \mathcal{N}(0, I)$ and
\[
V_k
\;=\;
\frac{2}{u_{k+1}^2}\!\int_{t_k}^{t_{k+1}}\! u(\tau)\,(1 - \tau + \varepsilon)\, d\tau.
\]
Using the algebraic identity
$1 - \tau + \varepsilon = (1 + \delta + \varepsilon) - u(\tau)$, the
integrand splits into elementary terms:
\[
\int_{t_k}^{t_{k+1}}\! u(\tau)\, d\tau = \frac{u_{k+1}^2 - u_k^2}{2},
\qquad
\int_{t_k}^{t_{k+1}}\! u(\tau)^2\, d\tau = \frac{u_{k+1}^3 - u_k^3}{3}.
\]
Substituting and simplifying with $u_k^3 / u_{k+1}^2 = \Phi_k^2\, u_k$,
\begin{equation}
V_k
\;=\;
(1 + \delta + \varepsilon)\bigl(1 - \Phi_k^2\bigr)
\;-\;
\frac{2}{3}\bigl(u_{k+1} - \Phi_k^2\, u_k\bigr).
\label{eq:local_linear_variance}
\end{equation}
Setting $\delta = \varepsilon = \Delta t$ recovers the boxed forward
update \cref{eq:local_linear_forward_summary}.

\subsubsection*{Backward adjoint ODE}
We now derive the boxed local linear backward adjoint step
\cref{eq:local_linear_adjoint}. Recall the split form of the adjoint ODE
\cref{eq:adjoint_ode_split},
\[
\dot a_t
\;=\;
-2\bigl[\nabla_x v_{\mathrm{base}}(X_t, t)\bigr]^\top a_t
\;+\; \frac{a_t}{u(t)}.
\]
To handle the stiff scalar term exactly, apply \cref{lem:linear_ode_sde} to
this ODE---now with $f(t) = 1/u(t)$ and smooth forcing
$g(t) = -2[\nabla_x v_{\mathrm{base}}(X_t, t)]^\top a_t$. The exponential
factor is again a ratio of $u$'s, with the opposite sign from the forward
case:
\[
e^{\int_s^t f(\zeta)\, d\zeta}
\;=\;
\frac{u(t)}{u(s)}.
\]
Applying the lemma over $[t_k, t_{k+1}]$ to express $a_{t_{k+1}}$ in terms
of $a_{t_k}$ and rearranging for the backward direction gives the exact
identity
\[
\tilde a_k
\;=\;
\Phi_k\, \tilde a_{k+1}
\;+\;
2 \int_{t_k}^{t_{k+1}} \frac{u_k}{u(\tau)}\,
   \bigl[\nabla_x v_{\mathrm{base}}(X_\tau, \tau)\bigr]^\top a_\tau\, d\tau.
\]

The integrand depends on the unknown $a_\tau$, so we must approximate.
Following the same spirit as in the forward case, we freeze the integrand
at a single point and use the resulting constant times $\Delta t$. We freeze
at $\tau = t_{k+1}$, the only point where $a_\tau$ is known: this replaces
$a_\tau$ by $\tilde a_{k+1}$, the Jacobian by
$\nabla_x v_{\mathrm{base}}(X_{k+1}, t_{k+1})$, and the kernel
$u_k / u(\tau)$ by $u_k / u_{k+1} = \Phi_k$. The integral collapses to
$2\Phi_k\, \Delta t$ times the frozen value, yielding
\[
\tilde a_k
\;=\;
\Phi_k\Bigl(
\tilde a_{k+1}
+ 2\Delta t\,
\bigl[\nabla_x v_{\mathrm{base}}(X_{k+1}, t_{k+1})\bigr]^\top \tilde a_{k+1}
\Bigr),
\]
which is \cref{eq:local_linear_adjoint}.

\FloatBarrier

\FloatBarrier
\clearpage
\section{Additional Implementation Details}
\label{app:additional_implementation_details}

This appendix gives the method-level implementation recipe used in our DRL
experiments. The end-to-end procedure is summarized in
\cref{alg:drl_implementation}, which combines the buffered training pattern
used in practice for both stages with the local linear integrator for the
memoryless SDE and its companion adjoint
(\cref{app:local_linear_integrator}). We provide further details below. 

Experiment-level configuration (base
models, default hyperparameters, sampling, evaluation metrics, baseline and
ablation setups) is collected in \cref{app:ablation_setups}.
The shifted-time offsets in Stage~2 are written as $\delta, \varepsilon \geq
0$ throughout the algorithm; in practice we use the choice
$\delta = \varepsilon = \Delta t$ from
\cref{app:local_linear_integrator}, but stating the algorithm in terms of
$\delta, \varepsilon$ keeps the dependence on the offsets explicit.

\begin{algorithm}[!tbp]
\caption{Discriminator-Guided RL with Buffered Training and the Local Linear Integrator}
\label{alg:drl_implementation}
\footnotesize
\vspace{0.1em}
\centerline{\rule{0.04\linewidth}{0.3pt}\hspace{0.6em}%
  \textbf{Stage 1: Buffered Feature-Space Discriminator Training}%
  \hspace{0.6em}\rule{0.04\linewidth}{0.3pt}}
\begin{algorithmic}[1]
\REQUIRE Target distribution $q$, base model $p_{\mathrm{base}}$, frozen
  encoder $\phi$, buffer size $N_D$, epochs per refresh $E_D$,
  $R_1$ weight $\gamma$, learning rate $\eta_D$
\STATE Initialize discriminator parameters $\psi$
\REPEAT
    \STATE \textbf{Refresh buffer $\mathcal{B}_D$:}
    \FOR{$i = 1, \ldots, N_D$}
        \STATE Sample $x_{\mathrm{real}}^{(i)} \sim q$ and
               $x_{\mathrm{fake}}^{(i)} \sim p_{\mathrm{base}}$
        \STATE Append $\bigl(x_{\mathrm{real}}^{(i)},\,
               x_{\mathrm{fake}}^{(i)}\bigr)$ to $\mathcal{B}_D$
    \ENDFOR
    \STATE \vspace{0.3em}
    \STATE \textbf{Train discriminator on $\mathcal{B}_D$:}
    \FOR{epoch $n = 1, \ldots, E_D$}
        \FOR{minibatch $(x_{\mathrm{real}}, x_{\mathrm{fake}}) \subset \mathcal{B}_D$}
            \STATE $\ell_{\mathrm{disc}}(\psi) \leftarrow
                   -\log D_\psi(\phi(x_{\mathrm{real}}))
                   -\log\bigl(1-D_\psi(\phi(x_{\mathrm{fake}}))\bigr)
                   + \tfrac{\gamma}{2}\,\bigl\|\nabla_x D_\psi(\phi(x_{\mathrm{real}}))\bigr\|^2$
            \STATE $\psi \leftarrow \psi - \eta_D\, \nabla_\psi \ell_{\mathrm{disc}}(\psi)$
        \ENDFOR
    \ENDFOR
\UNTIL{convergence}
\STATE Define reward $\hat r(x) \leftarrow \mathrm{logit}\,D_\psi(\phi(x))$
\end{algorithmic}
\vspace{0.2em}
\centerline{\rule{0.02\linewidth}{0.3pt}\hspace{0.6em}%
  \textbf{Stage 2: Buffered Adjoint Matching with the Local Linear Integrator}%
  \hspace{0.6em}\rule{0.02\linewidth}{0.3pt}}
\begin{algorithmic}[1]
\REQUIRE Reward $\hat r$, base velocity $v_{\mathrm{base}}$, KL weight $\lambda$,
  uniform time grid $0=t_0<\cdots<t_K=1$ with $\Delta t = 1/K$,
  shifted-time offsets $\delta,\varepsilon \geq 0$ with
  $u_k := t_k+\delta$, trajectory buffer size $N_T$,
  epochs per refresh $E_T$, learning rate $\eta_T$
\STATE Initialize $v_\theta \leftarrow v_{\mathrm{base}}$
\REPEAT
    \STATE \textbf{Refresh trajectory buffer $\mathcal{B}_T$:}
    \FOR{$i = 1, \ldots, N_T$}
        \STATE Sample $X_0^{(i)} \sim \mathcal{N}(0, I)$
        \STATE \emph{Local linear forward pass} (with current $v_\theta$):
        \FOR{$k = 0, \ldots, K-1$}
            \STATE $\Phi_k \leftarrow u_k / u_{k+1}$,\quad
                   $\Omega_k \leftarrow u_{k+1}\bigl(1-\Phi_k^2\bigr)$,\quad
                   $V_k \leftarrow (1+\delta+\varepsilon)\bigl(1-\Phi_k^2\bigr)
                                   - \tfrac{2}{3}\bigl(u_{k+1}-\Phi_k^2 u_k\bigr)$
            \STATE $\xi_k^{(i)} \sim \mathcal{N}(0, I)$
            \STATE $X_{k+1}^{(i)} \leftarrow
                   \Phi_k\, X_k^{(i)}
                   + \Omega_k\, v_\theta(X_k^{(i)}, t_k)
                   + \sqrt{V_k}\, \xi_k^{(i)}$
        \ENDFOR
        \STATE Terminal adjoint
               $\tilde a_K^{(i)} \leftarrow -\lambda\, \nabla_x \hat r\bigl(X_K^{(i)}\bigr)$
        \STATE \emph{Local linear backward adjoint pass} (with $v_{\mathrm{base}}$):
        \FOR{$k = K-1, \ldots, 0$}
            \STATE $\tilde a_k^{(i)} \leftarrow
                   \Phi_k\Bigl(
                   \tilde a_{k+1}^{(i)}
                   + 2\Delta t\,
                   \bigl[\nabla_x v_{\mathrm{base}}\bigl(X_{k+1}^{(i)}, t_{k+1}\bigr)\bigr]^\top
                   \tilde a_{k+1}^{(i)}
                   \Bigr)$
        \ENDFOR
        \STATE Store
               $\bigl(\{X_k^{(i)}\}_{k=0}^{K},
                      \{\tilde a_k^{(i)}\}_{k=0}^{K}\bigr)$
               in $\mathcal{B}_T$ as stop-gradient quantities
    \ENDFOR
    \STATE \vspace{0.3em}
    \STATE \textbf{Train velocity on $\mathcal{B}_T$:}
    \FOR{epoch $n = 1, \ldots, E_T$}
        \FOR{minibatch $\bigl(\{X_k\}, \{\tilde a_k\}\bigr) \subset \mathcal{B}_T$}
            \STATE Set $\sigma_k \leftarrow \sqrt{2(1-t_k+\varepsilon)/u_k}$
            \STATE $L_{\mathrm{AM}}(\theta) \leftarrow
                   \tfrac{1}{2K}\sum_{k=0}^{K-1}
                   \Bigl\|
                   \tfrac{2}{\sigma_k}\bigl(v_\theta(X_k, t_k) - v_{\mathrm{base}}(X_k, t_k)\bigr)
                   + \sigma_k\, \tilde a_k
                   \Bigr\|^2$
            \STATE $\theta \leftarrow \theta - \eta_T\, \nabla_\theta L_{\mathrm{AM}}(\theta)$
        \ENDFOR
    \ENDFOR
\UNTIL{convergence}
\STATE \textbf{return} $v_\theta$
\end{algorithmic}
\end{algorithm}

\paragraph{Buffered discriminator training.}
To amortize the cost of generating model samples, we do not draw a fresh
mini-batch of fakes for every gradient step. Instead, we periodically refresh
a buffer $\mathcal{B}_D$ of size $N_D$ holding paired real samples
$x_{\mathrm{real}}$ and base-model samples $x_{\mathrm{fake}}$, and then take
$E_D$ epochs of gradient steps over that buffer with the standard logistic
loss plus an $R_1$ gradient penalty. The $R_1$ penalty is computed by
backpropagating the discriminator output through the frozen encoder $\phi$
to the image space. In principle, the flow-space could be used as well but we found this to work well. Given that it is also more memory efficient as we avoid backpropagating through the flow space, we decided to use it instead of the flow-space penalty. 

\paragraph{Buffered adjoint matching.}
We use the same buffering pattern for the RL stage. Periodically, we sample
$N_T$ trajectories from the memoryless training SDE under the current
$v_\theta$ using the local linear forward integrator
\eqref{eq:local_linear_forward_summary}, compute their adjoints by solving
the local linear backward adjoint \eqref{eq:local_linear_adjoint} starting
from $\tilde a_K = -\lambda\,\nabla_x \hat r(X_K)$, and store both
quantities in a trajectory buffer $\mathcal{B}_T$ as stop-gradient tensors.
We then take $E_T$ epochs of gradient steps over $\mathcal{B}_T$, with
gradients flowing only through the current velocity evaluations
$v_\theta(X_k, t_k)$. This is technically off-policy (the buffer drifts as
$v_\theta$ updates), but in practice we observed no degradation versus a
fully on-policy implementation, and Girsanov-based importance weighting
gave no measurable improvement either.

\paragraph{Training versus inference.}
The memoryless SDE used during post-training is integrated with the local
linear scheme described in \cref{app:local_linear_integrator}. After
fine-tuning, we sample from the updated model with the standard
deterministic ODE; differences between the two samplers were small but
non-negligible.

\paragraph{Other implementation details.} A few smaller choices, all
following or adapted from the original adjoint-matching implementation:
\begin{enumerate}[leftmargin=2em,itemsep=0.25em,topsep=0.25em]
\item \textbf{Final denoising step.} The forward loop in
\cref{alg:drl_implementation} runs $K-1$ local linear SDE steps and
replaces the final step from $t_{K-1}$ to $t_K$ with a single deterministic
denoising call to the network.

\item \textbf{Stratified timestep subsampling for the loss.} Rather than
summing the AM loss over all $K$ timesteps, each gradient step uses a
fresh subset $S \subset \{0, \ldots, K-1\}$ of size $\lceil f K
\rceil$, with $f = 0.4$ by default. The subset is split $50/50$
between early ($t < 0.6$) and late ($t \geq 0.6$) timesteps so that both
ends of the trajectory are always represented. The loss line in
Stage~2 then becomes $L_{\mathrm{AM}} = (1/(2|S|)) \sum_{k \in S}
\|\cdot\|^2$.

\item \textbf{Adaptive outlier clipping for the AM loss.} The per-sample
loss $\ell_k^{(i)} = \bigl\|\tfrac{2}{\sigma_k}(v_\theta - v_{\mathrm{base}})
+ \sigma_k\,\tilde a_k\bigr\|^2$ has a heavy right tail, dominated by a few
trajectories whose adjoints blow up and destabilize training. We maintain
an EMA-smoothed threshold $\tau_n$ across optimizer steps $n$ on the
per-sample norms $\sqrt{\ell_k^{(i)}}$,
\[
\tau_n = \rho\, \tau_{n-1}
       + (1-\rho)\, \min\!\bigl(q_\alpha\bigl(\sqrt{\ell_k^{(i)}}\bigr),\,
                                c\, \tau_{n-1}\bigr),
\]
where $q_\alpha$ is the empirical $\alpha$-quantile over the current batch,
$\rho$ is the EMA decay, and the spike cap $c$ prevents a single outlier
batch from rapidly inflating the threshold. Samples with $\sqrt{\ell_k^{(i)}}
\geq \tau_n$ (or non-finite) are masked out of the gradient. We use
$\alpha = 0.9$, $\rho = 0.9$ for SiT-XL/2, REPA SiT-XL/2, and JiT-H/16, and
$\alpha = 0.75$, $\rho = 0.995$ for RAE DiTDH-XL, with spike cap $c = 5$
throughout. The spike cap $c$ was particularly important for RAE, whose
loss exhibited noticeably more frequent spikes than the other models.
\end{enumerate}

\FloatBarrier
\clearpage
\section{Experimental Setup}
\label{app:experimental_setup}
\label{app:ablation_setups}

We collect all experiment-level configuration here: shared setup (base
models, sampling, evaluation metrics) followed by per-experiment subsections
covering our main distributional alignment experiments
(\cref{sec:experiments_main}), the baselines we compare against, and the
ablations discussed in \parheadref{sec:exp_ablations}. The general method recipe
is given separately in \cref{app:additional_implementation_details}.

\paragraph{Base models.}
We evaluate four pretrained ImageNet~\citep{deng2009imagenet} $256\times 256$, class-conditional generators spanning latent and pixel-space flow/diffusion architectures:
\begin{itemize}
    \item \textbf{SiT-XL/2}~\citep{ma2024sit} (0.72B params total: 675M flow model, 50M decoder): A generative model built on the DiT backbone~\citep{peebles2023dit} using the stochastic interpolant framework~\citep{albergo2023stochastic}. We use the XL/2 variant, operating in the standard SD VAE latent space ($4 \times 32 \times 32$).\looseness-1
    \item \textbf{JiT-H/16}~\citep{li2025jit} (0.95B params total: 953M flow model, 0 decoder): A plain Vision Transformer~\citep{dosovitskiy2021vit} that performs flow matching directly in pixel space on raw image patches, without a VAE tokenizer. We use the H/16 variant operating directly on $3 \times 256 \times 256$ pixel inputs.\looseness-1
    \item \textbf{REPA SiT-XL/2}~\citep{yu2025repa} (0.72B params total: 675M flow model, 50M decoder): A flow-based diffusion transformer regularized by aligning intermediate noisy representations with clean semantic features from a frozen encoder. We use the variant built on the SiT-XL/2 backbone (same SD VAE latent space, $4 \times 32 \times 32$), trained with REPA alignment against DINOv2-ViT-B features.\looseness-1
    \item \textbf{RAE DiTDH-XL}~\citep{zheng2026rae} (1.25B params total: 839M flow model, 415M ViT-XL decoder): A diffusion transformer that replaces the VAE latent space with a Representation Autoencoder---a frozen vision encoder paired with a lightweight decoder---and trains via flow matching in this semantic latent space. We use the DiTDH-XL stage-2 model with a frozen DINOv2-ViT-B encoder and ViT-XL decoder, operating in a $768 \times 16 \times 16$ semantic latent space.\looseness-1
\end{itemize}

\paragraph{Sampling.}
For all models, unless otherwise indicated, we use a Heun 50-step sampler with a linear schedule, except for RAE where we follow the original paper and use their suggested scheduler~\citep{zheng2026rae}. This corresponds to 99 NFEs since the final step reduces to a regular Euler step.

\paragraph{Evaluation metrics and feature spaces.}
For our main results, we report metrics in the DINOv2-Large, DINOv3-Large, SigLIP-Large, and Inception-v3 feature spaces. The feature-space ablation also reports the corresponding base-size variants (DINOv2-B, DINOv3-B, SigLIP-B); patterns are largely similar to their large-size counterparts. All distributional metrics are computed on $50\,000$ generated samples with class labels balanced across the $1000$ ImageNet classes, against an equally sized reference set so that the generated and reference sets are matched in size. As is commonly done in the literature, FD is reported against the $(\mu, \Sigma)$ feature statistics of the full ImageNet training set; $\text{FD}_{\rm{val}}$, KD, Precision, Recall, Density, and Coverage are reported against $50\,000$ ImageNet validation images. Precision/Recall are computed with $k{=}3$ nearest neighbors and Density/Coverage with $k{=}5$, matching the defaults of \citet{kynkaanniemi2019improved} and \citet{naeem2020reliable} respectively. All manifold metrics use Euclidean distance directly on the embedding vectors.\looseness-1

\paragraph{Best-CFG configuration.}
\label{app:cfg_tuning}
Whenever we report ``best CFG'' numbers, the configuration is selected per checkpoint by sweeping the CFG scale from $1$ to $3$ in steps of $0.25$ together with the application interval $\{(0.0, 1.0), (0.1, 1.0), (0.3, 1.0)\}$ (the time range $t \in [\text{low}, \text{high}]$ over which CFG is applied; outside it the guidance scale falls back to $1.0$)~\citep{kynkaanniemi2024applying}. For the autoguidance experiments (\cref{app:autoguidance}) we also sweep scales from $0$ to $1$ in the same step.

\subsection{Distributional alignment setup}
\label{app:setup_distributional_alignment}

DRL is run as two stages---a discriminator on frozen-encoder
features, followed by KL-regularized RL via adjoint matching---both using
buffered training (\cref{alg:drl_implementation}). The same checkpoints back
the \parheadref{sec:exp_alignment} results and the
\parheadref{sec:exp_reward_transfer} results.

\paragraph{Discriminator architecture.}
Unless stated otherwise, the discriminator operates on frozen DINOv2-Large features (CLS token; the encoder is never updated during DRL). The head is a class-conditional projection discriminator~\citep{miyato2018cgansprojectiondiscriminator} of the form $D(x, y) = \varphi(h(x)) + \langle h(x), e(y) \rangle + b(y)$, where $h$ is a linear map (no hidden layers) with hidden dim 512 and $e(y)$ is a learned class embedding. The reward used by the RL stage is the discriminator logit, $\hat{r}(x) = \mathrm{logit}\, D(\phi(x))$, as constructed in \cref{sec:implications_fixing_sft}.

\paragraph{Discriminator training.}
We optimize the standard logistic GAN loss with the R1 gradient penalty above. Each refresh of the discriminator buffer holds 20 mini-batches of paired (real, fake) images (in the ambient space of the model), and we take 10 epochs of gradient steps over that buffer before refreshing. We optimize with Adam at learning rate $10^{-4}$ at an effective batch size of $512$, for up to $10\,000$ optimizer steps, evaluating on a held-out validation set every 50 steps and early-stopping with patience 15.

\paragraph{RL training (adjoint matching).}
We optimize the RL objective with adjoint matching~\citep{domingoenrich2025adjoint}, sampling trajectories from the memoryless training SDE using the local linear integrator (\cref{app:local_linear_integrator}) at $K = 100$ uniform steps with $\eta = 1$ and offsets $\delta = \varepsilon = \Delta t$. CFG is disabled during training. We use Adam at learning rate $10^{-5}$ (no weight decay) with a $200$-step linear warmup, run for $3\,000$ optimizer steps, and sum the loss across timesteps. In addition to the per-trajectory adaptive clipping and stratified timestep subsampling described in \cref{app:additional_implementation_details}, trajectories and their adjoints are pre-computed and stored stop-gradient in a buffer; we then take 5 epochs of velocity gradient steps over that buffer per refresh. We use an effective batch size of $576$ for all models, with a buffer of $2304$ trajectories for SiT-XL/2 and REPA SiT-XL/2 and a buffer of $3072$ trajectories for JiT-H/16 and RAE DiTDH-XL. We observed that RAE was noticeably more unstable to train than the other models, so for RAE we increased the quantile clipping ($q = 0.75$ with EMA decay $0.995$, vs $q = 0.9$ and EMA decay $0.9$ elsewhere) and reduced the loss to its mean rather than its sum across timesteps to compensate for the larger latent space ($768 \times 16 \times 16$ vs $4 \times 32 \times 32$). We also use a slightly higher learning rate of $5 \times 10^{-5}$ (vs $10^{-5}$ elsewhere), which we found to work better in practice for RAE, together with gradient clipping of $1.0$ and a spike cap of $5$ on the clipped loss-threshold EMA.

\paragraph{Compute.}
Each base model requires a one-off discriminator training run plus one adjoint-matching run per $\lambda$. All numbers below are measured on a cluster of NVIDIA H200 GPUs (8 per node). The discriminator stage takes roughly $3$--$6$ hours of wall-clock time on a single 8$\times$H200 node, i.e. approximately $\sim$24--48 GPU-hours per base model. The per-$\lambda$ adjoint-matching cost is approximately $\sim$190 GPU-hours for SiT-XL/2 and REPA SiT-XL/2 ($\sim$110 in optimizer steps and $\sim$80 in buffer-fill sampling), $\sim$500 GPU-hours for JiT-H/16 ($\sim$290 in optimizer steps, $\sim$210 in buffer-fill sampling, with the higher cost reflecting pixel-space ViT-H/16 forward passes), and $\sim$420 GPU-hours for RAE DiTDH-XL ($\sim$190 in optimizer steps, $\sim$230 in buffer-fill sampling, with the buffer-fill share being largest because sampling in the $768 \times 16 \times 16$ semantic latent is more expensive than in the SD VAE latent).

For comparison, the base models we post-train are themselves the result of substantial pretraining budgets: SiT-XL/2 is reported to be trained for approximately $18{,}300$ TPU v4 chip-hours~\citep{ma2024sit}, and REPA SiT-XL/2 for $1{,}646$ H100 GPU-hours~\citep{yu2025repa} (to our knowledge no GPU-hour figures have been reported for JiT or RAE). \drl post-training is therefore a small fraction of the cost of producing the base model in either case. We also note that we did not spend significant effort tuning \drl for training-time efficiency: we believe it is straightforward to roughly halve the per-$\lambda$ cost by trading off the configuration knobs we held fixed---fewer discretization steps, larger trajectory buffers, more epochs per buffer refresh, and so on---without changing the qualitative results.

\subsection{Better image quality (reward-transfer evaluation) setup}
\label{app:setup_reward_transfer}
The \parheadref{sec:exp_reward_transfer} results reuse the distributional alignment checkpoints (\cref{app:setup_distributional_alignment}) without any additional training, and only the evaluation protocol is new. We evaluate four held-out preference reward models---ImageReward~\citep{xu2023imagereward}, PickScore~\citep{kirstain2023pickapic}, Aesthetics v2.5~\citep{discus0434_aesthetic_predictor_v2_5_2024}, and HPSv2~\citep{wu2023human}---over $50\,000$ generated images per checkpoint (balanced across the $1\,000$ ImageNet classes). For the text-conditioned rewards, the ImageNet class label is plugged into the prompt template ``a photo of a \{class\}''.

\subsection{Preference-based RL setup}
\label{app:setup_preference_rl}

For the \parheadref{sec:exp_reward} results, we post-train each base model with KL-regularized RL using ImageReward~\citep{xu2023imagereward} as the reward (computed on the rendered image with the same ``a photo of a \{class\}'' prompt template as in \cref{app:setup_reward_transfer}), and compare two starting points: (i) the base model (\textbf{Base+PRL}) and (ii) our DRL $\lambda{=}10$ checkpoint (\textbf{DRL+PRL}). For each starting point we sweep the PRL reward--KL trade-off $\lambda_{\mathrm{PRL}} \in \{1, 10, 40\}$. For DRL+PRL the DRL checkpoint serves as both the initial weights and the KL reference, so the KL is anchored to the distributionally aligned model rather than the base.

\paragraph{RL training (adjoint matching).}
PRL uses the same adjoint matching setup as the distributional alignment stage (\cref{app:setup_distributional_alignment}---same batch size, buffer sizes, and RAE-specific overrides), trained for $5\,000$ optimizer steps instead of $3\,000$. We chose this larger budget after observing that the Base+PRL runs at $3\,000$ steps were close to but not yet fully converged.

\paragraph{Compute.}
Same per-$\lambda$ figures as \cref{app:setup_distributional_alignment}, scaled proportionally to the longer $5\,000$-step budget (a factor of $5/3$).

\paragraph{Evaluation for the Pareto / over-optimization analysis.}
Beyond the same distributional and reward metrics from \cref{app:setup_distributional_alignment}, the analysis behind \cref{fig:pareto_hpsv2_cfg1} (and the appendix Pareto figures) tracks five low-level image statistics that reward-hacking can push to extreme values: mean brightness, saturation, contrast, colorfulness, and whiteness. Each statistic is computed per image on RGB pixels in $[0, 1]$ and then averaged across all images:
\begin{itemize}[leftmargin=2em,itemsep=0pt,topsep=0.25em]
    \item Brightness: mean grayscale luminance $Y = 0.299\,R + 0.587\,G + 0.114\,B$ over all pixels (BT.601 weights).
    \item Saturation: mean of $(\max_c - \min_c)/\max_c$ across pixels (channel-wise max/min over RGB; clamped to avoid division by zero).
    \item Contrast: pixel-wise standard deviation of the grayscale luminance $Y$.
    \item Colorfulness: the Hasler--S\"usstrunk metric $\sigma + 0.3\,\mu$, with $\sigma = \sqrt{\sigma_{rg}^2 + \sigma_{yb}^2}$ and $\mu = \sqrt{\mu_{rg}^2 + \mu_{yb}^2}$ on the opponent channels $rg = R - G$ and $yb = \tfrac{1}{2}(R + G) - B$.
    \item Whiteness: mean of $\min_c(R, G, B)$ across pixels (high when pixels are close to white).
\end{itemize}
Statistics are computed on $3\,000$ generated images per checkpoint (three images per ImageNet class, sampled with fixed noise seeds shared across checkpoints so trajectories are directly comparable). The ImageNet reference values are computed on $3\,000$ ImageNet validation images (three per class, resized and $256 \times 256$ center-cropped).

\subsection{Distillation from RL teachers setup}
\label{app:setup_distillation_details}
\label{app:distillation_details}
The distillation-gap experiment (\cref{sec:limitations}) on Stable Diffusion~1.5 has two stages: (i) train an RL teacher with adjoint matching, and (ii) distill its samples into a fresh student with standard flow / score matching.

\paragraph{RL teacher (adjoint matching).}
We use the official adjoint-matching codebase released by~\citet{domingoenrich2025adjoint} (\url{https://github.com/microsoft/soc-fine-tuning-sd}). Starting from Stable Diffusion~1.5, we run buffered adjoint matching with the codebase's \texttt{multi\_prompt\_buffer.yaml} settings, and only deviate as noted below. Training is on an $\sim$8\,000-prompt subset of the $10\,000$ prompts shipped in the codebase, themselves extracted from the ReFL training data of~\citet{xu2023imagereward}, with ImageReward as the reward and reward multiplier $300$ (vs $100$ in the default config). Sampling uses 50-step memoryless DDIM ($\eta = 1$). Optimization uses Adam at learning rate $3 \times 10^{-6}$, $\beta = (0.9, 0.95)$, fp32; per-rank batch $6$ with $11$ accumulation steps on $8$ H200 GPUs (effective batch $528$). Buffered training uses $\texttt{buffer\_size}{=}100$ trajectories with $\texttt{passes\_per\_buffer}{=}10$. We train for $10$ epochs over the prompt set and keep the checkpoint with the best validation ImageReward as the teacher. These are essentially the default settings provided in \texttt{multi\_prompt\_buffer.yaml}.

\paragraph{Distillation (SFT student).}
The student is a fresh Stable Diffusion~1.5 UNet trained with the standard DDPM $\varepsilon$-prediction loss on samples drawn from the RL teacher. We use the same prompt set used to train the RL teacher above. Teacher samples are generated without classifier-free guidance using 50-step DDIM with $\eta = 0$ on the standard SD~1.5 noise schedule, which is also the schedule the student trains on ($\beta_{\mathrm{start}} = 8.5 \times 10^{-4}$, $\beta_{\mathrm{end}} = 1.2 \times 10^{-2}$, scaled-linear). Training is buffered: every cycle we generate a fresh batch of $4\,096$ teacher images and take $10$ passes over the buffer, with per-rank batch $32$ and $2$ accumulation steps on $8$ H200 GPUs (effective batch $512$). Optimization uses Adam at learning rate $10^{-5}$ with $100$-step linear warmup, gradient clipping at $1.0$, and bfloat16 mixed precision. We evaluate ImageReward on a held-out validation split every $1\,000$ optimizer steps and early-stop when no improvement is observed for $5$ consecutive eval rounds. The run plotted in \cref{fig:distillation_gap} stopped after $\sim\!9\,000$ optimizer steps under this criterion---roughly $50\times$ more (prompt, teacher-image) pairs seen by the gradient than the RL teacher saw during its own training. The picture is unchanged if we evaluate on training prompts rather than the held-out validation split.

We report and train at CFG${=}1$ (no guidance) as the ``clean'' comparison: the RL teacher is never fine-tuned along the non-conditional branch (adjoint matching only works well without CFG). This also avoids issues given that the CFG distribution of the teacher would not be the CFG distribution of the student if we were to fine-tune the student on the CFG teacher samples.
This is also the setting we discuss in the main text.

\subsection{Distillation from DRL Teachers}
\label{app:setup_distillation_from_drl}

For the \parheadref{sec:exp_distillation_from_drl} experiment, we
use the same DRL model ($\lambda{=}1$, $R_1{=}0$, linear-conditional
discriminator on DINOv2-L) used throughout the main paper as the frozen
teacher. At a high level, the setup mirrors the distillation experiment on
Stable Diffusion 1.5 in \cref{sec:limitations}: we sample from the
post-DRL model to fill a buffer, train the student on those samples with the
standard flow matching velocity loss, and repeat. Unlike the
Stable Diffusion 1.5 experiments, we do not apply early stopping to drive home 
the point that the gap is not solvable with more compute.

\paragraph{Hyperparameters.}
The student is initialized from the base REPA SiT-XL/2 pretrained
checkpoint and trained with the standard flow matching velocity loss on
teacher-generated samples (no REPA projection loss). The teacher generates
samples with Heun-50. We use AdamW with lr\,$=$\,$10^{-4}$ (constant
schedule, 100-step warmup), no weight decay, gradient clipping at $1.0$,
and EMA decay of $0.9999$. Training runs in full precision (float32), as we
occasionally encountered instabilities with bf16 accumulation. The effective
batch size is $576$. Teacher-generated samples are stored in a buffer of
$57{,}600$ samples; the student trains on this buffer for $10$ passes
($1{,}000$ optimizer steps) before the buffer is discarded and refilled
with fresh teacher generations. Evaluation settings follow those described
in the main paper. The results reported in the main text are at CFG${=}1$
and with EMA. Non-EMA results are similar, albeit slightly worse.
Additionally, these hyperparameters correspond almost exactly to the original REPA training configuration, with the only differences being a larger batch size and the absence of the REPA projection loss. We tried other learning rates and buffer-refresh schedules, but found similar or worse results.

\paragraph{Training samples.}
We train for $900$k optimizer steps. Since the buffer is refreshed every
$1{,}000$ steps, this amounts to $900$ buffer fills, each producing
$57{,}600$ fresh samples. The student therefore sees a total of
${\sim}52$M teacher samples over the course of training---over
$150\times$ the number of samples consumed during DRL post-training
(${\sim}346$k) and ${\sim}40\times$ the size of ImageNet
(${\sim}1.28$M images).

\paragraph{Compute.}
The total cost of the $900$k-step run is approximately $4{,}050$
GPU-hours on H200 GPUs. Buffer filling dominates: each fill requires a
$50$-step Heun solve for $57{,}600$ samples and takes ${\sim}400$
seconds, so $900$ fills account for ${\sim}2{,}385$
GPU-hours (${\sim}59\%$ of the budget). The remaining ${\sim}1{,}710$
GPU-hours cover the optimization steps.

\subsection{Feature-space ablation setup}
\label{app:setup_feature_ablation}
We base our feature-space ablation on REPA SiT-XL/2. We train separate linear
conditional discriminators on features from seven embedders---DINOv2 (base,
large), DINOv3 (base, large), SigLIP (base, large), and Inception-v3---and
use each as the reward for adjoint matching. Only the embedder varies across
configs; all other discriminator and adjoint-matching settings follow the
distributional alignment setup
(\cref{app:setup_distributional_alignment}). Every fine-tuned model is then
evaluated across all seven feature spaces; full results are in
\cref{tab:feature_ablation}.

\subsection{Discriminator architecture and training ablation setup}
\label{app:setup_disc_ablation}
The discriminator ablation on REPA SiT-XL/2 with $\lambda = 10$ follows the distributional alignment setup (\cref{app:setup_distributional_alignment}) with the following differences and additions.

\paragraph{Discriminator.}
We explore three additional configurations on top of the default linear projection head on frozen DINOv2-Large:
\emph{(i)}~\emph{MLP head}---two GELU residual blocks of hidden dim $512$ on top of the frozen DINOv2-Large CLS token, followed by the same linear map used in the default;
\emph{(ii)}~\emph{fine-tuning the embedder}---unfreeze DINOv2-Large and train it jointly with the linear head;
\emph{(iii)}~\emph{training the embedder from scratch}---initialize the same DINOv2-L architecture randomly and train it jointly with the linear head.
In all three configurations we keep the class-conditional projection wrapper $D(x,y) = \varphi(h(x)) + \langle h(x), e(y) \rangle + b(y)$ from the alignment setup.

\paragraph{Training.}
Everything else matches the alignment setup, except for the embedder learning rate when the embedder is updated. We use $10^{-5}$ for fine-tuning---we initially tried $10^{-4}$ but training was too unstable and the resulting discriminators were noticeably worse. We nevertheless kept $10^{-4}$ for the from-scratch variant.
At these rates we observed no training instabilities and the discriminator reached the same convergence behavior as the frozen-embedder default.

\FloatBarrier
\clearpage
\section{Extended Results}
\label{app:additional}

This section reports the extended quantitative results that complement the
main-text experiments. Subsections are ordered to match the main-text
experiment flow: alignment (\parheadref{sec:exp_alignment}), image quality
(\parheadref{sec:exp_reward_transfer}), preference RL
(\parheadref{sec:exp_reward}), and ablations (\parheadref{sec:exp_ablations}).
Setups for each experiment, including the ablation training configurations,
are collected in \cref{app:ablation_setups}.

\paragraph{\textit{A note on autoguidance.}}
\label{app:autoguidance}
Standard classifier-free guidance (CFG) replaces the conditional velocity field
$v(x,y)$ with a combination of the conditional and unconditional fields,
$v_{\mathrm{cfg}}(x,y) = v(x,\varnothing) + w\,(v(x,y) - v(x,\varnothing))$, where
$w$ is a positive scalar, usually greater than $1$, and $\varnothing$ denotes the null class.
Autoguidance~\citep{karras2024guiding} proposes replacing $v(x,\varnothing)$ with the velocity
field of a weaker version of the same model, typically an earlier checkpoint from pretraining.
In practice, it has been shown to improve sample quality and diversity.

To our knowledge, autoguidance has only been applied to pretrained models. As a curiosity,
we explored applying it in the RL setting, guiding the RL fine-tuned conditional model with the
corresponding non-RL conditional version,
\begin{equation}
v_{\mathrm{ag}}(x,y) = v_{\mathrm{base}}(x,y) + w\,(v_{\mathrm{RL}}(x,y) - v_{\mathrm{base}}(x,y)).
\end{equation}
We tried this in two settings: using the non-\drl{} model to guide the \drl{} model, and,
after preference alignment, using the non-preference-aligned model to guide the
preference-aligned model. Generally, we found the results to be mixed.
In the \drl{} setting, when $\lambda$ was set to a high value (e.g., $40$), a value of $w$
lower than $1$ usually improved performance over both the base and the \drl{} model.
However, autoguidance generally underperformed standard CFG with an interval.
In the preference-alignment setting, $w > 1$ generally led to the highest reward values
across all models, but also to the worst image fidelity: with sufficiently large $w$, images
became very bright and white, which is presumably why they scored so highly under the proxy
reward despite being poorly aligned with true human preferences.

For these reasons, we do not feature autoguidance in the main text.
Nevertheless, we provide Pareto fronts with autoguidance in
\cref{fig:pareto_hpsv2_combined,fig:pareto_image_reward_combined,fig:pareto_aesthetic_combined}
and the corresponding sample grids in \cref{app:am_samples,app:rl_samples}, so that interested
readers can see the effect in more detail and so that the observations may prove useful for
future work on the topic.

\paragraph{Alignment.}
\cref{fig:lambda_sweep_1,fig:lambda_sweep_2} show how all seven distributional metrics vary with $\lambda$ at CFG${}=1$ (no guidance). For each (model, embedder, $\lambda$) combination, the plotted value corresponds to the interval achieving the lowest FD, with all other metrics taken from the same entry. The dashed gray line marks the base model value, and the dotted line (in the model's color) marks the theoretically motivated $\lambda{=}1$ ($R_1{=}0$) checkpoint.
\cref{tab:prdc_prdc_no_cfg,tab:prdc_prdc_best_cfg} extend the Fr\'echet Distance comparison from \cref{fig:alignment} to the full set of distributional metrics: FD, FD$_{\mathrm{val}}$, KD, Precision, Recall, Density, and Coverage. For each (model, embedder) pair, we select the DRL configuration ($\lambda$, interval) that achieves the lowest FD and report all seven metrics from that same generation run, along with the chosen configuration. This ensures internal consistency across all numbers in each row.

The takeaways are largely unchanged from the main text: DRL consistently improves over the base model across the board. Two patterns become clearer when seeing the full sweep of metrics. First, $\lambda \in \{5, 10\}$ generally performs best across most metrics, with the exception of FID, which tends to prefer smaller $\lambda$. Second, as discussed in the main paper, larger $\lambda$ generally brings improvements over the theoretically motivated baseline. Additionally, $\lambda{=}40$ is never selected as the best configuration and $\lambda{=}20$ is selected only sparingly, which further validates our choice of $\lambda{=}10$ as the default for the reward-improvement and preference-RL experiments.

Finally, while we select the best configuration according to its FD value (computed against the ImageNet \emph{training} distribution), the improvements persist on FD$_{\mathrm{val}}$ and KD, neither of which we select for. Together, these suggest genuine distributional alignment rather than overfitting to the optimized signal.

\paragraph{Image quality.}
\Cref{fig:reward_lambda_sweep} extends the reward-improvement comparison from \cref{fig:reward_improvement}, which fixes $\lambda{=}10$, to the full $\lambda$ sweep. Each panel plots normalized reward improvement $(\drl{-}\textrm{Base})/\sigma_{\textrm{base}}$ against $\lambda$ for one held-out reward. The dotted horizontal line per model marks the theoretically motivated $\lambda{=}1$ ($R_1{=}0$) checkpoint for reference. Improvements are largely monotone in $\lambda$ up to $\lambda{=}20$, with the strongest gains in ImageReward, HPSv2, and PickScore as reported in the main text. For $\lambda{=}40$, however, improvements often dip---consistent with the alignment results in \cref{fig:lambda_sweep_1,fig:lambda_sweep_2}, where $\lambda{=}40$ is rarely the best configuration.
\Cref{tab:best_drl_reward} reports the raw reward values underlying \cref{fig:reward_improvement}: $\lambda{=}10$.

\paragraph{Preference RL: Pareto plots.}
\cref{fig:pareto_hpsv2_cfg1} in the main text shows HPSv2 vs.\ low-level image
statistics at CFG${}=1$ (no guidance). \cref{fig:pareto_hpsv2_combined,fig:pareto_image_reward_combined,fig:pareto_aesthetic_combined} extend this to a full
$3 \times 3$ grid: three held-out rewards (HPSv2, ImageReward, Aesthetic v2.5)
$\times$ three guidance settings (no CFG, CFG${}=2$, autoguidance${}=2$; see \cref{app:autoguidance} for context on autoguidance).
We do not sweep over the guidance value because the results are easier to read this way; the plots look essentially identical for other CFG values.

The results are essentially unchanged from the main text: the patterns we showed for HPSv2 also hold for ImageReward and Aesthetic v2.5, with DRL achieving the best Pareto fronts in nearly every plot. Moreover, this conclusion is robust across the three guidance settings.
Furthermore, as discussed in \cref{app:autoguidance}, autoguidance pushes reward values higher at the cost of substantially worse image-statistic distortion---for example, on REPA the maximum brightness rises from ${\sim}0.6$ under base+CFG to ${\sim}0.7$ under autoguidance, an effect also visible in the sample grids of \cref{app:rl_samples}, where the images are noticeably brighter.
This further supports our claim that reward proxies are usually exploitable and negatively entangled with image fidelity.
Even so, DRL retains the best Pareto front in the autoguidance setting.

\paragraph{Ablations.}
\Cref{fig:disc_ablation_full} reports DINOv2-L FD across the full $R_1$ sweep at both $\lambda{=}1$ and $\lambda{=}10$, complementing the path-based view in \Cref{fig:disc_ablation}. The two regimes show qualitatively different behavior: at $\lambda{=}1$ all $R_1{>}0$ values hurt --- the cleanest baseline is at $R_1{=}0$ --- while at $\lambda{=}10$ a small $R_1$ rescues frozen-feature heads from collapse. This supports the claim in \parheadref{sec:exp_ablations} that $R_1$ here functions as a stabilizer for aggressive $\lambda$ rather than as a vanishing-gradient remedy as in the standard GAN literature~\citep{mescheder2018ganconvergence}.
\Cref{tab:feature_ablation} reports the full $7 \times 7$ feature-space ablation summarized by \cref{tab:feature_ablation_short} in the main text: FD and KD when training a discriminator on each of seven embedders (columns) and evaluating on each of seven embedders (rows). The training setup is identical across embedders (\cref{app:setup_feature_ablation}); only the embedder varies.
We see that even with the additional embedders the conclusions are unchanged.

\paragraph{CFG effect.}
Finally, to provide additional insight into how our method interacts with CFG, \Cref{fig:cfg_effect_reward_raw} and \Cref{fig:cfg_effect_fid_raw} break down the per-reward and per-embedder FD performance of \drl as a function of the CFG scale. The reward plot is constructed by running CFG and using the same setup for estimating rewards described earlier. The FD plot is constructed by taking the minimum over the three CFG interval values described in \cref{app:cfg_tuning} at each CFG scale. We didn't notice meaningful differences across the three intervals for the rewards. 
Several observations are in order.

For the reward values, we see that \drl has a strong positive effect that is maintained at all CFG levels. For some (model, reward) pairs (e.g., SiT on ImageReward) \drl with no CFG already attains higher reward than the base model at any CFG value, and the trend is consistent across model classes. The only exception is RAE, where a small $\lambda$ slightly decreases the reward; this disappears at larger $\lambda$.

For FD the picture is more mixed, with two evident patterns. First, on Inception and SigLIP smaller $\lambda$ generally yields better performance, while on DINOv2 and DINOv3 larger $\lambda$ is generally more beneficial. Second, CFG becomes much less effective the larger $\lambda$ gets.
We believe these are artifacts of the current training setup, in which we only finetune the conditional branch (a constraint imposed by adjoint matching). Finding ways to make the model less dependent on $\lambda$ and trainable jointly with both the CFG and non-CFG branches is an interesting direction for future work.

\subsection{Alignment: Quantitative $\lambda$ Sweep}
\label{app:full_results}

\ifneurips
\begin{figure}[H]
    \centering
    \includegraphics[width=\textwidth]{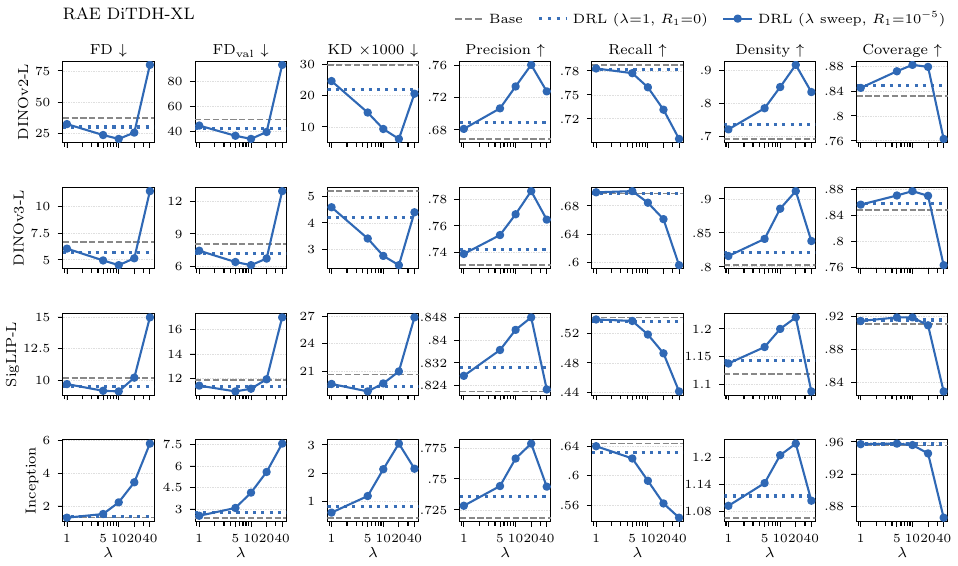}

    \vspace{1.5em}

    \includegraphics[width=\textwidth]{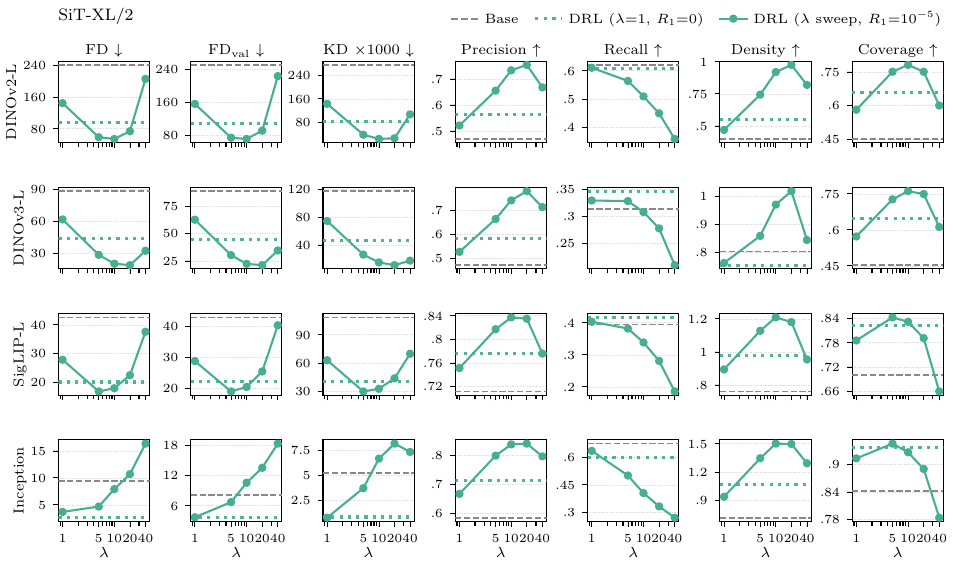}
    \caption{\textbf{Quantitative lambda sweep (RAE + SiT).} Each subplot shows one metric for one embedder. Dashed gray: base model at CFG${}=1$. Dotted (model color): theoretically motivated $\lambda{=}1$ ($R_1{=}0$) checkpoint. Solid line with markers: DRL values at $\lambda \in \{1, 5, 10, 20, 40\}$ from the $R_1{=}10^{-5}$ sweep. Rows: embedders (DINOv2-L, DINOv3-L, SigLIP-L, Inception). Columns: metrics (FD$\downarrow$, FD$_{\mathrm{val}}\downarrow$, KD$\downarrow$, Precision$\uparrow$, Recall$\uparrow$, Density$\uparrow$, Coverage$\uparrow$).}
    \label{fig:lambda_sweep_1}
\end{figure}
\fi
\ifmeta
\begin{figure}[H]
    \centering
    \includegraphics[width=\textwidth]{figures/lambda-sweep-quantitative/lambda_sweep_p1.pdf}

    \vspace{1.5em}

    \includegraphics[width=\textwidth]{figures/lambda-sweep-quantitative/lambda_sweep_p2.pdf}
    \caption{\textbf{Quantitative lambda sweep (RAE + SiT).} Each subplot shows one metric for one embedder. Dashed gray: base model at CFG${}=1$. Dotted (model color): theoretically motivated $\lambda{=}1$ ($R_1{=}0$) checkpoint. Solid line with markers: DRL values at $\lambda \in \{1, 5, 10, 20, 40\}$ from the $R_1{=}10^{-5}$ sweep. Rows: embedders (DINOv2-L, DINOv3-L, SigLIP-L, Inception). Columns: metrics (FD$\downarrow$, FD$_{\mathrm{val}}\downarrow$, KD$\downarrow$, Precision$\uparrow$, Recall$\uparrow$, Density$\uparrow$, Coverage$\uparrow$).}
    \label{fig:lambda_sweep_1}
\end{figure}
\fi

\ifneurips
\begin{figure}[H]
    \centering
    \includegraphics[width=\textwidth]{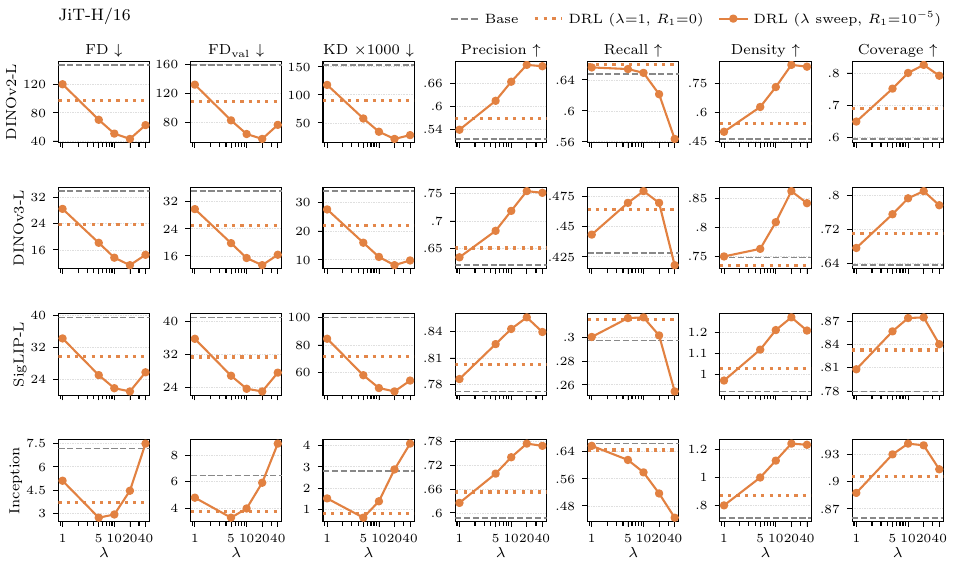}

    \vspace{1em}

    \includegraphics[width=\textwidth]{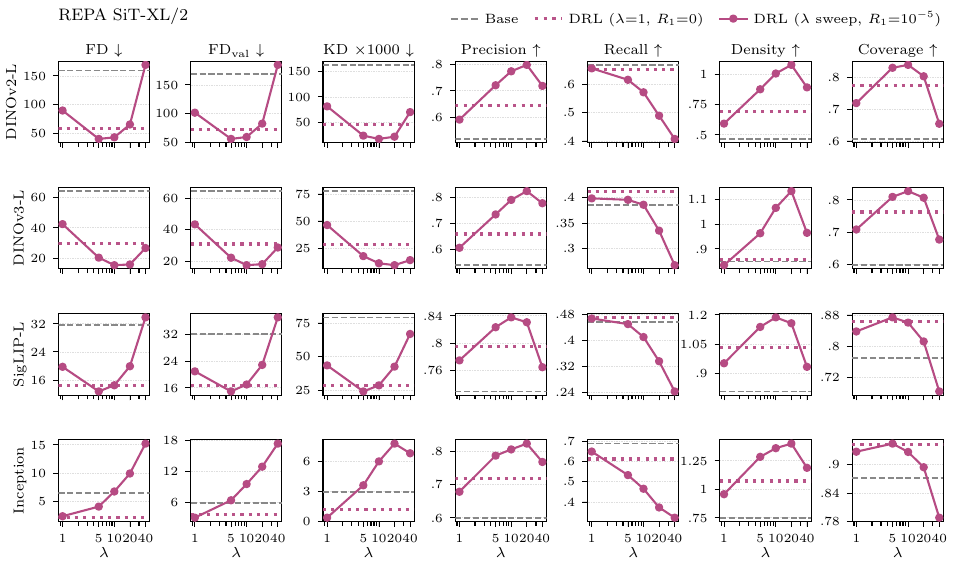}
    \caption{\textbf{Quantitative lambda sweep (JiT + REPA).} Same layout as \cref{fig:lambda_sweep_1}.}
    \label{fig:lambda_sweep_2}
\end{figure}
\fi
\ifmeta
\begin{figure}[H]
    \centering
    \includegraphics[width=\textwidth]{figures/lambda-sweep-quantitative/lambda_sweep_p3.pdf}

    \vspace{1em}

    \includegraphics[width=\textwidth]{figures/lambda-sweep-quantitative/lambda_sweep_p4.pdf}
    \caption{\textbf{Quantitative lambda sweep (JiT + REPA).} Same layout as \cref{fig:lambda_sweep_1}.}
    \label{fig:lambda_sweep_2}
\end{figure}
\fi

\clearpage

\subsection{Alignment: Full Distribution Metrics (PRDC)}
\label{app:prdc}

\begin{table}[H]
\centering
\scriptsize
\setlength{\tabcolsep}{2.5pt}
\caption{Distribution metrics — No CFG. For each (model, embedder) pair, all rows are evaluated at cfg${}=1$; the DRL ($\lambda^*$) row uses the ($\lambda$, time interval) with the lowest FD. All seven metrics come from a single sweep entry. For each (model, embedder), three rows compare Base, DRL at the theoretically motivated $\lambda{=}1$ ($R_1{=}0$), and DRL with $\lambda$ tuned over $\{1,5,10,20,40\}$. \textbf{Bold} marks the best value per metric within each block. FD is computed against the ImageNet training set; FD$_{\mathrm{val}}$ against the validation set. KD is reported as value~$\pm$~standard error of the subset-mean estimator (100 random subsets). The right-most column shows the $(\lambda, R_1)$ of the run each row is drawn from.}
\label{tab:prdc_prdc_no_cfg}
\begin{tabular}{lll c c c c c c c @{\hspace{8pt}} l}
\toprule
Model & Embedder & Method & FD$\downarrow$ & FD$_{\mathrm{val}}\downarrow$ & KD$\downarrow$ & Prec$\uparrow$ & Rec$\uparrow$ & Dens$\uparrow$ & Cov$\uparrow$ & $(\lambda, R_1)$ \\
\midrule
\multirow{12}{*}{RAE} & \multirow{3}{*}{DINOv2-L} & Base & 37.47 & 49.27 & 29.75\,{\scriptsize $\pm$ 0.33} & 0.669 & \textbf{0.787} & 0.692 & 0.832 & $(--,\,--)$ \\
 &  & DRL ($\lambda{=}1$, $R_1{=}0$) & 30.17 & 42.47 & 22.00\,{\scriptsize $\pm$ 0.29} & 0.689 & 0.781 & 0.737 & 0.849 & $(1,\,0)$ \\
 &  & DRL ($\lambda^*$) & \textbf{20.60} & \textbf{33.88} & \textbf{9.31}\,{\scriptsize $\pm$ 0.20} & \textbf{0.734} & 0.759 & \textbf{0.849} & \textbf{0.882} & $(10,\,10^{-5})$ \\
\cmidrule(lr){2-11}
 & \multirow{3}{*}{DINOv3-L} & Base & 6.66 & 8.04 & 5.20\,{\scriptsize $\pm$ 0.03} & 0.731 & 0.697 & 0.802 & 0.849 & $(--,\,--)$ \\
 &  & DRL ($\lambda{=}1$, $R_1{=}0$) & 5.71 & 7.14 & 4.21\,{\scriptsize $\pm$ 0.03} & 0.742 & \textbf{0.698} & 0.821 & 0.859 & $(1,\,0)$ \\
 &  & DRL ($\lambda^*$) & \textbf{4.49} & \textbf{6.10} & \textbf{2.75}\,{\scriptsize $\pm$ 0.02} & \textbf{0.769} & 0.684 & \textbf{0.886} & \textbf{0.877} & $(10,\,10^{-5})$ \\
\cmidrule(lr){2-11}
 & \multirow{3}{*}{SigLIP-L} & Base & 10.19 & 11.88 & 20.64\,{\scriptsize $\pm$ 0.13} & 0.822 & \textbf{0.541} & 1.118 & 0.910 & $(--,\,--)$ \\
 &  & DRL ($\lambda{=}1$, $R_1{=}0$) & 9.53 & 11.31 & \textbf{19.30}\,{\scriptsize $\pm$ 0.15} & 0.830 & 0.535 & 1.143 & 0.915 & $(1,\,0)$ \\
 &  & DRL ($\lambda^*$) & \textbf{9.13} & \textbf{11.15} & 19.68\,{\scriptsize $\pm$ 0.17} & \textbf{0.844} & 0.518 & \textbf{1.200} & \textbf{0.919} & $(10,\,10^{-5})$ \\
\cmidrule(lr){2-11}
 & \multirow{3}{*}{Incep.} & Base & 1.31 & \textbf{2.39} & \textbf{0.43}\,{\scriptsize $\pm$ 0.02} & 0.719 & \textbf{0.643} & 1.066 & 0.956 & $(--,\,--)$ \\
 &  & DRL ($\lambda{=}1$, $R_1{=}0$) & 1.38 & 2.72 & 0.82\,{\scriptsize $\pm$ 0.03} & \textbf{0.736} & 0.632 & \textbf{1.114} & \textbf{0.957} & $(1,\,0)$ \\
 &  & DRL ($\lambda^*$) & \textbf{1.29} & 2.52 & 0.61\,{\scriptsize $\pm$ 0.03} & 0.728 & 0.640 & 1.092 & 0.957 & $(1,\,10^{-5})$ \\
\midrule
\multirow{12}{*}{SiT} & \multirow{3}{*}{DINOv2-L} & Base & 241.93 & 249.99 & 276.55\,{\scriptsize $\pm$ 1.28} & 0.469 & \textbf{0.621} & 0.399 & 0.451 & $(--,\,--)$ \\
 &  & DRL ($\lambda{=}1$, $R_1{=}0$) & 95.51 & 108.56 & 81.43\,{\scriptsize $\pm$ 0.51} & 0.564 & 0.609 & 0.551 & 0.659 & $(1,\,0)$ \\
 &  & DRL ($\lambda^*$) & \textbf{54.37} & \textbf{70.97} & \textbf{22.62}\,{\scriptsize $\pm$ 0.18} & \textbf{0.733} & 0.510 & \textbf{0.920} & \textbf{0.782} & $(10,\,10^{-5})$ \\
\cmidrule(lr){2-11}
 & \multirow{3}{*}{DINOv3-L} & Base & 88.22 & 88.53 & 117.44\,{\scriptsize $\pm$ 0.51} & 0.470 & 0.314 & 0.803 & 0.453 & $(--,\,--)$ \\
 &  & DRL ($\lambda{=}1$, $R_1{=}0$) & 43.76 & 44.88 & 46.23\,{\scriptsize $\pm$ 0.23} & 0.581 & \textbf{0.347} & 0.754 & 0.647 & $(1,\,0)$ \\
 &  & DRL ($\lambda^*$) & \textbf{19.25} & \textbf{21.19} & \textbf{10.90}\,{\scriptsize $\pm$ 0.05} & \textbf{0.781} & 0.278 & \textbf{1.019} & \textbf{0.749} & $(20,\,10^{-5})$ \\
\cmidrule(lr){2-11}
 & \multirow{3}{*}{SigLIP-L} & Base & 42.59 & 42.91 & 108.37\,{\scriptsize $\pm$ 0.52} & 0.712 & 0.395 & 0.761 & 0.700 & $(--,\,--)$ \\
 &  & DRL ($\lambda{=}1$, $R_1{=}0$) & 20.17 & 22.11 & 40.37\,{\scriptsize $\pm$ 0.24} & 0.776 & \textbf{0.417} & 0.977 & 0.823 & $(1,\,0)$ \\
 &  & DRL ($\lambda^*$) & \textbf{16.90} & \textbf{18.94} & \textbf{30.20}\,{\scriptsize $\pm$ 0.21} & \textbf{0.817} & 0.382 & \textbf{1.128} & \textbf{0.843} & $(5,\,10^{-5})$ \\
\cmidrule(lr){2-11}
 & \multirow{3}{*}{Incep.} & Base & 9.38 & 8.13 & 5.27\,{\scriptsize $\pm$ 0.08} & 0.586 & \textbf{0.675} & 0.719 & 0.843 & $(--,\,--)$ \\
 &  & DRL ($\lambda{=}1$, $R_1{=}0$) & \textbf{2.62} & \textbf{3.63} & \textbf{0.85}\,{\scriptsize $\pm$ 0.02} & \textbf{0.713} & 0.597 & \textbf{1.068} & \textbf{0.937} & $(1,\,0)$ \\
 &  & DRL ($\lambda^*$) & \textbf{2.62} & \textbf{3.63} & \textbf{0.85}\,{\scriptsize $\pm$ 0.02} & \textbf{0.713} & 0.597 & \textbf{1.068} & \textbf{0.937} & $(1,\,0)$ \\
\midrule
\multirow{12}{*}{JiT} & \multirow{3}{*}{DINOv2-L} & Base & 148.03 & 158.85 & 153.22\,{\scriptsize $\pm$ 0.71} & 0.515 & 0.648 & 0.463 & 0.596 & $(--,\,--)$ \\
 &  & DRL ($\lambda{=}1$, $R_1{=}0$) & 97.33 & 109.00 & 89.20\,{\scriptsize $\pm$ 0.52} & 0.568 & \textbf{0.659} & 0.545 & 0.690 & $(1,\,0)$ \\
 &  & DRL ($\lambda^*$) & \textbf{42.84} & \textbf{57.29} & \textbf{21.31}\,{\scriptsize $\pm$ 0.20} & \textbf{0.709} & 0.622 & \textbf{0.847} & \textbf{0.825} & $(20,\,10^{-5})$ \\
\cmidrule(lr){2-11}
 & \multirow{3}{*}{DINOv3-L} & Base & 33.89 & 35.11 & 33.84\,{\scriptsize $\pm$ 0.13} & 0.620 & 0.428 & 0.749 & 0.636 & $(--,\,--)$ \\
 &  & DRL ($\lambda{=}1$, $R_1{=}0$) & 23.66 & 25.08 & 21.87\,{\scriptsize $\pm$ 0.09} & 0.651 & 0.464 & 0.735 & 0.711 & $(1,\,0)$ \\
 &  & DRL ($\lambda^*$) & \textbf{11.32} & \textbf{13.24} & \textbf{8.13}\,{\scriptsize $\pm$ 0.03} & \textbf{0.754} & \textbf{0.470} & \textbf{0.862} & \textbf{0.810} & $(20,\,10^{-5})$ \\
\cmidrule(lr){2-11}
 & \multirow{3}{*}{SigLIP-L} & Base & 39.41 & 40.94 & 100.22\,{\scriptsize $\pm$ 0.38} & 0.772 & 0.297 & 0.921 & 0.779 & $(--,\,--)$ \\
 &  & DRL ($\lambda{=}1$, $R_1{=}0$) & 29.72 & 31.36 & 71.41\,{\scriptsize $\pm$ 0.28} & 0.802 & \textbf{0.315} & 1.030 & 0.833 & $(1,\,0)$ \\
 &  & DRL ($\lambda^*$) & \textbf{20.99} & \textbf{23.04} & \textbf{45.97}\,{\scriptsize $\pm$ 0.20} & \textbf{0.856} & 0.302 & \textbf{1.270} & \textbf{0.875} & $(20,\,10^{-5})$ \\
\cmidrule(lr){2-11}
 & \multirow{3}{*}{Incep.} & Base & 7.16 & 6.47 & 2.81\,{\scriptsize $\pm$ 0.05} & 0.588 & \textbf{0.664} & 0.714 & 0.860 & $(--,\,--)$ \\
 &  & DRL ($\lambda{=}1$, $R_1{=}0$) & 3.72 & 3.73 & 0.79\,{\scriptsize $\pm$ 0.02} & 0.652 & 0.645 & 0.871 & 0.905 & $(1,\,0)$ \\
 &  & DRL ($\lambda^*$) & \textbf{2.73} & \textbf{3.28} & \textbf{0.61}\,{\scriptsize $\pm$ 0.01} & \textbf{0.699} & 0.616 & \textbf{1.001} & \textbf{0.930} & $(5,\,10^{-5})$ \\
\midrule
\multirow{12}{*}{REPA} & \multirow{3}{*}{DINOv2-L} & Base & 159.46 & 168.91 & 163.05\,{\scriptsize $\pm$ 0.95} & 0.517 & \textbf{0.668} & 0.465 & 0.606 & $(--,\,--)$ \\
 &  & DRL ($\lambda{=}1$, $R_1{=}0$) & 58.33 & 72.52 & 45.83\,{\scriptsize $\pm$ 0.33} & 0.644 & 0.653 & 0.693 & 0.773 & $(1,\,0)$ \\
 &  & DRL ($\lambda^*$) & \textbf{40.33} & \textbf{56.12} & \textbf{23.77}\,{\scriptsize $\pm$ 0.22} & \textbf{0.721} & 0.615 & \textbf{0.877} & \textbf{0.829} & $(5,\,10^{-5})$ \\
\cmidrule(lr){2-11}
 & \multirow{3}{*}{DINOv3-L} & Base & 63.87 & 64.46 & 77.62\,{\scriptsize $\pm$ 0.38} & 0.540 & 0.385 & 0.848 & 0.597 & $(--,\,--)$ \\
 &  & DRL ($\lambda{=}1$, $R_1{=}0$) & 29.43 & 30.81 & 28.16\,{\scriptsize $\pm$ 0.16} & 0.660 & \textbf{0.412} & 0.855 & 0.763 & $(1,\,0)$ \\
 &  & DRL ($\lambda^*$) & \textbf{15.58} & \textbf{17.37} & \textbf{10.88}\,{\scriptsize $\pm$ 0.06} & \textbf{0.791} & 0.386 & \textbf{1.066} & \textbf{0.827} & $(10,\,10^{-5})$ \\
\cmidrule(lr){2-11}
 & \multirow{3}{*}{SigLIP-L} & Base & 31.56 & 32.13 & 79.18\,{\scriptsize $\pm$ 0.45} & 0.729 & 0.456 & 0.809 & 0.769 & $(--,\,--)$ \\
 &  & DRL ($\lambda{=}1$, $R_1{=}0$) & 14.34 & 16.56 & 28.71\,{\scriptsize $\pm$ 0.21} & 0.795 & \textbf{0.470} & 1.033 & 0.864 & $(1,\,0)$ \\
 &  & DRL ($\lambda^*$) & \textbf{12.77} & \textbf{14.88} & \textbf{23.97}\,{\scriptsize $\pm$ 0.21} & \textbf{0.823} & 0.449 & \textbf{1.138} & \textbf{0.874} & $(5,\,10^{-5})$ \\
\cmidrule(lr){2-11}
 & \multirow{3}{*}{Incep.} & Base & 6.48 & 5.81 & 2.94\,{\scriptsize $\pm$ 0.06} & 0.600 & \textbf{0.687} & 0.750 & 0.872 & $(--,\,--)$ \\
 &  & DRL ($\lambda{=}1$, $R_1{=}0$) & \textbf{2.14} & \textbf{3.66} & \textbf{1.16}\,{\scriptsize $\pm$ 0.03} & \textbf{0.719} & 0.611 & \textbf{1.072} & \textbf{0.943} & $(1,\,0)$ \\
 &  & DRL ($\lambda^*$) & \textbf{2.14} & \textbf{3.66} & \textbf{1.16}\,{\scriptsize $\pm$ 0.03} & \textbf{0.719} & 0.611 & \textbf{1.072} & \textbf{0.943} & $(1,\,0)$ \\
\bottomrule
\end{tabular}
\end{table}

\clearpage

\begin{table}[H]
\centering
\scriptsize
\setlength{\tabcolsep}{2.5pt}
\caption{Distribution metrics — Best CFG. For each (model, embedder) pair and each method row, the (CFG scale, time interval) combination minimising FD is selected; all seven metrics are taken from that single run. For each (model, embedder), three rows compare Base, DRL at the theoretically motivated $\lambda{=}1$ ($R_1{=}0$), and DRL with $\lambda$ tuned over $\{1,5,10,20,40\}$. \textbf{Bold} marks the best value per metric within each block. FD is computed against the ImageNet training set; FD$_{\mathrm{val}}$ against the validation set. KD is reported as value~$\pm$~standard error of the subset-mean estimator (100 random subsets). The right-most column shows the configuration $(\lambda, R_1, \mathrm{cfg}, [\mathrm{interval}])$ of the run each row is drawn from.}
\label{tab:prdc_prdc_best_cfg}
\begin{tabular}{lll c c c c c c c @{\hspace{8pt}} l}
\toprule
Model & Embedder & Method & FD$\downarrow$ & FD$_{\mathrm{val}}\downarrow$ & KD$\downarrow$ & Prec$\uparrow$ & Rec$\uparrow$ & Dens$\uparrow$ & Cov$\uparrow$ & $(\lambda, R_1, \mathrm{cfg}, [\mathrm{int.}])$ \\
\midrule
\multirow{12}{*}{RAE} & \multirow{3}{*}{DINOv2-L} & Base & 25.94 & 40.78 & 16.59\,{\scriptsize $\pm$ 0.22} & \textbf{0.764} & 0.677 & \textbf{0.924} & 0.887 & $(--,\,--,\,1.5,\,[0,1])$ \\
 &  & DRL ($\lambda{=}1$, $R_1{=}0$) & 23.66 & 37.74 & 15.22\,{\scriptsize $\pm$ 0.22} & 0.743 & \textbf{0.723} & 0.870 & 0.884 & $(1,\,0,\,1.25,\,[0,1])$ \\
 &  & DRL ($\lambda^*$) & \textbf{20.24} & \textbf{34.68} & \textbf{10.51}\,{\scriptsize $\pm$ 0.18} & 0.762 & 0.719 & 0.915 & \textbf{0.897} & $(5,\,10^{-5},\,1.25,\,[0,1])$ \\
\cmidrule(lr){2-11}
 & \multirow{3}{*}{DINOv3-L} & Base & 4.83 & 6.66 & 3.37\,{\scriptsize $\pm$ 0.02} & 0.810 & 0.608 & 0.992 & \textbf{0.895} & $(--,\,--,\,1.5,\,[0,1])$ \\
 &  & DRL ($\lambda{=}1$, $R_1{=}0$) & 4.57 & 6.44 & 2.91\,{\scriptsize $\pm$ 0.02} & \textbf{0.820} & 0.605 & \textbf{1.013} & 0.895 & $(1,\,0,\,1.5,\,[0,1])$ \\
 &  & DRL ($\lambda^*$) & \textbf{4.26} & \textbf{5.97} & \textbf{2.71}\,{\scriptsize $\pm$ 0.02} & 0.800 & \textbf{0.651} & 0.954 & 0.895 & $(5,\,10^{-5},\,1.25,\,[0,1])$ \\
\cmidrule(lr){2-11}
 & \multirow{3}{*}{SigLIP-L} & Base & 9.50 & 11.81 & 22.86\,{\scriptsize $\pm$ 0.20} & 0.855 & 0.494 & 1.252 & 0.925 & $(--,\,--,\,1.25,\,[0,1])$ \\
 &  & DRL ($\lambda{=}1$, $R_1{=}0$) & 9.40 & 11.77 & 23.36\,{\scriptsize $\pm$ 0.23} & \textbf{0.860} & 0.488 & \textbf{1.280} & \textbf{0.927} & $(1,\,0,\,1.25,\,[0,1])$ \\
 &  & DRL ($\lambda^*$) & \textbf{9.06} & \textbf{11.03} & \textbf{19.18}\,{\scriptsize $\pm$ 0.17} & 0.839 & \textbf{0.524} & 1.185 & 0.917 & $(10,\,10^{-5},\,1.75,\,[.1,1])$ \\
\cmidrule(lr){2-11}
 & \multirow{3}{*}{Incep.} & Base & 1.28 & 2.32 & \textbf{0.35}\,{\scriptsize $\pm$ 0.02} & 0.714 & 0.651 & 1.043 & \textbf{0.953} & $(--,\,--,\,2,\,[.1,1])$ \\
 &  & DRL ($\lambda{=}1$, $R_1{=}0$) & 1.25 & 2.43 & 0.49\,{\scriptsize $\pm$ 0.02} & \textbf{0.716} & 0.645 & \textbf{1.052} & 0.953 & $(1,\,0,\,2.5,\,[.1,1])$ \\
 &  & DRL ($\lambda^*$) & \textbf{1.19} & \textbf{2.31} & 0.38\,{\scriptsize $\pm$ 0.02} & 0.714 & \textbf{0.652} & 1.042 & 0.952 & $(1,\,10^{-5},\,2.25,\,[.1,1])$ \\
\midrule
\multirow{12}{*}{SiT} & \multirow{3}{*}{DINOv2-L} & Base & 46.96 & 62.68 & 28.17\,{\scriptsize $\pm$ 0.25} & \textbf{0.801} & 0.436 & \textbf{1.171} & \textbf{0.862} & $(--,\,--,\,2.75,\,[.1,1])$ \\
 &  & DRL ($\lambda{=}1$, $R_1{=}0$) & \textbf{42.36} & \textbf{58.94} & \textbf{23.46}\,{\scriptsize $\pm$ 0.25} & 0.796 & \textbf{0.465} & 1.148 & 0.860 & $(1,\,0,\,2.75,\,[.3,1])$ \\
 &  & DRL ($\lambda^*$) & \textbf{42.36} & \textbf{58.94} & \textbf{23.46}\,{\scriptsize $\pm$ 0.25} & 0.796 & \textbf{0.465} & 1.148 & 0.860 & $(1,\,0,\,2.75,\,[.3,1])$ \\
\cmidrule(lr){2-11}
 & \multirow{3}{*}{DINOv3-L} & Base & 21.63 & 24.10 & 16.70\,{\scriptsize $\pm$ 0.08} & 0.835 & 0.156 & 1.337 & 0.800 & $(--,\,--,\,3.5,\,[0,1])$ \\
 &  & DRL ($\lambda{=}1$, $R_1{=}0$) & 18.32 & 20.61 & 15.16\,{\scriptsize $\pm$ 0.08} & 0.837 & 0.220 & 1.303 & \textbf{0.840} & $(1,\,0,\,3,\,[.1,1])$ \\
 &  & DRL ($\lambda^*$) & \textbf{16.40} & \textbf{18.54} & \textbf{10.99}\,{\scriptsize $\pm$ 0.05} & \textbf{0.851} & \textbf{0.234} & \textbf{1.354} & 0.831 & $(10,\,10^{-5},\,2.25,\,[.3,1])$ \\
\cmidrule(lr){2-11}
 & \multirow{3}{*}{SigLIP-L} & Base & 11.81 & \textbf{13.48} & 25.30\,{\scriptsize $\pm$ 0.21} & 0.842 & 0.400 & 1.241 & \textbf{0.898} & $(--,\,--,\,3.5,\,[.3,1])$ \\
 &  & DRL ($\lambda{=}1$, $R_1{=}0$) & 12.17 & 14.95 & 27.46\,{\scriptsize $\pm$ 0.29} & 0.842 & \textbf{0.405} & 1.238 & 0.887 & $(1,\,0,\,2.25,\,[.3,1])$ \\
 &  & DRL ($\lambda^*$) & \textbf{11.58} & 13.64 & \textbf{24.76}\,{\scriptsize $\pm$ 0.26} & \textbf{0.845} & 0.401 & \textbf{1.253} & 0.896 & $(1,\,10^{-5},\,2.75,\,[.3,1])$ \\
\cmidrule(lr){2-11}
 & \multirow{3}{*}{Incep.} & Base & \textbf{1.50} & \textbf{2.90} & \textbf{0.70}\,{\scriptsize $\pm$ 0.02} & 0.713 & \textbf{0.626} & 1.065 & 0.941 & $(--,\,--,\,2,\,[.3,1])$ \\
 &  & DRL ($\lambda{=}1$, $R_1{=}0$) & 2.43 & 4.17 & 1.77\,{\scriptsize $\pm$ 0.04} & \textbf{0.742} & 0.582 & \textbf{1.169} & \textbf{0.947} & $(1,\,0,\,1.25,\,[.3,1])$ \\
 &  & DRL ($\lambda^*$) & 1.78 & 3.26 & 1.08\,{\scriptsize $\pm$ 0.03} & 0.734 & 0.610 & 1.136 & 0.946 & $(1,\,10^{-5},\,1.5,\,[.3,1])$ \\
\midrule
\multirow{12}{*}{JiT} & \multirow{3}{*}{DINOv2-L} & Base & 41.55 & 54.96 & 31.69\,{\scriptsize $\pm$ 0.25} & 0.737 & 0.576 & 0.896 & 0.851 & $(--,\,--,\,4,\,[.1,1])$ \\
 &  & DRL ($\lambda{=}1$, $R_1{=}0$) & 36.10 & 49.96 & 24.87\,{\scriptsize $\pm$ 0.23} & 0.753 & \textbf{0.585} & 0.943 & 0.867 & $(1,\,0,\,3.5,\,[.1,1])$ \\
 &  & DRL ($\lambda^*$) & \textbf{30.64} & \textbf{45.59} & \textbf{15.39}\,{\scriptsize $\pm$ 0.17} & \textbf{0.788} & 0.574 & \textbf{1.054} & \textbf{0.884} & $(10,\,10^{-5},\,2.75,\,[.1,1])$ \\
\cmidrule(lr){2-11}
 & \multirow{3}{*}{DINOv3-L} & Base & 10.89 & 13.16 & 8.56\,{\scriptsize $\pm$ 0.04} & 0.813 & 0.376 & 1.038 & 0.841 & $(--,\,--,\,2.75,\,[0,1])$ \\
 &  & DRL ($\lambda{=}1$, $R_1{=}0$) & 10.19 & 12.43 & 7.74\,{\scriptsize $\pm$ 0.03} & 0.816 & 0.397 & \textbf{1.039} & 0.847 & $(1,\,0,\,2.25,\,[0,1])$ \\
 &  & DRL ($\lambda^*$) & \textbf{8.10} & \textbf{10.14} & \textbf{5.94}\,{\scriptsize $\pm$ 0.03} & \textbf{0.818} & \textbf{0.456} & 1.029 & \textbf{0.862} & $(20,\,10^{-5},\,3.25,\,[.1,1])$ \\
\cmidrule(lr){2-11}
 & \multirow{3}{*}{SigLIP-L} & Base & 18.21 & 19.93 & 46.80\,{\scriptsize $\pm$ 0.19} & 0.837 & 0.348 & 1.168 & 0.888 & $(--,\,--,\,3.5,\,[.1,1])$ \\
 &  & DRL ($\lambda{=}1$, $R_1{=}0$) & 17.11 & 18.94 & 42.95\,{\scriptsize $\pm$ 0.20} & 0.846 & \textbf{0.348} & 1.206 & 0.896 & $(1,\,0,\,3,\,[.1,1])$ \\
 &  & DRL ($\lambda^*$) & \textbf{16.47} & \textbf{18.43} & \textbf{42.03}\,{\scriptsize $\pm$ 0.21} & \textbf{0.854} & 0.339 & \textbf{1.253} & \textbf{0.899} & $(5,\,10^{-5},\,3,\,[.1,1])$ \\
\cmidrule(lr){2-11}
 & \multirow{3}{*}{Incep.} & Base & 1.91 & 3.16 & 0.74\,{\scriptsize $\pm$ 0.02} & 0.684 & \textbf{0.629} & 0.950 & 0.928 & $(--,\,--,\,2.25,\,[.1,1])$ \\
 &  & DRL ($\lambda{=}1$, $R_1{=}0$) & \textbf{1.87} & \textbf{3.05} & \textbf{0.74}\,{\scriptsize $\pm$ 0.02} & \textbf{0.698} & 0.627 & \textbf{0.976} & \textbf{0.932} & $(1,\,0,\,2.5,\,[.3,1])$ \\
 &  & DRL ($\lambda^*$) & \textbf{1.87} & \textbf{3.05} & \textbf{0.74}\,{\scriptsize $\pm$ 0.02} & \textbf{0.698} & 0.627 & \textbf{0.976} & \textbf{0.932} & $(1,\,0,\,2.5,\,[.3,1])$ \\
\midrule
\multirow{12}{*}{REPA} & \multirow{3}{*}{DINOv2-L} & Base & 36.87 & 52.75 & 22.95\,{\scriptsize $\pm$ 0.22} & \textbf{0.809} & 0.511 & \textbf{1.111} & \textbf{0.884} & $(--,\,--,\,2.75,\,[.1,1])$ \\
 &  & DRL ($\lambda{=}1$, $R_1{=}0$) & \textbf{32.87} & \textbf{49.11} & \textbf{20.44}\,{\scriptsize $\pm$ 0.21} & 0.797 & \textbf{0.567} & 1.058 & 0.882 & $(1,\,0,\,2.75,\,[.3,1])$ \\
 &  & DRL ($\lambda^*$) & \textbf{32.87} & \textbf{49.11} & \textbf{20.44}\,{\scriptsize $\pm$ 0.21} & 0.797 & \textbf{0.567} & 1.058 & 0.882 & $(1,\,0,\,2.75,\,[.3,1])$ \\
\cmidrule(lr){2-11}
 & \multirow{3}{*}{DINOv3-L} & Base & 18.60 & 20.97 & 14.02\,{\scriptsize $\pm$ 0.08} & 0.831 & 0.238 & 1.233 & 0.818 & $(--,\,--,\,3.25,\,[0,1])$ \\
 &  & DRL ($\lambda{=}1$, $R_1{=}0$) & 14.58 & 16.80 & 11.71\,{\scriptsize $\pm$ 0.06} & 0.841 & 0.324 & 1.235 & \textbf{0.866} & $(1,\,0,\,2.75,\,[.1,1])$ \\
 &  & DRL ($\lambda^*$) & \textbf{13.01} & \textbf{15.00} & \textbf{9.02}\,{\scriptsize $\pm$ 0.05} & \textbf{0.852} & \textbf{0.354} & \textbf{1.243} & 0.862 & $(10,\,10^{-5},\,2.25,\,[.3,1])$ \\
\cmidrule(lr){2-11}
 & \multirow{3}{*}{SigLIP-L} & Base & 9.52 & \textbf{11.18} & \textbf{20.65}\,{\scriptsize $\pm$ 0.20} & 0.830 & \textbf{0.488} & 1.141 & \textbf{0.910} & $(--,\,--,\,3.75,\,[.3,1])$ \\
 &  & DRL ($\lambda{=}1$, $R_1{=}0$) & 10.02 & 12.77 & 23.03\,{\scriptsize $\pm$ 0.27} & 0.832 & 0.475 & \textbf{1.172} & 0.899 & $(1,\,0,\,2.25,\,[.3,1])$ \\
 &  & DRL ($\lambda^*$) & \textbf{9.32} & 11.34 & 20.71\,{\scriptsize $\pm$ 0.23} & \textbf{0.833} & 0.482 & 1.165 & 0.907 & $(1,\,10^{-5},\,3,\,[.3,1])$ \\
\cmidrule(lr){2-11}
 & \multirow{3}{*}{Incep.} & Base & \textbf{1.21} & \textbf{2.77} & \textbf{0.71}\,{\scriptsize $\pm$ 0.03} & 0.702 & \textbf{0.642} & 1.004 & 0.939 & $(--,\,--,\,2.25,\,[.3,1])$ \\
 &  & DRL ($\lambda{=}1$, $R_1{=}0$) & 2.14 & 3.66 & 1.16\,{\scriptsize $\pm$ 0.03} & \textbf{0.719} & 0.611 & \textbf{1.072} & 0.943 & $(1,\,0,\,1,\,[0,1])$ \\
 &  & DRL ($\lambda^*$) & 1.50 & 3.05 & 0.94\,{\scriptsize $\pm$ 0.03} & 0.718 & 0.621 & 1.067 & \textbf{0.945} & $(1,\,10^{-5},\,1.5,\,[.3,1])$ \\
\bottomrule
\end{tabular}
\end{table}

\clearpage

\subsection{Image Quality: Reward Improvement vs.\ $\lambda$}
\label{app:reward_lambda_sweep}

\ifneurips
\begin{figure}[H]
    \centering
    \caption{\textbf{Reward improvement vs.\ $\lambda$.} Per-model normalized improvement on each of four held-out preference rewards. Solid lines with markers: DRL at $\lambda \in \{1,5,10,20,40\}$ from the $R_1{=}10^{-5}$ sweep. Dotted lines (in the model's color): theoretically motivated $\lambda{=}1$ ($R_1{=}0$) checkpoint. Dashed gray line at zero: Base model. Top row: CFG${}=1$ (no guidance). Bottom row: best CFG. Same units as \cref{fig:reward_improvement}.}
    \label{fig:reward_lambda_sweep}
    \includegraphics[width=\textwidth]{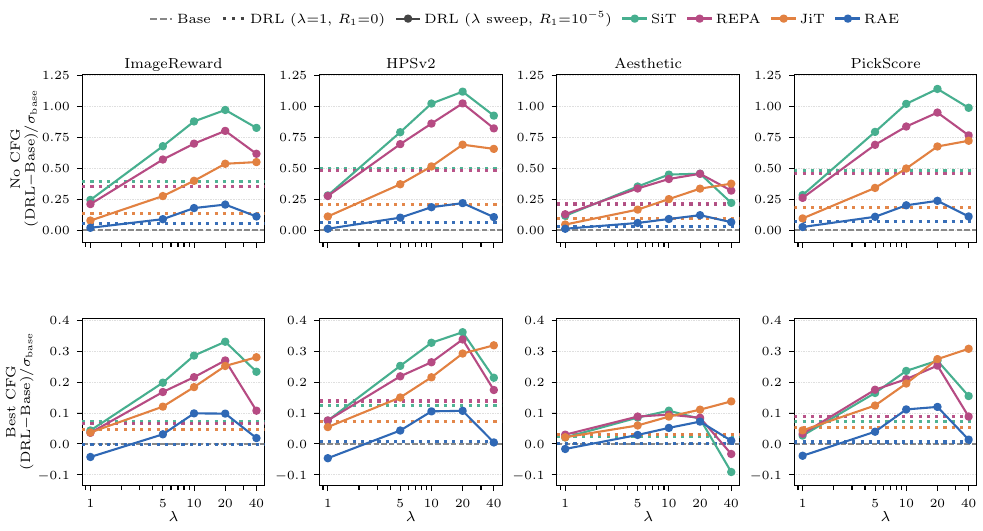}
\end{figure}
\fi
\ifmeta
\begin{figure}[H]
    \centering
    \caption{\textbf{Reward improvement vs.\ $\lambda$.} Per-model normalized improvement on each of four held-out preference rewards. Solid lines with markers: DRL at $\lambda \in \{1,5,10,20,40\}$ from the $R_1{=}10^{-5}$ sweep. Dotted lines (in the model's color): theoretically motivated $\lambda{=}1$ ($R_1{=}0$) checkpoint. Dashed gray line at zero: Base model. Top row: CFG${}=1$ (no guidance). Bottom row: best CFG. Same units as \cref{fig:reward_improvement}.}
    \label{fig:reward_lambda_sweep}
    \includegraphics[width=\textwidth]{figures/reward-improvement/reward_lambda_panels.pdf}
\end{figure}
\fi

\begin{table}[H]
\centering
\small
\caption{Numerical values for \Cref{fig:reward_improvement}. DRL is pinned to the headline $\lambda{=}10$; the guidance scale is tuned per (model, metric) cell. \textbf{Bold} indicates the better value within each Base/DRL pair.}
\label{tab:best_drl_reward}
\setlength{\tabcolsep}{4pt}
\begin{tabular}{ll cccc}
\toprule
& & \multicolumn{2}{c}{No guidance} & \multicolumn{2}{c}{Guidance} \\
\cmidrule(lr){3-4}\cmidrule(lr){5-6}
Model & Metric & Base & DRL & Base & DRL \\
\midrule
\multirow{4}{*}{SiT-XL/2} & ImageReward$\uparrow$ & -1.252 & \textbf{-0.486}  & -0.584 & \textbf{-0.309} \\
 & HPSv2$\uparrow$ & 0.197 & \textbf{0.224}  & 0.233 & \textbf{0.242} \\
 & Aesthetic$\uparrow$ & 3.149 & \textbf{3.362}  & 3.473 & \textbf{3.533} \\
 & PickScore$\uparrow$ & 19.140 & \textbf{20.086}  & 20.153 & \textbf{20.383} \\
\midrule
\multirow{4}{*}{REPA SiT-XL/2} & ImageReward$\uparrow$ & -1.187 & \textbf{-0.562}  & -0.622 & \textbf{-0.415} \\
 & HPSv2$\uparrow$ & 0.200 & \textbf{0.223}  & 0.230 & \textbf{0.237} \\
 & Aesthetic$\uparrow$ & 3.185 & \textbf{3.386}  & 3.456 & \textbf{3.508} \\
 & PickScore$\uparrow$ & 19.240 & \textbf{20.044}  & 20.094 & \textbf{20.292} \\
\midrule
\multirow{4}{*}{JiT-H/16} & ImageReward$\uparrow$ & -1.159 & \textbf{-0.804}  & -0.723 & \textbf{-0.551} \\
 & HPSv2$\uparrow$ & 0.196 & \textbf{0.210}  & 0.226 & \textbf{0.231} \\
 & Aesthetic$\uparrow$ & 3.078 & \textbf{3.192}  & 3.321 & \textbf{3.366} \\
 & PickScore$\uparrow$ & 19.239 & \textbf{19.710}  & 19.904 & \textbf{20.079} \\
\midrule
\multirow{4}{*}{RAE DiTDH-XL} & ImageReward$\uparrow$ & -0.933 & \textbf{-0.764}  & -0.583 & \textbf{-0.488} \\
 & HPSv2$\uparrow$ & 0.217 & \textbf{0.222}  & 0.227 & \textbf{0.230} \\
 & Aesthetic$\uparrow$ & 3.292 & \textbf{3.338}  & 3.398 & \textbf{3.425} \\
 & PickScore$\uparrow$ & 19.784 & \textbf{19.990}  & 20.187 & \textbf{20.297} \\
\bottomrule
\end{tabular}
\end{table}

\clearpage

\subsection{Image Quality / Preference RL: Pareto Plots}
\label{app:pareto}

\ifneurips
\begin{figure}[H]
    \centering
    \includegraphics[height=0.78\textheight]{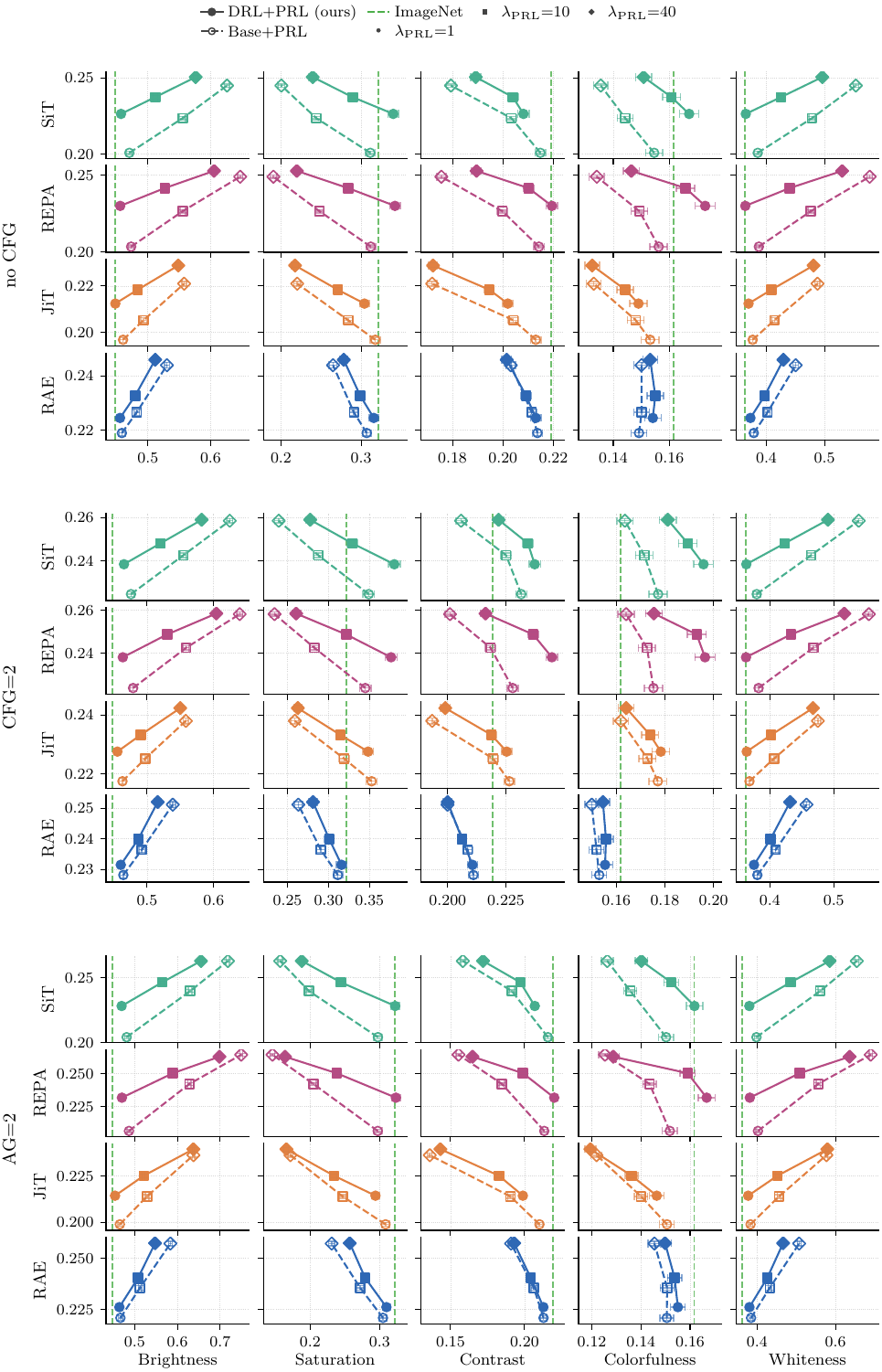}
    \caption{\textbf{HPSv2 vs.\ image statistics across guidance settings.} Each panel-group is a 4 (model) $\times$ 5 (image statistic) grid; rows correspond to no CFG (CFG${=}1$), CFG${=}2$, and autoguidance${=}2$ (see \cref{app:autoguidance} for context on autoguidance). Markers: base ($\bigstar$), Base$+$PRL (dashed, hollow), DRL$+$PRL (solid, filled) at $\lambda\in\{1,10,40\}$. Vertical dashed green line: ImageNet reference value per statistic. The autoguidance group uses each method's pre-PRL checkpoint as the negative signal at scale~2 (Base for Base$+$PRL, DRL for DRL$+$PRL); the base reference still uses standard CFG at the same scale. DRL$+$PRL consistently attains better Pareto fronts across guidance settings.}
    \label{fig:pareto_hpsv2_combined}
\end{figure}
\fi
\ifmeta
\begin{figure}[H]
    \centering
    \includegraphics[height=0.78\textheight]{figures/pareto-reward-overoptimization/pareto_hpsv2_combined.pdf}
    \caption{\textbf{HPSv2 vs.\ image statistics across guidance settings.} Each panel-group is a 4 (model) $\times$ 5 (image statistic) grid; rows correspond to no CFG (CFG${=}1$), CFG${=}2$, and autoguidance${=}2$ (see \cref{app:autoguidance} for context on autoguidance). Markers: base ($\bigstar$), Base$+$PRL (dashed, hollow), DRL$+$PRL (solid, filled) at $\lambda\in\{1,10,40\}$. Vertical dashed green line: ImageNet reference value per statistic. The autoguidance group uses each method's pre-PRL checkpoint as the negative signal at scale~2 (Base for Base$+$PRL, DRL for DRL$+$PRL); the base reference still uses standard CFG at the same scale. DRL$+$PRL consistently attains better Pareto fronts across guidance settings.}
    \label{fig:pareto_hpsv2_combined}
\end{figure}
\fi

\FloatBarrier
\clearpage

\ifneurips
\begin{figure}[H]
    \centering
    \includegraphics[height=0.78\textheight]{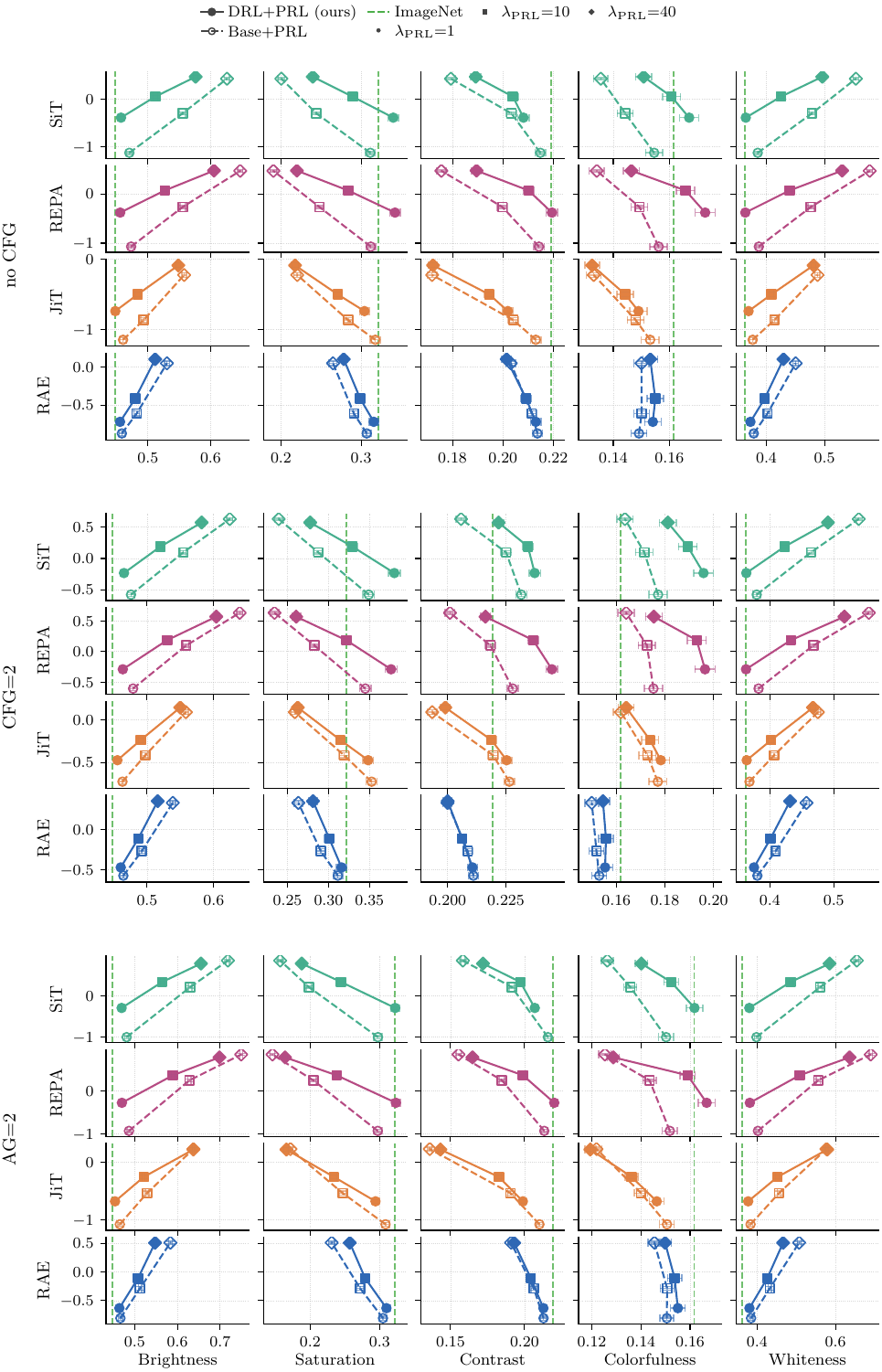}
    \caption{\textbf{ImageReward vs.\ image statistics across guidance settings.} Mirrors \cref{fig:pareto_hpsv2_combined} using ImageReward as the held-out reward. DRL$+$PRL consistently attains better Pareto fronts across guidance settings.}
    \label{fig:pareto_image_reward_combined}
\end{figure}
\fi
\ifmeta
\begin{figure}[H]
    \centering
    \includegraphics[height=0.78\textheight]{figures/pareto-reward-overoptimization/pareto_image_reward_combined.pdf}
    \caption{\textbf{ImageReward vs.\ image statistics across guidance settings.} Mirrors \cref{fig:pareto_hpsv2_combined} using ImageReward as the held-out reward. DRL$+$PRL consistently attains better Pareto fronts across guidance settings.}
    \label{fig:pareto_image_reward_combined}
\end{figure}
\fi

\FloatBarrier
\clearpage

\ifneurips
\begin{figure}[H]
    \centering
    \includegraphics[height=0.78\textheight]{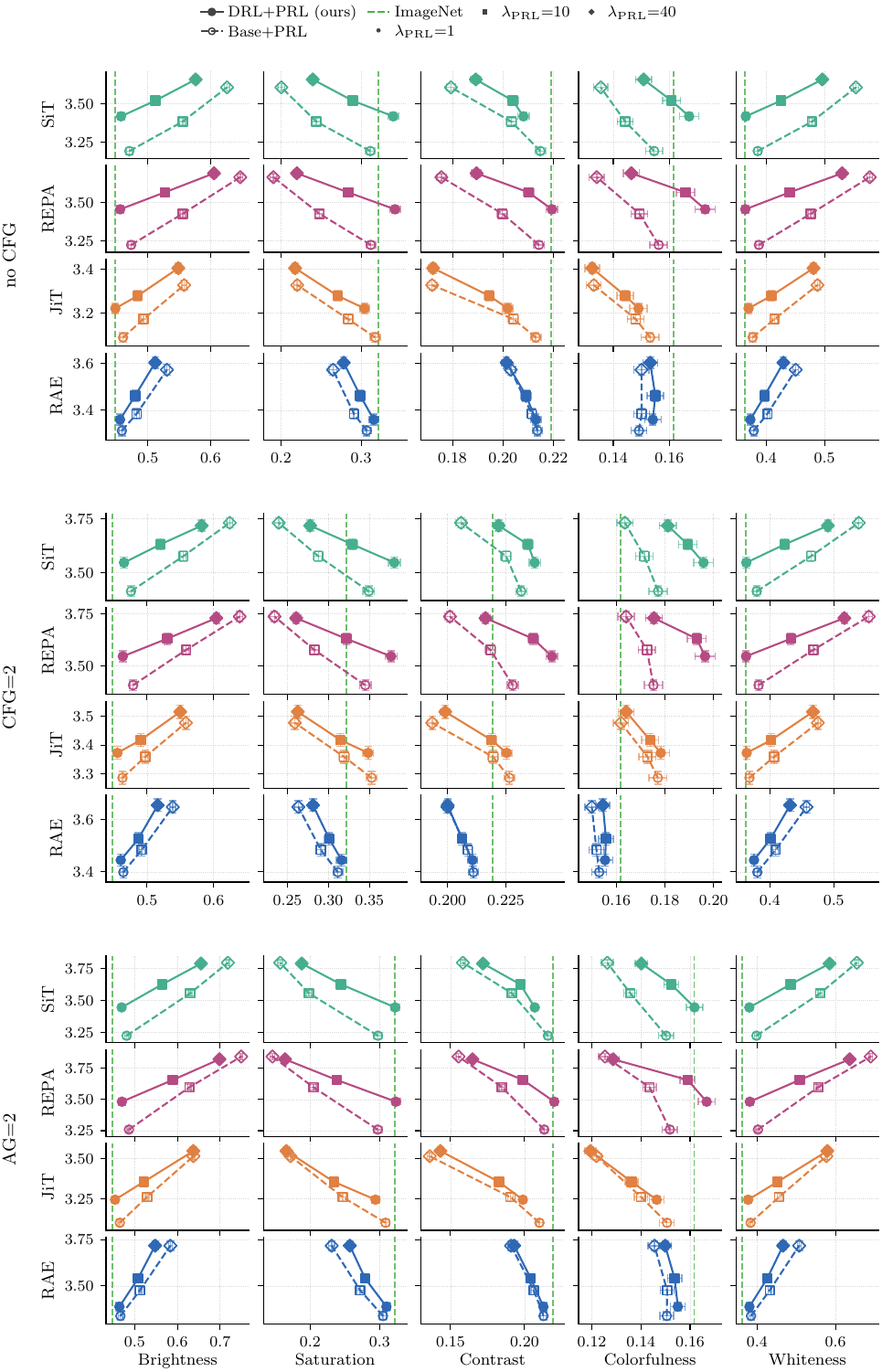}
    \caption{\textbf{Aesthetic v2.5 vs.\ image statistics across guidance settings.} Mirrors \cref{fig:pareto_hpsv2_combined} using Aesthetic v2.5 as the held-out reward. DRL$+$PRL consistently attains better Pareto fronts across guidance settings.}
    \label{fig:pareto_aesthetic_combined}
\end{figure}
\fi
\ifmeta
\begin{figure}[H]
    \centering
    \includegraphics[height=0.78\textheight]{figures/pareto-reward-overoptimization/pareto_aesthetic_combined.pdf}
    \caption{\textbf{Aesthetic v2.5 vs.\ image statistics across guidance settings.} Mirrors \cref{fig:pareto_hpsv2_combined} using Aesthetic v2.5 as the held-out reward. DRL$+$PRL consistently attains better Pareto fronts across guidance settings.}
    \label{fig:pareto_aesthetic_combined}
\end{figure}
\fi

\FloatBarrier
\clearpage

\subsection{Ablations: Discriminator Full $R_1$ Sweep}
\label{app:disc_ablation_full}

\ifneurips
\begin{figure}[h!]
    \centering
    \caption{\textbf{Discriminator ablation: full $R_1$ sweep.} DINOv2-L FD vs.\ $R_1$ for four head/training configurations, at $\lambda{=}1$ (left) and $\lambda{=}10$ (right). Top: fixed-seed samples at four representative configurations. Dashed gray line: base model FD.}
    \label{fig:disc_ablation_full}
    \includegraphics[width=0.78\textwidth]{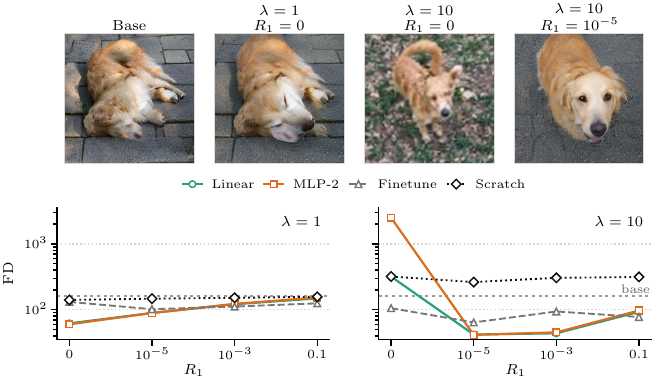}
\end{figure}
\fi
\ifmeta
\begin{figure}[h!]
    \centering
    \caption{\textbf{Discriminator ablation: full $R_1$ sweep.} DINOv2-L FD vs.\ $R_1$ for four head/training configurations, at $\lambda{=}1$ (left) and $\lambda{=}10$ (right). Top: fixed-seed samples at four representative configurations. Dashed gray line: base model FD.}
    \label{fig:disc_ablation_full}
    \includegraphics[width=0.78\textwidth]{figures/ablations/disc_ablation/disc_ablation_linked_sweeps.pdf}
\end{figure}
\fi

\subsection{Ablations: Full Feature-Space Sweep}
\label{app:feature_ablation_full}

\begin{table}[h!]
    \centering
    \caption{\textbf{Feature space ablation results.} Fr\'echet Distance (FD, $\downarrow$) and Kernel Distance (KD$\times 10^3$, $\downarrow$) when training the discriminator in different feature spaces (columns) and evaluating in different feature spaces (rows). Best fine-tuned result per row in \textbf{bold}. Training a discriminator on a particular embedding consistently improves downstream alignment metrics based on that embedder; DINOv2-L gives the best cross-feature transfer.}
    \label{tab:feature_ablation}
    \small
    \setlength{\tabcolsep}{3.5pt}
    \begin{tabular}{l l c c c c c c c c }
\toprule
 & & & \multicolumn{7}{c}{\textit{Training feature space}} \\
\cmidrule(lr){4-10}
 & \textit{Eval.\ space} & Base & DINOv2-B & DINOv2-L & DINOv3-B & DINOv3-L & SigLIP-B & SigLIP-L & Incep. \\
\midrule
\multirow{7}{*}{\rotatebox[origin=c]{90}{FD}} & Inception & \cellcolor[HTML]{F5D3BC}6.43 & \cellcolor[HTML]{BFE3D8}2.18 & \cellcolor[HTML]{BEE3D7}\textbf{2.14} & \cellcolor[HTML]{BFE3D8}2.19 & \cellcolor[HTML]{D5EDE5}3.02 & \cellcolor[HTML]{C9E8DE}2.57 & \cellcolor[HTML]{C3E5DB}2.34 & \cellcolor[HTML]{F8E0D0}5.68 \\
 & DINOv2-B & \cellcolor[HTML]{F5D3BC}130 & \cellcolor[HTML]{C7E7DD}50.8 & \cellcolor[HTML]{BEE3D7}\textbf{44.3} & \cellcolor[HTML]{CBE9DF}54.3 & \cellcolor[HTML]{C2E4DA}47.3 & \cellcolor[HTML]{E4F3EE}72.9 & \cellcolor[HTML]{D9EEE8}64.5 & \cellcolor[HTML]{F6D9C5}123 \\
 & DINOv2-L & \cellcolor[HTML]{F5D3BC}159 & \cellcolor[HTML]{CCE9E0}71.0 & \cellcolor[HTML]{BEE3D7}\textbf{58.3} & \cellcolor[HTML]{D0EBE3}74.7 & \cellcolor[HTML]{C3E5DB}63.2 & \cellcolor[HTML]{E8F5F1}96.1 & \cellcolor[HTML]{DEF1EB}87.1 & \cellcolor[HTML]{F6D8C5}151 \\
 & DINOv3-B & \cellcolor[HTML]{F5D3BC}22.4 & \cellcolor[HTML]{C9E8DE}10.8 & \cellcolor[HTML]{C0E4D9}9.81 & \cellcolor[HTML]{C8E7DD}10.7 & \cellcolor[HTML]{BEE3D7}\textbf{9.53} & \cellcolor[HTML]{E5F4EF}14.0 & \cellcolor[HTML]{D7EEE7}12.4 & \cellcolor[HTML]{F6D8C4}21.5 \\
 & DINOv3-L & \cellcolor[HTML]{F5D3BC}63.7 & \cellcolor[HTML]{DDF0EB}31.1 & \cellcolor[HTML]{D9EFE8}29.4 & \cellcolor[HTML]{DDF0EA}31.0 & \cellcolor[HTML]{BEE3D7}\textbf{18.5} & \cellcolor[HTML]{FCF4EF}44.0 & \cellcolor[HTML]{F1F9F6}38.8 & \cellcolor[HTML]{F5D5C0}62.0 \\
 & SigLIP-B & \cellcolor[HTML]{F5D3BC}7.54 & \cellcolor[HTML]{CAE8DF}3.85 & \cellcolor[HTML]{BEE3D7}\textbf{3.41} & \cellcolor[HTML]{C9E7DE}3.80 & \cellcolor[HTML]{D6EDE6}4.28 & \cellcolor[HTML]{D1EBE3}4.11 & \cellcolor[HTML]{CBE8DF}3.88 & \cellcolor[HTML]{F6D9C6}7.19 \\
 & SigLIP-L & \cellcolor[HTML]{F5D3BC}31.2 & \cellcolor[HTML]{CAE8DF}16.1 & \cellcolor[HTML]{BEE3D7}\textbf{14.3} & \cellcolor[HTML]{C9E8DE}16.0 & \cellcolor[HTML]{CDE9E0}16.5 & \cellcolor[HTML]{D6EDE6}18.0 & \cellcolor[HTML]{C9E8DE}16.1 & \cellcolor[HTML]{F6D9C5}29.9 \\
\midrule
\multirow{7}{*}{\rotatebox[origin=c]{90}{KD$\times 10^3$}} & Inception & \cellcolor[HTML]{F5D3BC}2.89 & \cellcolor[HTML]{CBE9E0}0.66 & \cellcolor[HTML]{E1F2ED}1.16 & \cellcolor[HTML]{CAE8DF}0.62 & \cellcolor[HTML]{FDF7F3}1.69 & \cellcolor[HTML]{BFE3D8}0.38 & \cellcolor[HTML]{BEE3D7}\textbf{0.36} & \cellcolor[HTML]{F9E5D7}2.30 \\
 & DINOv2-B & \cellcolor[HTML]{F5D3BC}192 & \cellcolor[HTML]{C5E6DC}57.0 & \cellcolor[HTML]{BEE3D7}\textbf{47.5} & \cellcolor[HTML]{CBE8DF}63.8 & \cellcolor[HTML]{C2E5DA}52.9 & \cellcolor[HTML]{E2F3EE}94.5 & \cellcolor[HTML]{D7EEE7}80.3 & \cellcolor[HTML]{F6D9C5}181 \\
 & DINOv2-L & \cellcolor[HTML]{F5D3BC}165 & \cellcolor[HTML]{CCE9E0}60.5 & \cellcolor[HTML]{BEE3D7}\textbf{45.8} & \cellcolor[HTML]{D0EBE2}64.7 & \cellcolor[HTML]{C5E6DC}53.1 & \cellcolor[HTML]{E7F5F0}89.1 & \cellcolor[HTML]{DDF0EA}78.9 & \cellcolor[HTML]{F6D9C5}156 \\
 & DINOv3-B & \cellcolor[HTML]{F5D3BC}36.3 & \cellcolor[HTML]{CAE8DF}16.3 & \cellcolor[HTML]{C1E4D9}14.5 & \cellcolor[HTML]{C7E7DD}15.7 & \cellcolor[HTML]{BEE3D7}\textbf{13.9} & \cellcolor[HTML]{E6F4F0}21.9 & \cellcolor[HTML]{D8EEE7}19.1 & \cellcolor[HTML]{F6D7C3}34.9 \\
 & DINOv3-L & \cellcolor[HTML]{F5D3BC}78.3 & \cellcolor[HTML]{DAEFE8}30.0 & \cellcolor[HTML]{D6EDE6}28.2 & \cellcolor[HTML]{D8EEE7}29.2 & \cellcolor[HTML]{BEE3D7}\textbf{14.3} & \cellcolor[HTML]{FDF6F1}49.0 & \cellcolor[HTML]{EDF7F4}41.1 & \cellcolor[HTML]{F6D6C1}75.8 \\
 & SigLIP-B & \cellcolor[HTML]{F5D3BC}18.5 & \cellcolor[HTML]{CDE9E1}8.42 & \cellcolor[HTML]{BEE3D7}\textbf{6.84} & \cellcolor[HTML]{CEEAE1}8.47 & \cellcolor[HTML]{E5F4EF}10.8 & \cellcolor[HTML]{D2EBE4}8.90 & \cellcolor[HTML]{CCE9E0}8.28 & \cellcolor[HTML]{F6D9C5}17.5 \\
 & SigLIP-L & \cellcolor[HTML]{F5D3BC}78.8 & \cellcolor[HTML]{CBE8DF}34.4 & \cellcolor[HTML]{BEE3D7}\textbf{28.7} & \cellcolor[HTML]{CAE8DE}33.9 & \cellcolor[HTML]{D4EDE5}38.7 & \cellcolor[HTML]{D4ECE5}38.5 & \cellcolor[HTML]{C8E7DE}33.2 & \cellcolor[HTML]{F6D9C5}74.8 \\
\bottomrule
\end{tabular}

\end{table}

\FloatBarrier
\clearpage
\subsection{Effect of CFG on Reward Scores and Fr\'echet Distance}
\label{app:cfg_effect}

\ifneurips
\begin{figure}[H]
    \centering
    \includegraphics[width=\textwidth]{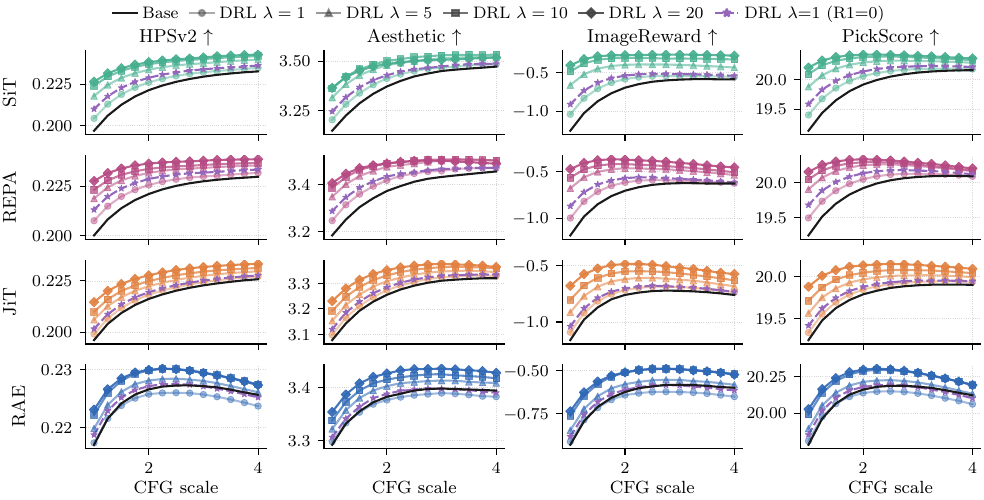}
    \caption{{\textbf{Effect of CFG on reward scores.}
    Reward vs.\ CFG scale for four models (rows) and four reward metrics
    (columns).  Black: base model
    (drawn on top of the DRL curves for visibility); colored lines: DRL at
    $\lambda \in \{1, 5, 10, 20\}$ (darker = higher $\lambda$); purple dashed:
    DRL $\lambda{=}1$ with $R_1{=}0$ discriminator.
    DRL beats base at every CFG, and sometimes (e.g., SiT on ImageReward)
    even without CFG.}}
    \label{fig:cfg_effect_reward_raw}
\end{figure}
\fi
\ifmeta
\begin{figure}[H]
    \centering
    \includegraphics[width=\textwidth]{figures/cfg-effect/cfg_effect_reward_raw.pdf}
    \caption{{\textbf{Effect of CFG on reward scores.}
    Reward vs.\ CFG scale for four models (rows) and four reward metrics
    (columns).  Black: base model
    (drawn on top of the DRL curves for visibility); colored lines: DRL at
    $\lambda \in \{1, 5, 10, 20\}$ (darker = higher $\lambda$); purple dashed:
    DRL $\lambda{=}1$ with $R_1{=}0$ discriminator.
    DRL beats base at every CFG, and sometimes (e.g., SiT on ImageReward)
    even without CFG.}}
    \label{fig:cfg_effect_reward_raw}
\end{figure}
\fi

\ifneurips
\begin{figure}[H]
    \centering
    \includegraphics[width=\textwidth]{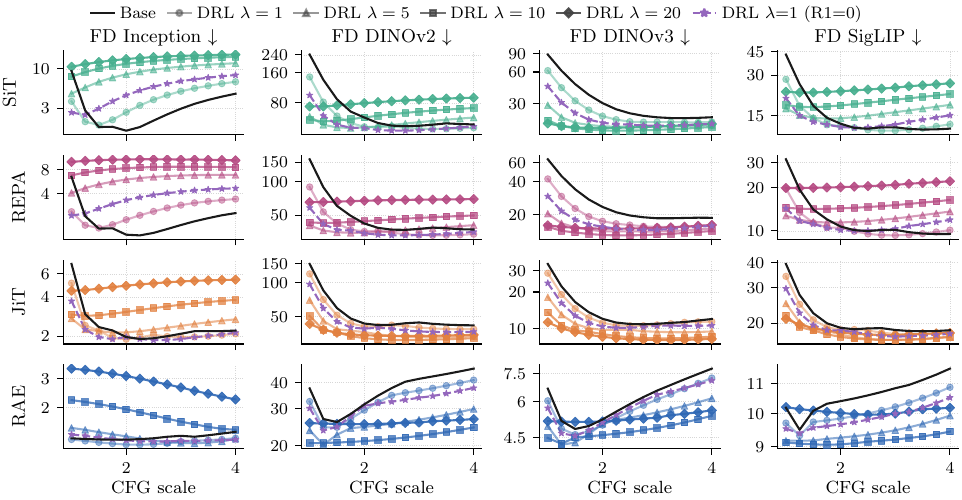}
    \caption{{\textbf{Effect of CFG on Fr\'echet Distance.}
    FD vs.\ CFG scale for four models (rows) and four evaluation embedders
    (columns). All panels use a log $y$-axis. Each point shows the best
    result across the three CFG intervals defined in \cref{app:cfg_tuning}.
    Same legend as \cref{fig:cfg_effect_reward_raw}.
    Inception/SigLIP prefer smaller $\lambda$, DINOv2/v3 prefer larger;
    CFG also helps less as $\lambda$ grows. Both are likely artifacts of
    fine-tuning only the conditional branch (required by adjoint matching).}}
    \label{fig:cfg_effect_fid_raw}
\end{figure}
\fi
\ifmeta
\begin{figure}[H]
    \centering
    \includegraphics[width=\textwidth]{figures/cfg-effect/cfg_effect_fid_raw.pdf}
    \caption{{\textbf{Effect of CFG on Fr\'echet Distance.}
    FD vs.\ CFG scale for four models (rows) and four evaluation embedders
    (columns). All panels use a log $y$-axis. Each point shows the best
    result across the three CFG intervals defined in \cref{app:cfg_tuning}.
    Same legend as \cref{fig:cfg_effect_reward_raw}.
    Inception/SigLIP prefer smaller $\lambda$, DINOv2/v3 prefer larger;
    CFG also helps less as $\lambda$ grows. Both are likely artifacts of
    fine-tuning only the conditional branch (required by adjoint matching).}}
    \label{fig:cfg_effect_fid_raw}
\end{figure}
\fi

\FloatBarrier
\clearpage
\section{Qualitative Samples}
\label{app:qualitative_samples}

This section collects the qualitative sample grids referenced from the main
text. All grids use matched noise seeds and class labels across conditions, so
visual differences are attributable to model weights and guidance only.

\paragraph{Base vs.\ Fine-tuned.}
Uncurated samples comparing the base model against the DRL fine-tuned model at $\lambda{=}10$ for each architecture. Each figure shows two sections side by side: \textit{No CFG} and \textit{Best CFG}, where the latter selects the best CFG value according to DINOv3-L FD. We use DINOv3 as a held-out metric since the discriminator is trained on DINOv2 features.

\paragraph{Effect of DRL Strength.}
Samples for each model under varying DRL strength ($\lambda \in \{1, 5, 10, 20, 40\}$) and three guidance strategies. Each figure uses matched noise seeds and class labels across all conditions, so differences are purely due to model weights. Reading left-to-right shows the effect of increasing $\lambda$; the three sections compare no guidance (CFG${}=1$), standard CFG${}=2$, and the per-variant optimal guidance (DINOv3-L FD-optimal CFG or autoguidance, annotated above each column; see \cref{app:autoguidance} for context on autoguidance).
As remarked in \cref{app:autoguidance}, autoguidance is often the best guidance setting at large $\lambda$, and noticeably improves the images compared to the CFG variant.

\paragraph{RL Fine-tuning Sample Images.}
Generated samples for each model under varying RL strength ($\lprl \in \{1, 10, 40\}$) and guidance strategies (no CFG, CFG${}=2$, and autoguidance${}=2$; see \cref{app:autoguidance} for context on autoguidance). Each figure uses matched noise seeds across all conditions, so differences are purely due to model weights and guidance. The three column groups compare Base, DRL$+$PRL, and Base$+$PRL.
As discussed in the main text, increasing $\lprl$ makes the images progressively brighter and more distorted. This effect is particularly pronounced under autoguidance, which---as noted in \cref{app:pareto,app:autoguidance}---yields the highest reward but the largest distortion.

\clearpage
\subsection{Base vs.\ Fine-tuned}
\label{app:base_vs_ours}

\ifneurips
\begin{figure}[H]
    \centering
    \includegraphics[height=0.92\textheight,keepaspectratio]{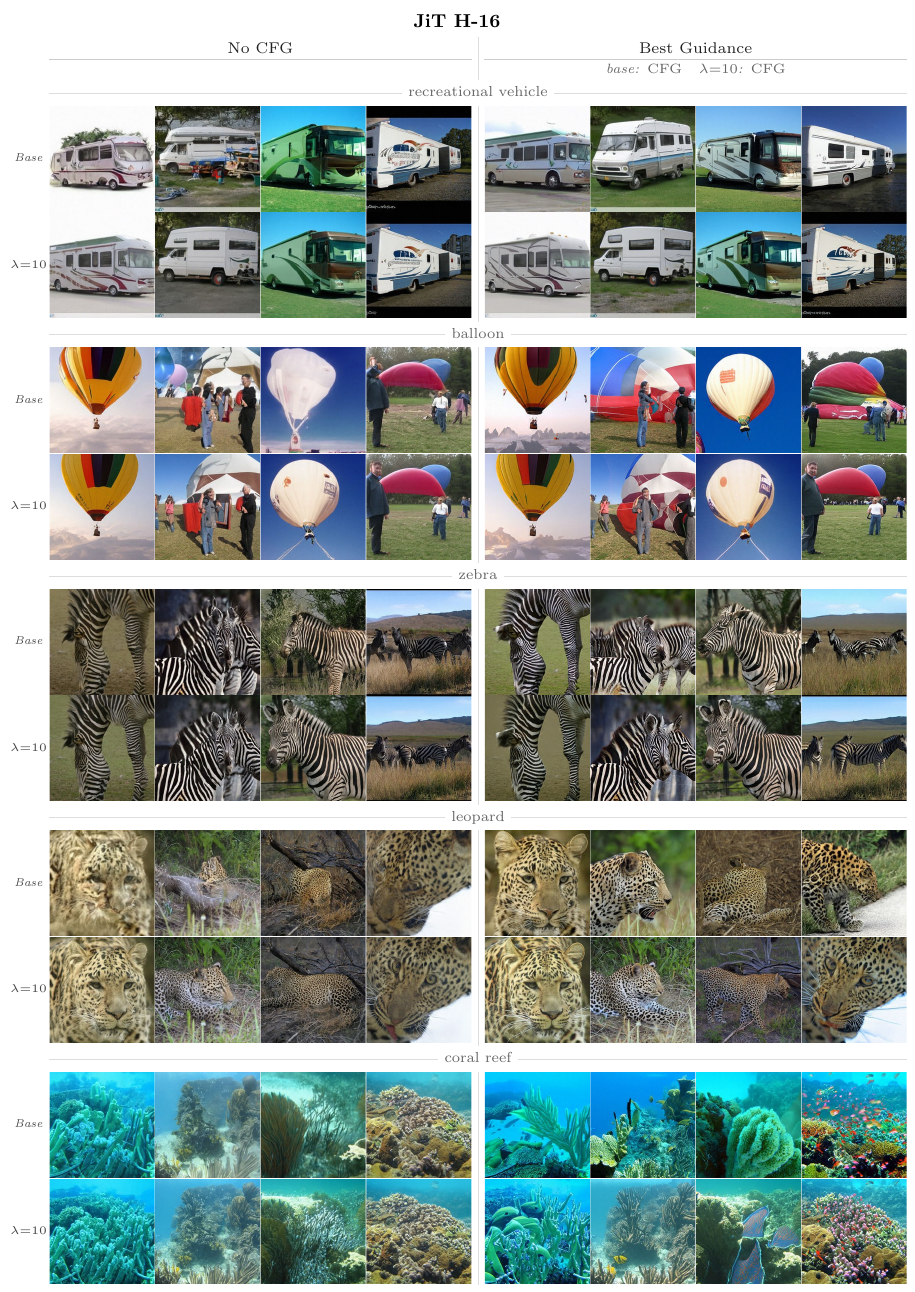}
    \caption{Base vs.\ DRL fine-tuned ($\lambda{=}10$) samples for JiT H-16.}
    \label{fig:base_vs_ours_jit}
\end{figure}
\fi
\ifmeta
\begin{figure}[H]
    \centering
    \includegraphics[height=0.92\textheight,keepaspectratio]{figures/base-vs-ours/base_vs_ours_jit.pdf}
    \caption{Base vs.\ DRL fine-tuned ($\lambda{=}10$) samples for JiT H-16.}
    \label{fig:base_vs_ours_jit}
\end{figure}
\fi

\ifneurips
\begin{figure}[H]
    \centering
    \includegraphics[height=0.92\textheight,keepaspectratio]{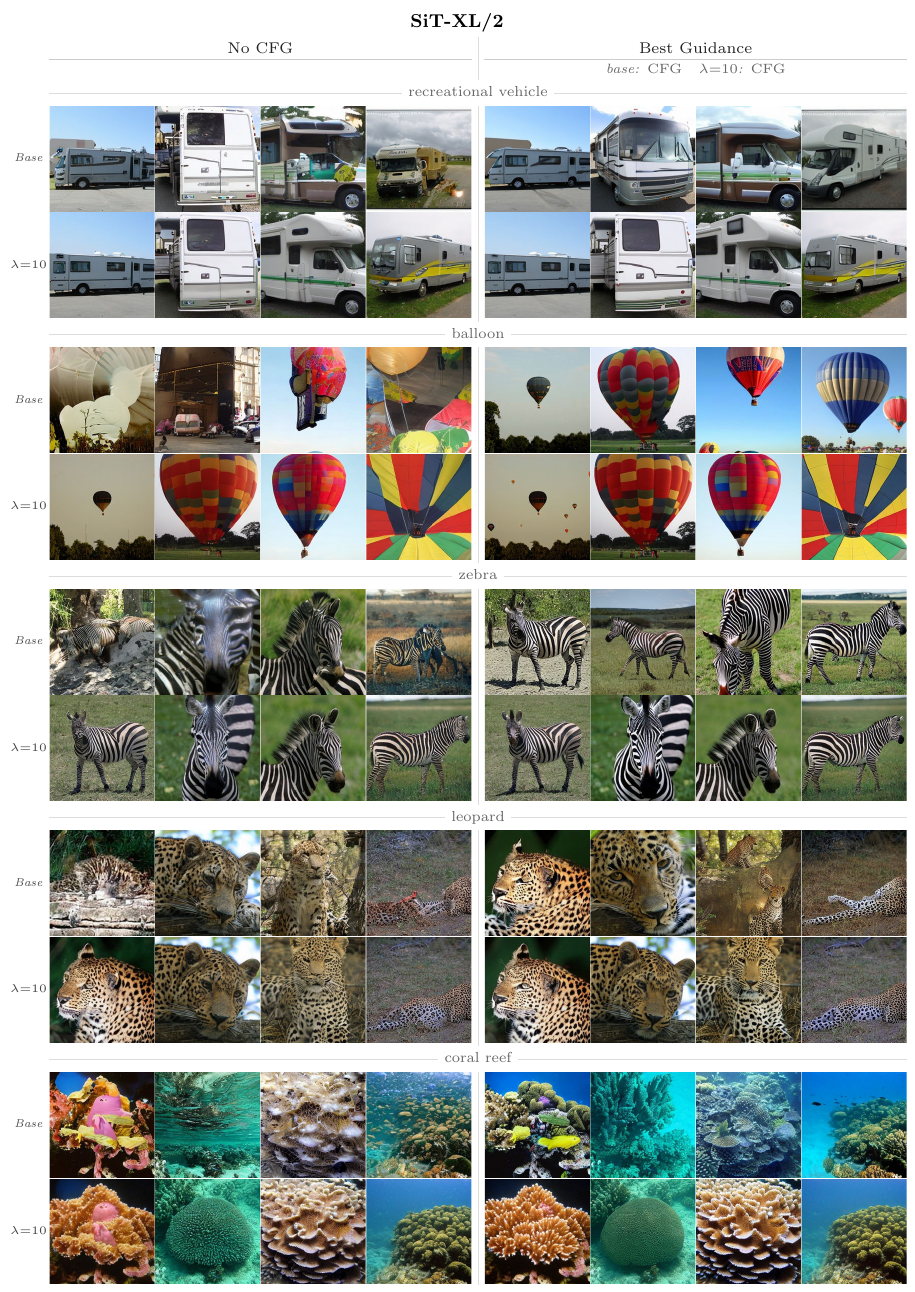}
    \caption{Base vs.\ DRL fine-tuned ($\lambda{=}10$) samples for SiT-XL/2.}
    \label{fig:base_vs_ours_sit}
\end{figure}
\fi
\ifmeta
\begin{figure}[H]
    \centering
    \includegraphics[height=0.92\textheight,keepaspectratio]{figures/base-vs-ours/base_vs_ours_sit.pdf}
    \caption{Base vs.\ DRL fine-tuned ($\lambda{=}10$) samples for SiT-XL/2.}
    \label{fig:base_vs_ours_sit}
\end{figure}
\fi

\ifneurips
\begin{figure}[H]
    \centering
    \includegraphics[height=0.92\textheight,keepaspectratio]{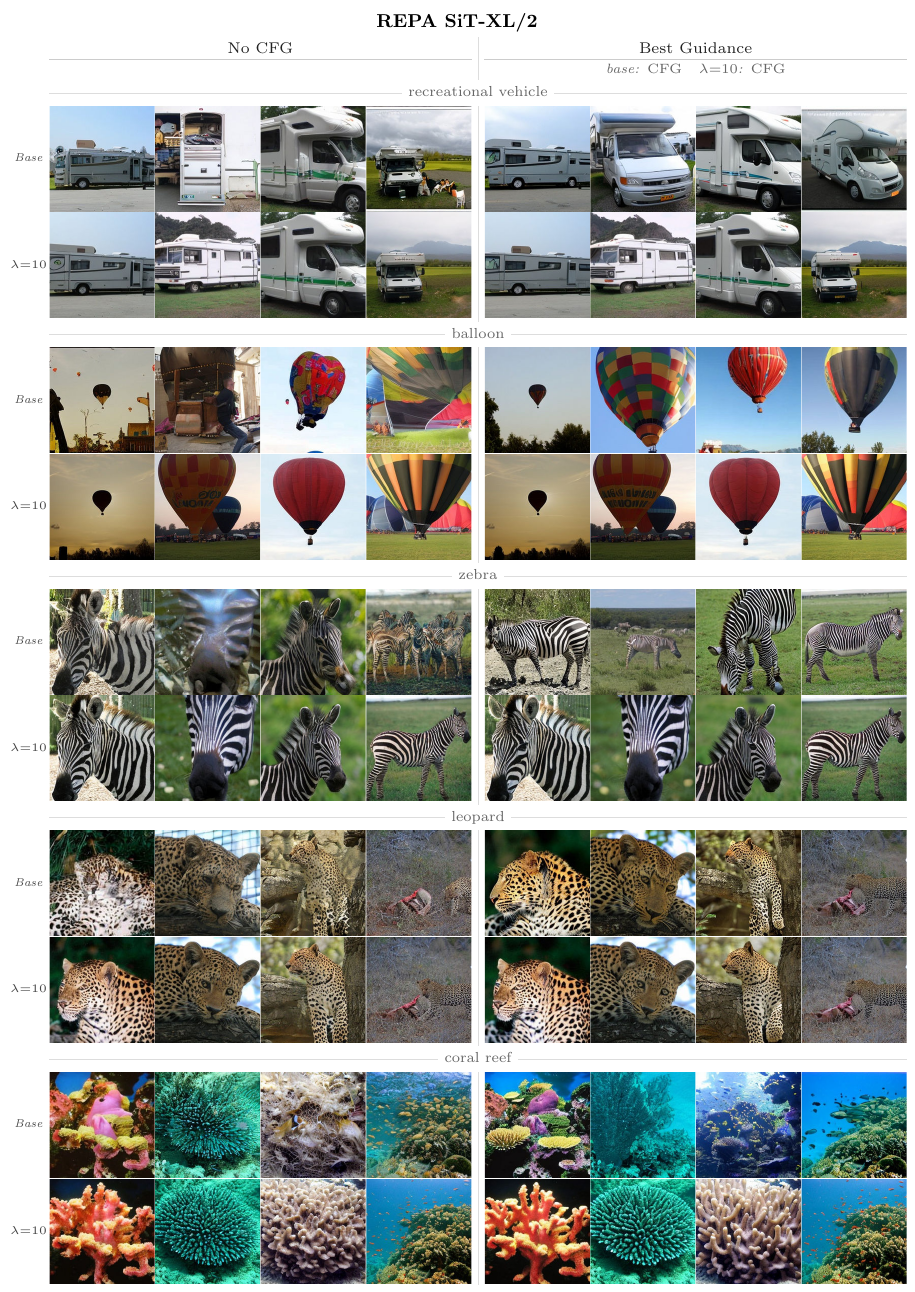}
    \caption{Base vs.\ DRL fine-tuned ($\lambda{=}10$) samples for REPA SiT-XL/2.}
    \label{fig:base_vs_ours_repa}
\end{figure}
\fi
\ifmeta
\begin{figure}[H]
    \centering
    \includegraphics[height=0.92\textheight,keepaspectratio]{figures/base-vs-ours/base_vs_ours_repa.pdf}
    \caption{Base vs.\ DRL fine-tuned ($\lambda{=}10$) samples for REPA SiT-XL/2.}
    \label{fig:base_vs_ours_repa}
\end{figure}
\fi

\ifneurips
\begin{figure}[H]
    \centering
    \includegraphics[height=0.92\textheight,keepaspectratio]{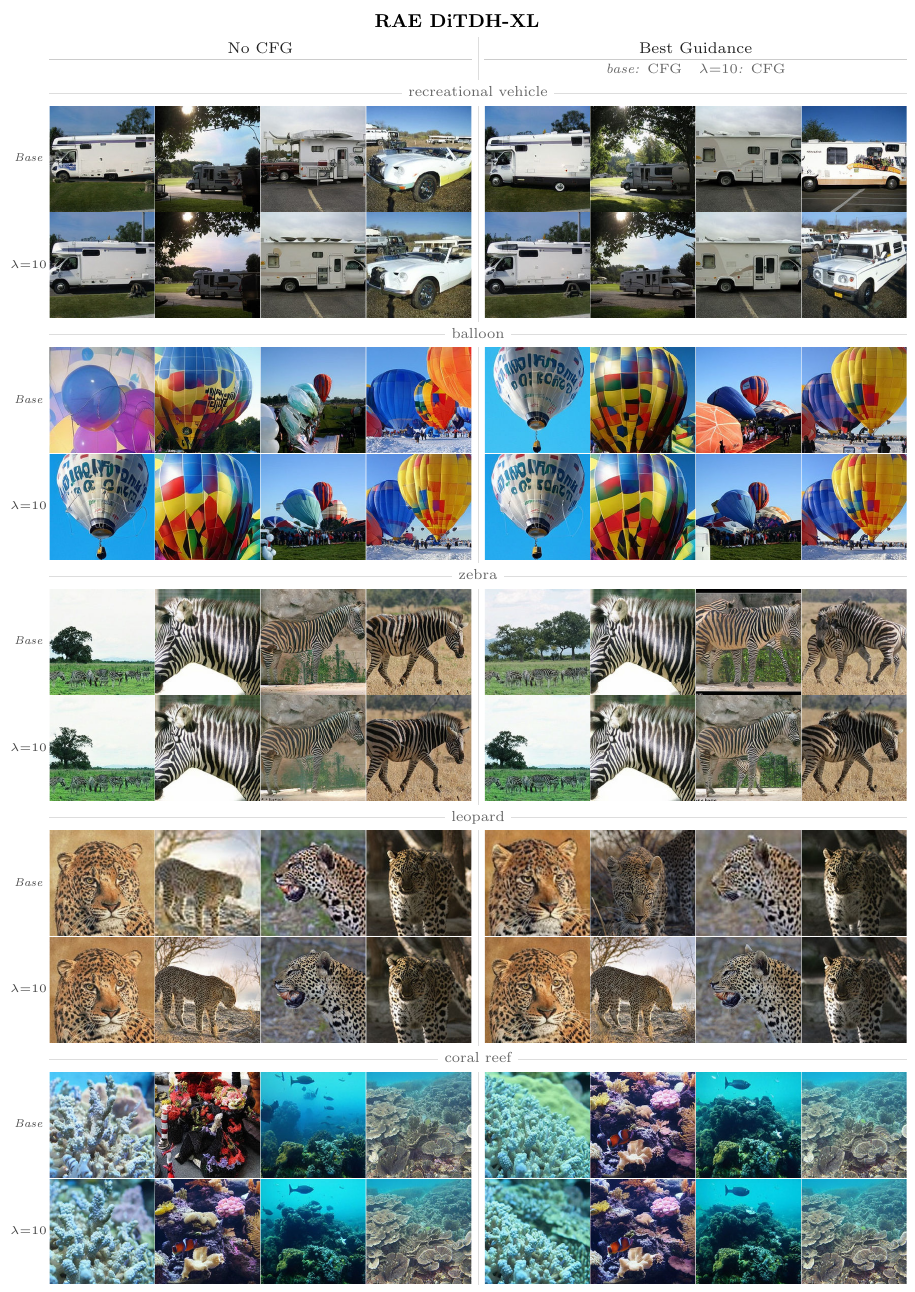}
    \caption{Base vs.\ DRL fine-tuned ($\lambda{=}10$) samples for RAE DiTDH-XL.}
    \label{fig:base_vs_ours_rae}
\end{figure}
\fi
\ifmeta
\begin{figure}[H]
    \centering
    \includegraphics[height=0.92\textheight,keepaspectratio]{figures/base-vs-ours/base_vs_ours_rae.pdf}
    \caption{Base vs.\ DRL fine-tuned ($\lambda{=}10$) samples for RAE DiTDH-XL.}
    \label{fig:base_vs_ours_rae}
\end{figure}
\fi

\clearpage
\subsection{Effect of DRL Strength}
\label{app:am_samples}

\ifneurips
\begin{figure}[H]
    \centering
    \includegraphics[width=\textwidth,height=0.92\textheight,keepaspectratio]{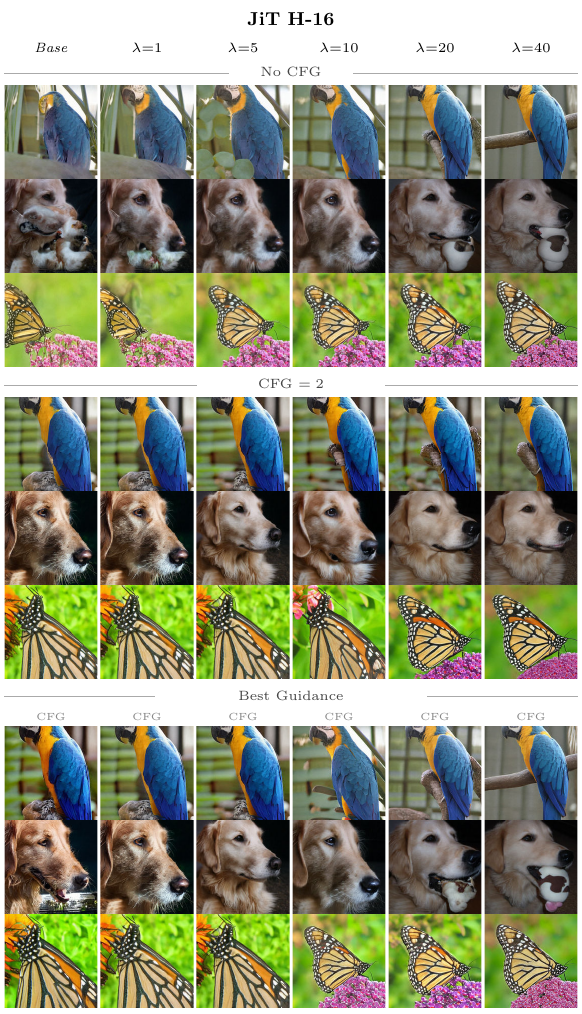}
    \caption{Effect of DRL strength ($\lambda$) on samples for JiT H-16.}
    \label{fig:am_samples_jit}
\end{figure}
\fi
\ifmeta
\begin{figure}[H]
    \centering
    \includegraphics[width=\textwidth,height=0.92\textheight,keepaspectratio]{figures/am-finetuning-samples/am_samples_jit.pdf}
    \caption{Effect of DRL strength ($\lambda$) on samples for JiT H-16.}
    \label{fig:am_samples_jit}
\end{figure}
\fi

\ifneurips
\begin{figure}[H]
    \centering
    \includegraphics[width=\textwidth,height=0.92\textheight,keepaspectratio]{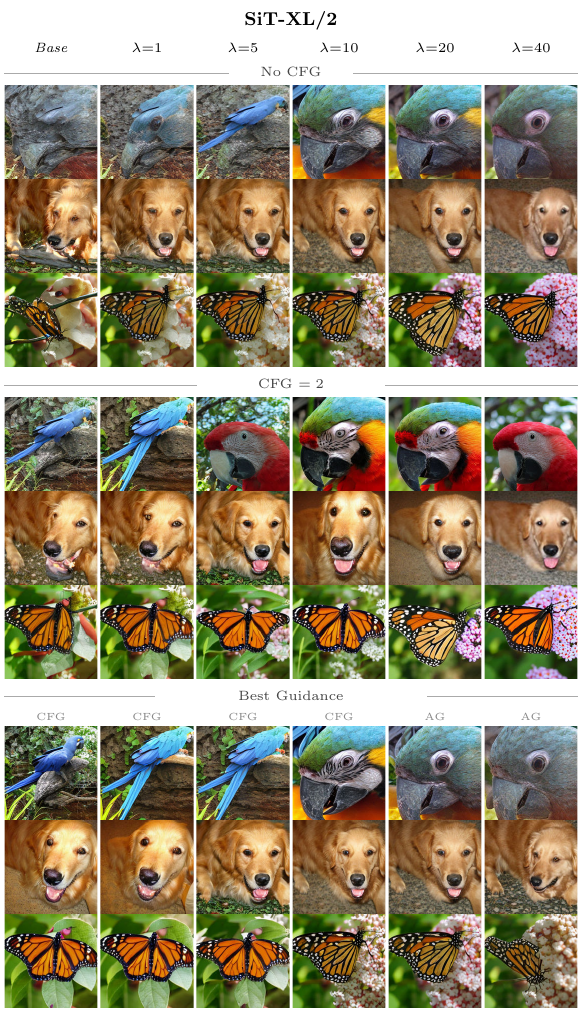}
    \caption{Effect of DRL strength ($\lambda$) on samples for SiT-XL/2.}
    \label{fig:am_samples_sit}
\end{figure}
\fi
\ifmeta
\begin{figure}[H]
    \centering
    \includegraphics[width=\textwidth,height=0.92\textheight,keepaspectratio]{figures/am-finetuning-samples/am_samples_sit.pdf}
    \caption{Effect of DRL strength ($\lambda$) on samples for SiT-XL/2.}
    \label{fig:am_samples_sit}
\end{figure}
\fi

\ifneurips
\begin{figure}[H]
    \centering
    \includegraphics[width=\textwidth,height=0.92\textheight,keepaspectratio]{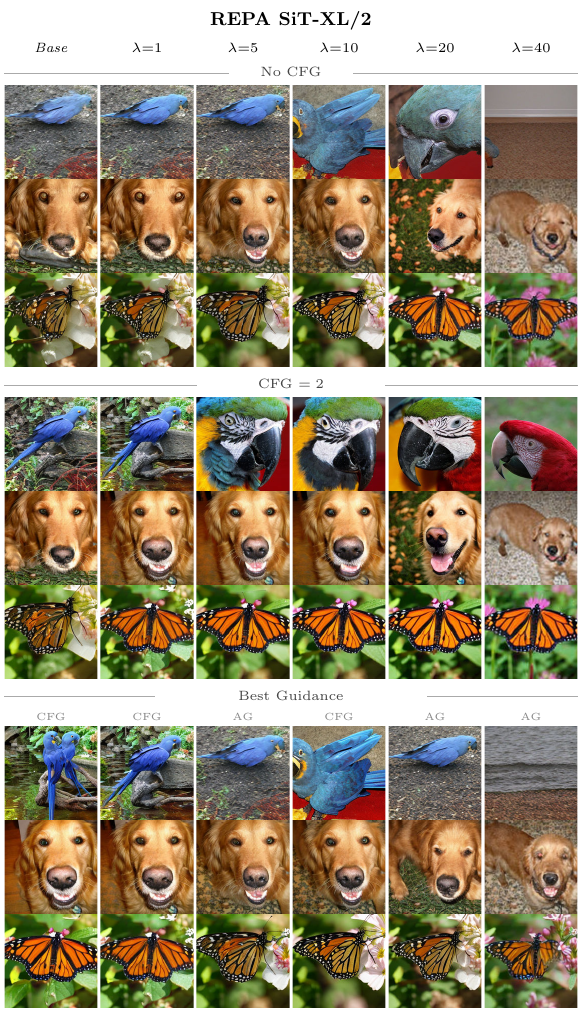}
    \caption{Effect of DRL strength ($\lambda$) on samples for REPA SiT-XL/2.}
    \label{fig:am_samples_repa}
\end{figure}
\fi
\ifmeta
\begin{figure}[H]
    \centering
    \includegraphics[width=\textwidth,height=0.92\textheight,keepaspectratio]{figures/am-finetuning-samples/am_samples_repa.pdf}
    \caption{Effect of DRL strength ($\lambda$) on samples for REPA SiT-XL/2.}
    \label{fig:am_samples_repa}
\end{figure}
\fi

\ifneurips
\begin{figure}[H]
    \centering
    \includegraphics[width=\textwidth,height=0.92\textheight,keepaspectratio]{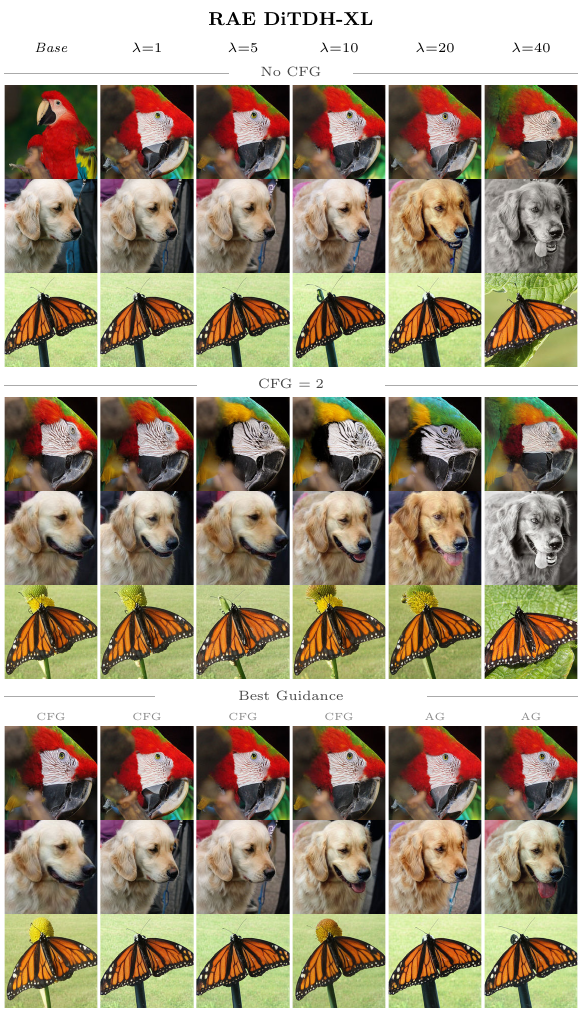}
    \caption{Effect of DRL strength ($\lambda$) on samples for RAE DiTDH-XL.}
    \label{fig:am_samples_rae}
\end{figure}
\fi
\ifmeta
\begin{figure}[H]
    \centering
    \includegraphics[width=\textwidth,height=0.92\textheight,keepaspectratio]{figures/am-finetuning-samples/am_samples_rae.pdf}
    \caption{Effect of DRL strength ($\lambda$) on samples for RAE DiTDH-XL.}
    \label{fig:am_samples_rae}
\end{figure}
\fi

\clearpage
\subsection{RL Fine-tuning Sample Images}
\label{app:rl_samples}

\ifneurips
\begin{figure}[H]
    \centering
    \includegraphics[width=\textwidth,height=0.92\textheight,keepaspectratio]{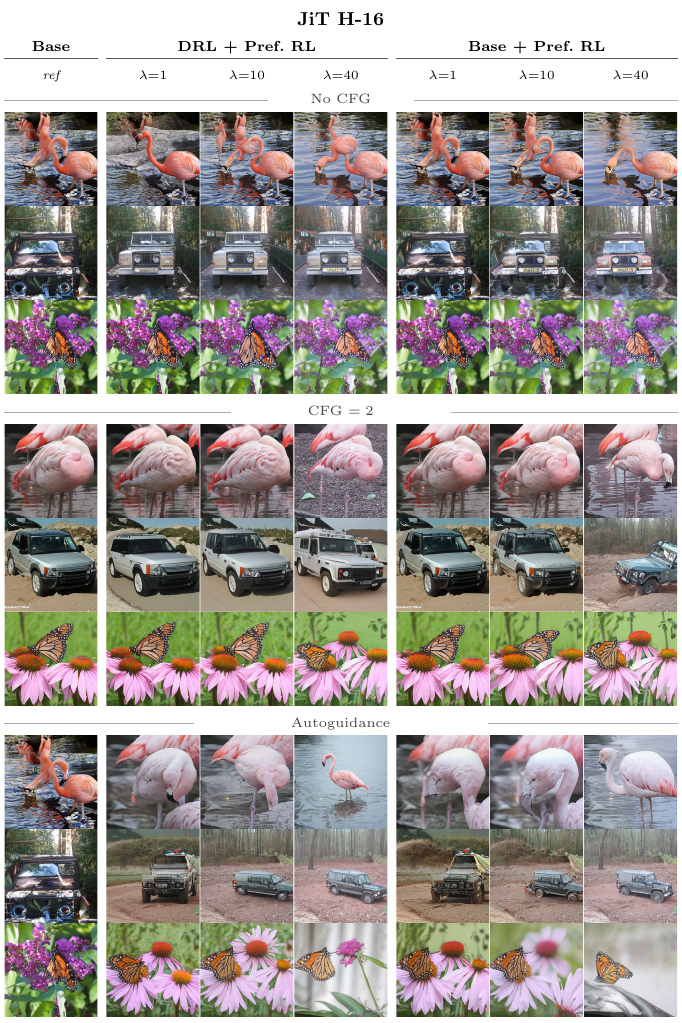}
    \caption{RL fine-tuning samples for JiT H-16.}
    \label{fig:rl_samples_jit}
\end{figure}
\fi
\ifmeta
\begin{figure}[H]
    \centering
    \includegraphics[width=\textwidth,height=0.92\textheight,keepaspectratio]{figures/rl-finetuning-samples/rl_samples_jit.pdf}
    \caption{RL fine-tuning samples for JiT H-16.}
    \label{fig:rl_samples_jit}
\end{figure}
\fi

\ifneurips
\begin{figure}[H]
    \centering
    \includegraphics[width=\textwidth,height=0.92\textheight,keepaspectratio]{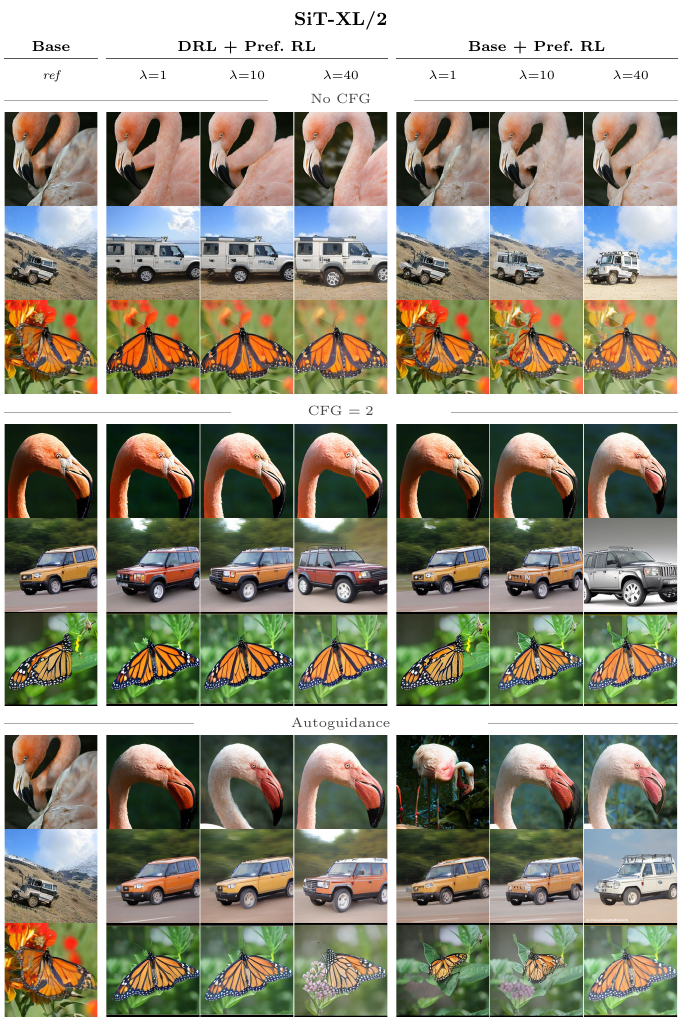}
    \caption{RL fine-tuning samples for SiT-XL/2.}
    \label{fig:rl_samples_sit}
\end{figure}
\fi
\ifmeta
\begin{figure}[H]
    \centering
    \includegraphics[width=\textwidth,height=0.92\textheight,keepaspectratio]{figures/rl-finetuning-samples/rl_samples_sit.pdf}
    \caption{RL fine-tuning samples for SiT-XL/2.}
    \label{fig:rl_samples_sit}
\end{figure}
\fi

\ifneurips
\begin{figure}[H]
    \centering
    \includegraphics[width=\textwidth,height=0.92\textheight,keepaspectratio]{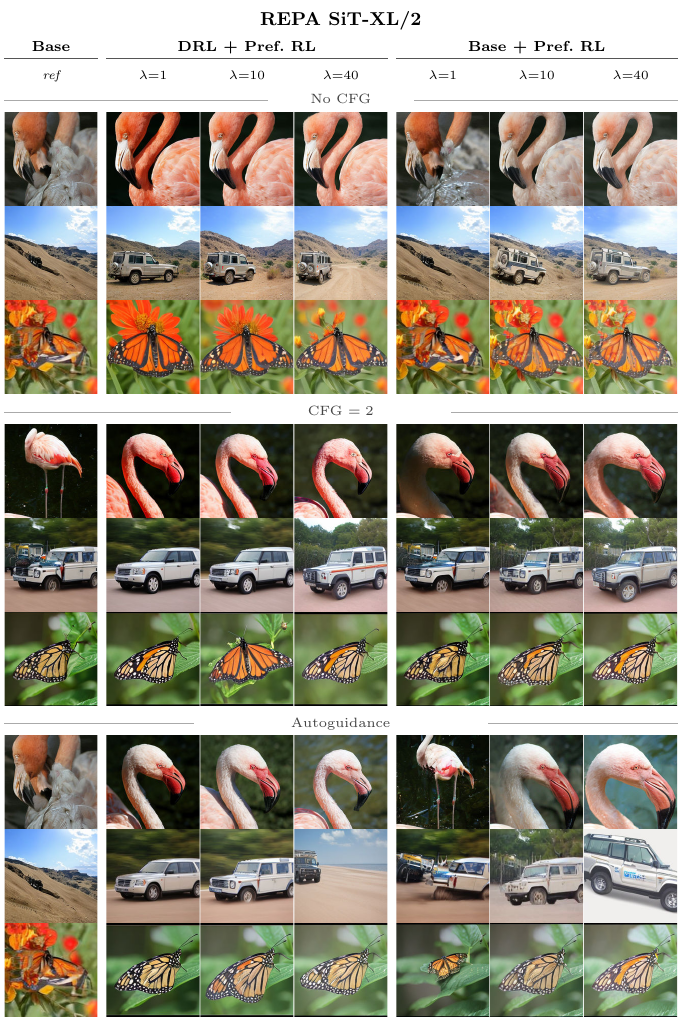}
    \caption{RL fine-tuning samples for REPA SiT-XL/2.}
    \label{fig:rl_samples_repa}
\end{figure}
\fi
\ifmeta
\begin{figure}[H]
    \centering
    \includegraphics[width=\textwidth,height=0.92\textheight,keepaspectratio]{figures/rl-finetuning-samples/rl_samples_repa.pdf}
    \caption{RL fine-tuning samples for REPA SiT-XL/2.}
    \label{fig:rl_samples_repa}
\end{figure}
\fi

\ifneurips
\begin{figure}[H]
    \centering
    \includegraphics[width=\textwidth,height=0.92\textheight,keepaspectratio]{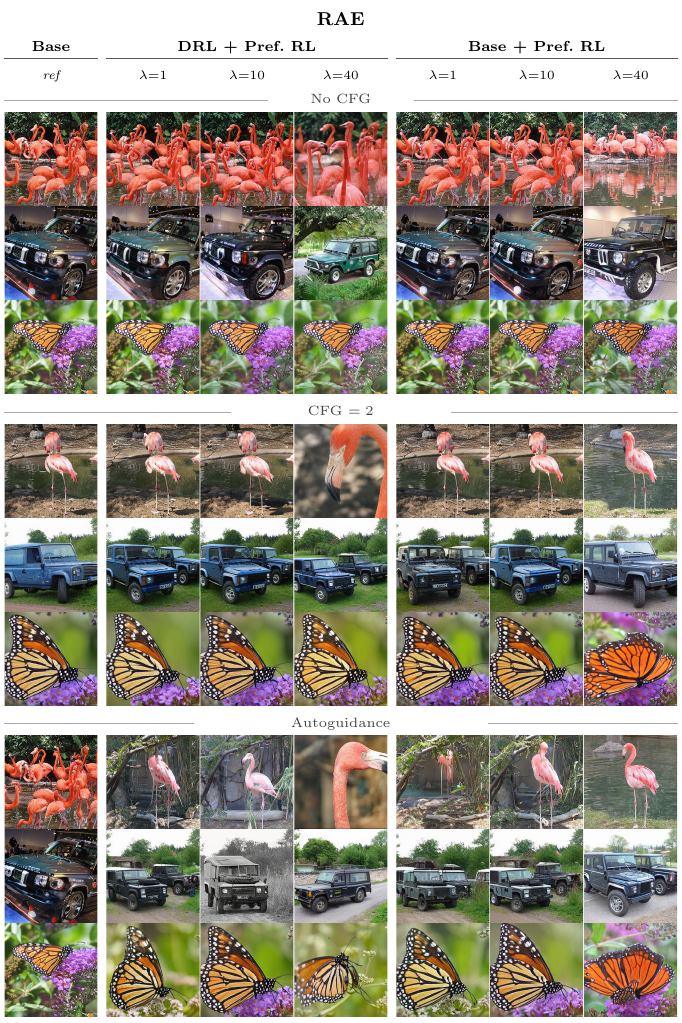}
    \caption{RL fine-tuning samples for RAE.}
    \label{fig:rl_samples_rae}
\end{figure}
\fi
\ifmeta
\begin{figure}[H]
    \centering
    \includegraphics[width=\textwidth,height=0.92\textheight,keepaspectratio]{figures/rl-finetuning-samples/rl_samples_rae.pdf}
    \caption{RL fine-tuning samples for RAE.}
    \label{fig:rl_samples_rae}
\end{figure}
\fi

\end{document}